\documentclass{article} 
\usepackage{arXiv_preamble}

\title{Mildly Overparameterized ReLU Networks on Orthogonal Data: Incremental Learning and \\ Implicit Bias}

\author{
{\normalfont James Town} \\
University of Warwick \\
\texttt{james.p.town@warwick.ac.uk}
\and
Etienne Boursier \\
INRIA \\
LMO, Universit{\'e} Paris-Saclay \\
\texttt{etienne.boursier@inria.fr}
\and
Ben Lewis \\
University of Warwick \\
\texttt{ben.lewis@warwick.ac.uk}
\and
Matthias Englert \\
University of Warwick \\
\texttt{m.englert@warwick.ac.uk}
\and
Ranko Lazi\'c\footnotemark[\value{footnote}] \\
University of Warwick \\
\texttt{r.s.lazic@warwick.ac.uk}
}

\begin{document}
\setcounter{tocdepth}{3}

\doparttoc 
\faketableofcontents 
\mtcsettitle{parttoc}{Table of contents}

\maketitle

\begin{abstract}
\looseness=-1
The successful training of neural networks hinges on the use of first order optimization methods, yet the theoretical characterization of these methods remains incomplete. This is especially true in settings with mild overparameterization. In this work, we study the gradient flow dynamics of two-layer ReLU networks from small initialization with orthogonal training data. We prove the limiting flow converges to a saddle-to-saddle jump process as the initialization scale tends to zero, revealing an incremental learning phenomenon in which a new neuron activates at each saddle. This analysis recovers the known result of \citet{dana2025convergence} that the network interpolates the training data with high probability as soon as $m \gtrsim \log(n)$, where $m$ is the network width and $n$ is the number of training samples. This incremental process characterization also allows us to derive a novel implicit bias result: the learned interpolator has a squared $\ell_2$-norm scaling as $\sqrt{n}$, which is within a constant factor of the minimal $\ell_2$-norm interpolator. More broadly, our work provides the first rigorous proof of an incremental learning process for ReLU networks, whilst suggesting mildly overparameterized networks can converge to interpolating solutions whose complexity is of the same order as that of the optimal interpolator.
\end{abstract}

\section{Introduction}

Despite the remarkable empirical successes of deep learning \citep{lipman2023flow,zagoruyko2016WideRestNet,Cheng2016recommender}, a precise theoretical understanding of the optimization dynamics of neural networks trained by first-order methods remains elusive. The \textit{Neural Tangent Kernel} (NTK) framework offered an initial answer, establishing convergence to global minima for overparameterized networks \citep{jacot2018neural,du2018gradient,arora2019fine}. However, NTK operates under large initialization assumptions that cause network features to remain nearly unchanged during training; a regime aptly termed \textit{lazy} by \citet{chizat2019lazy}, and now understood to be unrepresentative of practice \citep{fort2020deep}.

\looseness=-1
In the practically relevant \textit{small initialization} regime, training induces significant changes in network features, a phenomenon known as \textit{feature learning}. Theoretical analysis of this regime is considerably more challenging, as the resulting dynamics are highly non-convex. General convergence results exist for linear activation architectures \citep{pmlr-v80-arora18a,WoodworthGLMSGS20} and, in the infinite-width limit, for nonlinear networks \citep{chizat2018global,wojtowytsch2020convergence}. These results, however, require either smooth activations or infinite data \citep{Holzmuller2022inconsistency,BoursierF25}, and thus do not apply to the canonical case of two-layer ReLU networks.

\looseness=-1
Beyond convergence, a central motivation for studying the small initialization regime is its \textit{implicit bias}: gradient flow is known to favor sparse or low-complexity solutions that generalize well. For classification, two-layer networks trained to interpolation have been shown to converge to min-norm max-margin estimators \citep{LyuL20,chizat2020implicit,JiT20}, which coincide with sparse, narrow networks \citep{SafranVL22,boursierF23}. Regression is however arguably more representative of practical training dynamics, as characterizing the implicit bias requires describing the entire trajectory rather than only its asymptotic endpoint after interpolation is achieved. In the regression setting, diagonal linear networks favor sparse estimators \citep{WoodworthGLMSGS20}, while fully connected linear networks favor low-rank solutions \citep{saxe2014exact,arora2019implicit}. For two-layer ReLU networks, implicit bias in regression remains largely open, and constitutes the focus of this work.

\looseness=-1
Progress has been made by restricting to structured data settings. Among these, orthogonal data has emerged as a natural and tractable setting: \citet{BoursierPF22} established convergence of gradient flow to a minimal $\ell_2$-norm interpolator, but under an initialization condition requiring exponential overparameterization in the number of training samples. 
\citet{dana2025convergence} later showed that mild overparameterization suffices to guarantee convergence to an interpolating solution. However, their analysis leaves the norm of the reached interpolator uncharacterized, and understanding this implicit bias in the mildly overparameterized regime remained open until the present work.

\paragraph{Contributions.}
We prove a precise characterization of the implicit bias of gradient flow when training two-layer ReLU networks on orthogonal data in the mildly overparameterized regime, via a complete analysis of the training trajectory. 
Interestingly, our analysis reveals an \textit{incremental learning} phenomenon, illustrated in \Cref{fig:illustration_intro}: as initialization scale vanishes, the loss alternates between long idle plateaus and abrupt decreases, each corresponding to the activation of a new neuron. While incremental learning has been characterized for linear networks, our work provides the first such characterization for networks with nonlinear activations. An informal statement of our main result follows, where $m$ refers to the width of the network and $n$ the number of training samples.

\begin{figure}
\centering
\subfigure{%
\begin{minipage}[t]{7.6cm}
\centering
\includegraphics{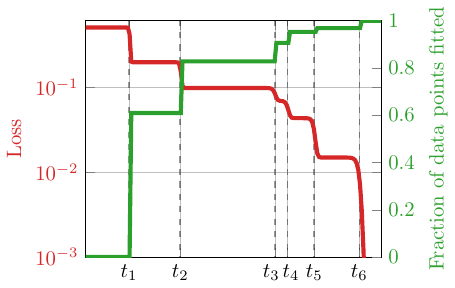}
\end{minipage}
}
\hfill
\subfigure{%
\begin{minipage}[t]{6.1cm}
\centering
\includegraphics{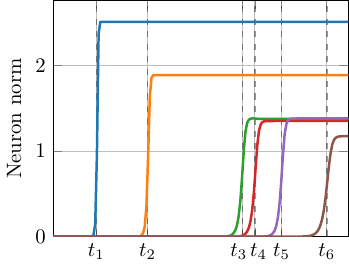}
\end{minipage}
}
\vspace{-1.5em}\caption{\label{fig:illustration_intro}We train a two-layer ReLU network of six neurons on 64 orthonormal data points in 64 dimensions. The initialization scale is small (experimental details in \Cref{sec:expe}). Left: During training, the loss (red) shows long periods of little change with sudden drops in between. Those drops correspond to an additional set of data points being fitted (green). The drops in the loss happen at the times independently predicted by \cref{alg:s.t.s}, which are indicated by the dashed lines. Right: The norms of the hidden layer of the six neurons during the same training.}
\end{figure}

\begin{thm}[Informal]\label{thm:informal}
In the regression setting with orthogonal training data, as initialization scale vanishes, and with high probability over the network initialization, the gradient flow trajectory over a two-layer ReLU network with $\log(n) \lesssim m \ll \exp\left(n \log (2)/2\right)$ converges to a piecewise-constant process: the iterates move successively from one saddle of the training loss to another, with each transition corresponding to the creation of a new non-zero neuron. The sequence of visited saddles and transition times can be computed via \Cref{alg:s.t.s} without any actual training of the network.
\end{thm}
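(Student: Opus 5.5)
The approach is to follow the standard small-initialization saddle-to-saddle program: split the trajectory into alternating \emph{alignment/escape} phases of duration $\Theta(\log(1/\lambda))$, where $\lambda$ is the initialization scale, and \emph{fitting} phases of duration $O(1)$. I will work in the rescaled time $t = \tau \log(1/\lambda)$. Orthogonality of the data $x_1,\dots,x_n$ decouples the problem.

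\textbf{Dynamics near the origin.} For a neuron $(w_j,a_j)$, write $w_j=\sum_i \langle w_j,x_i\rangle x_i/\|x_i\|^2 + w_j^\perp$. The gradient with respect to $w_j$ is $-a_j\sum_i r_i(t)\,\mathbb 1[\langle w_j,x_i\rangle>0]\,x_i$, where $r_i$ is the residual. Each coordinate $\langle w_j,x_i\rangle$ therefore evolves only through its own residual $r_i$ and the activation indicator. The perpendicular component $w_j^\perp$ is conserved, and the balancedness $a_j^2-\|w_j\|^2$ is conserved, hence $O(\lambda^2)$.

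While all neurons are $O(\lambda^{c})$, the residuals are frozen at $r_i=y_i$. Each neuron then follows a linear ODE whose growth rate is governed by the norm of $v_j=\sum_{i\in S_j}y_i x_i$, with the sign pattern of $y_i$ selecting which activations survive. Coordinates with the wrong sign die monotonically and never reactivate while $r_i$ keeps its sign. Neuron $j$ grows like $\lambda\exp(\|v_j\|t)$, so in log-time it escapes at $\tau_j^\ast = 1/\|v_j\|$, up to corrections vanishing as $\lambda\to0$.

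\textbf{Fitting phase.} The first escaping neuron reaches $\Theta(1)$ size at the first saddle time. I will show that within $O(1)$ unrescaled time it fits exactly its active set $S_j$: those residuals go to $0$, since orthogonal data makes the single-neuron regression a balanced rank-one problem that converges exponentially fast. The other neurons stay $\lambda^{\text{power}}$ during this phase, so in the $\tau$ scale the jump is instantaneous.

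After the jump, the residuals on $S_j$ are zero. The remaining neurons therefore see new effective rates $\|v_k'\|$ computed with the updated residuals. Their accumulated log-magnitudes continue from where they stood, giving a new escape time
\[
\tau_k^{(2)} = \tau_1 + \frac{1-\|v_k\|\tau_1}{\|v_k'\|}.
\]
Iterating this step defines exactly \Cref{alg:s.t.s}: a sequence of saddles, each given by the fitted sets, together with piecewise-linear ``budget'' clocks. Convergence of the rescaled trajectory to the piecewise-constant process follows by induction over saddles, using Grönwall bounds with errors of order $\lambda^{\epsilon}$ propagated through finitely many (at most $m$) stages.

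\textbf{Main obstacle.} The hardest part is the probabilistic genericity that makes this clean. Escape times must be distinct, which requires that with high probability no two neurons tie. Exact zeros of the residuals must not create degenerate activation boundaries: a coordinate $\langle w_k,x_i\rangle$ with $r_i=0$ must stay inactive and not drift back. Errors must also not compound through the $\log(1/\lambda)$ amplification. For the width condition, I would use that with $m\gtrsim\log n$ random initializations, every data point is covered with high probability by some neuron with the correct sign pattern, so the process ends at interpolation. The upper bound $m\ll 2^{n/2}$ ensures that with high probability no two neurons share near-identical activation patterns, which gives the distinct jump times. I would first try to handle the ties via a conditional argument on the activation patterns, together with continuity of the Gaussian law of the coordinates.
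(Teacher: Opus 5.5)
Your skeleton is the paper's: induction over saddles, slow growth at rates $\|D_j^{(k)}\|$ with exponents carried across saddles (your $\tau_k^{(2)}$ is exactly the $\ell_j^\circ$ recursion of \Cref{alg:s.t.s}), crude Gr{\"o}nwall control of the other unfitted neurons during the short fitting window, and $m\gtrsim\log n$ for row coverage. The first gap is the fitting step at saddles $k\ge 1$. Activation sets never change after the initial sign-pruning, since a coordinate moves only while active and \Cref{ass:dead} forbids revival. So your worry that coordinates with $r_i=0$ must ``stay inactive'' is moot. The real difficulty is the opposite one: the escaping neuron $j^\star$ is generally still active on already-fitted points, $S_{j^\star}\cap S_F^{(k)}\neq\emptyset$, where fitted neurons also act. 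The jump is therefore not a decoupled balanced rank-one regression. While $\|\widetilde w_{j^\star}\|$ grows from $\alpha^{\varepsilon}$ (your $\lambda$) to order one, you must show two things: the residual on $S_F^{(k)}$ stays polynomially small in $\alpha$, and $\widetilde{\overline w}_{j^\star}$ does not rotate into $\mathrm{span}(x_i : i\in S_F^{(k)})$. Otherwise the next saddle is not the one of \Cref{alg:s.t.s}, where $w_{j^\star}\propto D^{(k)}_{j^\star}$ and the old neurons are unchanged. These two quantities drive each other, each with a $\log(1/\alpha)$ factor, and assuming one to derive the other does not close. The paper bounds the rotation into fitted directions by $\log(1/\alpha)\int\|E^{(k)}\|$ and applies the comparison $F''\le pF+q$ of \Cref{diff_ineq_lemma} to $F=\int\|E^{(k)}\|$. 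The resulting amplification $\alpha^{-c\eta(t-\tau_1)}$ is beaten by the initial smallness only because growth is split at a small constant norm $\eta$, with windows of length $O(\varepsilon)$ and then $O(\varepsilon')$, $\varepsilon'\ll\varepsilon$. Your remark that errors ``must not compound'' names this issue but supplies no mechanism.

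The second gap is tie-breaking. In the limit process every exponent starts at $\ell_j^\circ(0)=-1$, and each $D_j^{(k)}$ depends only on $S_j$ and $y$. The jump times and selected neurons are therefore a deterministic function of the sign-pattern matrix $A$ and the labels. The continuous part of the initialization enters only through $O(1)$ prefactors that vanish on the $\log(1/\alpha)$ scale, so ``continuity of the Gaussian law of the coordinates'' cannot break a tie of the limit process. Nor does $m\ll 2^{n/2}$: it only makes the columns of $A$ \emph{exactly} distinct (collision probability at most $\frac{m(m-1)}{2}2^{-\min(n_+,n_-)-1}$). Distinct columns are necessary but not sufficient. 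For example, with all $y_i$ equal, the first $\arg\max$ is non-unique whenever the largest $|S_j|$ is attained twice, which has positive probability even with distinct columns. The paper therefore assumes uniqueness (\Cref{ass:A.j.star}\ref{ass:A.j.star:j.star}) and proves in \Cref{pr:alg_assumptions}\ref{pr:alg_assumptions_2} that, given distinct columns, it fails only on a Lebesgue-null set of labels. The argument fixes an ordering and two not-yet-selected neurons, and picks an unfitted $i$ with $A_{i,j}\neq A_{i,j^\dagger}$, reducing to an earlier step if none exists. Then $-\ell_j(t_k)/\|D_j^{(k)}\|$ is strictly decreasing in $|y_i|$, while the other ratio and the earlier jump times do not depend on $y_i$. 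A union over the finitely many orderings, steps and pairs finishes it. The genericity must come from the labels, not the initialization.
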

\looseness=-1
Building on this algorithmic description of the limit dynamics, we derive an upper bound on the parameter $\ell_2$-norm of the solution found by gradient flow, showing it is within a constant factor of the minimal $\ell_2$-norm interpolator.

\subsection{Related work} \label{sec:related}

\paragraph{Incremental Learning.} \looseness=-1 \citet{arpit2017closer,kalimeris2019sgd} empirically demonstrated that standard optimization of deep learning models follows a \textit{simplicity bias}, whereby learned representations grow increasingly complex over the course of training. Separately, several works have shown that this complexity does not accumulate continuously, but rather incrementally: new \textit{data patterns} are absorbed one at a time \citep{rahaman2019spectral,charton2024learning}, giving rise to a characteristic training curve marked by long stagnation plateaus and sharp drops in the loss. The resulting training dynamics can thus be understood as a succession of jumps between saddle points of the loss landscape --- a phenomenon sometimes referred to as \textit{saddle-to-saddle dynamics} \citep{JacotGSHG21}.

\looseness=-1
Theoretically characterizing this incremental behavior is challenging because it is specific to the small initialization regime, where networks operate in the rich regime and the resulting dynamics are highly non-convex. For two-layer networks, \citet{kunin2025alternating} offered high-level intuitions for this regime: during each plateau, a dormant neuron is activated in a manner shaped by the local geometry near the saddle point, after which the activated neurons undergo rapid updates as training transitions to the next plateau. However, these intuitions fall short of a complete and rigorous characterization, which has so far been confined to specific linear activation models: for diagonal linear networks, \citet{berthier22,PesmeF23} precisely characterized the incremental learning dynamics, showing that the estimator's support grows one coordinate at a time; for linear networks and matrix sensing, \citet{gidel2019rank,JinLLDL23} established, under favorable data or initialization conditions, that the learned matrix undergoes incremental rank increases throughout training.

Our work advances this line of research by providing the first theoretical characterization of incremental learning under gradient flow for networks with non-linear activations, specifically two-layer ReLU networks trained on orthogonal data. In a related vein, \citet{abbe2023sgd} also studied incremental learning in two-layer networks with non-linear activations; however, their analysis is tailored to a specific activation function and optimization algorithm that depart from standard practice.

\paragraph{Training two-layer ReLU networks on orthogonal data.} \looseness=-1
Since a theoretical analysis of two-layer ReLU network training remains out of reach for general data distributions, many works have focused on specific toy settings, such as XOR data \citep{glasgow2024sgd}, positively correlated data \citep{wang2023understanding,chistikov2023learning,MinMV24}, or linearly separable data \citep{PhuongL21,lyu2021gradient}. Among these, we focus on the orthogonal data setting, first studied by \citet{BoursierPF22}, who showed that under small initialization, the parameters converge to a minimal $\ell_2$-norm interpolator. However, their analysis requires an initialization condition that is only satisfied in a heavily overparameterized regime, where the number of neurons~$m$ is exponential in the number of training points~$n$. \citet{dana2025convergence} later relaxed this assumption, showing that mild overparameterization $m \gtrsim \log(n)$ suffices to guarantee convergence to a global minimum of the training loss.

Yet the question of implicit bias in the mildly overparameterized regime remained open, as \citet{dana2025convergence} do not characterize the nature of the resulting estimator. Specifically, one may ask: \textit{does the network still converge to the minimal $\ell_2$-norm interpolator in the mildly overparameterized regime?} \citet{boursier2025benignity} answered this negatively, showing that the minimal $\ell_2$-norm interpolator is not reached with high probability as soon as $m \lesssim 2^{n}$. Taken together, these results leave a gap precisely in the mildly overparameterized regime $\log(n) \lesssim m \lesssim 2^{n}$, which our work fills by characterizing the $\ell_2$-norm of the interpolator obtained in this setting.

\paragraph{More parameters lead to simpler models.}
The generalization of overparameterized models remains an active area of research in deep learning theory. A phenomenon of particular interest is \textit{double descent} \citep{BelkinHMM19, Nakkiran2020Deep}, whereby overparameterization beyond the interpolation threshold leads to smaller test error. A likely driver of this behavior is that, as the number of parameters grows, the function represented at convergence becomes simpler \citep{wilson2025position}, exhibiting for example smaller norm \citep{BelkinHMM19}, fewer effective parameters \citep{maddox2020rethinkingparametercountingdeep}, or lower Kolmogorov complexity \citep{goldblum2024position}. 
Numerous theoretical analyses of this phenomenon have been proposed in recent years, but remain confined to linear regression on random features \citep{liao2020random, belkin2020two, hastie2022surprises, mei2022generalization,bach2024high}. Our work establishes such a tendency for two-layer ReLU networks trained on orthogonal data: mildly overparameterized networks converge to interpolators with slightly larger parameter $\ell_2$-norm than those found in the heavily overparameterized regime.

\section{Setting}\label{setup}

We consider a two-layer neural network with the ReLU activation, whose output is given by
\begin{equation}
    h_\theta(x) \coloneqq
  \textstyle\sum_{j = 1}^m a_j \, \sigma(w_j^\top x),
\end{equation} 
where $m$~is the network width, $\sigma(z) \coloneqq \max \{0, z\}$ is the ReLU function, and $\theta = (a, W)$ are the network parameters, which consist of output weights $a \in \mathbb{R}^m$ and hidden neurons $W^\top = (w_j)_{j = 1}^m \in \mathbb{R}^{d \times m}$. We consider a training dataset $(x_i, y_i)_{i = 1}^n$, which consists of inputs $x_i \in \mathbb{R}^d$ and outputs $y_i \in \mathbb{R}$, satisfying the following assumption.

\begin{ass}
\label{ass:data}
The training inputs are such that $x_i^\top x_{i'} = \iind{i = i'}$ for all $i, i' \in [n]$. 
\end{ass}

Although restrictive, the orthogonality assumption is a useful test bed for analyzing training dynamics, which can even sometimes be relaxed to general random high dimensional data \citep{dana2025convergence}. On the other hand, the norm constraint on the inputs implied by \Cref{ass:data} is solely used for sake of simplicity and could be easily alleviated. 
Training aims at minimizing the mean squared error loss over the training data,
\begin{equation}
    \mathcal{L}(\theta) \coloneqq \frac{1}{2 n} \textstyle\sum_{i = 1}^n (h_\theta(x_i) -y_i)^2.
\end{equation}
For analytical purpose, we consider optimizing this loss function by gradient flow, which approximates (stochastic) gradient descent as the learning rate vanishes to $0$:
\begin{equation}
    \dot{\theta}(t) \in -\partial \mathcal{L}(\theta(t)) \text{ for almost all } t \geq 0.\label{eq:gr.fl}
\end{equation}

Here, $\partial$~denotes the \citet{clarke1975generalized} subdifferential, which allows to define the gradient flow beyond differentiable functions. We direct the reader to \Cref{app:clarke} for further details on the Clarke subdifferential. Note that \Cref{ass:data,eq:gr.fl} imply that
\begin{equation} \label{eq:individual_alignment}
    \dot{w}_j^\top(t) \, x_i \in \frac{1}{n} (y_i - h_{\theta(t)}(x_i)) \, a_j(t) \, \partial \sigma(w_j^\top(t) \, x_i) .
\end{equation}
Hence, for all $i \in [n]$, all $j \in [m]$, and almost all $t \geq 0$, if a hidden neuron component $w_j^\top(t) \, x_i$ is negative then it must be constant for all $t' \geq t$.  Empirically, the same is true with ``negative'' replaced by ``non-positive'',\footnote{E.g., in PyTorch, the derivative of the ReLU non-linearity at~$0$ is computed as~$0$.} so \Cref{ass:dead} below restricts our attention to realistic training trajectories, which have that property.
 \begin{ass}
\label{ass:dead}
For all $i \in [n]$ and all $j \in [m]$, if $w_j^\top(t) \, x_i = 0$ for some $t \geq 0$, then $w_j^\top(t') \, x_i = 0$ for all $t' > t$.
\end{ass}
Note that whilst it is guaranteed that a solution of \Cref{eq:gr.fl} exists locally, it may not be unique. With \cref{ass:dead}, along every solution of the subdifferential inclusion \cref{eq:gr.fl}, the left-hand side $\dot{\theta}(t)$ exists --- possibly except at the finitely many time points that are the minima of non-empty sets $\{t \geq 0 \;\vert\; w_j^\top(t) \, x_i = 0\}$ --- and is equal to the element of the right-hand side $-\partial L(\theta(t))$ that is obtained by using only~$0$ as the subderivative of the ReLU function~$\sigma$ at~$0$.  Hence for simplicity of presentation, we fix the latter choice, i.e., in the remainder of the paper we work with $\iind{z > 0}$ instead of $\partial \sigma(z)$ and we may use the gradient notation to mean the subgradient obtained with that choice. \citet{BoursierPF22} also proved that any other constant definition of the subgradient fails to yield global solutions of \Cref{eq:gr.fl} in presence of orthogonal data. Finally, we note that \citet{bertoin2021numerical} suggest that this fixed choice tends to be most efficient in practice. 
We consider balanced initialization of the parameters, which is specified by an initialization scale $\alpha$.
\begin{ass}
\label{ass:init}
For all $j \in [m]$ we have $a_j(0) = \alpha \, s_j$ and $w_j(0) = \alpha \, u_j$, where
$s_j
 \stackrel{\text{\textnormal{iid}}}{\sim}
 \mathcal{U}(\{\pm 1\})$
and
$u_j
 \stackrel{\text{\textnormal{iid}}}{\sim}
 \mathcal{U}(\mathbb{S}^{d - 1})$.
\end{ass}
Whilst the assumption of strictly balanced initialization is partly to simplify the analysis, the work of \citet{AzulayMNWSGS21} suggests that balanced initializations encourage bias towards the rich regime. \citet{AzulayMNWSGS21} also note that the commonly used Xavier initialization \citep{GlorotB10} is approximately balanced. 
The balanced initialization simplifies the analysis of the training dynamics, as this weight balancedness will be preserved along training, thanks to the following result.
\begin{prop}[{\citealt[Theorem~2.1]{DuHL18}}]
\label{pr:bal}
For all $j \in [m]$ and all $t \geq 0$, we have $|a_j(t)| = \|w_j(t)\|$ and $\sgn(a_j(t)) = s_j$.
\end{prop}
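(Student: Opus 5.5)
The plan is the standard conservation-law argument: show that $a_j^2 - \|w_j\|^2$ is constant along the flow. By \Cref{ass:init} it vanishes at time $0$, because $|a_j(0)| = \alpha = \alpha\|u_j\| = \|w_j(0)\|$. With the fixed subgradient choice ($\iind{z>0}$ in place of $\partial\sigma$), the flow on almost every time interval satisfies
\begin{equation*}
\dot a_j = \frac{1}{n}\sum_{i=1}^n (y_i - h_\theta(x_i))\,\sigma(w_j^\top x_i),\qquad
\dot w_j = \frac{1}{n}\sum_{i=1}^n (y_i - h_\theta(x_i))\,a_j\,\iind{w_j^\top x_i>0}\,x_i .
\end{equation*}
These give
\begin{equation*}
\frac{d}{dt}\Big(\tfrac12 a_j^2\Big) = \frac{a_j}{n}\sum_i (y_i-h_\theta(x_i))\,\sigma(w_j^\top x_i).
\end{equation*}
For the other term,
\begin{equation*}
\frac{d}{dt}\Big(\tfrac12\|w_j\|^2\Big) = w_j^\top \dot w_j = \frac{a_j}{n}\sum_i (y_i-h_\theta(x_i))\,\iind{w_j^\top x_i>0}\,w_j^\top x_i .
\end{equation*}
The two expressions agree because $z\,\iind{z>0} = \sigma(z)$ for every real $z$; this is the positive homogeneity of ReLU, and it holds equally for any element of $z\,\partial\sigma(z)$. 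The derivative identity therefore holds wherever $\dot\theta$ exists. By the discussion following \Cref{ass:dead}, that is all $t \ge 0$ except finitely many points.

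To integrate, I would note that $\theta(t)$ is a solution of the differential inclusion and hence absolutely continuous. So $a_j^2-\|w_j\|^2$ is absolutely continuous with zero derivative almost everywhere, and is therefore constant, equal to its initial value $0$. This gives $|a_j(t)| = \|w_j(t)\|$ for all $t\ge 0$.

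For the sign claim, I would argue by contradiction. Suppose $\sgn(a_j(t)) \ne s_j$ for some $t$. Since $a_j(0)=\alpha s_j\neq 0$ and $a_j$ is continuous, the intermediate value theorem gives a first time $t_0>0$ with $a_j(t_0)=0$. Balancedness then forces $w_j(t_0)=0$.

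It remains to show the neuron cannot reach $0$ in finite time. Let $C = \sup_{s\le t_0}\frac1n\sum_i |y_i-h_\theta(x_i)|$, which is finite since the loss is nonincreasing along the flow. By orthonormality of the $x_i$ and balancedness,
\begin{equation*}
|\dot a_j| \le \frac1n\sum_i |y_i-h_\theta(x_i)|\,\|w_j\| \le C|a_j|.
\end{equation*}
Grönwall's inequality then gives $|a_j(t)| \ge \alpha e^{-Ct}>0$ on $[0,t_0]$, contradicting $a_j(t_0)=0$. Hence $a_j$ never vanishes and keeps the sign $s_j$. Here $\sgn(a_j)$ is interpreted with $a_j\neq0$, so this argument in fact shows strict nonvanishing.

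The main obstacle is the nonsmoothness. I would handle it through \Cref{ass:dead} and the fixed subgradient choice, which guarantee that $\dot\theta$ exists and equals the formula above almost everywhere. Absolute continuity then suffices to integrate the derivative identity.
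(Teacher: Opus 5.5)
Your proposal is correct and is essentially the argument behind the cited result; the paper gives no proof of its own, only the reference to Du, Hu and Lee. That argument is the conservation law $\frac{d}{dt}(a_j^2-\|w_j\|^2)=0$, which follows from positive homogeneity, $z\,\partial\sigma(z)=\{\sigma(z)\}$, and is integrated using absolute continuity of the trajectory. Your Gr{\"o}nwall lower bound $|a_j(t)|\ge \alpha e^{-Ct}$ correctly supplies the sign-preservation part, which the citation leaves implicit. Note that finiteness of $C$ already follows from continuity of $\theta$ on $[0,t_0]$, so you do not need loss monotonicity there.
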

\Cref{pr:bal} also implies that the output layer signs remain constant during training. This allows us to treat separately the data associated with positive or negative labels, as well as the neurons with positive or negative output signs. To this end, we denote for $s \in \{\pm 1\}$, the set $I_s \coloneqq \{i \in [n] \;\vert\; \sgn(y_i) = s\} \label{Is_defn}$ of labels with sign $s$.

\paragraph{Neuron dynamics.} Denote the normalized hidden-layer weights by $ \overline{w}_j := w_j / \lVert w_j \rVert $ (in general we use bars to denote normalized vectors and $\lVert \cdot \rVert$ to denote the $\ell_2$-norm) and define the dynamical vector $\fD_j = - \frac{1}{n} \sum_{i \mid w_j^\top x_i > 0} (h_{\theta}(x_i) - y_i) x_i \label{fd_defn}$. \Cref{eq:gr.fl,pr:bal,ass:dead} imply the radial and tangential dynamics of neurons are given for almost all $t\geq 0$ by
\begin{align}
    \dot{a}_j(t) = \fD_j^\top w_j \quad \text{and} \quad
    \dot{\overline{w}}_j(t) = s_j \left( \text{I}_d -  \overline{w}_j \overline{w}_j^\top \right) \fD_j \label{eq:gf_tangential}.
\end{align}
\paragraph{Acceleration of time.} Our goal is to study the limiting dynamics of the parameters trajectory as the initialization scale $\alpha$ goes to $0$. When $\alpha\to 0$, escaping saddle points of the trajectory --- starting with the initial one where all parameters are zero --- takes an arbitrarily long time. Similarly to \citet{berthier22,PesmeF23}, obtaining a non-trivial limit dynamics requires a proper rescaling of time, which is given by the time acceleration $t \mapsto \log(1/\alpha) t$. We denote the accelerated parameters by $\widetilde{\theta}^\alpha(t) = \theta( \log(1/\alpha) t)$. In the following, we omit the superscript as the acceleration by a factor of $\log(1/\alpha)$ is implicit in the tilde notation. The flow defining this rescaled process then satisfies $\dot{\widetilde{\theta}}(t) \in - \log(1/\alpha) \partial \mathcal{L}(\widetilde{\theta}(t))$ for almost all $t \geq 0$.

\section{Limit process}

In this section, we provide intuitions for deriving the limit process of the training trajectory as $\alpha\to 0$. We here only provide high level insights, and proving the exact convergence results claimed here requires much more detailed and intricate analyses, that are given in the complete proof of \cref{th:s.t.s}.

We build the saddles visited by the limit process and the corresponding jump times recursively. 
Assume that $\widetilde{\theta}(t)\overset{\alpha\to 0}{\longrightarrow} \theta^\circ(t)$ for almost every $t \in [0, t_k]$, where $t_k$ is a jump time, and that the limit process has reached the saddle point $\theta^{(k)}$ after the $k$-th jump, i.e., $\theta^\circ(t_k) = \theta^{(k)}$.  The limit process $\theta^\circ(t)$ will then be constant, equal to the saddle $\theta^{(k)}$ on an interval $[t_k, t_{k+1})$, and will jump to a new saddle $\theta^{(k+1)}$ at a jump time $t_{k+1}$ to be determined. 
The initial saddle point is given by $\theta^{(0)}=\mathbf{0}$. Moreover, we will also show recursively that the limit process jumps from a saddle point to another by modifying a single neuron $(w_{j}^\circ, a^\circ_j)$, which is zero before jump time $t_{k+1}$, and will reach its final, non-zero value after the jump. Also, all the training points along which the neuron $j$ is active will be fitted after the jump. During that jump, the values of all other neurons remain unchanged. In consequence, the saddle point $\theta^{(k)}$ visited by the limit process includes two types of neurons:
\begin{itemize}
    \item unfitted neurons, which are equal to zero, i.e., $(w_j^\circ, a_j^\circ) = \mathbf{0}$;
    \item fitted neurons, which perfectly fit the training points along which they are active: \\ $\iind{w_j^{\circ \ \top} x_i > 0} (h_{\theta^{(k)}}(x_i) - y_i)^2 = 0$ for any $i\in[n]$. These neurons will not move further during the remainder of training.
\end{itemize}

Describing the limit process then calls for describing how and when it jumps from a saddle point $\theta^{(k)}$ to the next one $\theta^{(k+1)}$. For that and following the heuristics developed by \citet{MaennelBG18,kunin2025alternating}, we will decompose the tangential and radial dynamics of the parameters, using the reparameterization $(\widetilde{\overline{w}}_j,\widetilde{\ell}_j)$ such that $\widetilde{w}_j = \alpha^{\widetilde{\ell}_j}\widetilde{\overline{w}}_j$. This reparameterization captures the right quantities: while $\widetilde{w}_j$ is nearly zero for unfitted neurons, its renormalized counterpart $\widetilde{\overline{w}}_j$ tracks the tangential movement of the neuron, which is key to understanding the dynamics in the vicinity of each saddle. 
The dynamics satisfied by these parameters can then be written as follows:
\begin{equation}
    \frac{\df }{\df t} \widetilde{\overline{w}}_j= \log(1/\alpha) s_j  \left( \text{I}_d - \widetilde{\overline{w}}_j \widetilde{\overline{w}}_j^\top \right)\fD_j  ; \qquad \frac{\df }{\df t} \widetilde{\ell}_j = s_j \widetilde{\overline{w}}_j^\top \fD_j
\end{equation}
Around a saddle point $\theta^{(k)}$, note that $\fD_j\approx 0$ for already fitted neurons, which thus do not move anymore. 
For unfitted neurons, the tangential movement can be seen as gradient ascent/descent of some alignment restricted on the sphere, similarly to observations made by \citet{BoursierPF22, MinMV24, kumar2024directional}. Because of its $\log(1/\alpha)$ scaling, this tangential movement thus occurs instantly as $\alpha\to 0$, such that there exist some vectors $D_j^{(k)}$ and for $t\in (t_k, t_{k+1})$
\begin{equation}
    \widetilde{\overline{w}}_j(t) \overset{\alpha\to 0}{\longrightarrow} s_j \overline{D}_j^{(k)}.
\end{equation}
These vectors $\overline{D}_j^{(k)}$ correspond to the (normalized) critical points reached by the gradient ascent/descent mentioned above. In the case of orthogonal data, they are simple to compute and are actually given for unfitted neurons by
\label{Djk_defn}
\begin{equation}
D_j^{(k)} = \frac{1}{n}\sum_{i\in S_U^{(k)}} \iind{w_j(0)^\top x_i>0} \iind{\sgn{y_i}=s_j} y_i x_i, 
\end{equation}
where $S_U^{(k)} \label{Suk_defn}$ is the set of training points that are not yet fitted by the network (a rigorous definition of all quantities is given in \Cref{alg:s.t.s} below). 
Since we have $\widetilde{\overline{w}}_j(t) \approx s_j \overline{D}_j^{(k)}$, note that the time derivative of the norm exponent then satisfies on $(t_k, t_{k+1})$:
\begin{equation}
    \frac{\df}{\df t}\widetilde{\ell}_j \approx \left\|D_j^{(k)}\right\|_2.
\end{equation}
Consequently, $\widetilde{\ell}_j$ converges to a piecewise affine process $\ell_j^\circ$ that starts at $-1$, owing to our initialization choice, and is defined recursively by:
\label{ellj_defn}
\begin{equation}
    \ell_j^\circ(0) = -1; \qquad \ell_j^\circ(t)= \ell_j^\circ(t_k) + (t-t_k)\left\| D_j^{(k)} \right\|_2 \quad \text{for } t\in[t_k, t_{k+1}].
\end{equation}
Moreover, the next jump happens as soon as one unfitted neuron becomes non-zero (in the limit $\alpha\to 0$). With our reparameterization, this occurs as soon as $\ell_j^\circ(t)\geq 0$ for some unfitted neuron. 
The jump time $t_{k+1}$ can then be deduced as the first time for which the exponent $\ell_j^\circ$ of some unfitted neuron becomes zero, and the corresponding neuron is the one to be updated at jump time $t_{k+1}$. In other words, we have the following:
\begin{itemize}
    \item $j_\star^{(k)} = \argmax_{j}  \ell_j^{\circ}(t_k)/\left\|D_j^{(k)}\right\|_2$ is the unfitted neuron that will grow in norm; \label{jstar_defn}
    \item $t_{k+1}= t_k - \ell_{j_\star^{(k)}}^{\circ}(t_k)/\left\|D_{j_\star^{(k)}}^{(k)}\right\|_2$ is the jump time at which $w_{j_\star^{(k)}}^\circ$ becomes non-zero. \label{tk_defn}
\end{itemize}
At jump time $t_{k+1}$, the process then leaves the saddle $\theta^{(k)}$, and due to the time acceleration $\log(1/\alpha)$, it instantaneously reaches a new critical point of the training loss. 
Moreover, during this instantaneous jump, only the neuron $(w_{j_\star^{(k)}}^\circ, a_{j_\star^{(k)}}^\circ)$ of the limit process undergoes a change: it jumps from $\mathbf{0}$ to the value minimizing the new training loss over this single neuron, which is given by
\begin{equation}
    w_{j_\star^{(k)}}^\circ(t) = s_j \frac{n D_{j_\star^{(k)}}^{(k)}}{\sqrt{\left\|n D_{j_\star^{(k)}}^{(k)}\right\|_2}}; \quad  a_{j_\star^{(k)}}^\circ(t) = s_j \sqrt{\left\|n D_{j_\star^{(k)}}^{(k)}\right\|_2} \qquad \text{for }t\geq t_{k+1}.
\end{equation}

\begin{algorithm}[t]
\caption{Construction of the limit process.}
\label{alg:s.t.s}
\begin{algorithmic}[1]
\State \textbf{Input:} data $(x_i,y_i)_{i\in[n]}$ and initial weights $(w_j(0), a_j(0))_{j\in[m]}$.
\State \textit{\color{commentgreen} $\triangleright$ Initialization}
\State $p \gets 0$;\quad  $t_0=0$; \quad $N_U^{(0)} = [m]$; \quad $S_U^{(0)} = [n]$
\State $S_j = \lbrace i \in [n] \mid w_j(0)^\top x_i > 0 \text{ and } \sgn(y_i)=\sgn(a_j(0))\rbrace$; \ and \
 $\ell^\circ_j(0)=-1$ for any $j\in[m]$
\State \textit{\color{commentgreen} $\triangleright$ Recursive definition of the jumps}
\While{$\exists j\in N_U^{(p)}, S_U^{(p)}\cap S_j \neq \emptyset$,}
    \State for any $j\in[m]$, \quad $D_j^{(p)} = \begin{cases}\frac{1}{n}\sum_{i\in S_U^{(p)}\cap S_j}  y_i x_i \quad&\text{ if }j\in N_U^{(p)} \\
     D_j^{(p-1)} &\text { otherwise} \end{cases}$
    \State \label{line:argmin} Let $j_\star^{(p)} \in \argmax_{j\in N_U^{(p)}}  \ell_j^{\circ}(t_p)/\left\|D_j^{(p)}\right\|_2$
    \Comment{neuron to activate}
    \State $t_{p+1} = t_p - \ell_{j_\star^{(p)}}^{\circ}(t_p)/\left\|D_{j_\star^{(p)}}^{(p)}\right\|_2 $
    \Comment{jump time}
    \State $\ell^\circ_j(t_{p+1}) = \min \left( 0,\ell_j^\circ(t_p) + (t_{p+1} -t_p)\left\|D_j^{(p)}\right\|_2 \right) $ for any $j\in[m]$
    \Comment{norm exponent}
    \State  $N_U^{(p+1)} = N_U^{(p)} \setminus \{ j_\star^{(p)}\}$ \ and  \ $S_U^{(p+1)} =  S_U^{(p)}\setminus S_{j_\star^{(p)}} \label{Suk_Nuk_defn} \nonumber $
    \Comment{unfitted neurons and data}
    \State $p\gets p+1$;
\EndWhile
\State \textit{\color{commentgreen} $\triangleright$ Definition of limit process}
\State Let $t_{p+1}=+\infty$ by convention
\State For any $k\in[p]$ and $t\in[t_k, t_{k+1})$, define $\theta^\circ(t)=(w_j^\circ(t), a_j^\circ(t))_{j\in[m]}$ as
    \begin{align}
  &  \text{for any } j\in N_U^{(k)}, \qquad (w_j^\circ(t), a_j^\circ(t)) = \mathbf{0};\\
   & \text{for any } j\not\in N_U^{(k)}, \qquad    w_{j}^\circ(t) = s_j \frac{n D_{j}^{(k)}}{\sqrt{\left\|n D_{j}^{(k)}\right\|_2}} \quad \text{and} \quad  a_{j}^\circ(t) = s_j \sqrt{\left\|n D_{j}^{(k)}\right\|_2}. 
\end{align}
\vspace{0.5em}
\State \textbf{Output:} limit process $(\theta^\circ(t))_{t\geq 0}$ and jump times $(t_0, \ldots, t_{p+1})$.
\end{algorithmic}
\end{algorithm}

Thanks to these high level insights, one can define formally the limit process $(\theta^\circ(t))_{t\geq 0}$ in \cref{alg:s.t.s} below. 
Importantly, the \texttt{while} loop in \cref{alg:s.t.s} eventually stops after a finite number of iterations~$p$, such that $S_U^{(p)} \cap S_j = \emptyset$ for all $j\in N_U^{(p)}$. In the following, \Cref{ass:A.j.star} will guarantee that we actually stop when $S_U^{(p)} = \emptyset$, which corresponds to the training loss being zero.

\section{Saddle-to-saddle dynamics} \label{sec:convergence}
\label{s:dyn}

In this section, we present our main result. We first require some assumptions to ensure that the network interpolates at convergence, and avoid degenerate cases in the definition of the limit process. 


\begin{ass}\label{ass:A.j.star}
Let $A \in \{0,1\}^{n\times m}$ be the matrix such that $A_{i,j} =\iind{w_j^\top(0) \, x_i > 0 \text{ and } i \in I_{s_j}} \label{Aij_defn}$. 
\begin{enumerate}[(i), leftmargin=30pt, topsep=0pt, parsep=0pt]
\item
\label{ass:A.j.star:A}
Each row of the mask matrix $A$ is non-zero, each column is non-zero, and its columns are pairwise distinct.
\item
\label{ass:A.j.star:j.star}
The training outputs $(y_i)_{i = 1}^n$ are such that they are non-zero, and in each iteration defining the limit process, the $\argmax$ in \cref{alg:s.t.s} \cref{line:argmin} is unique.
\end{enumerate}
\end{ass}

We believe our main result could be extended to the case that columns are not pairwise distinct, at the expense of a more refined analysis, by merging neurons into groups with matching initial activation pattern in a similar manner to \cite{BoursierPF22}: neurons which share activation patterns align and grow in the same directions throughout training, and the sum of their squared norms is equal to the squared norm achieved by a single neuron with this activation pattern. In this way, neurons sharing activation patterns form rank-1 sub-networks, so \Cref{alg:s.t.s} may be applied considering an aggregated collection of neurons with matching initial activation pattern as one neuron. 

The requirement that each row be non-zero ensures loss minimization. The requirement that each column be non-zero ensures that for each neuron, there exists a direction in which it can grow at initialization to reduce the loss, otherwise such neurons remain small throughout training and can be ignored.
\Cref{pr:alg_assumptions} below states that \Cref{ass:A.j.star} holds with high probability in the mild overparameterization regime $\frac{\log(n)}{\log(4/3)} \lesssim m \ll \exp\left(n \log (2)/2\right)$.

\begin{prop} \label{pr:alg_assumptions}Let $n_+ = |I_+|$\, , $n_- = |I_-|$ and consider the initialization scheme of \cref{ass:init}. \begin{enumerate}[(i), topsep=0pt, parsep=0pt]
\item \label{pr:alg_assumptions_1}
 \cref{ass:A.j.star}~\ref{ass:A.j.star:A} holds with probability at least
$1 - n \, \big(\frac{3}{4}\big)^m
    - \frac{m(m+3)}{2} \big(\frac{1}{2}\big)^{\min(n_+,n_-) + 1}$.
\item \label{pr:alg_assumptions_2}
When \cref{ass:A.j.star}~\ref{ass:A.j.star:A} holds, the set of tuples $(y_i)_{i = 1}^n$ excluded by \cref{ass:A.j.star}~\ref{ass:A.j.star:j.star} has Lebesgue measure zero in $\mathbb{R}^n$.
\end{enumerate}
\end{prop}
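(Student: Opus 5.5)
For part (i) we work directly with the distribution of the mask matrix $A$. Under \cref{ass:init}, $u_j$ is uniform on $\mathbb{S}^{d-1}$, so $u_j$ is rotation invariant. Because the $x_i$ are orthonormal by \cref{ass:data}, the vector $(u_j^\top x_i)_{i\in[n]}$ has a sign-symmetric law: flipping any subset of coordinates corresponds to an orthogonal map of $\mathbb{R}^d$ fixing the others. Hence the signs $\iind{u_j^\top x_i>0}$, $i\in[n]$, are i.i.d.\ Bernoulli$(1/2)$; ties have probability zero. These signs are independent of $s_j$, and the pairs $(s_j,u_j)$ are independent across $j$. Consequently, for each $j$, conditionally on $s_j$, column $j$ of $A$ has i.i.d.\ Bernoulli$(1/2)$ entries on the rows $I_{s_j}$ and zeros elsewhere, and the columns are mutually independent. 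Each entry $A_{i,j}$ is therefore Bernoulli$(1/4)$ and independent across $j$.

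We then take a union bound over three bad events.
\emph{Zero row $i$:} $\mathbb{P}(A_{i,j}=0\ \forall j)=(3/4)^m$, so the union over rows contributes $n(3/4)^m$.
\emph{Zero column $j$:} $\mathbb{P}=\frac12 (1/2)^{n_+}+\frac12(1/2)^{n_-}\le (1/2)^{\min(n_+,n_-)}$. Summed over $j$ this gives $m(1/2)^{\min+0}$, which we write as $2m\,(1/2)^{\min+1}$.
\emph{Equal columns $j\neq j'$:} if $s_j\ne s_{j'}$ the supports are disjoint, so equality forces both columns to be zero, an event already counted. If $s_j=s_{j'}=s$ (probability $1/2$), equality has probability $(1/2)^{n_s}$. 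Each pair thus contributes at most $(1/2)^{\min+1}$, and there are $m(m-1)/2$ pairs.
Summing, the column terms give $\big(2m+\tfrac{m(m-1)}{2}\big)(1/2)^{\min+1}=\frac{m(m+3)}{2}(1/2)^{\min+1}$, which is exactly the claimed bound.

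For part (ii) we fix $A$ satisfying (i) and regard \cref{alg:s.t.s} as a function of $y$. We partition $\mathbb{R}^n$, up to the null set $\{\exists i: y_i=0\}$, according to the sign pattern of $y$; on each piece the sets $S_j$ are fixed. The execution is determined by a finite decision tree of argmax choices, and along any branch the quantities $\ell_j^\circ(t_p)$, $t_p$ and $\|D_j^{(p)}\|$ are explicit functions of $y$. Here $\|D_j^{(p)}\|=\frac1n\big(\sum_{i\in S_U^{(p)}\cap S_j}y_i^2\big)^{1/2}$, and the $\ell$, $t$ are rational combinations of these square roots, since each step involves only affine updates and divisions. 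A tie at step $p$ along a branch amounts to an equation $F(y)=0$, with $F(y)=\ell_j^\circ(t_p)\|D_{j'}^{(p)}\|-\ell_{j'}^\circ(t_p)\|D_j^{(p)}\|$, where $F$ is real-analytic on the open region where all relevant norms are positive. Empty intersections are excluded from the argmax, as in the while-loop condition. The zero set of a real-analytic function that is not identically zero has Lebesgue measure zero, and there are finitely many branches and steps, so it suffices to show that no such $F$ vanishes identically.

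This non-degeneracy is the main obstacle. Our plan is to use distinct columns: the sets $S_U^{(p)}\cap S_j$ and $S_U^{(p)}\cap S_{j'}$ are then distinct, or else we argue from the earlier history of $\ell$. We will try scaling a single coordinate $y_i$ with $i$ in exactly one of these two sets. This changes one norm while leaving the other fixed. The $\ell$ values depend on $y$ only through earlier norms, and we need to handle the case where $y_i$ also enters those. We expect a careful choice of $i$, for instance the last distinguishing coordinate, or an asymptotic argument as $y_i\to\infty$, to exhibit non-constancy of $F$. If $S_U\cap S_j= S_U\cap S_{j'}$, we instead compare the $\ell$ histories, which differ through earlier distinct norms.
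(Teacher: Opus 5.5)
Part (i) is correct and follows the paper's argument: the same three-event union bound, giving $n(3/4)^m + m(1/2)^{\min(n_+,n_-)} + \tfrac{m(m-1)}{2}(1/2)^{\min(n_+,n_-)+1}$. Your reflection argument for the i.i.d.\ Bernoulli$(1/2)$ signs and your treatment of opposite-sign pairs are sound, and slightly more careful than the paper, which simply asserts independence.

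Part (ii) has a genuine gap. Your reduction is fine: within a sign orthant the sets $S_j$ are fixed, a branch of your decision tree is exactly the paper's fixed ordering $\pi$, and the branch formulas are analytic on the connected orthant. But the non-degeneracy $F\not\equiv 0$ is the entire content of (ii), and you leave it as ``we will try'' and ``we expect''. As written, (ii) is not proved.

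\textbf{Case 1: the two restricted sets differ.} The paper's key observation is exactly what resolves your worry that $y_i$ also enters the $\ell$ values. Along a fixed branch, take $i\in S_U^{(p)}\cap S_j\setminus S_{j'}$.
\begin{itemize}
\item Because $i$ is still unfitted at step $p$, it lies in no $S_{j_\star^{(l)}}$ with $l<p$. Hence none of the jump times $t_1,\dots,t_p$ depends on $y_i$.
\item Since $i\notin S_{j'}$, neither $\ell_{j'}^\circ(t_p)$ nor $\|D_{j'}^{(p)}\|$ depends on $y_i$.
\item The coordinate $y_i$ therefore enters only through $\|D_j^{(l)}\|$, $l\le p$, each of which is strictly increasing in $|y_i|$.
\item So $-\ell_j^\circ(t_p)=1-\sum_{l<p}(t_{l+1}-t_l)\|D_j^{(l)}\|$ is positive and non-increasing in $|y_i|$, while $\|D_j^{(p)}\|$ strictly increases.
\item Thus $-\ell_j^\circ(t_p)/\|D_j^{(p)}\|$ is strictly decreasing in $|y_i|$, and the competing ratio is constant.
\end{itemize}
For fixed $(y_{i'})_{i'\neq i}$, at most one value of $|y_i|$ produces a tie. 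Fubini then gives measure zero directly, with no analyticity needed.

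\textbf{Case 2: $S_U^{(p)}\cap S_j=S_U^{(p)}\cap S_{j'}$.} Here $D_j^{(p)}=D_{j'}^{(p)}$, so a tie means $\ell_j^\circ(t_p)=\ell_{j'}^\circ(t_p)$. The two exponents have equal slopes after the last step $\bar k$ at which their restricted sets differed. The tie therefore reduces to an equation involving only $t_1,\dots,t_{\bar k+1}$. The paper treats this with a distinguishing coordinate $i\in S_U^{(\bar k)}\cap(S_j\triangle S_{j'})$.

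Your one sentence about ``comparing the $\ell$ histories'' does not do this. Note also that such an $i$ is fitted at step $\bar k$, so $y_i$ now enters $t_{\bar k+1}$ through $j_\star^{(\bar k)}$, and the independence argument of Case 1 does not transfer verbatim. This is where your analytic framework could genuinely help, for example by continuing $F$ in that coordinate beyond the branch region. But that computation still has to be carried out.
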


The following result rigorously formulates the sense in which the gradient flow defined by \cref{eq:gr.fl} converges, as the initialization scale tends to zero, to the limit process $\theta^\circ$.

\begin{thm}
\label{th:s.t.s}
Under \cref{ass:data,,ass:init,,ass:dead,,ass:A.j.star}, as the initialization scale~$\alpha$ tends to~$0$, the accelerated flow~$\widetilde{\theta}^\alpha$ converges to~$\theta^\circ$, uniformly on every compact subset of $\mathbb{R}_{\geq 0} \setminus \{t_0, t_1, \dots, t_p\}$, where $\theta^\circ$ and $(t_0, t_1, \dots, t_p)$ are respectively the limit process and jump times defined in \cref{alg:s.t.s}.
\end{thm}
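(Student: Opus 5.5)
The plan is to prove the statement by induction over the jump index $k = 0, 1, \dots, p$, using the decomposition $\widetilde{w}_j = \alpha^{\widetilde{\ell}_j}\widetilde{\overline{w}}_j$ and the fact, from \Cref{pr:bal}, that $|a_j| = \|w_j\|$ with fixed sign $s_j$. Orthogonality (\Cref{ass:data}) and \Cref{eq:individual_alignment} decouple the coordinates: writing $w_{j,i} := w_j^\top x_i$, each coordinate evolves by
\begin{equation*}
    \dot w_{j,i} = \tfrac{1}{n}(y_i - h_\theta(x_i))\,a_j\,\iind{w_{j,i}>0}.
\end{equation*}
So the coordinates along data points where neuron $j$ is inactive, or has the wrong sign ($i \notin I_{s_j}$), are never increasing in the relevant direction. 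Because $\|w_j\| = |a_j|$, coordinates with $s_j(y_i - h(x_i)) > 0$ grow exponentially at rate $|y_i - h(x_i)|/n$ in unaccelerated time. All other positive coordinates decay or are killed. Once a coordinate reaches $0$ it stays there by \Cref{ass:dead}.

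The induction hypothesis at stage $k$, stated at a time $t_k + \varepsilon$, has four parts. (a) Every fitted neuron $j \notin N_U^{(k)}$ is within $o(1)$ of its limit value. (b) Every residual $h(x_i) - y_i$ is $o(1)$ for $i \notin S_U^{(k)}$ and $o(1)$-close to $-y_i$ for $i \in S_U^{(k)}$. (c) For each unfitted $j$, the log-norm $\widetilde{\ell}_j$ is within $o(1)$ of $\ell^\circ_j$. (d) For each unfitted $j$, the coordinates $w_{j,i}$ with $i \in S_j \cap S_U^{(k)}$ are positive, and all coordinates outside $S_j$ are negligible relative to $\|w_j\|$.

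On the plateau $[t_k + \varepsilon,\, t_{k+1} - \varepsilon]$, I will use a bootstrap or Grönwall argument. While all unfitted norms stay below $\alpha^{c}$ for some $c > 0$, the residuals stay frozen up to $O(\alpha^{c})$. Hence each relevant coordinate of an unfitted neuron satisfies
\begin{equation*}
    \tfrac{d}{dt}\log w_{j,i} = \log(1/\alpha)\,\big(|y_i|/n + o(1)\big)\,\|w_j\|/|w_{j,i}|\cdot(\cdots).
\end{equation*}
More cleanly, I will track the vector of relevant coordinates: it obeys a linear ODE $\dot v = |a_j| \, (y_i/n)_i$. With $|a_j| = \|w_j\|$, this forces the direction to converge to $\overline{D}_j^{(k)}$ after time $O(1/\log(1/\alpha))$, and forces $\frac{d}{dt}\widetilde\ell_j \to \|D_j^{(k)}\|$. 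The components of $w_j$ outside $S_U^{(k)} \cap S_j$ stay bounded by their $\alpha$-scale values or by $o(1)$ residual effects, so the direction error is exponentially small. Integrating gives (c) at $t_{k+1} - \varepsilon$, which is exactly the piecewise-affine recursion of \cref{alg:s.t.s}. The uniqueness of the argmax in \Cref{ass:A.j.star}\ref{ass:A.j.star:j.star} guarantees that only $j_\star^{(k)}$ reaches exponent $0$ first, with a strictly positive exponent gap $\delta$ to every other unfitted neuron.

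The jump itself is the main obstacle. On the window $[t_{k+1} - \varepsilon, t_{k+1} + \varepsilon]$, the neuron $j_\star$ has direction essentially $s_j \overline{D}_{j_\star}^{(k)}$. Its dynamics restricted to the coordinates in $S_{j_\star} \cap S_U^{(k)}$ is then a rank-one (single-neuron) gradient flow on a decoupled quadratic loss, which converges from a tiny aligned start to $w = s_j\, nD/\sqrt{\|nD\|}$. Its convergence time is $O(\log(1/\alpha^{\text{small}}))$ in unaccelerated time, i.e. $o(1)$ in accelerated time once $\widetilde\ell_{j_\star}$ is near $0$. I have to control three things: this single-neuron flow uniformly against perturbations of size $\alpha^{\delta}$ coming from the other neurons; the fact that the fitted coordinates then have residuals decaying exponentially (the stationary point is attracting along those coordinates); and the fact that other unfitted neurons, which still have norm $\alpha^{\delta'}$, barely move because their growth rates are bounded. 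The subtlety is that after fitting, neurons $j'$ sharing data points with $S_{j_\star}$ lose those directions. Their coordinates on the newly fitted points stop growing since the residual is $o(1)$, which explains why $D_{j'}^{(k+1)}$ drops those indices. Their norm exponent, however, is continuous, which matches the $\min(0,\cdot)$ update. I will handle the transient by a comparison with explicit logistic-type solutions of the one-neuron ODE, $\dot r = r(\|nD\| - r^2)/n$ per unit of acceleration.

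Finally, after the last jump $S_U^{(p)} = \emptyset$ by \Cref{ass:A.j.star}\ref{ass:A.j.star:A}, since every row of $A$ is non-zero. All residuals then vanish exponentially and no unfitted neuron can grow, because its $D$ is zero, so the process stays at $\theta^\circ$ on $[t_p + \varepsilon, \infty)$. Letting $\varepsilon \to 0$ gives uniform convergence on compacts avoiding the jump times.
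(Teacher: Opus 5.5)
Your architecture matches the paper's. You induct over saddles. On each plateau the unfitted neurons align with $\overline{D}_j^{(k)}$ and their log-norms track $\ell^\circ_j$. At each jump $j_\star^{(k)}$ follows a logistic single-neuron ODE, while the other unfitted neurons are frozen by a crude growth bound plus the exponent gap from \Cref{ass:A.j.star}. Then come realignment towards $D_j^{(k+1)}$ and a final saddle with $S_U^{(p)}=\emptyset$.

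\textbf{The gap is in the jump.} It sits exactly at the point the paper flags as its main new difficulty. In general $j_\star^{(k)}$ is still active, with small positive coordinates, on points of $S_{j_\star^{(k)}}\cap S_F^{(k)}$ fitted at earlier saddles. You restrict its dynamics to $S_{j_\star}\cap S_U^{(k)}$ and treat the residuals on fitted data as a perturbation of size $\alpha^{\delta}$ ``coming from the other neurons''. They are not external. As $\lVert\widetilde w_{j_\star}\rVert$ becomes $\Theta(1)$, its output $\lVert\widetilde w_{j_\star}\rVert(\widetilde w_{j_\star}^\top x_i)_+$ on these points changes. The resulting residual then drives the same coordinates through $\frac{d}{dt}\widetilde w_{j_\star}^\top x_i=\frac{\log(1/\alpha)}{n}\lVert\widetilde w_{j_\star}\rVert\,(y_i-h_{\widetilde\theta}(x_i))$, with gain $\log(1/\alpha)$.

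A bootstrap does not close. Suppose you assume $|h_{\widetilde\theta}(x_i)-y_i|\le\alpha^{\delta}$ on $S_F^{(k)}$ over a window where $\lVert\widetilde w_{j_\star}\rVert=\Theta(1)$ for a fixed accelerated duration. You only get back a bound of order $\alpha^{\delta}\log(1/\alpha)$ on $j_\star$'s contribution to those residuals. The paper remarks on exactly this after stating \Cref{lemma_phase_2a}. Your claim that $j_\star$ keeps direction $\overline D^{(k)}_{j_\star}$ on $[t_{k+1}-\varepsilon,t_{k+1}+\varepsilon]$ rests on this control, so it is also unproven.

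\textbf{How the paper closes the loop.} This is done in \Cref{lemma_phase_2a,lemma_phase_2b}, in three steps.
\begin{enumerate}
\item It bounds $f(t)=\max_{i\in S_F^{(k)}}(\widetilde{\overline w}_{j_\star}^\top x_i)_+$ by $\log(1/\alpha)\int\lVert E^{(k)}\rVert$.
\item It inserts this into the evolution of $\lVert E^{(k)}\rVert$, giving $F''\le pF+q$ for $F=\int\lVert E^{(k)}\rVert$ (\Cref{diff_ineq_lemma}), with $\sqrt p$ proportional to $\log(1/\alpha)$ times the current bound on $\lVert\widetilde w_{j_\star}\rVert$.
\item It splits the growth at a small constant threshold $\eta$. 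The amplification is then roughly $\alpha^{-c\eta\varepsilon}$ before $\eta$ and $\alpha^{-c\varepsilon'}$ afterwards, with $\varepsilon'\ll\varepsilon$. These are beaten by the available smallness $\alpha^{\varepsilon/4}$, resp.\ $\alpha^{\varepsilon/24}$.
\end{enumerate}

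An alternative closer to your ``attracting stationary point'' remark is to keep the damping term $-\frac1n\sum_{i\in S_F^{(k)}}\sum_j A_{i,j}\lVert\widetilde w_j\rVert^2(x_i^\top E^{(k)})^2$, which the paper discards. Every fitted point carries a fitted neuron of norm $\Theta(1)$, so fitted residuals relax at a rate of order $\log(1/\alpha)$. A convolution-type Gr\"onwall bound should then limit $j_\star$'s interference to an $O(1)$ amplification factor. Either way, some such argument is needed, and your proposal has none.

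Relatedly, stating (a)--(b) with $o(1)$ errors is too weak for the plateau step. Every rate carries a factor $\log(1/\alpha)$. Without power-of-$\alpha$ bounds like the paper's $\alpha^{\delta}$, or an explicit damping argument, you cannot exclude drifts of size $\log(1/\alpha)\cdot o(1)$ in the fitted neurons.
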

\looseness=-1
\Cref{th:s.t.s} shows that as the initialization scale decreases, the trajectory of the parameters approximates a jump process between different saddles of the loss function. As each saddle increases the rank of the network by one from the previous saddle, this result shows an example of incremental feature learning. This provides the first complete theoretical characterization of a saddle-to-saddle process in a neural network with realistic assumptions on the architecture and overparameterization. 
\Cref{fig:illustration_intro} illustrates this convergence towards the incremental limit process $\theta^\circ$ and additional experiments are provided in \cref{app:expe}

Instead of \cref{ass:A.j.star}, the analysis of \citet{BoursierPF22} relies on the assumption that there exist at least two neurons $j_+, j_-$ such that $A_{i,j_+}=1$ for all $i\in I_+$ and $A_{i,j_-}=1$ for all $i\in I_-$. In this case, \cref{alg:s.t.s} reduces to just two jumps: neuron $j_+$ grows to fit all positive labels, and neuron $j_-$ grows to fit all negative labels. Moreover, as explained in \cref{app:autonomous}, these two jumps are \textit{non-interfering}: the neurons fitting positive and negative labels evolve independently throughout training. Under mild overparameterization, however, the process is considerably more intricate: fitting positive (resp.\ negative) labels requires multiple neurons and hence multiple jumps, which may interact with one another.

Our proof therefore requires a substantially finer analysis of the dynamics than that of \citet{BoursierPF22}. The two main additional difficulties are the following. First, the norm growth of a given neuron exhibits memory across multiple saddles, which is controlled by the term $\ell^\circ_j$. Second, when a new neuron is activated, it may still be positively correlated with already fitted samples $x_i$, and one must ensure that its activation does not cause the loss on these samples to increase significantly.

\paragraph{Sketch of Proof.} The result is proven by an inductive argument on the number of saddles traversed by the flow. The schematic for the proof is depicted in \Cref{fig:frise} and consists of splitting the dynamics into phases around the jump from saddle~$k$ to saddle~$k+1$, where $\varepsilon$~and~$\varepsilon'$ are parameters of the proof which may be selected arbitrarily small, giving the desired uniform convergence.

\begin{itemize}[leftmargin=10pt, topsep=0pt, itemsep=5pt, parsep=0pt]
    \item \underline{Slow growth phase:} For $\kappa \in (t_k, t_{k+1})$, $\tau_1$ is the first time that $\big\lVert \widetilde{w}_{j_\star^{(k)}} \big\rVert_2 = \alpha^\varepsilon$. We find that for $t \in [\kappa, \tau_1]$ all neurons remain aligned in the directions $D_j^{(k)}$. Hence, $\widetilde{\ell}_j$ increases towards $0$ for all $j \in N_U^{(k)}$, but $\widetilde{\theta}$ does not change significantly and remains near saddle $k$ until any $\kappa'\in (\tau_1, t_{k+1})$.
    \item \underline{Rapid fitting phase:} $\tau_3$ is the first time that $\big\lVert \widetilde{w}_{j_\star^{(k)}} \big\rVert^2_2 = n \big\lVert D_{j_\star^{(k)}}^{(k)} \big\rVert_2 - \alpha^{\varepsilon'/4} $ for a small enough $\varepsilon'$. We show $\tau_3-\tau_1 = \bigO(\varepsilon, \varepsilon')$ through checking $\widetilde{w}_{j_\star^{(k)}}$ continues to grow at an exponential rate $\big\lVert D_{j_\star^{(k)}}^{(k)} \big\rVert_2$ by neither realigning in $\text{span} ( x_i \mid i \in S_F^{(k)} )$. 
    To show this we employ two techniques: first, we split the phase in two, defining $\tau_2$ as the first time  $ \big\lVert \widetilde{w}_{j_\star^{(k)}} \big\rVert_2 = \eta$ for some convenient $\eta$, as having a stronger bound on neuron $j_\star^{(k)}$ helps control the loss on fitted data; second, we estimate how much neuron $j_\star^{(k)}$ realigns in $\text{span} ( x_i \mid i \in S_F^{(k)} )$ in terms of the loss on this data, and then in turn show $w_{j_\star^{(k)}}$ remains aligned in direction $D_{j_\star^{(k)}}^{(k)}$ on $[\tau_1, \tau_3]$.
    \item \underline{Realignment and slow growth:} The unfitted neurons $j \in N_U^{(k+1)}$ respond to the change in gradients due to a new set of data being fitted by realigning rapidly in direction $D_j^{(k+1)}$ by some time $\tau_4$, after which time this alignment remains stable until any time $\kappa'' \in (t_{k+1}, t_{k+2})$.
\end{itemize} 

\begin{figure}
    \centering
\begin{tikzpicture}[xscale=1.375]\usetikzlibrary{patterns}
  \def\h{1}
  \def\b{0.725}
  \def\o{0.25}
  {\small
  \draw[thick,blue!80!black,fill=blue!90!black,opacity=0.15]
    (0.75,0) rectangle (3.0,\h);
  \node at (1.875,0.5*\h) {\small Slow growth};
    \draw[thick,orange!80!black,fill=green!90!black,opacity=0.15]
    (3.0,0) rectangle (5.0,\h);
    \node at (4.0,0.5*\h) {\small Fast growth};
    \draw[thick,pink!80!black,fill=pink!90!black,opacity=0.3]
    (5.0,0) rectangle (6.75,\h);
    \node at (5.875,0.5*\h) {\small Realignment};
    \draw[thick,blue!80!black,fill=blue!90!black,opacity=0.15]
    (6.75,0) rectangle (8.5,\h);
    \node at (7.65,0.5*\h) {\small Slow growth};
  
   \draw[->,line width=2] (0,0) -- (9.5,0) node[below right]{} ;
    \draw (0.5 cm,2pt) -- (0.5 cm,-2pt) node[anchor=north] {{\color{black}$t_k$}};
   \draw (0.75 cm,2pt) -- (0.75 cm,-2pt) node[anchor=north] {{\color{black}$\kappa\mathstrut$}};
   \draw (2.25 cm,2pt) -- (2.25 cm,-2pt) node[anchor=north] {{\color{black}$\kappa'$}};
   \draw (3 cm,2pt) -- (3 cm,-2pt) node[anchor=north] {$\tau_1$};
   \draw (3.75 cm,2pt) -- (3.75 cm,-2pt) node[anchor=north] {$\tau_2$};
   \draw (4.25 cm,2pt) -- (4.25 cm,-2pt) node[anchor=north] {{\color{black}$t_{k+1}$}};
   \draw (5.0 cm,2pt) -- (5.0 cm,-2pt) node[anchor=north] {$\tau_3$};
   \draw (6.75 cm,2pt) -- (6.75 cm,-2pt) node[anchor=north] {$\tau_4$};
   \draw (8.5 cm,2pt) -- (8.5 cm,-2pt) node[anchor=north] {{\color{black}$\kappa''$}};
   \draw (9.0 cm,2pt) -- (9.0 cm,-2pt) node[anchor=north]  {$t_{k+2}$};
   \draw[<->,line width=1.5, color=black] (0.75 cm, 1.2*\h cm) -- (3.0 cm, 1.2*\h cm) node[above, align = center, color=black]  {\hspace{-3.0cm} \small $\Theta(1)$};
   \draw[<->,line width=1.5, color=black] (3.0 cm, 1.2*\h cm) -- (3.75 cm, 1.2*\h cm) node[above, align = center, color=black]  {\hspace{-1.2cm} \small $\bigO(\varepsilon)$};
   \draw[<->,line width=1.5, color=black] (3.75 cm, 1.2*\h cm) -- (5.0 cm, 1.2*\h cm) node[above, align = center, color=black]  {\hspace{-1.75cm} \small $\bigO(\varepsilon')$};
   \draw[<->,line width=1.5, color=black] (5.0 cm, 1.2*\h cm) -- (6.75 cm, 1.2*\h cm) node[above, align = center, color=black]  {\hspace{-2.5cm} \small $\bigO(\varepsilon)$};
   \draw[<->,line width=1.5, color=black] (6.75 cm, 1.2*\h cm) -- (8.5 cm, 1.2*\h cm) node[above, align = center, color=black]  {\hspace{-2.375cm} \small $\Theta(1)$};
   }

\end{tikzpicture}
    \caption{Timeline of the training dynamics about the jump from saddle $k$ to saddle $k+1$. The quantities $\varepsilon$ and $\varepsilon'$ are  parameters of the proof of \Cref{th:s.t.s}, $\tau_1$ is the beginning of the fast growth of neuron $j_\star^{(k)}$, $\tau_3$ is the time by which neuron $j_\star^{(k)}$ has grown to its final norm, and $\tau_4$ is the time by which neurons in the set $N_U^{(k+1)}$ have realigned. The quantities $\kappa, \kappa'$, and $\kappa''$ can be viewed as fixed for the proof of convergence.}
    \label{fig:frise}
\end{figure}

\section{Implicit bias} \label{sec:implicit_bias}
\label{s:i.b}

\citet{BoursierPF22} proved that gradient flow converges to a minimum $\ell_2$-norm interpolator $\theta_\opt$ as the initialization scale tends to zero, which is equivalent to the sum of two neurons that respectively fit the positive and negative labels. However, this norm optimality relies on a specific initialization condition, that assumes that at least one neuron is active along all positively labeled data and another neuron is active along all negatively labeled data. Such a condition only realistically holds in a heavily overparameterized regime with $m\gtrsim 2^n$.  
\citet{boursier2025benignity} then proved in a later work that with high probability in the mild overparameterization regime ($m\lesssim 2^n$), gradient flow does not converge towards a minimal $\ell_2$-norm interpolator as the initialization scales goes to $0$. 
We yet prove in this section that with mild overparameterization, \Cref{alg:s.t.s} produces with high probability an interpolator that is only larger in $\ell_2$-norm than the optimal one by a constant multiplicative factor. 


We denote in the following by~$\theta_{\pred}$ the limiting parameters $\theta^\circ(t)$ for any $t\geq t_p$---the definition is independent of the choice of  $t>t_p$, thanks to \Cref{th:s.t.s}---and by~$\theta_{\opt}$ any minimum $\ell_2$-norm interpolator. The squared $\ell_2$-norms of these parameters are given by

\begin{align} \label{eq:optnorm}
    \frac{1}{2}\lVert \theta_{\opt} \rVert_2^2 = \sqrt{\sum_{i \, \mid \, y_i>0} y_i^2} +\sqrt{\sum_{i \, \mid \, y_i<0} y_i^2}\qquad \text{and} \qquad \frac{1}{2}\lVert \theta_{\pred} \rVert_2^2 = \sum_{j \in [m]} n \left\lVert D_j^{(p)} \right\rVert_2. 
\end{align}
where the last jump of the limit process $p$ and $D_j^{(p)}$ are all defined in \Cref{alg:s.t.s}.
\Cref{thm:implicit_bias} below upper bounds the norm of the learned interpolator in the mild overparameterization regime as the initialization scale goes to~$0$.

\begin{thm} \label{thm:implicit_bias}
Let $n_+ = |I_+|$, $n_- = |I_-|$, and $\npm=\min(n_+,n_-)$.  Under \cref{ass:data,ass:init}, if $\log(m) = \lito(\npm^{1/8}/\log(\npm))$, then the final output of \Cref{alg:s.t.s} satisfies
    \begin{equation}
       \frac{1}{2}\lVert \theta_{\pred} \rVert_2^2\leq 5 \left( \sqrt{n_+} + \sqrt{n_-} \right)\max_{i\in[n]} |y_i|,
    \end{equation}
    with probability at least $1 - \exp(-\Omega(\npm^{1/8}))$ over the network initialization.
\end{thm}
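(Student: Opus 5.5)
We reduce the bound to a combinatorial statement about the sets fitted by \Cref{alg:s.t.s}, treating positive and negative labels separately. By \eqref{eq:optnorm}, $\frac12\|\theta_{\pred}\|_2^2=\sum_j n\|D_j^{(p)}\|_2$. For an activated neuron $j$, $nD_j^{(p)}=\sum_{i\in F_j} y_ix_i$, where $F_j := S_U^{(k_j)}\cap S_j$ is the set of points it fits at its jump $k_j$. Orthogonality gives $n\|D_j^{(p)}\|_2\le \sqrt{|F_j|}\max_i|y_i|$. The sets $F_j$ for neurons of sign $s$ partition $I_s$; this uses \Cref{ass:A.j.star}\ref{ass:A.j.star:A}, which is needed so that the loop ends with $S_U^{(p)}=\emptyset$. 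The probability of this event, $1-n(3/4)^m-\ldots$, is absorbed into the final failure probability by \Cref{pr:alg_assumptions}. Unactivated neurons satisfy $D_j^{(p)}=0$ once their $S_j\cap S_U=\emptyset$. It therefore suffices to show that, with high probability and for each sign $s$,
\[
\sum_{j:\, s_j=s} \sqrt{|F_j|}\le 5\sqrt{n_s}.
\]

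\textbf{Dynamics of the greedy order.} Consider the neurons of sign $+$; each one independently has each $i\in I_+$ in $S_j$ with probability $1/2$, and these events are nearly independent across $i$ (a sign of $u_j^\top x_i$ for orthonormal $x_i$, so exactly i.i.d.\ Bernoulli$(1/2)$). The algorithm activates, at each step, the neuron whose exponent $\ell_j$ first hits $0$. The first neuron is the one maximizing $\|D_j^{(0)}\|$, so it fits roughly $n_+/2$ points (up to $O(\sqrt{n_+\log m})$ fluctuations). I expect the heart of the proof to be that the fitted set sizes decay geometrically. The $r$-th activated positive neuron fits about $n_+2^{-r}$ points, since every unactivated neuron covers about half of any fixed remaining set. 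This should hold with concentration as long as the remaining set has size $\gtrsim n_+^{1/4}$, where union bounds over $m$ neurons and $O(\log n)$ steps cost $\exp(-\Omega(n_\pm^{1/8}))$ given $\log m=o(n_\pm^{1/8}/\log n_\pm)$. Summing $\sqrt{n_+2^{-r}}$ gives $\sqrt{n_+}/(1-2^{-1/2})\approx 3.41\sqrt{n_+}$ for the main phase.

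\textbf{Main obstacle: memory in $\ell_j^\circ$.} The difficulty is that the remaining set $S_U^{(k)}$ depends on the random $S_j$, and the activation order is not purely greedy on current $\|D_j\|$ because of the accumulated exponents. The concern is that a neuron that grew quickly early, with a large $D_j$ on sets that have since been fitted, activates next with a small fitted set. This wastes little norm individually, but many such events could add up. I would handle it as follows: the total number of positive neurons ever activated is at most $m$. A neuron activated "out of order" still fits a set $F_j$ that is a subset of the remaining points. The cruder bound $\sum_j\sqrt{|F_j|}\le\sqrt{K\sum|F_j|}$ over any group of $K$ neurons covering a residual set $R$ gives $\sqrt{K|R|}$. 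The plan is to bound the number of activations occurring while the residual lies in $[n_+2^{-r-1},n_+2^{-r}]$ by a constant with high probability. To do this, I would argue that any neuron with non-negligible $\ell_j$ lead must have fitted a near-half of the current residual. This follows by concentration of $|S_j\cap R|$ uniformly over all $R$ arising as unions/differences of the at most $O(\log n)$ earlier sets, via a union bound over the polynomially-many (in $m$) such $R$.

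\textbf{Endgame.} Once the residual drops below $n_+^{1/4}$, I would bound the leftover contribution trivially by $\sqrt{K|R|}$ with $K\le |R|$, giving at most $|R|\le n_+^{1/4}=o(\sqrt{n_+})$. Combining the geometric phase, the constant slack from out-of-order activations, and the endgame gives the factor $5$. Adding the two signs yields the claim.
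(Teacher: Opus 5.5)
Your proposal follows essentially the same route as the paper. The decisive step in your third paragraph is a Chernoff bound for $|S_j\cap R|$, made uniform by a union bound over every residual $R=I_+\setminus\bigcup_{l\le k}S_{j_l}$ generated by a prefix of $k\lesssim\log_2 n$ activated neurons and over every not-yet-activated $j$. This is exactly the paper's union bound over all injective maps $[k^\star]\to[m]$, applied to a fixed-order binomial-halving argument. Your geometric sum plus subadditivity endgame also matches the paper's split. The paper stops after $k^\star=\tfrac34\log_2 n$ steps, leaving a residual of size $<2\sqrt n$ whose trivial bound contributes $2\sqrt n\max_i|y_i|$ toward the constant $5$; your threshold $n^{1/4}$ makes this tail $o(\sqrt n)$.

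A few points to tighten.
\begin{itemize}
\item The number of prefixes is $m^{O(\log n_+)}$, not a fixed-degree polynomial in $m$. It is precisely this factor that the hypothesis $\log m=o(n_\pm^{1/8}/\log n_\pm)$ absorbs.
\item Without this uniformization, the union bound ``over $m$ neurons and $O(\log n)$ steps'' in your second paragraph would be invalid, since the residual is itself a function of the random columns.
\item Once the bound holds uniformly, \emph{every} unactivated neuron covers a $(\tfrac12\pm\delta)$-fraction of the current residual. So the order chosen by \Cref{alg:s.t.s} is irrelevant, whether it is greedy or shaped by the memory in $\ell_j^\circ$. Each activation shrinks the residual by a factor $\tfrac12\pm\delta$, and the dyadic-block counting and the $\sqrt{K|R|}$ step are unnecessary.
\item The main phase then gives $\sqrt n\sum_{k\ge1}(\tfrac12+\delta)^{k/2}\approx 2.6\sqrt n$. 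The sum starts at $k=1$, since the first activation already fits at most $(\tfrac12+\delta)n$ points, so your $3.41$ is an overcount.
\item Drop the appeal to \Cref{ass:A.j.star}. The theorem imposes no lower bound on $m$, so $n(3/4)^m$ cannot be absorbed into $\exp(-\Omega(n_\pm^{1/8}))$.
\item You never need the $F_j$ to partition $I_s$. Their disjointness, which holds automatically, suffices for both the geometric sum and the endgame bound. The unactivated neurons have $D_j^{(p)}=0$ by the loop's exit condition alone.
\end{itemize}
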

\Cref{thm:implicit_bias} implies that, in the presence of uniformly bounded labels, the achieved interpolator in the mild overparameterization regime as $\alpha\to 0$ scales as $\sqrt{n}$. Although this interpolator is not minimal in norm, it recovers the $\sqrt{n}$ scaling of $\lVert \theta_\opt \rVert_2^2$, thanks to \Cref{eq:optnorm}. Also note that this type of bound could be easily extended to random labels, replacing the $\max_{i\in[n]} |y_i|$ term by $\max_{i\in[n]} \sqrt{\mathbb{E}[y_i^2]}$ for, e.g., sub-Gaussian labels. \Cref{thm:implicit_bias} does not require any lower bound on~$m$ as \Cref{alg:s.t.s} is still defined in that case. However, $m$ must be of order at least $\log(n)$ to ensure that the achieved estimator interpolates the data, which is guaranteed by \Cref{ass:A.j.star:A} \Cref{ass:A.j.star}.

\medskip

The proof of \Cref{thm:implicit_bias} reflects the incremental nature of \Cref{alg:s.t.s}. Each newly added positive (resp.\ negative) neuron approximately fits half of the remaining positive (resp.\ negative) labels. Consequently, after about $\log(n)$ increments, nearly all data points are fitted. 
The proof relies on a union bound over all subsets of the first $\log(n)$ neurons used to construct the estimator. This approach does not depend on the specific order induced by \Cref{alg:s.t.s}. As a result, the bound in \Cref{thm:implicit_bias} does not capture how the norm of the interpolator decreases as the number of neurons $m$ grows. Establishing such a result remains open, as it would require a more refined analysis that accounts for dependencies between the successive increments in \Cref{alg:s.t.s}.

\section{Conclusion}

In this work, we have provided a complete analysis of the gradient flow dynamics of mildly overparameterized two-layer ReLU networks from small initialization in the special case of orthogonal data. In doing so, we have extended existing convergence results and shown an incremental learning phenomenon.



There are several avenues for future research. 
It would be valuable to extend the proof of saddle-to-saddle dynamics to more general data distributions. Possibilities could include weakly interacting data \citep{dana2025convergence} or clustered data with orthogonal centers \citep{min2025gradient}. Such work could shed light on incremental learning in more realistic settings. 

In \Cref{thm:implicit_bias} on the implicit bias, there is no reference to the width except in the restriction $\log(m) = \lito(\npm^{1/8}/\log(\npm))$. Future work could seek to precisely quantify the dependence of the learned norm on the network width as one approaches the regime $m = \exp(n)$. Doing so would require a more intricate analysis of the dependence of each stage of \Cref{alg:s.t.s} on those that preceded it, which could be of independent interest in analyzing similar sequential algorithms.


\subsubsection*{Acknowledgments and Disclosure of Funding}

The authors are thankful to Nikolaos Zygouras for insightful discussions. The authors acknowledge the Engineering and Physical Sciences Research Council (EPSRC) and DIMAP research centre at the University of Warwick for partial support. J. Town is supported by the EPSRC through the Mathematics of Systems II Centre for Doctoral Training at the University of Warwick (reference EP/S022244/1). B. Lewis is supported by the EPSRC studentship 2927289. The authors acknowledge the use of the Batch Compute System in the Department of Computer Science at the University of Warwick, and associated support services, in the completion of this work.

\bibliographystyle{plainnat}
{\small \bibliography{main.bib}}

\clearpage
\appendix
\addcontentsline{toc}{section}{Appendix} 
\part{Appendix} 
\parttoc 

\section{Experiments} \label{app:expe}

\subsection{Experimental details}\label{sec:expe}

\cref{fig:illustration_intro} was generated by training a two-layer ReLU network with six neurons on 64 orthonormal data points in 64 dimension. The labels are chosen as the absolute values of standard Gaussian random variables.
As explained in \cref{app:autonomous}, without the sign restriction on the labels, we get two independent processes: neurons with negative output weights will turn and stay in the subspace orthogonal to the space spanned by the data points with positive labels, and vice versa. The sign restriction on the labels therefore helps to more clearly illustrate one of these processes in the plots, as opposed to a loss curve that is the combination of interleaving these two independent processes.

The network weights are initialized such that the hidden layer neurons have random directions and $\ell_2$ norm $e^{-500}$ and the output weights are $e^{-500}$. To avoid numerical issues, we maintain a scaled version of the weights such that they are closer to~$1$ (i.e., initially scaled by $e^{500}$) and separately track the logarithm of the scale of the weights (initially $-500$). During training, the stored weights are kept near $1$ while the logarithm of the scale is adjusted. The loss is the mean squared error and we train the network with full batch gradient descent and a learning rate of 0.01 until the loss is less than $10^{-20}$. The plots in \cref{fig:illustration_intro} start at epoch $\approx 0.9\cdot t_1$,
For the purposes of the illustration, we consider a data point fitted when the output of the network for that point is at least 50\% of the corresponding label. Due to the exponential growth of neuron norms, once the output of a network for a data point reaches such a significant proportion of the label, the output of the network at the data point very rapidly closes in on the label value. Therefore, the exact choice of the threshold (here 50\%) is somewhat arbitrary but also immaterial in the sense that it does not greatly influence the general picture.

\subsection{Additional experiments}

\begin{figure}
\centering
\includegraphics{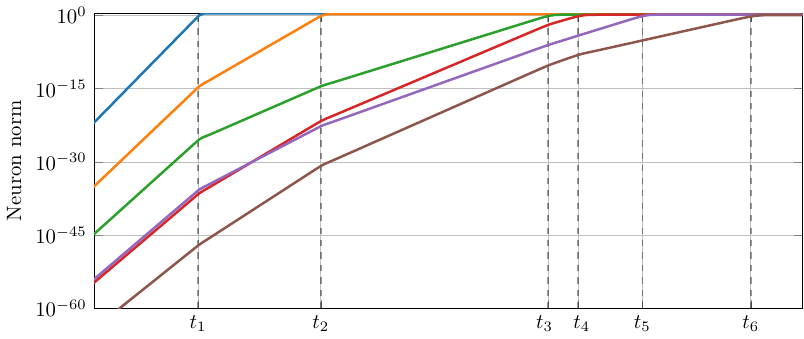}
\caption{\label{fig:illustration_intro_log}The same as the right plot of \cref{fig:illustration_intro} but with the $y$-axis on a logarithmic scale. The neuron norms increase exponentially. When new data points are fitted, the rate of this exponential increase reduces for the remaining neurons that were active on at least one of those newly fitted data points.}
\end{figure}

For the same training as in \cref{fig:illustration_intro}, \cref{fig:illustration_intro_log} shows how the $\ell_2$~norms of the hidden layer neurons change over time, but with a logarithmic $y$-axis. This reveals that the norms grow at an exponential rate. At the jump times, when a neuron fits the data on which it is active, that neuron stops growing. Other neurons continue to grow. However, if they were active on some of the same data points as the newly fitted neuron, they are now growing at a slower exponential rate than before the jump time. Therefore, the logarithms of the neuron norms are piecewise linear curves with decreasing slopes. The slopes of these linear segments as well as the jump times are predicted by \cref{alg:s.t.s}. Specifically, between jump times $t_p$~and~$t_{p+1}$, the slope for neuron~$j$ is proportional to $\left\|D_j^{(p)}\right\|_2$.

\begin{figure}
\centering
\subfigure[Near the start]{%
\includegraphics[
    width=0.44\linewidth,
    height=5.3cm,
    keepaspectratio
]{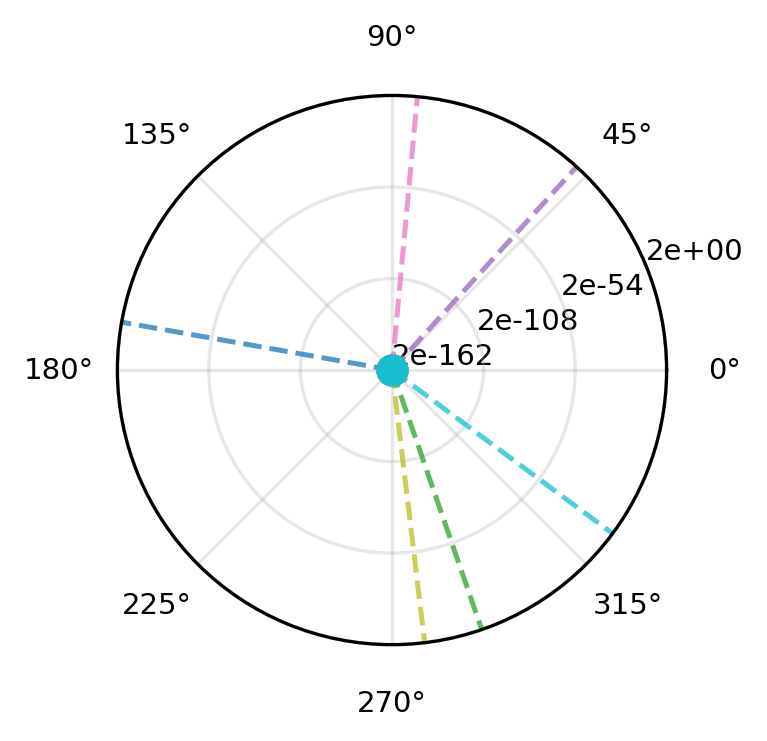}%
}
\hfill
\subfigure[Around time $t_1/2$]{%
\includegraphics[
    width=0.44\linewidth,
    height=5.3cm,
    keepaspectratio
]{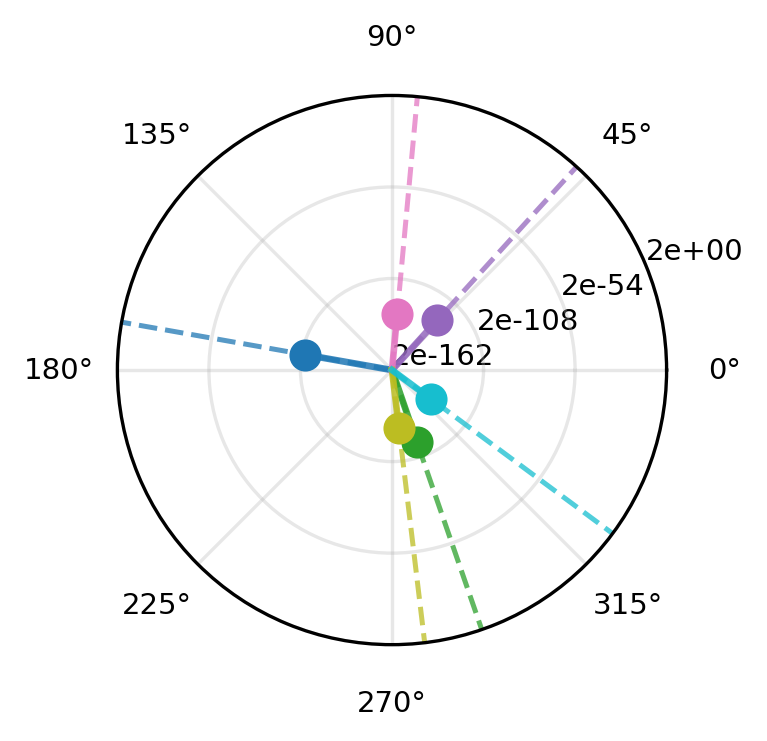}%
}
\par\vspace{0.8em}
\subfigure[Just before time $t_1$]{%
\includegraphics[
    width=0.44\linewidth,
    height=5.3cm,
    keepaspectratio
]{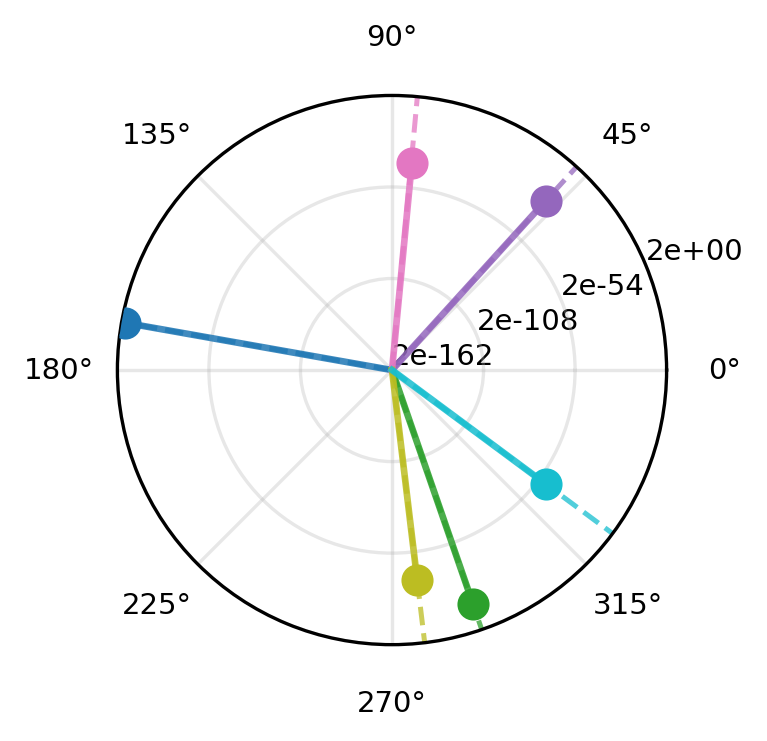}%
}
\hfill
\subfigure[Just after time $t_1$]{%
\includegraphics[
    width=0.44\linewidth,
    height=5.3cm,
    keepaspectratio
]{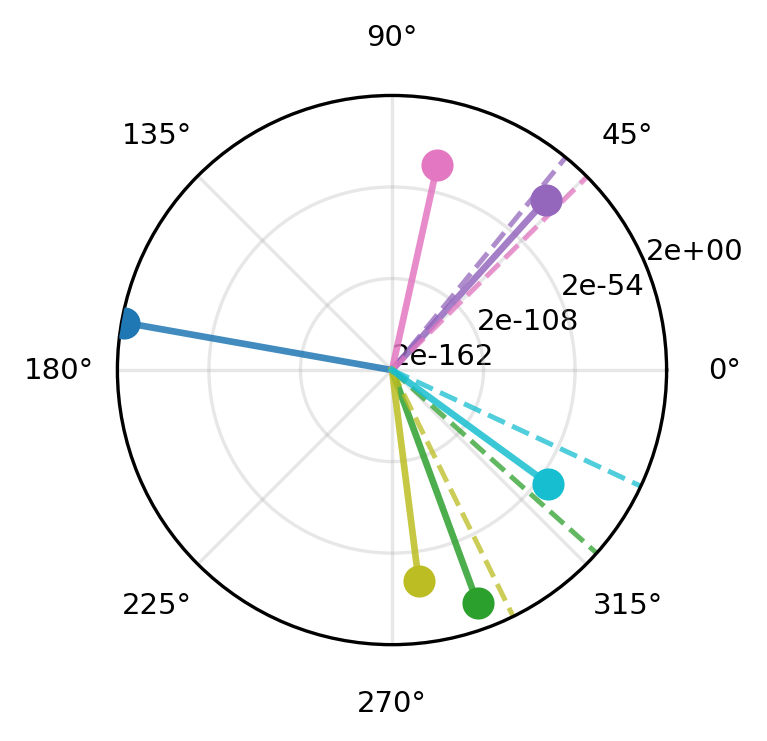}%
}
\par\vspace{0.8em}
\subfigure[A short while after $t_1$]{%
\includegraphics[
    width=0.44\linewidth,
    height=5.3cm,
    keepaspectratio
]{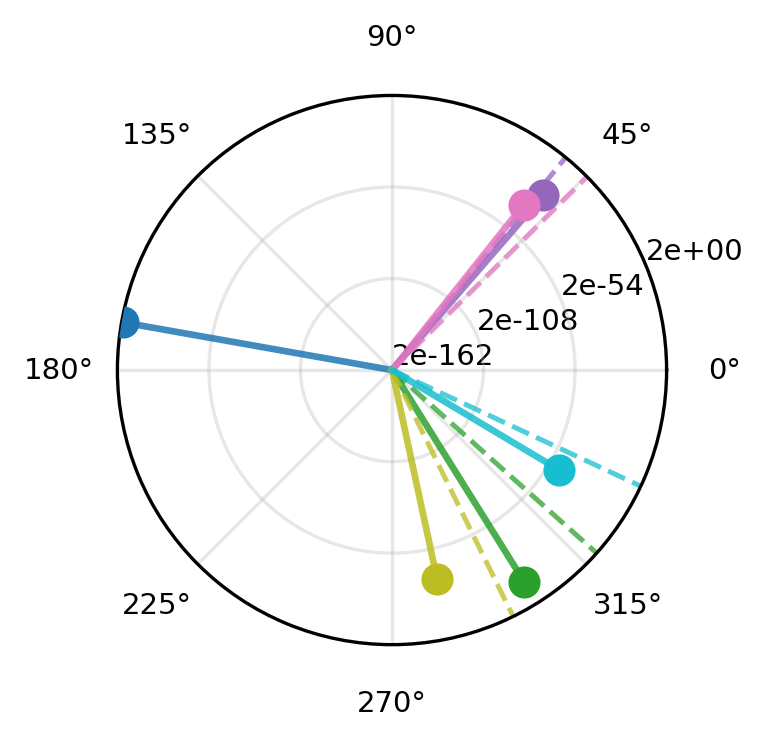}%
}
\hfill
\subfigure[Around time $(t_1+t_2)/2$]{%
\includegraphics[
    width=0.44\linewidth,
    height=5.3cm,
    keepaspectratio
]{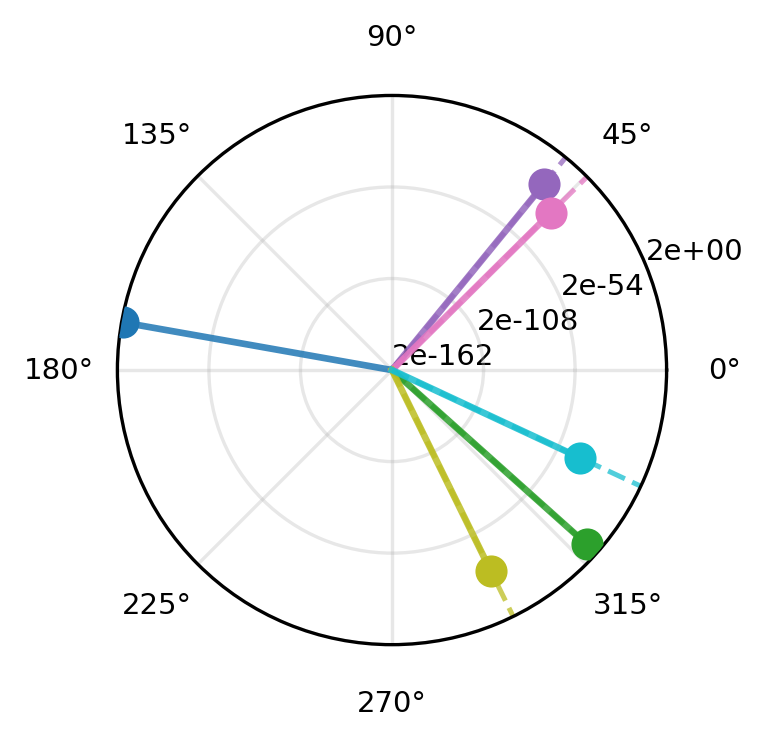}%
}
\caption{\label{fig:polarplots}For the same training as in \cref{fig:illustration_intro}, these polar plots show the norm of each neuron (on a logarithmic scale) as well as the direction of the neuron projected onto a random two-dimensional plane, at different points in time. The projections of the neuron directions from the limit process $\theta^\circ(t)$ predicted by \cref{alg:s.t.s} are indicated by dashed lines.}
\end{figure}

Still for the same training, \cref{fig:polarplots} shows the norm (on a logarithmic scale) and direction (projected onto a random two-dimensional plane) of the six neurons at different points in time. Initially, the norms are very small. The neurons orient themselves in the directions predicted by \cref{alg:s.t.s} and their norm starts to increase exponentially. Just before time $t_1$, one neuron has a significant norm while the remaining neurons have norms that are still much smaller. Just after time $t_1$, the neuron directions predicted by \cref{alg:s.t.s} change and the neurons that did not fit any data yet turn quickly towards their new predicted directions. Time~$t_1$ approximately corresponds to epoch 507,000. Plot~(c) shows the process at epoch 506,000, plot~(d) shows it at epoch 508,000, plot~(e) shows it at epoch 510,000, and plot~(f) shows it at epoch 538,000.

\paragraph{Norm minimization.}

\begin{figure}
    \centering
    \includegraphics{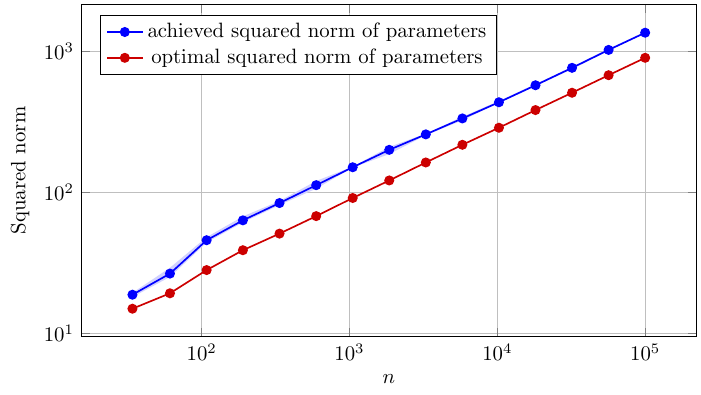}
    \caption{
    We train a two-layer ReLU network with increasing data dimensionality and number of data points~$n$, and plot the square of the norm of of the parameters of the final network as well as that of the optimal interpolator (see \cref{eq:optnorm}). The hidden layer size~$m$ increases logarithmically in~$n$, and we use a balanced initialization of scale $10^{-30}/\sqrt{m}$. Each value plotted is the mean over three runs, with each run using fixed orthogonal inputs and different random labels which are standard Gaussians. We use full batch gradient descent with a learning rate of~$0.5$ for one million epochs or until the loss reaches $10^{-10}$. The area between the minimum and maximum values is shaded, however they are relatively concentrated.}
    \label{fig:n_on_squaredNorm}
\end{figure}

\cref{thm:implicit_bias} gives an upper bound on the squared $\ell_2$~norm of the parameters of the achieved interpolator in the mild overparameterization regime. For uniformly bounded labels, this bound is of the order $\sqrt{n}$ and therefore of the same order as the minimum norm interpolator. \cref{fig:n_on_squaredNorm} experimentally shows the growth of the squared norm of the parameters of the achieved interpolator versus that of the optimal interpolator, depending on the number~$n$ of data points.

Specifically, we train a two-layer ReLU network with full batch gradient descent and a learning rate of 0.5. For different $n$, a network of width $m=\lceil\ln(10000\cdot n)/\ln(4/3)\rceil$ is initialized such that the hidden neurons have random directions and norm $10^{-30}/\sqrt{m}$ and the output neurons have a random sign and absolute value $10^{-30}/\sqrt{m}$. The choice of $m$ ensures that there are a sufficient number of neurons to ensure interpolation in a typical training run. The factor $1/\sqrt{m}$ in the initialization scale ensures that the initial squared norm of the network parameters remains the same regardless of the width of the network. The network is trained on data points corresponding to the $n\times n$ identity matrix and labels are drawn from a standard Gaussian distribution. The identity matrix is chosen for computational and memory efficiency, but we note that the training dynamics are rotation invariant and therefore the behavior for the identity matrix is the same as for any other orthonormal basis.

Training is run for one million epochs but stopped earlier if the loss reaches $10^{-10}$. We record the squared norm of the parameters of the final network. This is repeated three times with new random initial weights. In \cref{fig:n_on_squaredNorm}, we plot the mean values of the squared norms as well the the optimum squared norm $\lVert \theta_{\opt} \rVert_2^2$ (see \cref{eq:optnorm}). We also indicate the min/max deviation over the three runs, but this is barely visible due to relatively low variance of the resulting squared norms.

In the plot, both axes are logarithmic. The straight lines of slope about~$1/2$ therefore indicate that both squared norms grow proportional to~$\sqrt{n}$ and that the achieved squared norm is only by a constant factor larger than the optimal one.

\paragraph{Larger overparameterization.}
\begin{figure}
    \centering
    \includegraphics{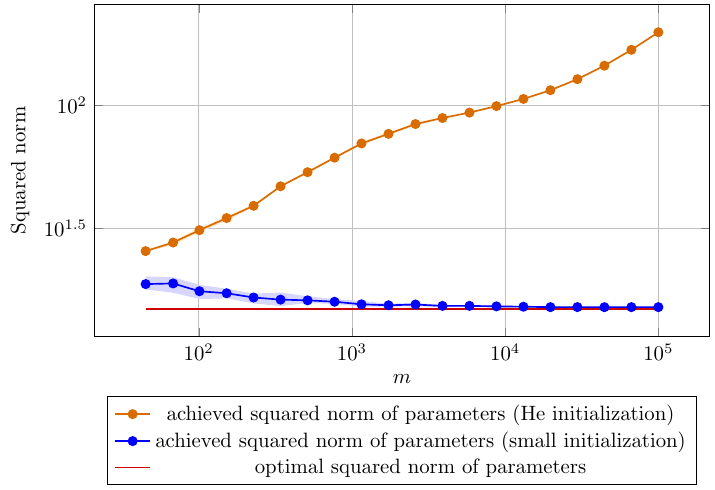}
    \caption{We train two-layer ReLU networks of varying widths $m$ on 32 orthonormal data points in 32 dimensions and labels drawn from the standard Gaussian distribution. We compare two initialization schemes: balanced small initialization where hidden neurons have norm $10^{-30}/\sqrt{m}$ and uniform He initialization~\citep{HeZRS15}, which is the default used in PyTorch to initialize linear layers. We train using full batch gradient descent for one million epochs or until the loss reaches $10^{-10}$. Each value plotted is the mean over three runs with new random initial weights. The area between the minimum and maximum values is shaded, however they are relatively concentrated.}
    \label{fig:m_on_squaredNorm}
\end{figure}
Our next experiment investigates what happens when we increase the overparameterization for two different initialization schemes. The first initialization scheme is the same as before used to produce \cref{fig:n_on_squaredNorm} (i.e., balanced small initialization with hidden layer neuron norms equal to $10^{-30}/\sqrt{m}$). The second scheme is a uniform He initialization which in our network draws hidden layer weights uniformly at random from $[-1/\sqrt{d},1/\sqrt{d}]$ and output layer weights uniformly at random from $[-1/\sqrt{m},1/\sqrt{m}]$. This is the default initialization for linear layers in PyTorch.
The experimental details are the same as before, except that $n$ is now fixed at 32 while $m$ is varied and the learning rate is 1 when using the first initialization scheme and 0.001 when using the second initialization scheme.

Each point in the plot of \cref{fig:m_on_squaredNorm} is the mean of three runs with different random initial weights. We also indicate the min/max deviation over the three runs, but this is barely visible due to relatively low variance of the resulting squared norms.

We observe that with small initialization, the square $\ell_2$~norm of the trained network approaches the theoretical optimum as the width of the network increases. This is in contrast to the uniform He initialization, for which the square $\ell_2$-norm of the trained network keeps increasing.

\paragraph{Compute.} The experiment for \cref{fig:illustration_intro,fig:illustration_intro_log,fig:polarplots} takes only about one minute to generate on a standard desktop CPU.
Generating \cref{fig:n_on_squaredNorm} takes about 3 hours, and generating \cref{fig:m_on_squaredNorm} takes about 10 minutes, both on a consumer GPU.

\section{Technical background} \label{app:clarke}

\subsection{Clarke subdifferential}

We introduce the Clarke subdifferential \citep{clarke1975generalized} and a basic result on the existence of gradient flows using this generalized derivative. 

\begin{defn}

    The Clarke subdifferential of a locally Lipschitz function $f : \mathcal{X} \to \mathbb{R}$ at the point $x \in \mathcal{X}$ is defined as the convex set 

    \begin{equation}
        \partial f(x) = \mathrm{conv} \left\{ \lim_{k \to \infty} \nabla f(x_k) \; \Big\vert x_k \to x, \; f \; \text{is differentiable at} \; x_k  \right\} .
    \end{equation}
    
\end{defn}

One may see \citet[Section 3]{LyuL20} for further details. We present the following result guaranteeing the local existence of solutions.  

\begin{thm}{\citep[Theorem 3, Chapter 2]{aubin1984odes}}
    Let $\mathcal{X}$ be a Hilbert space and $Q \subset \mathbb{R} \times \mathcal{X}$ be an open subset containing $(0, x_0)$. Let $F$ be an upper semicontinuous map from $Q$ into the non-empty closed convex subsets of $\mathcal{X}$. We assume that $(t, x) \mapsto m(F(t, x))$ is locally compact, where for $A \subset \mathcal{X}$ we define $m(A) := \inf \{ \lVert v \rVert \mid v \in A \}$. Then there exist $T>0$ and an absolutely continuous function $x(\cdot)$ defined on $[0, T]$, a solution to the differential inclusion
    \begin{align}
        x'(t) \in F (t, x(t)); && x(0)=x_0.
    \end{align}
\end{thm}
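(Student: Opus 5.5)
Since this is a classical existence result, I would prove it with the standard Euler-polygon and compactness argument.

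\emph{Step 1: uniform bounds near the initial point.} Using the local compactness hypothesis on $(t,x)\mapsto m(F(t,x))$, fix $r>0$ such that $B \coloneqq [0,r]\times \overline{B}(x_0,r)\subset Q$ and the minimal-norm elements
\[
\pi(t,x)\coloneqq \operatorname{argmin}\{\lVert v\rVert \mid v\in F(t,x)\}
\]
all lie, for $(t,x)\in B$, in a compact set $K\subset\mathcal{X}$. The point $\pi(t,x)$ is well defined because $F(t,x)$ is non-empty, closed and convex in a Hilbert space. Let $M\coloneqq \sup_{v\in K}\lVert v\rVert<\infty$ and set $T\coloneqq \min(r, r/M)$.

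\emph{Step 2: Euler approximations.} For a step $h=T/N$, put $t_k=kh$ and define the polygon
\[
x_h(0)=x_0,\qquad x_h(t)=x_h(t_k)+(t-t_k)\,\pi\bigl(t_k,x_h(t_k)\bigr)\quad\text{for } t\in[t_k,t_{k+1}].
\]
By induction, $\lVert x_h(t)-x_0\rVert\le Mt\le r$, so every node $(t_k,x_h(t_k))$ stays in $B$ and the construction is well defined. Each $x_h$ is $M$-Lipschitz. Moreover, $x_h(t)$ lies in $x_0+t\,\overline{\mathrm{conv}}(K)$, which is compact by Mazur's theorem on closed convex hulls of compact sets in Banach spaces. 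Arzelà--Ascoli then gives a subsequence $h\to 0$ with $x_h\to x$ uniformly on $[0,T]$. In addition, $x_h'$ is bounded by $M$ in $L^2(0,T;\mathcal{X})$, so along a further subsequence $x_h'\rightharpoonup v$ weakly. Passing to the limit in $x_h(t)=x_0+\int_0^t x_h'$ gives $x(t)=x_0+\int_0^t v$. Hence $x$ is absolutely continuous with $x'=v$ almost everywhere.

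\emph{Step 3: passing to the limit in the inclusion.} This is the main obstacle, since upper semicontinuity only controls $F$ from above and weak limits must be handled through convexity. Fix $t$ and $\varepsilon>0$. By upper semicontinuity of $F$ at $(t,x(t))$ there is $\delta>0$ such that
\[
F(s,y)\subset F(t,x(t))+\varepsilon \overline{B}(0,1)\quad\text{whenever } |s-t|+\lVert y-x(t)\rVert<\delta .
\]
For $h$ small and $s$ near $t$, the node $(t_k,x_h(t_k))$ governing $x_h'(s)$ is within $\delta$ of $(t,x(t))$. This uses uniform convergence and equicontinuity. Consequently $x_h'(s)\in C_\varepsilon\coloneqq F(t,x(t))+\varepsilon\overline{B}(0,1)$, a closed convex set; closedness holds because, in a Hilbert space, $C_\varepsilon$ is the sublevel set $\{\operatorname{dist}(\cdot,F(t,x(t)))\le\varepsilon\}$.

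To convert this into a statement about $x'(t)$ itself, I would use Mazur's lemma. Convex combinations of the tail $\{x_h'\}_{h\le h_0}$ converge strongly in $L^2$ to $x'$, hence almost everywhere along a subsequence. Applying the inclusion above at Lebesgue points $t$ of $x'$, I would then conclude $x'(t)\in C_\varepsilon$ for almost every $t$. The one delicate point is that the almost-everywhere statement must be made uniform in $\varepsilon$. I would handle this by taking countably many values $\varepsilon=1/q$ and, for each, the tail set of $h$; intersecting the resulting full-measure sets keeps a full-measure set. Finally, $\bigcap_{\varepsilon>0}C_\varepsilon=F(t,x(t))$ because $F(t,x(t))$ is closed. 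This yields $x'(t)\in F(t,x(t))$ for almost every $t\in[0,T]$, together with $x(0)=x_0$, which proves the theorem.
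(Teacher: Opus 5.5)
The paper gives no proof of its own here (it only cites Aubin--Cellina), and your argument is a correct version of the standard proof of that result. Its ingredients are Euler polygons along the minimal-norm selection, Arzel\`a--Ascoli via compactness of $x_0+t\,\overline{\mathrm{conv}}(K)$, weak compactness of the derivatives, and a convexity-plus-upper-semicontinuity limit step that plays the role of Aubin--Cellina's convergence theorem. In that last step, the real subtlety is not uniformity in $\varepsilon$: it is that the neighborhood of $t$, the tail of $h$'s and the Mazur null set all depend on $t$. Your Lebesgue-point averaging resolves this, so the exceptional set is already independent of $\varepsilon$ and the countable intersection is unnecessary.

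Be aware, though, that in Step~1 you read $m(F(t,x))$ as the minimal-norm \emph{element} of $F(t,x)$, as Aubin--Cellina do, whereas the paper defines $m(A)=\inf\{\lVert v\rVert \mid v\in A\}$ as a scalar. Under that literal reading the hypothesis is only local boundedness, which gives your compact $K$ only when $\dim\mathcal{X}<\infty$ (the only case the paper uses). For infinite-dimensional $\mathcal{X}$ the statement read that way is in fact false, since Peano's theorem already fails there for single-valued continuous right-hand sides.
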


\subsection{Time rescaling} \label{app:time_rescaling}

We provide here some intuition for why it is necessary to rescale time in order for dynamics to occur on a finite timescale as the initialization scale decreases to $0$. Fundamentally, it is because gradients become small in the vicinity of critical points and so the dynamics slow down. Note that near to a critical point we may linearize a dynamical system. In the case that this critical point is the origin, which will already give the required scaling to have non-trivial dynamics, we obtain $\dot{\theta} \approx - \nabla^2 \mathcal{L}(\mathbf{0}) \theta$, where $\nabla^2$ denotes the Hessian. The solution to the linearized dynamics is given by $\theta(t) = \theta(0) \exp( - \nabla^2 \mathcal{L}(\mathbf{0}) t)$. We can obtain a lower bound on the change in $\theta$ by projecting onto the eigenvectors of the Hessian which correspond to positive eigenvalues and assuming the slowest growth rate applies to all these components (i.e., take the smallest positive eigenvalue). Assuming the fastest growth rate in all components (i.e., take the largest positive eigenvalue) will give an upper bound. Concretely, taking norms gives $\lVert \mathbf{P}^\perp_+\theta(0) \rVert \exp( \lambda_+^\text{min} t) \leq \lVert \theta(t) \rVert \leq \lVert \theta(0) \rVert \exp( \lambda^{\text{max}}t)$ where $\lambda_+^\text{min}$ is the smallest positive eigenvalue of $- \nabla^2 \mathcal{L}(\mathbf{0})$, $\lambda^\text{max}$ is the largest eigenvalue, and $\mathbf{P}^\perp_+$ is the orthogonal projection onto the span of eigenvectors of $-\nabla^2 \mathcal{L}(\mathbf{0})$ which correspond to positive eigenvalues. Based on these linearized dynamics, we expect that the escape time, $T_e$, for $\lVert \theta(t) \rVert$ to reach some constant $C$ obeys the bounds

 \begin{equation}
     \frac{1}{\lambda^{\text{max}}} \left( \log(C) + \log \left( \frac{1}{\lVert \mathbf{P}^\perp_+\theta(0) \rVert} \right) \right) \leq T_e \leq \frac{1}{\lambda^{\text{min}}_+} \left( \log(C) +  \log \left( \frac{1}{\lVert \theta(0) \rVert} \right) \right).   
 \end{equation}
 
Supposing that $\lVert \theta(0) \rVert, \lVert \mathbf{P}^\perp_+\theta(0) \rVert \sim \alpha$, we see that the required rescaling of time is $t \mapsto \log(1/\alpha) t$, as claimed, and we denote the accelerated parameters by $\widetilde{\theta}^\alpha(t) = \theta( \log(1/\alpha) t)$. Typically, we omit the superscript as the acceleration by a factor of $\log(1/\alpha)$ is implicit in the tilde notation. As noted in the main body of the paper, the flow defining this rescaled process is simply $\dot{\widetilde{\theta}}(t) \in - \log(1/\alpha) \partial \mathcal{L}(\widetilde{\theta}(t))$ for almost all $t \geq 0$.

\section{Recap of notations}

For ease of reference, \Cref{notation_table} collects notation used throughout the paper, including some symbols that appear only in the appendix. 

\begin{table}
\caption{Definitions and descriptions of named quantities, collected here from the text. Hyperlinks in the Symbol column take the reader to the first point in the paper where a given symbol is introduced.}
\label{notation_table}
\begin{tabular}{P{0.08\textwidth}P{0.44\textwidth}P{0.39\textwidth}} \\ \toprule
\textbf{Symbol} & \textbf{Definition} & \textbf{Description} \\ \midrule

\hyperref[Is_defn]{$I_s$}
& $\{i \in [n] \;\vert\; \sgn(y_i) = s\}$
&  The set of data~$i$ with sign $s$, for $s \in \{ \pm 1\}$. \\ \cmidrule{1-3}

\hyperref[Aij_defn]{$A_{i,j}$}
& $\iind{w_j^\top(0) \, x_i > 0 \text{ and } i \in I_{s_j}}$
&  The set of data~$i$ on which, at initialization, neuron~$j$ is active with correct second-layer weight sign. \\ \cmidrule{1-3}

\hyperref[Js_defn]{$J_s$}
& $\{ j \in [m] \mid \sgn (a_j(0)) = s\}$
&  The set of neurons~$j$ with sign $s$, for $s \in \{ \pm 1\}$. \\ \cmidrule{1-3}

\vspace{2ex}\hyperref[tk_defn]{$t_k$} 
& $\begin{aligned}[b]
    t_0&=0 \\[-1ex] t_{k+1} &= \min_{j \in N_U^{(k)}} \left\{ t_k - \frac{\ell_j^\circ(t_k)}{\left\lVert D_j^{(k)} \right\rVert} \right\}
    \end{aligned}$
& \vspace{2ex}The jump times. \\ \cmidrule{1-3}

\hyperref[ellj_defn]{$\ell_j^\circ(t)$} 
& $\begin{aligned}[b]
   \ell_j^\circ(0) &= -1 \\
   \ell_j^\circ(t) &= \min \left( \ell_j^\circ(t_k) + (t-t_k)\|D_j^{(k)}\|, 0 \right)
   \end{aligned}$
& The exponents of the neurons' norms, where $t \in [t_k, t_{k+1}]$. \\ \cmidrule{1-3}
\vspace{0.6cm}
\hyperref[jstar_defn]{$j_\star^{(k)}$} 
& \vspace{-2ex}$\displaystyle{\argmin_{j \in N_U^{(k)}}} \left\{ t_k - \dfrac{\ell_j^\circ(t_k)}{\left\lVert D_j^{(k)} \right\rVert} \right\}$
&  The neuron which fits when moving from saddle $k$ to saddle $k+1$. \\ \cmidrule{1-3}

\hyperref[Suk_Nuk_defn]{$N_U^{(k)}$} 
& $\begin{aligned}[b]
   N_U^{(0)} &= [m] \\
   N_U^{(k+1)} &= N_U^{(k)} \setminus \{ j^\star \}
   \end{aligned}$
& The set of unfitted neurons at saddle $k$. \\ \cmidrule{1-3}

\hyperref[Nfk_defn]{$N_F^{(k)}$} 
& $\begin{aligned}[b]
   [m] \setminus N_U^{(k)}
   \end{aligned}$
& The set of fitted neurons at saddle $k$. \\ \cmidrule{1-3}

\hyperref[Sjk_defn]{$S_j^{(k)}$} 
& \vspace{-2ex}$\begin{aligned}[b]
   S_j^{(0)} = S_j & = \{ i \in [n] \mid A_{i,j}=1 \} \\
   S_j^{(k+1)} & =
   \begin{cases}
   S_j^{(k)} \setminus S_{j_\star^{(k)}}^{(k)} & \text{if } j \neq j_\star^{(k)} \\[1ex]
   S_j^{(k)}                             & \text{if } j =    j_\star^{(k)}
   \end{cases}
   \end{aligned}$
&  The set of unfitted data on which neuron $j$ is active with correct second-layer weight at saddle $k$, and which has not yet been fitted by any other neuron. \\ \cmidrule{1-3}

\hyperref[Suk_Nuk_defn]{$S_U^{(k)}$} 
& $\begin{aligned}[b]
   S_U^{(0)} &= [n] \\
   S_U^{(k+1)} &= S_U^{(k)} \setminus S_{j^\star}^{(k)}
   \end{aligned}$
& The set of unfitted data at saddle $k$. \\ \cmidrule{1-3}

\hyperref[Sfk_defn]{$S_F^{(k)}$} 
& $\begin{aligned}[b]
   [n] \setminus S_U^{(k)}
   \end{aligned}$
& The set of fitted data at saddle $k$. \\ \cmidrule{1-3}

\hyperref[Djk_defn]{$D_j^{(k)}$} 
& \vspace{-1ex}$D_j^{(k)} = \dfrac{1}{n} \displaystyle{\sum_{i \in S_j^{(k)}}} y_i x_i$
& The label-weighted sum of data in the set $S_j^{(k)}$, which is the direction in which neuron $j$ aligns and grows at saddle~$k$. \\ \cmidrule{1-3}

\hyperref[Ek_defn]{$E^{(k)}$} 
& \vspace{-1ex}$E^{(k)} = -\dfrac{1}{n} \displaystyle{\sum_{i \in S_F^{(k)}}} (h_{\widetilde{\theta}}(x_i) - y_i) x_i$
& A dynamical vector encoding the loss on data which has already been fitted, with $n \left\lVert E^{(k)} \right\rVert^2$ directly corresponding to these terms of the loss. \\ \cmidrule{1-3}

\hyperref[fd_defn]{$\fD_j$} 
& \vspace{-1.5ex}$\fD_j = -\dfrac{1}{n} \displaystyle{\sum_{\substack{i \in [n] \,\mid \\ w_j^\top x_i>0}}} (h_{\widetilde{\theta}}(x_i) - y_i) x_i$
& \vspace{1ex}A dynamical vector encoding the loss on all data on which neuron $j$ is active, with $n \lVert \fD_j \rVert^2$ directly corresponding to these terms of the loss. \\ \bottomrule
\end{tabular}
\end{table}

\section{Proof of \texorpdfstring{\Cref{th:s.t.s}}{Theorem \ref*{th:s.t.s}}}

The proof convergence to the limit process defined in \Cref{alg:s.t.s} is due to a non-asymptotic analysis of the dynamics of all neurons around the jump from saddle $k$ to saddle $k+1$. We use this to construct an inductive argument for convergence, with the number of saddles traversed by the flow serving as the parameter of the induction. Specifically, we will assume the following inductive hypothesis on convergence at saddle $k$:

\begin{hyp}[for $k \in \{ 0, 1, \dots , p\}$] \label{inductive_hypothesis}
     Given any $\kappa \in (t_k, t_{k+1})$, there exists $\delta>0$ such that for all $\xi>0$, there exists $\alpha^*>0$ such that for all $\alpha \in (0,\alpha^*)$:
    
    \begin{enumerate}
        \item  $\forall j \in N_U^{(k)}, \; \left\vert \ell_j^\circ(\kappa) +  \log_{\alpha} ( \lVert \widetilde{w}_j(\kappa) \rVert )\right\vert < \xi .$ \label{IH_unfitted_norms}
        \item $\forall j \in N_F^{(k)}, \; \Big\vert n \left\lVert D_j^{(k)} \right\rVert - \lVert \widetilde{w}_j(\kappa) \rVert^2 \Big\vert < \alpha^{\delta} . $ \label{IH_fitted_norms}
        \item $\forall i \in S_F^{(k)}, \; | h_{\widetilde{\theta}(\kappa)}(x_i) - y_i | < \alpha^{\delta} .$  \label{IH_residual}
        \item $\forall j \in [m], \; \widetilde{\overline{w}}^\top_j(\kappa) \overline{D}^{(k)}_j > 1 - \alpha^{\delta} .$ \label{IH_alignment}
    \end{enumerate} 
\end{hyp}

In the above, and throughout the proof, we use $\lVert \cdot \rVert$ to denote the $\ell_2$-norm. The reader should note that $\delta$ depends on $\kappa$ in \cref{inductive_hypothesis}; in particular we have $\lim_{\kappa \to 0} \delta = 0$. This does not present a problem, since the proof is of convergence on compact subsets of $\mathbb{R}$ which exclude $\left( t_k \right)_{k=1}^p$. Therefore, $\kappa$ may be viewed as a quantity which is fixed at the beginning of the proof depending on the compact set in question. The following lemma states that on compact intervals not containing a jump time, we have the desired uniform convergence to the limit process under \Cref{inductive_hypothesis}.

\begin{lem} \label{lem:network_pre_saddle}
    Suppose that \cref{inductive_hypothesis} holds for some $k \in {0, 1, \dots, p-1}$. Then given any $\kappa, \kappa' > 0$ such that $t_k < \kappa < \kappa' < t_{k+1}$, $\widetilde{\theta}^\alpha$ converges uniformly to $\theta^\circ$ on $[\kappa, \kappa']$.
\end{lem}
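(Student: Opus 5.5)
We work on $[\kappa,\kappa']$ with $t_k<\kappa<\kappa'<t_{k+1}$, where $\theta^\circ\equiv\theta^{(k)}$. So we must show that $\sup_{t\in[\kappa,\kappa']}\|\widetilde{\theta}(t)-\theta^{(k)}\|\to 0$ as $\alpha\to 0$. The plan is to apply \cref{inductive_hypothesis} at the left endpoint $\kappa$, giving $\delta>0$ and, for any $\xi>0$, a threshold $\alpha^*$. We then propagate the four properties forward on $[\kappa,\kappa']$ by a continuity (bootstrap) argument. Let $T$ be the first time in $[\kappa,\kappa']$ at which one of the following fails:
\begin{itemize}
\item every unfitted neuron satisfies $\|\widetilde w_j\|\le \alpha^{\eta}$, where $\eta := -\tfrac12 \max_{j\in N_U^{(k)}}\ell_j^\circ(\kappa')>0$;
\item the fitted residuals satisfy $|h_{\widetilde\theta}(x_i)-y_i|\le \alpha^{\delta/2}$ for $i\in S_F^{(k)}$;
\item the fitted norms are within $\alpha^{\delta/2}$ of $n\|D_j^{(k)}\|$.
\end{itemize}
We note that $\eta>0$ because $\ell_j^\circ<0$ on $[t_k,t_{k+1})$ for $j\in N_U^{(k)}$, by the definition of $t_{k+1}$ and uniqueness of the argmax.

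\textbf{Unfitted neurons.} On $[\kappa,T]$ we bound the unfitted neurons crudely. Let $C$ be a uniform bound on $\|\fD_j\|$, available while the network is near $\theta^{(k)}$. Since $\frac{\df}{\df t}\widetilde\ell_j = s_j\widetilde{\overline w}_j^\top\fD_j\le \|\fD_j\|$, we get $\widetilde\ell_j(t)\le \widetilde\ell_j(\kappa)+(t-\kappa)\,C$. A sharper bound is needed, so we use the near-saddle structure. For $j\in N_U^{(k)}$, $\fD_j$ equals $D_j^{(k)}$ restricted to the active unfitted data, plus a contribution of size $\le\alpha^{\delta/2}$ from fitted data and from other small neurons. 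Hence $\frac{\df}{\df t}\widetilde\ell_j\le \|D_j^{(k)}\|+O(\alpha^{\delta/2})$, because the active set of $j$ only shrinks by \cref{ass:dead}. Integrating, and using item~1 of the hypothesis, gives
\[
\widetilde\ell_j(t)\le \ell_j^\circ(t)+\xi+O(\alpha^{\delta/2}),
\]
which stays below $-2\eta+\xi+o(1)<-\eta$ for $\xi<\eta/2$. So unfitted neurons remain of size $\alpha^{\eta}$ on $[\kappa,T]$.

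\textbf{Fitted neurons and residuals.} With unfitted neurons tiny, only fitted neurons act on the fitted data. Consider the loss restricted to $S_F^{(k)}$, $\mathcal L_F=\frac1{2n}\sum_{i\in S_F^{(k)}}(h-y_i)^2$. The fitted neurons are active only on fitted data: their support $S_j^{(k)}$ is disjoint from $S_U^{(k)}$ and they are aligned with $D_j^{(k)}$ by item~4. Along the flow, $\mathcal L_F$ is nonincreasing up to a perturbation of size $\log(1/\alpha)\,\alpha^{\eta}\cdot O(1)$ coming from the unfitted neurons' outputs, which is $o(\alpha^{\delta})$ after shrinking $\delta$. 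So the residual stays $\lesssim \alpha^{\delta}$. The fitted parameters, however, move with speed $\log(1/\alpha)\times(\text{gradient})$. The gradient with respect to fitted neurons is bounded by $\|a_j\|\cdot\sqrt{\mathcal L_F}$-type terms, plus a term of order $\alpha^{\eta}$ from the unfitted neurons. Integrating over a time interval of length $O(1)$ therefore gives a displacement of order $\log(1/\alpha)\,\alpha^{\min(\delta,\eta)}\to0$.

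The crucial point is that the residual does not merely stay small but stays small \emph{without} the $\log(1/\alpha)$ blow-up. For this we exploit local strong convexity in the output: each residual $r_i=h(x_i)-y_i$, $i\in S_F^{(k)}$, satisfies $\dot r_i\approx -\log(1/\alpha)\,c_i\,r_i+\text{(perturbation)}$ with $c_i\ge c>0$, since the fitted neuron has norm bounded below. Hence the residuals contract exponentially, and the cumulative parameter displacement is $\int\log(1/\alpha)|r_i|\,\df t = O(\alpha^{\delta})+O(\log(1/\alpha)\alpha^{\eta})$. This closes the bootstrap, so $T=\kappa'$.

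\textbf{Conclusion and main obstacle.} Uniform convergence then follows. Unfitted neurons have norm $\le\alpha^{\eta}\to0$. Fitted neurons have norm within $o(1)$ of $\sqrt{n\|D_j^{(k)}\|}$ and direction within $o(1)$ of $s_j\overline D_j^{(k)}$, by item~4 plus the displacement bound. This is exactly $\theta^{(k)}$ as defined in \cref{alg:s.t.s}, and \cref{pr:bal} transfers the statement from $w_j$ to $a_j$. The main obstacle is the last estimate: controlling the drift of fitted neurons despite the $\log(1/\alpha)$ time acceleration. The approach is the contraction of fitted residuals under the locally linearized dynamics, with the unfitted-neuron forcing being exponentially small in $\alpha$, so that it beats the logarithmic factor.
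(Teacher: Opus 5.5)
Your argument is correct. The paper obtains this lemma as an immediate corollary of \Cref{lemma_phase_1}. Taking $\varepsilon$ small makes $\tau_1>\kappa'$ (item 4), so the stopping rule keeps unfitted neurons below $\alpha^{\varepsilon}$ on $[\kappa,\kappa']$. Items 2--3 then give $\lVert \widetilde{a}_j\widetilde{w}_j-nD_j^{(k)}\rVert^2<\alpha^{\varepsilon/5}$ for fitted neurons.

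You instead run a self-contained bootstrap on $[\kappa,\kappa']$, with a threshold $\alpha^{\eta}$ read off from $\ell^\circ(\kappa')$. You bound unfitted growth by $\frac{\mathrm{d}}{\mathrm{d}t}\widetilde{\ell}_j\le \lVert D_j^{(k)}\rVert+O(\alpha^{\delta/2})$, which needs no alignment information. Your control of the fitted residuals (fitted loss nonincreasing up to a polynomially small forcing from the unfitted neurons) and of the drift of fitted neurons uses the same mechanisms as the paper.

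Your route is leaner for this particular lemma. \Cref{lemma_phase_1} records more because the later phases reuse it: alignment of unfitted neurons, two-sided norm estimates, the position of $\tau_1$, and $\lVert\widetilde{w}_{j^\star}(\tau_1)\rVert=\alpha^{\varepsilon}$.

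One correction to your own assessment: the contraction step you call crucial is superfluous. Since $\log(1/\alpha)\,\alpha^{c}\to 0$ for every $c>0$, the displacement bound $\log(1/\alpha)\,\alpha^{\min(\delta,\eta)}$ you had already derived (with $\delta<\eta$) lies below your threshold $\alpha^{\delta/2}$. That alone closes the bootstrap. This is exactly how the paper absorbs the factor, by relaxing $\alpha^{\varepsilon}$ to $\alpha^{\varepsilon/2}$ and $\alpha^{\varepsilon/4}$.

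As written, that step is also inaccurate componentwise. A fitted neuron $j$ couples all fitted points it is active on, contributing to $\dot r_i$ terms proportional to $(w_j^\top x_i)\sum_v r_v\, w_j^\top x_v$, which are of the same order as $r_i$. Only the norm-level contraction holds, because the induced kernel on fitted data is a diagonal matrix with entries bounded below plus a positive semidefinite matrix.
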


The following lemma recovers \Cref{inductive_hypothesis} at saddle $k+1$.

\begin{lem} \label{lem:network_post_saddle}
    If \cref{inductive_hypothesis} holds for some $k \in \{ 0, 1, \dots , p-1\}$, then it holds for $k+1$.
\end{lem}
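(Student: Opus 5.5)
We follow the timeline of \Cref{fig:frise}. Fix $\kappa\in(t_k,t_{k+1})$ from \cref{inductive_hypothesis} at saddle $k$, and a target $\kappa''\in(t_{k+1},t_{k+2})$. We split $[\kappa,\kappa'']$ at the stopping times $\tau_1,\tau_2,\tau_3,\tau_4$ and propagate non-asymptotic bounds on four groups of quantities: norms, alignment, residuals on fitted data, and norm exponents of unfitted neurons. At the end we choose $\varepsilon,\varepsilon'$ small in terms of $\xi$ and a new $\delta$, so that the four items of the hypothesis hold at $\kappa''$ for saddle $k+1$.

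\textbf{Slow growth on $[\kappa,\tau_1]$.} Let $\tau_1$ be the first time with $\|\widetilde w_{j_\star^{(k)}}\|=\alpha^{\varepsilon}$. We run a bootstrap (continuity) argument. As long as all unfitted neurons have norm at most $\alpha^{\varepsilon}$, the outputs on $S_U^{(k)}$ are $O(\alpha^{2\varepsilon})$. Then $\fD_j=D_j^{(k)}+O(\alpha^{2\varepsilon})+O(\|E^{(k)}\|)$ for unfitted $j$, using orthogonality and the fact that inactive coordinates are frozen by \cref{ass:dead}.

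The tangential equation has a $\log(1/\alpha)$ factor, so the alignment $\widetilde{\overline w}_j^\top\overline D_j^{(k)}>1-\alpha^{\delta}$ persists. We check this through $\frac{d}{dt}(1-\widetilde{\overline w}_j^\top \overline D_j)\le -c\log(1/\alpha)(1-\cdots)+$ error. Consequently $\frac{d}{dt}\widetilde\ell_j=\|D_j^{(k)}\|+o(1)$, and integrating gives item 1 along the way. It also gives $\tau_1=t_{k+1}-\varepsilon/\|D_{j_\star}^{(k)}\|+o(1)$. Uniqueness of the argmax in \cref{ass:A.j.star} ensures every other unfitted neuron still has exponent bounded away from $0$ at $\tau_1$, say at most $-c_0$.

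Fitted neurons and the residuals on $S_F^{(k)}$ are controlled by a Lyapunov argument. The partial loss $n\|E^{(k)}\|^2$ restricted to fitted neurons contracts at rate $\log(1/\alpha)$, because those neurons form a locally strongly convex rank-one fit on disjoint supports. The only perturbation comes from unfitted neurons of size $\alpha^{2\varepsilon}$, so items 2 and 3 stay $\alpha^{\delta'}$-small.

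\textbf{Rapid fitting on $[\tau_1,\tau_3]$, the main obstacle.} Here neuron $j_\star=j_\star^{(k)}$ grows from $\alpha^{\varepsilon}$ to near $\sqrt{n\|D_{j_\star}^{(k)}\|}$. Its dynamics on $\mathrm{span}(x_i: i\in S_{j_\star}^{(k)})$ reduce to a single-neuron rank-one regression, which can be solved almost explicitly: the norm satisfies a logistic-type ODE $\frac{d}{dt}\|w\|^2\approx 2\log(1/\alpha)\|w\|^2(\|D\|-\|w\|^2/n)$. This reaches $n\|D\|-\alpha^{\varepsilon'/4}$ within accelerated time $O(\varepsilon+\varepsilon')$.

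The difficulty is the components of $w_{j_\star}$ along already fitted points $i\in S_F^{(k)}$ where it is active. These may grow, spoil both the fit there and the alignment. We handle this in two stages.
\begin{itemize}
\item Until $\tau_2$, the time at which $\|\widetilde w_{j_\star}\|=\eta$ for a small fixed $\eta$, the perturbation to the fitted residuals is at most $\eta^2$. The fitted neurons then absorb it quickly, since their restoring rate is of order $\log(1/\alpha)$ compared with the slower drift.
\item After $\tau_2$, we bound $\frac{d}{dt}$ of the $S_F^{(k)}$-component of $\widetilde{\overline w}_{j_\star}$ by $\log(1/\alpha)\|E^{(k)}\|$. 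We close a coupled Grönwall estimate: $\|E^{(k)}\|$ decays exponentially at rate of order $\log(1/\alpha)$, and its forcing is proportional to that misaligned component.
\end{itemize}
This yields $\|E^{(k)}\|=O(\alpha^{\delta''})$ and keeps the alignment of $j_\star$ with $D_{j_\star}^{(k)}$ on $[\tau_1,\tau_3]$. Because this window is short, other unfitted neurons gain at most $O(\varepsilon+\varepsilon')$ in exponent and stay tiny.

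\textbf{Realignment and recovery of \cref{inductive_hypothesis}.} After $\tau_3$, the points $S_{j_\star}^{(k)}$ are fitted, so for $j\in N_U^{(k+1)}$ the driving vector becomes $D_j^{(k+1)}+$ small. Tangential contraction at rate $\log(1/\alpha)$ realigns them by $\tau_4=\tau_3+O(\varepsilon)$. Meanwhile $\widetilde\ell_j$ changes by at most $O(\varepsilon)$, which recovers the min-with-$0$ bookkeeping of $\ell^\circ$ in \cref{alg:s.t.s}. The fitted neuron $j_\star$ then converges to $nD/\sqrt{n\|D\|}$ exponentially, which gives items 2 and 3 with $\alpha^{\delta}$ precision.

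We finish by repeating the slow-growth argument on $[\tau_4,\kappa'']$ with the saddle-$(k+1)$ quantities. This yields items 1 to 4 at $\kappa''$, with $\xi$ error of order $\varepsilon+\varepsilon'+o_\alpha(1)$, and the lemma follows after choosing $\varepsilon,\varepsilon'$ small enough.
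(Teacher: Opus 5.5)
Your phase decomposition is the paper's (same stopping times $\tau_1,\dots,\tau_4$, same split at $\lVert\widetilde w_{j_\star}\rVert=\eta$). The dissipative term $-\frac1n\sum_{j}\lVert\widetilde w_j\rVert^2\,(x_i^\top E^{(k)})^2$, which you exploit and the paper discards, is a legitimate extra ingredient. The gap is in the step you yourself call the main obstacle.

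Before $\tau_2$ you only control the fitted residuals. You argue that an $O(\eta^2)$ perturbation is absorbed because the fitted neurons restore ``at rate $\log(1/\alpha)$ compared with the slower drift''. There is no such separation of timescales. $\lVert\widetilde w_{j_\star}\rVert^2$ grows at relative rate $2\log(1/\alpha)\lVert D_{j_\star}^{(k)}\rVert$, and the misaligned component $f(t)=\max_{i\in S_F^{(k)}}(\widetilde{\overline w}_{j_\star}^\top x_i)_+$ moves at rate $\log(1/\alpha)\lVert E^{(k)}\rVert$. Both run on the same clock as the restoring force.

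Controlling residuals is also not enough. If $f$ reached order one, the fitted neurons would have to move by $O(\eta^2)$ to compensate, which breaks item~2 at precision $\alpha^{\delta}$. In addition, $j_\star$ would leave the direction $D_{j_\star}^{(k)}$, which breaks item~4 and the identification of the limit. So you must prove $f=O(\alpha^{c\varepsilon})$ on $[\tau_1,\tau_2]$ as well. That means closing the feedback loop in which $f'\lesssim\log(1/\alpha)(\lVert E^{(k)}\rVert-bf)$ with $b\ge\lVert D_{j_\star}^{(k)}\rVert-\eta^2/n>0$, while $\lVert E^{(k)}\rVert$ is forced at a rate proportional to $\lVert\widetilde w_{j_\star}\rVert^2 f\le\eta^2 f$.

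This loop closes only because $\eta$ is small. In the paper this is done via $F''\le pF+q$ with $\sqrt p\propto\eta\log(1/\alpha)$: the amplification $\alpha^{-O(\eta\varepsilon)}$ over a window of length $3\varepsilon/\lVert D_{j_\star}^{(k)}\rVert$ is beaten by the initial smallness $\alpha^{\varepsilon/4}$. With your dissipation it would close because the loop gain is of order $\lVert\widetilde w_{j_\star}\rVert^2/(nc)$, with $c$ the dissipation rate. This is the real reason for the split at $\eta$, and your write-up never supplies it.

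After $\tau_2$ the loop gain is no longer small. The forcing coefficient $\lVert\widetilde w_{j_\star}\rVert^2$ is of order $n\lVert D_{j_\star}^{(k)}\rVert$, and the damping of $f$ vanishes as $\lVert\widetilde w_{j_\star}\rVert^2\to n\lVert D_{j_\star}^{(k)}\rVert$. So decay of $\lVert E^{(k)}\rVert$ does not by itself make your coupled Gr\"onwall estimate close. The paper closes it because Phase~2b lasts only $\varepsilon'/\lVert D_{j_\star}^{(k)}\rVert$, with $\varepsilon'$ chosen after $\varepsilon$ and much smaller. The amplification $\alpha^{-O(\varepsilon')}$ is then dominated by the $\alpha^{O(\varepsilon)}$ smallness inherited from Phase~2a. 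You never impose this ordering. With your dissipation one could instead use that $\int\log(1/\alpha)\,b\,\mathrm{d}t$ over Phase~2b is bounded independently of $\alpha$ (it essentially equals $\tfrac12\log(n\lVert D_{j_\star}^{(k)}\rVert/\eta^2)$), but this too has to be argued.

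Finally, ending with ``choose $\varepsilon,\varepsilon'$ small enough'' violates the quantifier order of the inductive hypothesis, where $\delta$ may not depend on $\xi$. Your $\delta$ is tied to $\varepsilon'$, while your item-1 error is $O(\varepsilon+\varepsilon')$. As in the paper, you need one fixed choice $(\varepsilon_0,\varepsilon_0')$ giving $\delta$ for items~2--4, and a second, $\xi$-dependent choice for item~1. Then take the smaller threshold $\alpha^*$.
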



    
Each of these lemmas, as well as the final proof of \Cref{th:s.t.s}, are provided in \Cref{sect:final_proof_thm_2}. We deal with the final saddle separately in \Cref{final_saddle} because, although the arguments for uniform convergence on closed intervals contained in $(t_{p}, \infty)$ are essentially the same as those for $k<p$, it is simpler to account for this case separately.

\subsection{Basis case and autonomous systems}

We provide here an analysis of the early time dynamics, specifically a proof of the alignment of neurons towards the directions $D_j^{(0)}$ by an arbitrary positive time, where

\begin{equation}
    D_j^{(0)} := \frac{1}{n} \sum_{ i \in [n]} A_{i,j}y_i x_i,
\end{equation}

and we recall that

\begin{equation}
    A_{i,j} = \iind{w_j^\top(0) \, x_i > 0 \text{ and } i \in I_{s_j}} .
\end{equation}

This will amount to a proof of a statement similar to the case $k=0$ for \Cref{inductive_hypothesis}. At this point, it will also be helpful to introduce the vector 

\begin{equation}
    D_j^{(k)} := \frac{1}{n} \sum_{i \in S_j^{(k)}} y_i x_i .
\end{equation}

In the above display, $S_j^{(k)}$ is recursively defined by

\label{Sjk_defn}
\begin{equation} 
\begin{aligned}
    S_j^{(0)} &= \{ i \in [n] \mid A_{i,j}=1 \} \\
   S_j^{(k+1)} & =
   \begin{cases}
   S_j^{(k)} \setminus S_{j_\star^{(k)}}^{(k)} & \text{if } j \neq j_\star^{(k)} \\[1ex]
   S_j^{(k)}                             & \text{if } j =    j_\star^{(k)}
   \end{cases} 
\end{aligned} 
\end{equation}

and denotes the set of data on which neuron $j$ is active at saddle $k$ and on which no neuron in the set 

\label{Nfk_defn}
\begin{equation}
    N_F^{(k)} := [m] \setminus N_U^{(k)} 
\end{equation}

is active. Note that $S_j$ (as introduced in \Cref{alg:s.t.s}) is the same set as $S_j^{(0)}$. In a similar way to how we define the set of neurons which have fitted their data, we can also define the set of fitted data itself by 

\label{Sfk_defn}
\begin{equation}
    S_F^{(k)} := [n] \setminus S_U^{(k)}.
\end{equation}

One may see \Cref{alg:s.t.s} for the construction of the sets $N_U^{(k)}$ and $S_U^{(k)}$. Our arguments will use the equations 

\begin{align}
 \dot{\widetilde{a}}_j &= \log(1/\alpha)  \widetilde{w}_j^\top \fD_j & \dot{\widetilde{w}}_j &= \log(1/\alpha) \widetilde{a}_j \fD_j  , 
\end{align}

where we have

\begin{equation}
    \fD_j = -\frac{1}{n} \sum_{i \in [n] \, \mid \, \widetilde{w}_j^\top x_i > 0}  (h_{\widetilde{\theta}}(x_i) - y_i)x_i. 
\end{equation}

As noted in \cref{setup}, these equations combined with \cref{pr:bal} and \cref{ass:dead} imply that 

\begin{equation}
    \dot{\widetilde{\overline{w}}}_j(t)  = s_j \log(1/\alpha) \left( \fD_j (t) - \widetilde{\overline{w}}_j(t)  \widetilde{\overline{w}}_j^\top(t) \fD_j (t)  \right).
\end{equation}

Using these equations, we will ensure that by an arbitrary time $\kappa>0$, all neurons have aligned towards the directions $D_j^{(0)}$. Furthermore, we will verify by introducing an auxiliary parameter of the proof, $\widehat{\kappa}$, that $-\widetilde{\ell}_j(\kappa) \approx -\ell_j^\circ(\kappa)$ with arbitrary precision. The sketch of the proof is to ensure that by any time $\widehat{\kappa}>0$, the realignment has occurred and then remains stable until time $\kappa$. Therefore, the precision of the exponents of the unfitted neurons is controlled by the arbitrary parameter $\widehat{\kappa}$. To state the lemma below, we define for $s \in \{\pm 1\}$ the set of neurons with output sign given by $s$: 

\label{Js_defn}
\begin{equation}
    J_s := \{ j \in [m] \mid \sgn(a_j(0)) = s \}. 
\end{equation}

\begin{lem} \label{induction_basis_case_lemma}
    For any $\kappa \in \left( 0, \frac{1}{4} \sqrt{\frac{n}{2\mathcal{L}(0)}} \right)$ there exists $\delta>0$ such that for any $\xi>0$ there exists $\alpha^*(\kappa, \xi)>0$ such that for all $\alpha \in (0, \alpha^*)$ and both signs $s \in \{ \pm 1 \}$ the following are true:

    \begin{enumerate}
        \item $\forall j \in J_s, \; s_j \, \widetilde{\overline{w}}_j^\top(\kappa) \overline{D}_j^{(0)} > 1-\alpha^\delta.$ \label{lem1.1}
        \item $\forall j \in J_s, \, \forall t \in (0, \kappa), \; \left\vert \ell_j^\circ(t) +  \log_{\alpha} ( \lVert \widetilde{w}_j \rVert )\right\vert < \xi $ \label{lem1.2}.
        \item $\forall i \in I_{-s}, \forall j \in J_s, \; \widetilde{w}_j^\top(\kappa) x_i \leq 0.$ \label{lem1.3}
    \end{enumerate}
\end{lem}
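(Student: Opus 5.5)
The plan is to exploit the fact that, as long as the network output is negligible, every neuron follows an essentially linear, decoupled dynamics. I will work on the unaccelerated-time interval $[0,\kappa\log(1/\alpha)]$ and first establish an a priori smallness bound. Let $T$ be the first time at which $\max_j \lVert \widetilde w_j\rVert \geq \alpha^{1/2}$. Before $T$ we have $|h_{\widetilde\theta}(x_i)| \leq m\alpha$, so $\fD_j = \frac1n\sum_{i:\widetilde w_j^\top x_i>0} y_i x_i + O(m\alpha)$. Since $\dot{\widetilde\ell}_j = s_j\widetilde{\overline w}_j^\top \fD_j \leq \lVert\fD_j\rVert \leq \sqrt{2\mathcal L(0)/n}+O(\alpha)$, Grönwall gives $\lVert\widetilde w_j(t)\rVert \leq \alpha^{1-t\sqrt{2\mathcal L(0)/n}-o(1)}$. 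The restriction $\kappa<\frac14\sqrt{n/(2\mathcal L(0))}$ then forces $\lVert \widetilde w_j\rVert \leq \alpha^{3/4-o(1)}$ on $[0,\kappa]$, so $T>\kappa$ and the residuals satisfy $h_{\widetilde\theta}(x_i)-y_i = -y_i+O(\alpha^{3/2})$ throughout.

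Next I will analyse the coordinates $c_{ij}:=\widetilde{w}_j^\top x_i$, which obey $\dot c_{ij} = \log(1/\alpha)\,\widetilde a_j\,\frac1n(y_i - h(x_i))\iind{c_{ij}>0}$ by \cref{eq:individual_alignment}. Since $\sgn(\widetilde a_j)=s_j$ by \cref{pr:bal}, the sign of $\dot c_{ij}$ equals that of $s_jy_i$ whenever $c_{ij}>0$. For $i\in I_{-s}$ and $j\in J_s$, a positive $c_{ij}$ is therefore nonincreasing. The key point is that it actually reaches $0$ before time $\kappa$, and thereafter stays at $0$ by \cref{ass:dead}; if it starts nonpositive it is frozen from the outset. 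This gives item \ref{lem1.3}. Coordinates with $c_{ij}(0)>0$ and $s_jy_i>0$ stay positive, and components with $c_{ij}(0)\le 0$ stay frozen. Hence, once the wrong-sign coordinates have died, $\fD_j \approx s_j\cdot$ (up to sign conventions) $D_j^{(0)}$ exactly, up to $O(\alpha^{3/2})$.

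Proving that the wrong-sign coordinates reach zero quickly is the main obstacle. The difficulty is that their decay rate $\log(1/\alpha)|\widetilde a_j||y_i|/n$ involves the tiny factor $|\widetilde a_j|\approx\alpha^{1-\cdots}$. My first attempt will compare ratios: $\frac{d}{dt}c_{ij}/\lVert\widetilde w_j\rVert$ behaves like the tangential equation $\dot{\widetilde{\overline w}}_j = s_j\log(1/\alpha)(I-\widetilde{\overline w}_j\widetilde{\overline w}_j^\top)\fD_j$. On the sphere this is a gradient flow of $s_j\overline w^\top \fD_j$ accelerated by $\log(1/\alpha)$, with $\fD_j$ essentially constant. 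Along it, the normalized wrong-sign components $\bar c_{ij}$ satisfy $\dot{\bar c}_{ij}\le -\log(1/\alpha)(\frac{|y_i|}{n}+\bar c_{ij}\, s_j\widetilde{\overline w}_j^\top\fD_j)$. The alignment $s_j\widetilde{\overline w}_j^\top\fD_j$ is nondecreasing along the flow (standard for spherical gradient ascent), and I will bound it below by some $-c$. As a result, $\bar c_{ij}$ hits zero within time $O(1/\log(1/\alpha))$ whenever it begins bounded. Pieces with tiny positive alignment need care; I would handle them by noting that $|y_i|/n$ dominates once the alignment is nonnegative.

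Once this is done, the component of $\widetilde{\overline w}_j$ orthogonal to $\overline{D}_j^{(0)}$ in the active span decays like $\exp(-\log(1/\alpha)\lVert D_j^{(0)}\rVert t')$ over a time $t'\ge \widehat\kappa$. This yields item \ref{lem1.1} with, say, $\delta=\widehat\kappa\lVert D_j^{(0)}\rVert/2$; the component along frozen directions is initially $O(1)$ in normalized terms but relatively shrinks as the norm grows. Finally, for item \ref{lem1.2}, I will integrate $\dot{\widetilde\ell}_j = s_j\widetilde{\overline w}_j^\top\fD_j$. This quantity lies between $-\lVert D_j\rVert$ and $\lVert D_j^{(0)}\rVert$ on $[0,\widehat\kappa]$ and equals $\lVert D_j^{(0)}\rVert(1-O(\alpha^\delta))$ afterwards. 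Since $\widetilde\ell_j(0)=-1$, this gives $|\widetilde\ell_j(t)-\ell_j^\circ(t)|\le C\widehat\kappa+o(1)$, and choosing $\widehat\kappa$ small relative to $\xi$ and then $\alpha^*$ small finishes the proof.
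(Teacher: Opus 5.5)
Your order of attack (a priori smallness, then death of the wrong-sign coordinates, then alignment and norm exponents from the resulting essentially linear dynamics) is the reverse of the paper's. The paper first proves alignment through a Riccati inequality for $\widetilde{\overline{w}}_j^\top D_j^{(0)}$, where the active wrong-sign data can simply be dropped because they contribute with a favourable sign as long as $\widetilde{\overline{w}}_j^\top D_j^{(0)}>0$. Only afterwards does it use alignment to make the normalized wrong-sign components small and drive them to zero. Your order can work, but the step you yourself call the main obstacle is not established by what you wrote.

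From $g:=s_j\widetilde{\overline{w}}_j^\top\fD_j\geq -c$ you only get $\frac{1}{\log(1/\alpha)}\dot{\bar c}_{ij}\leq -\frac{|y_i|}{n}+c\,\bar c_{ij}$. This forces decrease only while $\bar c_{ij}<|y_i|/(nc)$. Monotonicity of $g$ does not help unless you show quantitatively that $g$ becomes nonnegative within accelerated time $O(1/\log(1/\alpha))$. The normalized wrong-sign components really can grow at first. Take $s_j=1$ and $\widetilde{\overline{w}}_j(0)\approx(x_{i_2}+x_{i_3})/\sqrt2$ plus a small right-sign part, with $y_{i_2}=-1$ and $y_{i_3}=-M$, $M>1$. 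Then $g\approx-(1+M)/(\sqrt2\, n)$ and $\frac{1}{\log(1/\alpha)}\dot{\bar c}_{i_2 j}\approx -\frac1n+\frac{1+M}{2n}>0$. Your closing remark, that $|y_i|/n$ dominates once $g\geq 0$, is correct but leaves exactly this regime open.

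The missing observation uses only facts you already state. Take $s_j=1$ (the other sign is symmetric). The right-sign active coordinates $c_{ij}$, $i\in S_j$, are nonnegative and nondecreasing on $[0,\kappa]$, so $\lVert\widetilde w_j(t)\rVert\geq\lVert P_{S_j}\widetilde w_j(t)\rVert\geq\lVert P_{S_j}\widetilde w_j(0)\rVert=\alpha\lVert P_{S_j}u_j\rVert$. Here $P_{S_j}$ is the orthogonal projection onto $\mathrm{span}\{x_i : i\in S_j\}$.

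So the factor $|\widetilde a_j|$ you worried about never drops below order $\alpha$. This matches the starting value $c_{ij}(0)=\alpha u_j^\top x_i\leq\alpha$ of a wrong-sign coordinate. That coordinate therefore decreases at rate at least $\log(1/\alpha)\,\alpha\lVert P_{S_j}u_j\rVert\,|y_i|/(2n)$ and vanishes by accelerated time $2n/(\lVert P_{S_j}u_j\rVert\,|y_i|\log(1/\alpha))\to 0$. This needs $S_j\neq\emptyset$ (non-zero columns of $A$ in Assumption~\ref{ass:A.j.star}), which you never invoke but which is required anyway for $\overline{D}_j^{(0)}$ to exist.

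If you prefer to stay with your spherical picture, you must lower-bound $\frac{\mathrm{d}g}{\mathrm{d}s}=\lVert\fD_j\rVert^2-g^2$ in natural time $s$. You can do this using that the nonnegative right-sign components keep $\widetilde{\overline{w}}_j$ a fixed distance from $-\fD_j/\lVert\fD_j\rVert$.

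With this fix your remaining steps go through. They even give a slightly more explicit picture than the paper's: after time $O(1/\log(1/\alpha))$, the part of $\widetilde w_j$ orthogonal to $D_j^{(0)}$ is essentially frozen at size $O(\alpha)$, while the $D_j^{(0)}$ part grows like $\alpha^{1-\lVert D_j^{(0)}\rVert t}$.

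One quantifier slip remains. You set $\delta=\widehat\kappa\lVert D_j^{(0)}\rVert/2$ and later choose $\widehat\kappa$ depending on $\xi$, but the statement requires $\delta$ to be fixed before $\xi$. Use the full elapsed time at $\kappa$ instead, e.g. $\delta=\frac{\kappa}{2}\min_j\lVert D_j^{(0)}\rVert$, and reserve $\widehat\kappa$ for item 2.
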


The monotonicity of the gradient flow means there exists a maximum rate of growth it is possible for any neuron to achieve. Therefore, the upper bound on $\kappa$ appears as a technicality to ensure that even if a neuron were to have grown at this maximum rate from initialization, it could not have exceeded the threshold $\alpha^{1/2}$ which we introduce for the purpose of the proof. This kind of argument will be used several times throughout subsequent analyses (see \Cref{pr:uniform_bound_unfitted} and its application in the proof of \Cref{lemma_phase_1,lemma_phase_2a,lemma_phase_2b,lemma_phase_3}).

\Cref{lem1.3} \Cref{induction_basis_case_lemma} is a particularly important point, because it ensures that neurons deactivate within an arbitrarily short length of time on the data for which $A_{i, j} = 0$. Combining this with \Cref{ass:dead} will allow us to see that the network splits into two sub-networks, consisting only of neurons with positive (resp.\ negative) second-layer weight and which are active on data with only positive (resp.\ negative) labels. This allows us to consider w.l.o.g. the case that $s_j>0$ for all $j \in [m]$ and $\sgn (y_i) > 0$ for all $i \in [n]$. This is explained more precisely after the proof.

We stress now that the statement of \Cref{induction_basis_case_lemma} does not exactly correspond to the statement of \Cref{inductive_hypothesis}, since the interval $\left( 0, \frac{1}{4} \sqrt{\frac{n}{2\mathcal{L}(0)}} \right)$ does not correspond to the interval $(0, t_1)$. However, when uniform convergence to $\theta^\circ$ is proven, we will apply \Cref{lem:network_pre_saddle}, and this statement is sufficient to obtain uniform convergence on arbitrary closed intervals contained in $(0, t_1)$. Concretely, if we were to consider a compact set partially or entirely contained in $\left( \frac{1}{4} \sqrt{\frac{n}{2\mathcal{L}(0)}}, t_1 \right)$, we could obtain convergence by applying \Cref{lem:network_pre_saddle} on some interval whose lower endpoint is contained in $\left( 0, \frac{1}{4} \sqrt{\frac{n}{2\mathcal{L}(0)}} \right)$ and whose upper endpoint is greater than the maximum of the compact set.

\begin{proof}

Define $\tau := \inf \{ t \geq 0 \mid \exists \, j \in [m] \; \mathrm{s.t.} \; \lVert \widetilde{w}_j \rVert \geq \alpha^{1/2} \}$. As a preliminary, we check that $\kappa<\tau$. On $[0, \tau]$,

\begin{align}
    \frac{1}{\log(1/\alpha)}\frac{\df}{\df t} \lVert \widetilde{w}_j \rVert &= - \frac{\lVert \widetilde{w}_j \rVert}{n} \sum_{i \in [n] \, \mid \, \widetilde{w}_j^\top x_i > 0}  (h_{\widetilde{\theta}}(x_i)-y_i) \widetilde{\overline{w}}_j^\top x_i \label{basis_norm_growth_eqn} \\
    &\leq \sqrt{\frac{2\mathcal{L}(0)}{n}} \lVert \widetilde{w}_j \rVert.
\end{align}

This follows from the Cauchy-Schwarz inequality combined with the fact that $\mathcal{L}$ is monotone decreasing, by definition of the gradient flow. Gr{\"o}nwall's inequality gives that on this interval, 

\begin{align}
    \lVert \widetilde{w}_j(t) \rVert &\leq \lVert \widetilde{w}_j(0) \rVert \alpha^{- \sqrt{ \frac{2\mathcal{L}(0)}{n}}  t} \\
    &= \alpha^{1 - \sqrt{ \frac{2\mathcal{L}(0)}{n}}  t}.
\end{align}

where the second inequality is valid for $\alpha$ sufficiently small. Since $\kappa < \frac{1}{4} \sqrt{ \frac{2\mathcal{L}(0)}{n}}  t$, we indeed have $\kappa < \tau$. We may reverse the inequality and consequently obtain for $t \in [0, \tau]$ the lower bound 

\begin{equation}
    \lVert \widetilde{w}_j(t) \rVert > \alpha^{1 + \sqrt{ \frac{2\mathcal{L}(0)}{n}} t}.
\end{equation}

We conclude that for $t \in [0, \kappa]$

\begin{equation}
    \alpha^{1 + \sqrt{ \frac{2\mathcal{L}(0)}{n}} t} < \lVert \widetilde{w}_j(t) \rVert < \alpha^{1 - \sqrt{ \frac{2\mathcal{L}(0)}{n}} t}. \label{basis_case_norms_bound}
\end{equation}

Observe at this point that the signs are symmetric, and so we prove w.l.o.g. the statements for $s_j=1$.  

\begin{subproof}[Proof of \Cref{lem1.1}] For the claimed alignment, we compute that 

\begin{align}
    \frac{1}{\log(1/\alpha)} \frac{\df}{\df t} \widetilde{\overline{w}}_j^\top D_j^{(0)} &\geq \left\lVert D_j^{(0)} \right\rVert^2 - \left( \widetilde{\overline{w}}_j^\top D_j^{(0)} \right)^2 \\
    &\qquad - \frac{1}{n} \sum_{i \, \mid \, \widetilde{w}_j^\top x_i > 0} h_{\widetilde{\theta}}(x_i) \left( x_i^\top D_j^{(0)} - x_i^\top \widetilde{\overline{w}}_j \widetilde{\overline{w}}_j^\top D_j^{(0)} \right) \\
    &> \left\lVert D_j^{(0)} \right\rVert^2 - \left( \widetilde{\overline{w}}_j^\top D_j^{(0)} \right)^2 - 2m \left\lVert D_j^{(0)} \right\rVert \alpha,
\end{align}

where the first inequality is from discarding negatively labeled data on which neuron $j$ is active. This is because such data add a positive contribution for $\alpha$ small enough that $|h_{\widetilde{\theta}}(x_i)| < |y_i|$. We then use the fact that $f'(t)=\log(1/\alpha)(a^2-f(t)^2)$ is solved by $a \tanh(a \log(1/\alpha)t + t_0)$, where $t_0$ is an integration constant. Since the inner product is positive at initialization by definition, $t_0$ is positive. By the fact that $\tanh$ is monotone increasing, we obtain a lower bound by discarding this integration constant. This leaves 

\begin{align}
    \widetilde{\overline{w}}_j^\top D_j^{(0)} &> \left( \left\lVert D_j^{(0)} \right\rVert^2 - 2m\alpha \right)^{1/2} \tanh \left( \log(1/\alpha) \left( \left\lVert D_j^{(0)} \right\rVert^2 - 2m\alpha \right)^{1/2} t \right) \\
    &\geq \left( \left\lVert D_j^{(0)} \right\rVert - \sqrt{2m\alpha} \right) \left( 1 - \alpha^{2 \left( \left\lVert D_j^{(0)} \right\rVert - \sqrt{2m\alpha} \right) t} \right)  ,
\end{align}

where above we applied the inequalities $\sqrt{1-x^2} \geq 1 - x$ and $\tanh(x) \geq 1 - 2 \exp(-2x)$. Note that there exists $\alpha$ sufficiently small that for any $j$, $2\left(\left\lVert D_j^{(0)} \right\rVert - \sqrt{2m\alpha}\right) > \left\lVert D_j^{(0)} \right\rVert $, so for $\alpha$ below this threshold at time $t\in[0,\kappa]$ one has 

\begin{align}
    \widetilde{\overline{w}}_j^\top(t) D_j^{(0)} &> \left( \left\lVert D_j^{(0)} \right\rVert - \sqrt{2m\alpha} \right) \left( 1 - \alpha^{ \left\lVert D_j^{(0)} \right\rVert t} \right) \\
    &> \left\lVert D_j^{(0)} \right\rVert \left( 1 - \alpha^{\left\lVert D_j^{(0)} \right\rVert t} -\alpha^{1/4}  \right), \label{eq:basis_case_alignment_bound_general}
\end{align}
where the second inequality is valid for $\alpha\leq \frac{1}{(2m)^2}$.  
This directly implies that for $\alpha$ small enough and any $\kappa\in \left(0, \frac{1}{4} \sqrt{\frac{n}{2\mathcal{L}(0)}} \right)$, there exists $\delta>0$ such that the alignment of any neuron $j$ with direction $D_j^{(0)}$ is bounded by below by $1-\alpha^\delta$ at time $\kappa$, as required.
Note here that $\delta$ depends on $\kappa$, and $\alpha*$ only depends on the data.
\end{subproof}

\begin{subproof}[Proof of \Cref{lem1.2}] We select some arbitrary $\widehat{\kappa} \in (0, \kappa)$. Since $\widehat{\kappa}<\tau$, we may insert it into \cref{basis_case_norms_bound} to obtain

\begin{equation}
    \alpha^{1 + \sqrt{ \frac{2\mathcal{L}(0)}{n}} \widehat{\kappa}} < \lVert \widetilde{w}_j(\widehat{\kappa}) \rVert < \alpha^{1 - \sqrt{ \frac{2\mathcal{L}(0)}{n}} \widehat{\kappa}}. 
\end{equation}

We express this deviation in terms of the quantity $\ell_j^\circ(\widehat{\kappa})$. We note that $-\ell_j^\circ(\widehat{\kappa}) = 1 - \left\lVert D_j^{(0)} \right\rVert \widehat{\kappa}$ for all $j \in [m]$. Therefore, 

\begin{equation}
    \alpha^{-\ell_j^\circ(\widehat{\kappa}) + \left( \left\lVert D_j^{(0)} \right\rVert + \sqrt{ \frac{2\mathcal{L}(0)}{n}} \right) \widehat{\kappa}} < \lVert \widetilde{w}_j(\widehat{\kappa}) \rVert < \alpha^{-\ell_j^\circ(\widehat{\kappa}) + \left( \left\lVert D_j^{(0)} \right\rVert - \sqrt{ \frac{2\mathcal{L}(0)}{n}} \right) \widehat{\kappa}}. 
\end{equation}

Then, \cref{eq:basis_case_alignment_bound_general} gives that for $\widehat{\kappa}$ sufficiently small, $\alpha$ may be chosen sufficiently small that for any time $t \in [\widehat{\kappa}, \kappa]$

\begin{equation}
     \left\lVert D_j^{(0)} \right\rVert \left( 1 - \alpha^{\left\lVert D_j^{(0)} \right\rVert \widehat{\kappa} / 2} \right) < \widetilde{\overline{w}}_j^\top(t) D_j^{(0)} \leq \left\lVert D_j^{(0)} \right\rVert. \label{basis_widehat_kappa_alignment_bound}
\end{equation}

Note that by definition of $\tau$, $|h_{\widetilde{\theta}}(x_i)| < m \alpha$ for all $i \in [n]$. Putting this back into \cref{basis_norm_growth_eqn}, we obtain as an upper bound that 

\begin{align}
    \lVert \widetilde{w}_j(\kappa) \rVert &< \lVert \widetilde{w}_j(\widehat{\kappa}) \rVert \alpha^{- \left( \left\lVert D_j^{(0)} \right\rVert +  m\alpha \right)(\kappa-\widehat{\kappa}) } \\
    &< \alpha^{-\ell_j^\circ(\kappa) + \left( \left\lVert D_j^{(0)} \right\rVert -  \sqrt{\frac{2\mathcal{L}(0)}{n}} \right) \widehat{\kappa} - m \alpha(\kappa-\widehat{\kappa})} \\
    &< \alpha^{-\ell_j^\circ(\kappa) + \left( \left\lVert D_j^{(0)} \right\rVert - 2  \sqrt{\frac{2\mathcal{L}(0)}{n}} \right) \widehat{\kappa}}.
\end{align}

For the lower bound, we get 

\begin{align}
    \lVert \widetilde{w}_j(\kappa) \rVert &> \lVert \widetilde{w}_j(\widehat{\kappa}) \rVert \exp_\alpha \left( - \left( \left\lVert D_j^{(0)} \right\rVert \left( 1 - \alpha^{\left\lVert D_j^{(0)} \right\rVert \kappa' / 2} \right) -  m\alpha \right)(\kappa-\widehat{\kappa}) \right) \\
    &> \exp_\alpha \left( -\ell_j^\circ(\kappa) + \left( \left\lVert D_j^{(0)} \right\rVert +  \sqrt{\frac{2\mathcal{L}(0)}{n}} \right) \widehat{\kappa} + \left( \left\lVert D_j^{(0)} \right\rVert \alpha^{\left\lVert D_j^{(0)} \right\rVert \kappa' / 2} + m \alpha \right) (\kappa-\widehat{\kappa}) \right) \\
    &> \alpha^{-\ell_j^\circ(\kappa) + \left( \left\lVert D_j^{(0)} \right\rVert + 2  \sqrt{\frac{2\mathcal{L}(0)}{n}} \right) \widehat{\kappa}},
\end{align}

where the final inequality is valid for $\alpha$ sufficiently small. Since $\widehat{\kappa}$ is arbitrary, we conclude that given any $\xi>0$, we may select $\alpha$ sufficiently small that the deviation of the neurons' norms from those predicted by the \cref{alg:s.t.s} does not exceed $\xi$.
\end{subproof}

\begin{subproof}[Proof of \Cref{lem1.3}] We compute that for some $y_i<0$ such that $\widetilde{\overline{w}}^\top_j(0) x_i > 0 $, and for $t \in [0, \kappa]$,

\begin{align}
    \frac{1}{\log(1/\alpha)} \frac{\df}{\df t} \widetilde{\overline{w}}^\top_j x_i  &= \fD_j^\top x_i -\fD_j^\top \widetilde{\overline{w}}_j \widetilde{\overline{w}}^\top_j x_i  \\
    &=\frac{1}{n} (y_i - h_{\widetilde{\theta}}(x_i)) + \frac{1}{n} \sum_{v \, \mid \, \widetilde{w}^\top_j x_i >0} (h_{\widetilde{\theta}}(x_v)-y_v) \widetilde{w}^\top_j x_v \widetilde{\overline{w}}^\top_j x_i\\ \label{deactivation1}
    &\leq \frac{1}{n} y_i -\frac{1}{n}  \sum_{\substack{v \, \mid \, y_v<0, \\ \widetilde{\overline{w}}^\top_j x_v >0}} y_v \widetilde{\overline{w}}^\top_j x_v \widetilde{\overline{w}}^\top_j x_i + 2m\alpha. 
\end{align}

The is obtained by plugging in the bound on the network output. This bound also implies that for $\alpha$ sufficiently small, $\iind{y_v>0}(y_v-h_{\theta^t}(x_v))>0$, which allowed us to discard the sum over positively labeled data. Since neurons are aligned by time $\widehat{\kappa},$ we conclude using \cref{basis_widehat_kappa_alignment_bound} and \cref{alignment_observation} that 

\begin{equation}
    \frac{\df}{\df t} \widetilde{\overline{w}}^\top_j x_i \leq \frac{\log(1/\alpha)}{n} y_i + \sqrt{2} \alpha^{\widehat{\kappa} \left\lVert D_j^{(0)} \right\rVert / 4},
\end{equation}
where this holds for $\alpha$ sufficiently small and any $t \in [\widehat{\kappa}, \kappa]$. Therefore, by time $2 \widehat{\kappa}$ and for $\alpha$ sufficiently small, $\widetilde{\overline{w}}^\top_j x_i = 0$ for all $i \notin I_{s_j}$. 
\end{subproof}
This completes the proof of \Cref{induction_basis_case_lemma}.
\end{proof}

\label{app:autonomous}
From \Cref{induction_basis_case_lemma} and \Cref{ass:dead}, we conclude that given any $\kappa>0$, there exists $\alpha^*$ sufficiently small that $\theta_+ := (a_i,w_j)_{j\in J_+}$ satisfies 

\begin{equation}
    \dot{\theta}_+(t) \in -\partial \mathcal{L}_+(\theta_+(t)) \text{ for almost all } t \geq \kappa,
\end{equation}

where 

\begin{equation}
    \mathcal{L}_+(\theta_+) \coloneqq \frac{1}{2 n} \sum_{i \in I_{+}} (h_{\theta_+}(x_i) -y_i)^2.
\end{equation}

By the symmetry of the signs, an analogous statement is true of $\theta_- := (a_i,w_j)_{j\in J_-}$. Therefore, on the interval $[\kappa, \infty)$ neurons in the set $J_+$ evolve independently from those in the set $J_-$ according to the autonomous systems just described above. Consequently, we can treat the systems $\theta_+$ and $\theta_-$ independently from now on, which we do by assuming w.l.o.g. $s_j = 1$ for all $j \in [m]$ and $\sgn{y_i} = 1$ for all $i \in [n]$. We note that with this assumption, $A_{i,j} = \iind{\widetilde{w}_j^\top(0) x_i > 0}$.

Similar autonomous systems arguments have been used in \citet[][Appendix F.3]{boursier2025simplicity}, but are even simpler here, due to the orthogonal data assumption.

\subsection{Phase 1: Slow growth} \label{phase_1}

In this stage, we verify that up to a time arbitrarily close to the next jump time, $t_{k+1}$, all neurons in the set $N_U^{(k)}$ are still small and aligned in the directions $D_j^{(k)}$, and all neurons in the set $N_F^{(k)}$ have not changed significantly. For $\varepsilon>0$, define the time

\begin{align}
    \tau_1 := \inf \Big\{ t & \geq \kappa \,\big\vert \;  \exists \, j \in [m] \; \mathrm{s.t.} \; \widetilde{\overline{w}}_j^\top \overline{D}_j^{(k)} \leq 1 - \alpha^{\varepsilon/2} \label{phase_1_alignment_defn}  \\
    & \;\;\;\;\;\, \mathrm{or} \; \exists \, i \in S_F^{(k)} \; \mathrm{s.t.} \; | h_{\widetilde{\theta}}(x_i) - y_i | \, \geq \alpha^{\varepsilon} \label{phase_1_residual_defn}\\ 
    & \;\;\;\;\;\, \mathrm{or} \; \exists \, j \in [m] \; \mathrm{s.t.} \;\lVert \widetilde{w}_{j} \rVert > 2 \max_{j \in [m]} \left\lVert D_j^{(k)} \right\rVert \label{phase_1_lipschitz_defn} \\ 
    & \;\;\;\;\;\, \mathrm{or} \; \exists \, j \in N_U^{(k)} \; \mathrm{s.t.} \;\lVert \widetilde{w}_{j} \rVert \geq \alpha^{\varepsilon} \label{phase_1_growth_defn} \Big\} \bigwedge t_{k+1}.
\end{align}

The condition $\lVert \widetilde{w}_{j} \rVert \leq 2 \lVert D_j^{(k)} \rVert$ is simply used to ensure that our analysis is restricted to a domain on which $\widetilde{\theta}$ is Lipschitz continuous, as this will allow us to apply Rademacher's theorem to conclude certain functions thereof are differentiable almost everywhere. Our analysis will actually show that this condition does not break, meaning that for $\alpha$ sufficiently small, the gradient flow trajectory on $[\kappa, \tau_1]$ is restricted to the compact set $\left\{ ( \widetilde{w}_j )_{j \in [m]} \; \big\vert \; \lVert \widetilde{w}_j \rVert \leq 2 \left\lVert D_j^{(k)} \right\rVert \text{ for all } j \in [m]\right\}$.

In the result below and throughout the account of subsequent phases, we use the notation $j^\star$ in place of $j_\star^{(k)}$ for the neuron which grows to constant order when passing from saddle $k$ to saddle $k+1$ to improve readability. Assuming \Cref{inductive_hypothesis}, we will prove in the lemma below that  on the interval $[\kappa, \tau_1]$, the alignment of all neurons remains stable and the loss on fitted data does not increase significantly. We will also show that provided $\varepsilon$ is selected sufficiently small we have $\lVert \widetilde{w}_{j^\star}(\tau_1) \rVert = \alpha^{\varepsilon}$, meaning neuron $j^\star$ is the largest of all of the unfitted neurons at time $\tau_1$. We will also prove that as $\varepsilon$ decreases, $\tau_1$ approaches $t_{k+1}$. 

\begin{lem}\label{lemma_phase_1} 
    Suppose that \cref{inductive_hypothesis} holds. Then for all $\kappa \in (t_k, t_{k+1})$, there exists $\varepsilon^*$ such that for all $\varepsilon \in (0, \varepsilon^*)$ there exists $\alpha^*>0$ such that for all $\alpha \in (0, \alpha^*)$ the following are true:
    \begin{enumerate}
        \item $\forall i \in S_F^{(k)}, \forall t \in [\kappa, \tau_1],\; | h_{\widetilde{\theta}}(x_i) - y_i | \, < \alpha^{\varepsilon}.$ \label{lemma_phase_1_residual}
        \item $ \forall j \in [m], \forall t \in [\kappa, \tau_1], \; \widetilde{\overline{w}}_j^\top \overline{D}_j^{(k)} > 1 - \alpha^{\varepsilon/2} .$ \label{lemma_phase_1_alignment}
        \item $ \forall j \in N_F^{(k)}, \forall t \in [\kappa, \tau_1], \; \sqrt{n\left\lVert D_j^{(k)} \right\rVert} - \alpha^{\varepsilon/4} < \lVert \widetilde{w}_j \rVert < \sqrt{n\left\lVert D_j^{(k)} \right\rVert} + \alpha^{\varepsilon/4} .$ \label{lemma_phase_1_fitted_norms} 
        \item $  t_{k+1}  - \frac{3 \varepsilon}{2 \left\lVert D_{j^\star}^{(k)} \right\rVert}< \tau_1 < t_{k+1} - \frac{\varepsilon}{2 \left\lVert D_{j^\star}^{(k)} \right\rVert} .$ \label{lemma_phase_1_duration} 
        \item $ \lVert \widetilde{w}_{j^\star}(\tau_1) \rVert = \alpha^{\varepsilon}.$ \label{lemma_phase_1_j*_biggest} 
        \item $\forall j \in N_U^{(k)}, \, \forall t \in [\kappa, \tau_1], \; \left\vert \ell_j^\circ(t) +  \log_{\alpha} ( \lVert \widetilde{w}_j \rVert )\right\vert < \frac{\varepsilon}{2}$. \label{lemma_phase_1_unfitted_norms} 
    \end{enumerate}
\end{lem}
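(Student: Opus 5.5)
The plan is a bootstrap (continuity) argument on $[\kappa,\tau_1]$. The definition of $\tau_1$ gives a priori bounds up to $\tau_1$, and these can be improved strictly, so that only the growth condition \cref{phase_1_growth_defn} or the cutoff $t_{k+1}$ can end the interval. By \cref{inductive_hypothesis} at time $\kappa$, with $\alpha^\delta \ll \alpha^{\varepsilon}$ once $\varepsilon<\delta/4$, all four defining conditions of $\tau_1$ hold strictly at $\kappa$, so $\tau_1>\kappa$. Throughout, we work in the autonomous positive subsystem (all $s_j=1$, $y_i>0$), as justified after \Cref{induction_basis_case_lemma}. On $[\kappa,\tau_1]$ the unfitted neurons have norm at most $\alpha^\varepsilon$, so they contribute at most $m\alpha^{2\varepsilon}$ to $h_{\widetilde\theta}$. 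The fitted neurons are active only on $S_{j}^{(k)}\subset S_F^{(k)}$; any other components are zero by \Cref{ass:dead}, once the alignment of \cref{IH_alignment} has forced them to $0$ at the jump where they were fitted, which is part of the inductive bookkeeping. Hence for $i\in S_U^{(k)}$ we have $h_{\widetilde\theta}(x_i)=O(m\alpha^{2\varepsilon})$, and $\fD_j = D_j^{(k)} + (\text{terms from } S_F^{(k)}) + O(\alpha^{2\varepsilon})$.

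\emph{Residual and fitted norms (items 1, 3).} We control the loss on fitted data, $E^{(k)}$, through an energy estimate. For $i\in S_F^{(k)}$ the residual $r_i=h_{\widetilde\theta}(x_i)-y_i$ obeys
\begin{equation}
\tfrac{1}{\log(1/\alpha)}\dot r_i = -\tfrac{1}{n}\sum_{j\,\mid\, \widetilde w_j^\top x_i>0} \big(\|\widetilde w_j\|^2 + (\widetilde w_j^\top x_i)^2\big)\, r_i + (\text{forcing from unfitted neurons}).
\end{equation}
The fitted neurons have norm bounded below, so the damping term is at least $c\log(1/\alpha)\,r_i$. The forcing has size $O(\log(1/\alpha)\alpha^{2\varepsilon})$, because the unfitted neurons active on $x_i$ have norm at most $\alpha^\varepsilon$. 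A Grönwall/comparison argument then shows that $|r_i|$ stays below $\max(\alpha^\delta, C\alpha^{2\varepsilon})<\alpha^\varepsilon$. This strictly improves item 1, since an exponentially contracting quantity cannot escape. Item 3 follows from item 1 together with the alignment: each fitted neuron satisfies $\|\widetilde w_j\|^2\,\widetilde{\overline w}_j^\top x_i \approx y_i$ on its own data. Summing in quadrature gives $\|\widetilde w_j\|^2 = n\|D_j^{(k)}\| + O(\alpha^{\varepsilon/2})$. This also keeps condition \cref{phase_1_lipschitz_defn} strict.

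\emph{Alignment (item 2).} For unfitted $j$, write $f=\widetilde{\overline w}_j^\top \overline D_j^{(k)}$. As in the proof of \Cref{lem1.1}, we get
\begin{equation}
\tfrac{1}{\log(1/\alpha)}\dot f \ge \|D_j^{(k)}\|(1-f^2) - C\alpha^{\varepsilon},
\end{equation}
where the error collects the fitted residuals and the network output on unfitted data. This is a logistic-type inequality with a stable equilibrium at $f\approx 1-O(\alpha^\varepsilon)$. Since $f(\kappa)>1-\alpha^\delta$, $f$ never drops below $1-\alpha^{\varepsilon/2}$ for $\alpha$ small. Components of $\widetilde w_j$ along $x_i$ with $i\in S_F^{(k)}$ receive forcing only from $r_i$, which is small, so they do not disturb this. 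For fitted $j$, we write $\fD_j$ entirely in terms of residuals on $S_j^{(k)}$. The tangential velocity is then $O(\log(1/\alpha)\alpha^{\varepsilon})$, but this would integrate to something large. I would instead use the balanced relation $\dot{\widetilde w}_j = \log(1/\alpha)\widetilde a_j \fD_j$ together with the contraction of the residuals, which makes $\int\log(1/\alpha)|r_i|\,dt$ bounded by $O(\alpha^{\delta})$ plus a forcing contribution. \textbf{I expect this step to be the main obstacle.} The difficulty is making the residual integral small despite the $\log(1/\alpha)$ factor, and this must be done jointly with item 1 in the bootstrap.

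\emph{Norm exponents and timing (items 4--6).} Given item 2, we have $\frac{d}{dt}\widetilde\ell_j = \|D_j^{(k)}\|\,(1+O(\alpha^{\varepsilon/2}))$ for $j\in N_U^{(k)}$. Integrating from $\kappa$ with the error $\xi$ from \cref{IH_unfitted_norms}, chosen as $\xi<\varepsilon/4$, gives item 6 over a time interval of bounded length. The first unfitted neuron to reach exponent $-\varepsilon$ is $j^\star$: by the unique argmax in \cref{ass:A.j.star}, the remaining neurons have a gap $g>0$ in their hitting times of $0$, and choosing $\varepsilon^*\ll g\min_j\|D_j^{(k)}\|$ ensures that $j^\star$ hits first, which is item 5. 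The other conditions have been shown to be strict, so $\tau_1$ is this hitting time. It lies within $(\varepsilon\pm\varepsilon/2)/\|D_{j^\star}^{(k)}\|$ of $t_{k+1}$, which is item 4.
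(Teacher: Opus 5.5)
Your overall structure matches the paper. You run a continuity argument on $[\kappa,\tau_1]$, use the logistic comparison for the alignment of unfitted neurons, and obtain items 4--6 by integrating $\frac{d}{dt}\widetilde\ell_j$ from $\kappa$ and using the gap from the unique argmax. As written, though, the proposal leaves item 2 for fitted neurons open because of a miscalculation, and that step is in fact immediate.

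\emph{Fitted alignment (item 2) and item 3.} Take $j\in N_F^{(k)}$. Every datum on which $j$ is active lies in $S_F^{(k)}$. Hence $\lVert\fD_j\rVert\le\lVert E^{(k)}\rVert<\alpha^{\varepsilon}/\sqrt{n}$, and so $\bigl|\frac{d}{dt}\widetilde{\overline{w}}_j^\top D_j^{(k)}\bigr|\le 2\log(1/\alpha)\lVert D_j^{(k)}\rVert\alpha^{\varepsilon}/\sqrt{n}$. In accelerated time the interval $[\kappa,\tau_1]\subset[\kappa,t_{k+1}]$ has bounded length. The total drift is therefore $O(\log(1/\alpha)\alpha^{\varepsilon})=o(\alpha^{\varepsilon/2})$, since $\alpha^{\varepsilon/2}\log(1/\alpha)\to0$. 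Nothing large appears. Combined with the initial value $1-\alpha^{\delta}$, this is exactly the paper's proof, so your contraction-based detour is unnecessary.

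The same bound gives item 3 without any alignment. Since $\bigl|\frac{d}{dt}\log\lVert\widetilde w_j\rVert\bigr|\le\log(1/\alpha)\lVert E^{(k)}\rVert$, Grönwall's inequality from item 2 of the inductive hypothesis at $\kappa$ changes $\lVert\widetilde w_j\rVert$ only by a factor $1\pm O(\log(1/\alpha)\alpha^{\varepsilon})$.

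Your static ``summing in quadrature'' route is weaker. It needs the fitted alignment you left open, plus control of every other fitted neuron active on $S_j^{(k)}$. At the precision of item 2, those neurons contribute $O(\alpha^{\varepsilon/4})$ per coordinate. That yields only $\lVert\widetilde w_j\rVert^2=n\lVert D_j^{(k)}\rVert+O(\alpha^{\varepsilon/4})$ with a constant. This is neither your claimed $O(\alpha^{\varepsilon/2})$ nor the constant-free bound stated in item 3.

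\emph{The residual equation (item 1).} Your displayed equation is incorrect. Differentiating $r_i$ also produces the cross terms $-\frac{1}{n}\sum_j(\widetilde w_j^\top x_i)_+\sum_{v\neq i}(\widetilde w_j^\top x_v)_+r_v$. For fitted $j$ these have the same order as your damping term, and they are not forcing from unfitted neurons. So no per-coordinate contraction follows.

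They are harmless only in the energy $\lVert E^{(k)}\rVert^2$. There, because a fitted neuron is active only on $S_F^{(k)}$, its contribution is $-\frac{\lVert\widetilde w_j\rVert^2}{n^3}\bigl(\sum_v A_{v,j}r_v\,\widetilde{\overline{w}}_j^\top x_v\bigr)^2\le0$.

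Damping is not needed at all. The paper discards these nonpositive terms and integrates the unfitted-neuron forcing, which is $O(\log(1/\alpha)\alpha^{2\varepsilon})$ over a bounded interval. This also avoids your circular reliance on a lower bound for the fitted norms, which is not one of the stopping conditions defining $\tau_1$.

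\emph{Off-support components.} The off-support components of fitted neurons are not zero by \Cref{ass:dead}. Alignment only makes them $O(\alpha^{\delta/2})$. A fitted neuron may therefore remain active on all of $S_j^{(0)}$, which can be strictly larger than $S_j^{(k)}$. What you actually use, and what is true, is $S_j^{(0)}\subseteq S_F^{(k)}$.
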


\begin{proof}
\begin{subproof}[Proof of \Cref{lemma_phase_1_residual} \Cref{lemma_phase_1}] Let $\delta$ be such that the conditions of \cref{inductive_hypothesis} hold. In order to check that $| h_{\widetilde{\theta}}(x_i) - y_i | < \alpha^{\varepsilon },$ we will bound the growth in norm of the quantity 

\label{Ek_defn}
\begin{equation} 
    E^{(k)} = - \frac{1}{n} \sum_{i \in S_F^{(k)}} \left( h_{\widetilde{\theta}}(x_i) - y_i \right) x_i . 
\end{equation}

To begin this section, it will be useful to note now that

\begin{align}
    \lVert \fD_j \rVert &\leq \frac{1}{n} \sqrt{\left( \sum_{i \in [n]} ( h_{\widetilde{\theta}}(x_i) - y_i)^2 \right)} \\
    &\leq \sqrt{\frac{2 \mathcal{L}(0)}{n}} \label{fd_bound},
\end{align}

where we have applied the Cauchy-Schwarz inequality and the monotonicity of the gradient flow. We will use this inequality repeatedly. We compute that 

\begin{align}
    \frac{1}{2 \log(1/\alpha) } \frac{\df}{\df t} \left\lVert E^{(k)} \right\rVert^2 & =   \frac{1}{\log(1/\alpha) } E^{(k)\ \top} \ \frac{\df}{\df t} E^{(k)}\\
    & = - \frac{1}{n} \sum_{i \in S_F^{(k)}} \sum_{j \in [m]} A_{i, j} \lVert \widetilde{w}_j \rVert^2 \left( \left( x_i^\top E^{(k)} \right)^2 + \fD_j^\top \widetilde{\overline{w}}_j \widetilde{\overline{w}}_j^\top x_i x_i^\top E^{(k)} \right) \label{derivative_of_residual_equation} \\
    &\leq - \frac{1}{n} \sum_{i \in S_F^{(k)}} \sum_{j \in [m]} A_{i, j} \lVert \widetilde{w}_j \rVert^2 \fD_j^\top \widetilde{\overline{w}}_j \widetilde{\overline{w}}_j^\top x_i x_i^\top E^{(k)} \\
    &\leq - \frac{1}{n} \sum_{i \in S_F^{(k)}} \sum_{j \in N_U^{(k)}} A_{i, j} \lVert \widetilde{w}_j \rVert^2 \fD_j^\top \widetilde{\overline{w}}_j \widetilde{\overline{w}}_j^\top x_i x_i^\top E^{(k)}, \label{derivative_of_residual_basic_bound} 
\end{align}

where the final inequality follows from the fact that for any $j \in N_F^{(k)}$, the sum may be written as

\begin{align}
    - \frac{\lVert \widetilde{w}_j \rVert^2 }{n^3} \left( \sum_{v \in [n]} A_{v, j}(h_{\widetilde{\theta}}(x_v)-y_v) \widetilde{\overline{w}}_j^\top x_v \right) \left(  \sum_{i \in S_F^{(k)}} A_{i, j} (h_{\widetilde{\theta}}(x_i)-y_i) \widetilde{\overline{w}}_j^\top x_i \right) \leq 0. \label{residual_sum_over_fitted_negative}
\end{align}

The above inequality follows from noting that, since we are considering some $j \in N_F^{(k)},$ $\{ v \in [n] \mid A_{v, j} = 1\} = \{ i \in S_F^{(k)} \mid A_{i, j} = 1\}$ (such neurons are by definition active only on fitted data) so this is a square. We have therefore only to obtain a bound on the contribution of those $j \in N_U^{(k)}$. Note that for any $j \in [m],$ the fact that $\widetilde{\overline{w}}_j$ is a unit vector combined with \eqref{fd_bound} implies 

\begin{align}
    | \widetilde{\overline{w}}_j^\top \fD_j | &\leq \lVert \fD_j \rVert \leq \sqrt{\frac{2\mathcal{L}(0)}{n}}.
\end{align}

Also, we have for any $j \in N_U^{(k)}$ by \eqref{phase_1_alignment_defn} and \cref{alignment_observation} that $\widetilde{\overline{w}}_j^\top x_i < \sqrt{2} \alpha^{\varepsilon/4}$ on $[\kappa, \tau_1]$. We combine these observations with \eqref{phase_1_growth_defn} to obtain

\begin{align}
    \frac{1}{\log(1/\alpha) } E^{(k)\ \top} \ \frac{\df}{\df t} E^{(k)} &\leq \frac{1}{n} \sum_{i \in S_F^{(k)}} m \sqrt{\frac{4 \mathcal{L}(0)}{n}} \alpha^{9 \varepsilon / 4} \left\vert x_i^\top E^{(k)} \right\vert \\
    &\leq \frac{2 m}{n} \sqrt{ \mathcal{L}(0)} \alpha^{9 \varepsilon / 4} \left\lVert E^{(k)} \right\rVert,
\end{align}

where we have used the Cauchy-Schwarz inequality in the final line. On a compact domain, $\partial \mathcal{L}$ is bounded. Therefore, the function $t \mapsto \widetilde{\theta}(t)$ is Lipschitz continuous on the domain specified in the definition of $\tau_1$ for any fixed $\alpha$. Since the composition of Lipschitz continuous functions is also Lipschitz continuous, $\left\lVert E^{(k)} \right\rVert$ is Lipschitz continuous on $[\kappa, \tau_1]$. Therefore, it is differentiable almost everywhere on $[\kappa, \tau_1]$, and furthermore has bounded subdifferential at every point in $[\kappa, \tau_1]$. In subsequent arguments when we appeal briefly to Rademacher's theorem for differentiability, it is implicit that this argument has been used. This affords us the bound 

\begin{align}
    \frac{\df}{\df t} \left\lVert E^{(k)} \right\rVert &\leq \frac{2m}{n} \sqrt{\mathcal{L}(0)} \log(1/\alpha) \alpha^{9 \varepsilon / 4} \\
    &< \alpha^{2 \varepsilon } ,
\end{align}
for $\alpha$ sufficiently small. Since $| h_{\widetilde{\theta}^{\kappa}}(x_i) - y_i | \leq \alpha^{\delta}$ by \Cref{IH_residual} \Cref{inductive_hypothesis}, we have $\left\lVert E^{(k)}(\kappa) \right\rVert < \alpha^\delta/\sqrt{n}$. Integrating and using this initial condition, we have 

\begin{align}
    \left\lVert E^{(k)} \right\rVert &< \left\lVert E^{(k)}(\kappa) \right\rVert + (t-\kappa) \alpha^{2 \varepsilon} \\
    &< \frac{\alpha^{\delta}}{\sqrt{n}} + (t_{k+1}-\kappa) \alpha^{2 \varepsilon} \\
    &< \alpha^{ 3 \varepsilon / 2},
\end{align}

for $\varepsilon<\frac{\delta}{2}$ and $\alpha\leq \frac{1}{(1+t_{k+1}-t_k)^{2/\varepsilon}}$ . This implies that

\begin{align}
    \max_{i \in S_F^{(k)}} | h_{\widetilde{\theta}}(x_i) - y_i | &< n \alpha^{3 \varepsilon / 2} \\
    &< \alpha^{\varepsilon},
\end{align} 

for all $t \in [\kappa, \tau_1]$, concluding the proof of this item.
\end{subproof}

\begin{subproof}[Proof of \Cref{lemma_phase_1_alignment} \Cref{lemma_phase_1}]

We verify now the stability of alignment of the neurons. We split this into the case $j \in N_F^{(k)}$ and the case $j \in N_U^{(k)}$. For the former, first note that for $j \in N_F^{(k)}$, $\{i \in S_U^{(k)} \mid A_{i, j} = 1\} = \emptyset$ by definition. Consequently,

\begin{align}
    \big\lVert \fD_j \big\rVert < \frac{1}{\sqrt{n}} \alpha^\varepsilon .
\end{align}

Inserting this into the differential equation governing tangential dynamics, we get

\begin{align}
    \frac{1}{\log(1/\alpha)} \frac{\df}{\df t} \widetilde{\overline{w}}_j^\top D_j^{(k)} &= \fD_j^\top D_j^{(k)} - \fD_j^\top \widetilde{\overline{w}}_j \widetilde{\overline{w}}_j^\top D_j^{(k)} \\
    &\geq -2 \left\lVert D_j^{(k)} \right\rVert \big\lVert \fD_j \big\rVert \\
    &> - \frac{2 \left\lVert D_j^{(k)} \right\rVert}{\sqrt{n}} \alpha^{\varepsilon} \label{phase_1_alignment_derivative_bound_fitted},
\end{align} 

We simply integrate, using \Cref{IH_alignment} \Cref{inductive_hypothesis} for the alignment value at time $\kappa$, to obtain 

\begin{align}
    \widetilde{\overline{w}}_j^\top D_j^{(k)} &>  \left\lVert D_j^{(k)} \right\rVert \left(1-\alpha^\delta \right) - \frac{2 \left\lVert D_j^{(k)} \right\rVert}{\sqrt{n}} (t_{k+1}-\kappa) \log(1/\alpha) \alpha^{\varepsilon} \\
    &>  \left\lVert D_j^{(k)} \right\rVert \left(1-\alpha^{\varepsilon/2} \right),
\end{align} 

where the final inequality is again valid for $\varepsilon$ and $\alpha$ sufficiently small. For the case $j \in N_U^{(k)}$, we first use \cref{phase_1_growth_defn} to bound the network output on data $i \in S_U^{(k)}$:

\begin{align}
    h_{\widetilde{\theta}}(x_i) &= \sum_{j \in [m]} \lVert \widetilde{w}_j \rVert^2 \left( \widetilde{\overline{w}}_j^\top x_i \right)_+ \\
    &< m \alpha^{2 \varepsilon}. \label{phase_1_network_bound}
\end{align}

Using this, we estimate the alignment of an arbitrary neuron $j \in N_U^{(k)}$ with direction $D_j^{(k)}$ as follows: 

\begin{align}
    \frac{1}{\log(1/\alpha)}\frac{\df}{\df t}  \widetilde{\overline{w}}^\top_j D_j^{(k)} &= \left\lVert D_j^{(k)} \right\rVert^2 - \left( \widetilde{\overline{w}}^\top_j D_j^{(k)} \right)^2\\
    &\qquad - \frac{1}{n} \sum_{ i \in S_j^{(k)}} h_{\widetilde{\theta}}(x_i) \left( x_i^\top D_j^{(k)} - x^\top_i \widetilde{\overline{w}}_j \widetilde{\overline{w}}^\top_{j} D_j^{(k)} \right) \\ 
    &\qquad - \frac{1}{n} \sum_{i \in S_F^{(k)}} A_{i,j} ( h_{\widetilde{\theta}}(x_i) - y_i) \left( x_i^\top D_j^{(k)} - x^\top_i \widetilde{\overline{w}}_j \widetilde{\overline{w}}^\top_{j} D_j^{(k)} \right)  \\
    &> \left\lVert D_j^{(k)} \right\rVert^2 - \left( \widetilde{\overline{w}}^\top_j D_j^{(k)} \right)^2 - 2 \left\lVert D_j^{(k)} \right\rVert m \alpha^{9 \varepsilon / 4} - 2 \left\lVert D_j^{(k)} \right\rVert \alpha^{5 \varepsilon / 4} \\
    &> \left\lVert D_j^{(k)} \right\rVert^2 - \left( \widetilde{\overline{w}}^\top_j D_j^{(k)} \right)^2 - \alpha^{6 \varepsilon / 5} \label{phase_1_alignment_args}.
\end{align}

The first inequality is an application of \Cref{alignment_corollary} to both sums (given \eqref{phase_1_alignment_defn}) as well as using \eqref{phase_1_network_bound} for the sum over $i \in S_j^{(k)}$ and \eqref{phase_1_residual_defn} for the sum over $i \in S_F^{(k)}$. The second inequality follows for $\alpha$ sufficiently small. 
The above implies that the solutions of the differential equation $f'(t)=b(a^2-f(t)^2)$, where $a^2 = \left\lVert D_j^{(k)} \right\rVert^2 - \alpha^{6 \varepsilon / 5}$ and $b = \log(1/\alpha)$, furnish us with a lower bound on the alignment. Solutions of this differential equation approach a limiting value of $a$ monotonically. Since $a > \left\lVert D_j^{(k)} \right\rVert - \alpha^{3 \varepsilon / 5}$ from the inequality $\sqrt{1-x^2}>1-x$, and \Cref{IH_alignment} \Cref{inductive_hypothesis} allows us to lower bound the alignment value at time $\kappa$, we have that for all $t \in [\kappa, \tau_1]$,

\begin{equation}
    \widetilde{\overline{w}}^\top_j D_j^{(k)} > \left\lVert D_j^{(k)} \right\rVert \min \left( 1 - \alpha^\delta, 1 - \alpha^{3 \varepsilon / 5} \right) > \left\lVert D_j^{(k)} \right\rVert \left( 1 - \alpha^{\varepsilon/2} \right),
\end{equation}

for $\varepsilon$ and $\alpha$ sufficiently small.
\end{subproof}

\begin{subproof}[Proof of \Cref{lemma_phase_1_fitted_norms} \Cref{lemma_phase_1}]

We check that all the neurons in the set $N_F^{(k)}$ cannot change in norm significantly in the first phase. Recall that for $j \in N_F^{(k)}$, $\{i \in S_U^{(k)} \mid A_{i,j} = 1\} = \emptyset$. Therefore, the differential equation governing radial dynamics admits the bound 

\begin{align}
    \frac{1}{\log(1/\alpha)} \frac{\df}{\df t} \lVert \widetilde{w}_j \rVert &= \fD_j^\top \widetilde{w}_j \\
    &\geq - \left\lVert E^{(k)} \right\rVert \lVert \widetilde{w}_j \rVert. 
\end{align}

Applying Gr{\"o}nwall's inequality combined with the initial conditions from \Cref{IH_fitted_norms} \Cref{inductive_hypothesis}, we get that for any $t \in [\kappa, \tau_1]$ and $j \in N_U^{(k)}$,

\begin{align}
    \lVert \widetilde{w}_j \rVert &\geq \lVert \widetilde{w}_j(\kappa)\rVert \exp \left( -\log(1/\alpha) \int_{\kappa}^{\tau_1} \left\lVert E^{(k)} \right\rVert ds \right) \\
    &>\lVert \widetilde{w}_j(\kappa)\rVert \exp \left( -\log(1/\alpha) \frac{\alpha^\varepsilon}{\sqrt{n}} (t_{k+1} - \kappa) \right) \\
    &>\lVert \widetilde{w}_j(\kappa)\rVert \exp \left( - \alpha^{\varepsilon/2} \right) \\
    &\geq \lVert \widetilde{w}_j(\kappa)\rVert \left( 1 - \alpha^{\varepsilon/2} \right) \\
    &> \sqrt{ n \left\lVert D_j^{(k)} \right\rVert - \alpha^{\delta}} \left( 1 - \alpha^{\varepsilon/2} \right) \\
    &> \sqrt{ n \left\lVert D_j^{(k)} \right\rVert} -\alpha^{\varepsilon/4}. \label{phase_1_initial_lower_bound_fitted_norms}
\end{align}

The above inequalities are valid for $\varepsilon$ and $\alpha$ sufficiently small. An analogous argument yields the claimed upper bound 

\begin{equation}
    \lVert \widetilde{w}_j(t)\rVert < \sqrt{ n \left\lVert D_j^{(k)} \right\rVert} + \alpha^{\varepsilon/4}.
\end{equation}

Note that this implies that for $\alpha$ sufficiently small, for all $j \in N_F^{(k)}$, $\lVert \widetilde{w}_j(t)\rVert \leq 2 \left\lVert D_j^{(k)} \right\rVert$. Combining this with \eqref{phase_1_growth_defn}, we see that \eqref{phase_1_lipschitz_defn} does not break at time $\tau_1$.
\end{subproof}

\begin{subproof}[Proof of \Cref{lemma_phase_1_unfitted_norms} \Cref{lemma_phase_1}] To estimate the growth of any neuron $j \in N_U^{(k)}$, we compute that 

\begin{align}
    \frac{1}{\log(1/\alpha)} \frac{\df}{\df t} \lVert \widetilde{w}_j \rVert &= - \frac{\lVert w_j \rVert}{n} \sum_{i \in [n]} A_{i, j} (h_{\widetilde{\theta}}(x_i)-y_i) \widetilde{\overline{w}}_j^\top x_i \\
    &> \left( \left\lVert D_j^{(k)} \right\rVert - \alpha^{\varepsilon / 2} - m \alpha^{2 \varepsilon}\right) \lVert \widetilde{w}_j \rVert - \frac{\lVert w_j \rVert}{n} \sum_{i \notin S_{j}^{(k)}} A_{i,j} (h_{\widetilde{\theta}}(x_i)-y_i) \widetilde{\overline{w}}_j^\top x_i  \\
    &> \left( \left\lVert D_j^{(k)} \right\rVert - \alpha^{\varepsilon / 2} - m \alpha^{2 \varepsilon} \right) \lVert \widetilde{w}_j \rVert - \sqrt{2} \alpha^{5\varepsilon/4} \lVert \widetilde{w}_j \rVert. \label{phase_1_initial_norm_lower_bound}
\end{align}

The first inequality is from the assumption on alignment, \eqref{phase_1_alignment_defn}, combined with the assumed bound on the norms of the unfitted neurons, \eqref{phase_1_growth_defn}. The second is the combination of this assumed alignment with \eqref{alignment_observation} and \eqref{phase_1_residual_defn}. The above implies that for $\alpha$ sufficiently small, 

\[
\frac{\df}{\df t} \lVert \widetilde{w}_j \rVert > \log(1/\alpha) \left( \left\lVert D_j^{(k)} \right\rVert - 2 \alpha^{\varepsilon/2} \right) \lVert \widetilde{w}_j \rVert.
\]

Gr{\"o}nwall's inequality then gives that for $t \in [\kappa, \tau_1]$,

\begin{align}
    \lVert \widetilde{w}_j \rVert &> \lVert \widetilde{w}_j(\tau_1) \rVert \exp \left( \log(1/\alpha) \left( \left\lVert D_j^{(k)} \right\rVert - 2 \alpha^{\varepsilon/2} \right) (t-\kappa) \right) \\
    &= \lVert \widetilde{w}_j(\tau_1) \rVert \alpha^{ - \left\lVert D_j^{(k)} \right\rVert (t - \kappa) + 2\alpha^{\varepsilon/2} (t-\kappa)} \\
    &\overset{(1)}{>} \alpha^{-\ell_j^\circ(\kappa) - \left\lVert D_j^{(k)} \right\rVert (t - \kappa) + \varepsilon/3 + 2\alpha^{\varepsilon/2} (t_{k+1}-\kappa)} \\
    &\overset{(2)}{>} \alpha^{-\ell_j^\circ(\kappa) - \left\lVert D_j^{(k)} \right\rVert (t - \kappa) + \varepsilon/2} \\
    &=  \alpha^{-\ell_j^\circ(t) + \varepsilon/2} \label{phase_1_arbitrary_lower_bound}.
\end{align}

In the above, (1) is from an application of \Cref{IH_unfitted_norms} \Cref{inductive_hypothesis} with the choice $\xi = \varepsilon/3$ and (2) is valid for $\alpha$ sufficiently small. For the upper bound, we compute in a similar manner that for any $j \in N_U^{(k)}$,

\begin{align}
    \frac{1}{\log(1/\alpha)} \frac{\df}{\df t} \lVert \widetilde{w}_j \rVert &= - \frac{\lVert w_j \rVert}{n} \sum_{i \in [n]} A_{i, j} (h_{\widetilde{\theta}}(x_i)-y_i) \widetilde{\overline{w}}_j^\top x_i \\
    &< \left( \left\lVert D_j^{(k)} \right\rVert + \sqrt{2} \alpha^{5\varepsilon/4} \right) \lVert \widetilde{w}_j \rVert.
\end{align}

Again by Gr{\"o}nwall's inequality combined with the initial conditions from \cref{inductive_hypothesis} with $\xi=\varepsilon/3$, we have the bound

\begin{align}
    \lVert \widetilde{w}_j \rVert &< \alpha^{-\ell_j^\circ(\kappa) - \left\lVert D_j^{(k)} \right\rVert (t-\kappa) - \varepsilon/3 - \sqrt{2} \alpha^{5 \varepsilon / 4} (t_{k+1}-\kappa)} \\
    &< \alpha^{ - \ell_j^\circ(t) - \varepsilon/2 }  \label{phase_1_fitting_norm_upper_bound_tau1} . 
\end{align} \qedhere

\end{subproof}

\begin{subproof}[Proof of \Cref{lemma_phase_1_duration,lemma_phase_1_j*_biggest} \Cref{lemma_phase_1}]

Note that for $t, t' \in [t_k, t_{k+1}]$, $-\ell_j^\circ(t) - \left\lVert D_j^{(k)} \right\rVert (t'-t) = - \ell_j^\circ(t')$. Using this fact and that $\lVert \widetilde{w}_j \rVert < \varepsilon$ on $[\kappa, \tau_1)$, we obtain that for all $j \in N_U^{(k)}$, 

\begin{align}
    \varepsilon &< -\ell_j^\circ(\tau_1) + \frac{\varepsilon}{2} \\
    &= -\ell_j^\circ(t_{k+1}) - \left\lVert D_j^{(k)} \right\rVert (\tau_1 - t_{k+1}) + \frac{\varepsilon}{2} .
\end{align}

We have $-\ell_{j^\star}(t_{k+1})=0$, and hence the upper bound $\tau_1$ of

\begin{align}
    \tau_1 &< t_{k+1} - \frac{\varepsilon}{2 \left\lVert D_{j^\star}^{(k)} \right\rVert}. \label{phase_1_min_dist_tk+1}
\end{align}

To verify that neuron $j^\star$ is the largest at time $\tau_1$ note the upper bound on the neurons' norms provided by \eqref{phase_1_fitting_norm_upper_bound_tau1} is monotone increasing in $t$. From this observation followed by inserting the upper bound on $\tau_1$ derived in \cref{phase_1_min_dist_tk+1}, we get

\begin{align}
    \lVert \widetilde{w}_j(\tau_1) \rVert &< \exp_\alpha \left( - \ell_j^\circ(\tau_1) - \frac{\varepsilon}{2} \right) \label{phase_1_unfitted_norm_upper_bound_tau1_simple} \\
    &<\exp_\alpha \left( -\ell_j^\circ(t_{k+1}) + \frac{1}{2} \left( \frac{\left\lVert D_j^{(k)} \right\rVert}{\left\lVert D_{j^\star}^{(k)} \right\rVert} - 1 \right) \varepsilon \right). \label{phase_1_unfitted_norm_upper_bound_tau1}
\end{align}

Note that $-\ell_{j}^\circ(t_{k+1})>0$ for $j \in N_U^{(k)} \setminus \{ j^\star \}$. This implies that for such $j$ the exists $\alpha$ sufficiently small that 

\begin{equation}
    -\ell_j^\circ(t_{k+1}) + \frac{1}{2} \left( \frac{\left\lVert D_j^{(k)} \right\rVert}{\left\lVert D_{j^\star}^{(k)} \right\rVert} - 1 \right) \varepsilon > \varepsilon.
\end{equation}

Hence, $\lVert \widetilde{w}_j(\tau_1) \rVert < \alpha^{\varepsilon}$ for $j \in N_U^{(k)} \setminus \{ j^\star \}$. This proves that no neuron other than $j^\star$ could have $\lVert \widetilde{w}_j(\tau_1) \rVert = \alpha^\varepsilon$ for $\varepsilon$ sufficiently small, and since no other condition breaks we have $\lVert \widetilde{w}_{j^\star}(\tau_1) \rVert = \alpha^{\varepsilon}$. We obtain a lower bound on $\tau_1$ by considering that 

\begin{equation}
     \varepsilon > -\left\lVert D_{j^\star}^{(k)} \right\rVert (\tau_1-t_{k+1}) - \frac{\varepsilon}{2} 
\end{equation}

imposes the condition 

\begin{align}
    \tau_1 &> t_{k+1} - \frac{3 \varepsilon}{2 \left\lVert D_{j^\star}^{(k)} \right\rVert}. \label{phase_1_max_dist_tk+1}
\end{align}

Note that by selecting $\varepsilon$ sufficiently small depending on $\kappa$, we can  indeed ensure that $\tau_1>\kappa$.
\end{subproof}

This completes the proof of \Cref{lemma_phase_1}.
\end{proof}

\subsection{Phase 2a: Growth to constant order} \label{phase_2a}

Next, we deal with the rapid growth of neuron $j^\star$, the account of which is split into two parts. To begin, it will be important to obtain a uniform bound on the norms of neurons in the set $j \in N_U^{(k)} \setminus \{ j^\star \}$, in much the same way that in the proof of \Cref{lemma_phase_1} we had by definition of $\tau_1$ that $\lVert \widetilde{w}_j \rVert < \alpha^{\varepsilon}$ on $[\kappa, \tau_1]$ or in the proof of \Cref{induction_basis_case_lemma} we had $\lVert \widetilde{w}_j \rVert < \alpha^{1/2}$. The idea of the proof is to verify that, given the upper bound on neurons in the set $N_U^{(k)} \setminus \{ j^\star \}$ at time $\tau_1$, if these neurons were to subsequently grow at the fastest possible rate, they would still not have exceeded some constant threshold by a certain time after $\tau_1$.

\begin{prop} \label{pr:uniform_bound_unfitted}
    For all $\kappa \in (t_k, t_{k+1})$, there exists $\varepsilon^*>0$ such that for all $\varepsilon \in (0, \varepsilon^*)$ there exists $\alpha^*>0$ such that for all $\alpha \in (0, \alpha^*)$ the following holds:
    \begin{equation}
        \forall j \in N_U^{(k)} \setminus \{ j^\star \}, \forall t \in \left[ \tau_1, \tau_1 + 2 \sqrt{\frac{n}{2 \mathcal{L}(0)}} c_U \right], \lVert \widetilde{w}_j \rVert < \alpha^{c_U}, \label{phase_2_unfitted_uniformly_small}
    \end{equation}

    where $c_U := - \frac{1}{4} \max_{j' \in N_U^{(k)} \setminus \{ j^\star \} } \ell_{j'}(t_{k+1})$.
\end{prop}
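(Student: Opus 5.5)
The plan is to combine the bound on the unfitted neurons at time $\tau_1$, obtained in the proof of \Cref{lemma_phase_1}, with a crude worst-case growth rate valid at all times. This is the same Gr\"onwall argument used at the start of the proof of \Cref{induction_basis_case_lemma}.

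\textbf{Step 1: worst-case growth rate.} For any $j$ and almost every $t$, the radial equation gives
\[
\frac{\df}{\df t}\lVert \widetilde{w}_j\rVert = \log(1/\alpha)\,\fD_j^\top \widetilde{w}_j \le \log(1/\alpha)\,\lVert \fD_j\rVert\,\lVert \widetilde{w}_j\rVert .
\]
By \eqref{fd_bound}, which relies only on Cauchy--Schwarz and the monotone decrease of the loss, we have $\lVert \fD_j\rVert\le \sqrt{2\mathcal{L}(0)/n}$ for all times. So this bound holds on the whole interval in question, not only before $\tau_1$. Differentiability almost everywhere follows from local Lipschitzness of the trajectory and Rademacher's theorem, exactly as in Phase~1. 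Applying Gr\"onwall on $[\tau_1,t]$ then gives
\[
\lVert \widetilde{w}_j(t)\rVert \le \lVert \widetilde{w}_j(\tau_1)\rVert\, \alpha^{-\sqrt{2\mathcal{L}(0)/n}\,(t-\tau_1)} .
\]
For $t-\tau_1\le 2\sqrt{n/(2\mathcal{L}(0))}\,c_U$, the exponential factor is at most $\alpha^{-2c_U}$.

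\textbf{Step 2: initial size at $\tau_1$.} From \eqref{phase_1_unfitted_norm_upper_bound_tau1} in the proof of \Cref{lemma_phase_1}, every $j\in N_U^{(k)}\setminus\{j^\star\}$ satisfies
\[
\lVert \widetilde{w}_j(\tau_1)\rVert < \alpha^{-\ell_j^\circ(t_{k+1}) + \frac{1}{2}\left(\lVert D_j^{(k)}\rVert/\lVert D_{j^\star}^{(k)}\rVert - 1\right)\varepsilon}.
\]
By uniqueness of the argmax (\Cref{ass:A.j.star}), we have $-\ell_j^\circ(t_{k+1})>0$ for these $j$. Moreover, $-\ell_j^\circ(t_{k+1})\ge 4c_U$ for every such $j$, by definition of $c_U$. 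Choose $\varepsilon^*$ small enough that the $\varepsilon$-correction is at most $c_U$ in absolute value; this is possible because the ratio $\lVert D_j^{(k)}\rVert/\lVert D_{j^\star}^{(k)}\rVert$ is a fixed finite constant determined by the data and initialization. Then $\lVert \widetilde{w}_j(\tau_1)\rVert<\alpha^{3c_U}$.

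\textbf{Step 3: combine.} Putting the two steps together gives $\lVert \widetilde{w}_j(t)\rVert<\alpha^{3c_U-2c_U}=\alpha^{c_U}$ on the stated interval, for all $\alpha<\alpha^*$, where $\alpha^*$ is the threshold from \Cref{lemma_phase_1}.

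The only real subtlety is bookkeeping. We must check that the bound from \Cref{lemma_phase_1} is genuinely available at $\tau_1$ for the chosen $\varepsilon$, which needs $\varepsilon<\varepsilon^*$ of that lemma and $\alpha$ small enough. We must also check that the sign convention is consistent: $c_U>0$ because $\ell_{j'}(t_{k+1})<0$ for all $j'\neq j^\star$. No control of alignment or of the residuals after $\tau_1$ is needed, since the growth bound in Step 1 is uniform.
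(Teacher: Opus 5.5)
Your proposal is correct and follows essentially the paper's proof: a crude Gr\"onwall bound with the uniform rate $\sqrt{2\mathcal{L}(0)/n}$ started at $\tau_1$, combined with the Phase~1 upper bound on the unfitted norms at $\tau_1$, with $\varepsilon$ taken small relative to $c_U$ to absorb the $O(\varepsilon)$ slack. The only cosmetic difference is your starting point. You use the refined bound \eqref{phase_1_unfitted_norm_upper_bound_tau1}, whereas the paper starts from \eqref{phase_1_unfitted_norm_upper_bound_tau1_simple} and uses that $\ell_j^\circ$ is nondecreasing to pass from $\tau_1$ to $t_{k+1}$.
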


\begin{proof}

Recall that for any $j \in [m]$,

\begin{equation}
    \frac{1}{\log(1/\alpha)} \frac{\df}{\df t} \lVert \widetilde{w}_j \rVert \leq \sqrt{\frac{2\mathcal{L}(0)}{n}} \lVert \widetilde{w}_j \rVert .
\end{equation}

Recall also the upper bound on neurons in the set $N_U^{(k)} \setminus \{ j^\star \}$ at time $\tau_1$ provided by \eqref{phase_1_unfitted_norm_upper_bound_tau1_simple}. Using Gr{\"o}nwall's inequality we obtain from these observations that on the considered interval,

\begin{align}
    \lVert \widetilde{w}_j \rVert &\leq \lVert \widetilde{w}_j(\tau_1) \rVert \exp \left( \sqrt{\frac{2\mathcal{L}(0)}{n}} \log(1/\alpha) (t-\tau_1) \right) \\
    &< \exp_\alpha \left( -\ell_j^\circ(\tau_1) - \frac{\varepsilon}{2} - \sqrt{\frac{2\mathcal{L}(0)}{n}} (t - \tau_1) \right) \\
    &< \exp_\alpha \left( -\ell_j^\circ(t_{k+1}) - \frac{\varepsilon}{2} - \sqrt{\frac{2\mathcal{L}(0)}{n}} (t - \tau_1) \right) \\
    &\leq \exp_\alpha \left( -\ell_j^\circ(t_{k+1}) - \frac{\varepsilon}{2} + \frac{1}{2} \max_{j' \in N_U^{(k)} \setminus \{j^\star\} } \ell_{j'}(t_{k+1}) \right) \\
    &\leq \exp_\alpha \left( - \frac{1}{2} \max_{j' \in N_U^{(k)} \setminus \{ j^\star \} } \ell_j^\circ(t_{k+1}) - \frac{\varepsilon}{2} \right) \\
    &< \exp_\alpha \left( - \frac{1}{4} \max_{j' \in N_U^{(k)} \setminus \{ j^\star \} } \ell_j^\circ(t_{k+1}) \right).
\end{align}

The final inequality is valid for $\varepsilon$ sufficiently small, proving the claim.
\end{proof}

We will throughout the account of several subsequent results assume that $\varepsilon$ is selected small enough that $\varepsilon < c_U$, meaning that, informally, contributions of order $c_U$ can be thought of as negligible relative to those of order $\varepsilon$. We define the quantity 

\begin{equation}
    \eta := \frac{1}{2} \min \left( \frac{\sqrt{n}}{60 \sqrt{2}} \frac{\left\lVert D_{j^\star}^{(k)} \right\rVert}{\sqrt[4]{2 \mathcal{L}(0)}}, \sqrt{n \left\lVert D_{j^\star}^{(k)} \right\rVert} \right).
\end{equation}

This is a constant threshold which we will use to break the account of the growth of neuron $j^\star$ in two. We do so by defining 

\begin{align}
        \tau_2 := \inf \Big\{ t \geq & \tau_1 \; \big\vert \;  \widetilde{\overline{w}}_{j^\star}^\top \overline{D}_{j^\star}^{(k)} \leq 1 - \alpha^{\varepsilon/12} \label{phase_2a_alignment_defn} \\
        & \;\; \mathrm{or} \; \exists \; i \in S_F^{(k)} \; \mathrm{s.t.} \; | h_{\widetilde{\theta}}(x_i) - y_i | \, \geq \alpha^{\varepsilon/6} \label{phase_2a_residual_defn} \\
        & \;\; \mathrm{or} \; \exists \, j \in [m] \; \mathrm{s.t.} \;\lVert \widetilde{w}_{j} \rVert > 2 \max_{j \in [m]} \left\lVert D_j^{(k)} \right\rVert \label{phase_2a_Lipschitz_defn} \\ 
        & \;\; \mathrm{or} \; \lVert \widetilde{w}_{j^\star} \rVert \geq \eta \Big\} \bigwedge  \left( \tau_1 + 2 \sqrt{\frac{n}{2 \mathcal{L}(0)}} c_U  \right). \label{phase_2a_norm_defn} 
\end{align}

We will prove in the lemma below that $\lVert \widetilde{w}_{j^\star}(\tau_2) \rVert = \eta$, meaning that neuron $j^\star$ has grown to constant order. In order to do so, we will verify that the loss on fitted data does not increase significantly during this phase. As a consequence, the alignment of neurons in the set $N_F^{(k)} \cup \{ j^\star \}$ remains stable. 

The use of the constant threshold $\eta$ is that there is a tradeoff between proving the growth conditions and the stability of the loss on fitted data. Specifically, one must ensure that the growth phase lasts long enough that neuron $j^\star$ can grow to constant order. On the other hand, this increases the time interval on which one must control $\left\lVert E^{(k)} \right\rVert$ and increases the coefficients of terms in \eqref{derivative_of_residual_basic_bound}. Introducing this constant threshold, the specific choice of which simply emerges as a consequence of they way we bound $\left\lVert E^{(k)} \right\rVert$, allows us to balance these two competing aims across both parts of the growth phase. We believe that more delicate arguments when estimating the derivative of the quantity $\max_{i \in S_F^{(k)}} \left( \widetilde{\overline{w}}_{j^\star}^\top x_i \right)_+$, which controls the extent to which neuron $j^\star$ could align in the direction of data which has already been fitted, may allow the account of growth to occur in only only one phase. For instance, the bound when passing to \eqref{phase_2_realign_fitted_data_bound} loses a lot of control. 

The final stopping condition given by the time $\tau_1 + 2 \sqrt{\frac{n}{2\mathcal{L}(0)}} c_U $ is required to ensure, using \Cref{pr:uniform_bound_unfitted}, that we have a uniform bound on all of the neurons in the set $N_U^{(k)} \setminus \{ j^\star \}$ on the interval $[\tau_1, \tau_2]$. That we apply this kind of bound is reflective of the fact we do not consider at all the precise alignment or growth rates of the neurons $N_U^{(k)} \setminus \{ j^\star \}$ during the growth phases; we only ensure that the duration of these phases is sufficiently short that these neurons do not change significantly.

\begin{lem}\label{lemma_phase_2a}

    Suppose the \Cref{inductive_hypothesis} holds. Then for all $\kappa \in (t_k, t_{k+1})$, there exists $\varepsilon^*>0$ such that for all $\varepsilon \in (0, \varepsilon^*)$ there exists $\alpha^*>0$ such that for all $\alpha \in (0, \alpha^*)$ the following are true:

    \begin{enumerate}
        \item $ \tau_2 - \tau_1 < \frac{3 \varepsilon}{\left\lVert D_{j^\star}^{(k)} \right\rVert} .$ \label{lemma_phase_2a_duration}
        \item $ \forall j \in N_F^{(k)}, \, \forall t \in [\tau_1, \tau_2], \; \sqrt{n\left\lVert D_j^{(k)} \right\rVert} - \alpha^{\varepsilon/8} < \lVert \widetilde{w}_j \rVert < \sqrt{n\left\lVert D_j^{(k)} \right\rVert} + \alpha^{\varepsilon/8} .$ \label{lemma_phase_2a_fitted_norms}
        \item $\forall i \in S_F^{(k)}, \, \forall t \in [\tau_1, \tau_2], \; \max_{i \in S_F^{(k)}} \left( \widetilde{\overline{w}}_{j^\star}^\top x_i \right)_+ < \alpha^{\varepsilon/6}.$ \label{lemma_phase_2a_realignment_fitted_data}
        \item $ \forall t \in [\tau_1, \tau_2], \; \widetilde{\overline{w}}_{j^\star}^\top \overline{D}_{j^\star}^{(k)} > 1 - \alpha^{\varepsilon/12} .$ \label{lemma_phase_2a_alignment}
        \item $ \forall i \in S_F^{(k)}, \, \forall t \in [\tau_1, \tau_2], \; | h_{\widetilde{\theta}}(x_i) - y_i | \, < \alpha^{\varepsilon/6} .$ \label{lemma_phase_2a_residual} 
        \item $\lVert \widetilde{w}_{j^\star}(\tau_2) \rVert = \eta$. \label{lemma_phase_2a_jstar_norm}
    \end{enumerate}
\end{lem}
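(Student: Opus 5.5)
We argue by a bootstrap on $[\tau_1,\tau_2]$. We first verify that none of the stopping conditions \eqref{phase_2a_alignment_defn}, \eqref{phase_2a_residual_defn}, \eqref{phase_2a_Lipschitz_defn} can be the one that triggers at $\tau_2$, and that the time cap $\tau_1 + 2\sqrt{n/(2\mathcal{L}(0))}\,c_U$ is not reached either. Then only $\lVert \widetilde{w}_{j^\star}\rVert = \eta$ remains, which gives \Cref{lemma_phase_2a_jstar_norm}. Throughout, \Cref{pr:uniform_bound_unfitted} gives $\lVert \widetilde{w}_j\rVert<\alpha^{c_U}$ for $j\in N_U^{(k)}\setminus\{j^\star\}$, and we take $\varepsilon<c_U$, so these neurons contribute only $O(\alpha^{2c_U})$ error terms. \Cref{lemma_phase_1} supplies the initial conditions at $\tau_1$: residuals below $\alpha^\varepsilon$, alignments above $1-\alpha^{\varepsilon/2}$, fitted norms within $\alpha^{\varepsilon/4}$, and $\lVert\widetilde{w}_{j^\star}(\tau_1)\rVert=\alpha^\varepsilon$.

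\emph{Growth and duration (\Cref{lemma_phase_2a_duration}).} On $[\tau_1,\tau_2]$ the radial equation gives
\[
\frac{1}{\log(1/\alpha)}\frac{\df}{\df t}\lVert\widetilde{w}_{j^\star}\rVert \ge \Big(\big\lVert D_{j^\star}^{(k)}\big\rVert(1-\alpha^{\varepsilon/12}) - \tfrac{1}{n}\eta^2 - O(\alpha^{\varepsilon/6})\Big)\lVert\widetilde{w}_{j^\star}\rVert .
\]
The three terms come from the alignment condition, from the network output on $S_{j^\star}^{(k)}$ (bounded by $\lVert\widetilde w_{j^\star}\rVert^2$ plus negligible terms), and from the residual bound on fitted data combined with \Cref{lemma_phase_2a_realignment_fitted_data}. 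Since $\eta^2\le n\lVert D_{j^\star}^{(k)}\rVert/4$, the rate is at least $\tfrac12\lVert D_{j^\star}^{(k)}\rVert$. By Gr\"onwall, the norm climbs from $\alpha^\varepsilon$ to $\eta$ within time $\approx 2\varepsilon/\lVert D_{j^\star}^{(k)}\rVert<3\varepsilon/\lVert D_{j^\star}^{(k)}\rVert$. This is far shorter than the cap $2\sqrt{n/(2\mathcal L(0))}\,c_U$ once $\varepsilon\ll c_U$. The Lipschitz condition \eqref{phase_2a_Lipschitz_defn} cannot break because $\eta$ is below the threshold and the fitted norms are controlled by \Cref{lemma_phase_2a_fitted_norms}.

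\emph{Residual on fitted data (\Cref{lemma_phase_2a_residual,lemma_phase_2a_fitted_norms}).} We reuse the computation \eqref{derivative_of_residual_basic_bound} for $\lVert E^{(k)}\rVert$. Fitted neurons contribute a non-positive square as in \eqref{residual_sum_over_fitted_negative}, and neurons in $N_U^{(k)}\setminus\{j^\star\}$ contribute $O(\alpha^{2c_U})$. The only dangerous term is the one from $j^\star$, which is bounded by
\[
\frac{1}{n}\lVert\widetilde w_{j^\star}\rVert^2\lVert\fD_{j^\star}\rVert\max_{i\in S_F^{(k)}}(\widetilde{\overline w}_{j^\star}^\top x_i)_+\,\sqrt n\,\lVert E^{(k)}\rVert \le C\eta^2\alpha^{\varepsilon/6}\lVert E^{(k)}\rVert .
\]
Hence $\frac{\df}{\df t}\lVert E^{(k)}\rVert\lesssim \log(1/\alpha)\,\eta^2\alpha^{\varepsilon/6}$. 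Integrated over an interval of length $O(\varepsilon)$, this stays below $\alpha^{\varepsilon/6}/n$ for small $\alpha$, provided $\alpha^{\varepsilon/6}$ is slightly improved to $\alpha^{\varepsilon/5}$ inside \Cref{lemma_phase_2a_realignment_fitted_data}. Starting from $\alpha^{\varepsilon}$, this gives strict inequality in \Cref{lemma_phase_2a_residual}. The fitted norms then follow by Gr\"onwall from $\frac{\df}{\df t}\lVert \widetilde w_j\rVert\ge -\log(1/\alpha)\lVert E^{(k)}\rVert\lVert\widetilde w_j\rVert$, exactly as in \Cref{lemma_phase_1}, with the loss $\alpha^{\varepsilon/4}\to\alpha^{\varepsilon/8}$.

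\emph{Realignment toward fitted data and alignment (\Cref{lemma_phase_2a_realignment_fitted_data,lemma_phase_2a_alignment}); main obstacle.} For $i\in S_F^{(k)}$ with $\widetilde{\overline w}_{j^\star}^\top x_i>0$, the tangential equation gives
\[
\frac{1}{\log(1/\alpha)}\frac{\df}{\df t}\widetilde{\overline w}_{j^\star}^\top x_i=\fD_{j^\star}^\top x_i-(\fD_{j^\star}^\top\widetilde{\overline w}_{j^\star})\,\widetilde{\overline w}_{j^\star}^\top x_i .
\]
The first term is $\tfrac1n(y_i-h(x_i))$, which is at most $\alpha^{\varepsilon/6}/n$ by the residual bound. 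The second term is non-positive, since $\fD_{j^\star}^\top\widetilde{\overline w}_{j^\star}>0$ while the neuron is growing. So the coordinate drifts by at most $\log(1/\alpha)\,\alpha^{\varepsilon/6}\,O(\varepsilon)$ from its value at $\tau_1$, which is below $\sqrt2\,\alpha^{\varepsilon/4}$ by \Cref{alignment_observation}. This is too weak to give $\alpha^{\varepsilon/6}$ directly, and this is the circularity I expect to be delicate. The remedy I would try first is to exploit the $-\tfrac1n h(x_i)$ part: any growth of $\widetilde{\overline w}_{j^\star}^\top x_i$ raises $h(x_i)$ by $\eta$-order amounts and makes the residual negative, which self-corrects. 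Failing that, I would tighten exponents, carrying $\alpha^{\varepsilon/5}$ for the residual, so that the bootstrap closes.

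Given the two previous bounds, alignment with $D_{j^\star}^{(k)}$ follows from the same Riccati comparison as in \eqref{phase_1_alignment_args}. The unfitted data in $S_{j^\star}^{(k)}$ now carry an output of size up to $\eta^2$, but that output is proportional to $\widetilde{\overline w}_{j^\star}^\top x_i\approx \overline D^{(k)}_{j^\star}{}^\top x_i$, so it only rescales $\fD_{j^\star}$ along $D_{j^\star}^{(k)}$ up to $O(\alpha^{\varepsilon/12})$ errors. This keeps the alignment above $1-\alpha^{\varepsilon/12}$. The choice of $\eta$ (with its $\mathcal L(0)^{1/4}$ factor) is what makes these constants fit together.
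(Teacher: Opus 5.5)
There is a genuine gap, and it sits at the heart of the lemma: the joint control of items~3 and~5.

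Your duration estimate and your fitted-norm step are fine. The crude rate $\tfrac12\lVert D_{j^\star}^{(k)}\rVert$ from $\eta^2\le n\lVert D_{j^\star}^{(k)}\rVert/4$ suffices for the duration. You do not even need item~3 there, since the residual stopping condition with $|\widetilde{\overline w}_{j^\star}^\top x_i|\le 1$ already controls the fitted-data term. Item~6 follows once the other stopping conditions are excluded.

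Put $f(t)=\max_{i\in S_F^{(k)}}(\widetilde{\overline w}_{j^\star}^\top x_i)_+$. This is not among the stopping conditions defining $\tau_2$, so a bootstrap on it needs an auxiliary stopping time and a strict improvement. Each of your two implications integrates a $\log(1/\alpha)$-accelerated rate over a window of length $\Theta(\varepsilon)$:
\begin{itemize}
\item a residual bound $\alpha^{\beta}$ gives $f\lesssim\alpha^{\varepsilon/4}+\varepsilon\log(1/\alpha)\,\alpha^{\beta}$;
\item $f<\alpha^{\gamma}$ gives $\lVert E^{(k)}\rVert\lesssim\alpha^{\varepsilon}+\varepsilon\log(1/\alpha)\,\eta^2\alpha^{\gamma}$.
\end{itemize}
For small $\alpha$ these close only if $\gamma<\beta$ and $\beta<\gamma$ simultaneously. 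So no choice of fixed exponents works, including $\alpha^{\varepsilon/5}$ in place of $\alpha^{\varepsilon/6}$. Keeping the restoring term in the $f$-equation removes only one of the two $\log(1/\alpha)$ losses. The ``self-correction'' heuristic does not bound $\lVert E^{(k)}\rVert$ either, since overshoot on fitted data is exactly what increases it.

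Your alignment step also falls short as phrased. An additive $O(\alpha^{\varepsilon/12})$ error in the Riccati comparison only returns $1-C\alpha^{\varepsilon/12}$, not a strict improvement. The paper uses item~3 to make the off-support mass of $\widetilde{\overline w}_{j^\star}$ of order $\alpha^{2\varepsilon/5}$.

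The missing idea is to treat $(f,\lVert E^{(k)}\rVert)$ as a coupled linear system and use the smallness of $\eta$ quantitatively. The paper proceeds as follows.
\begin{itemize}
\item It keeps the restoring term $-bf$ from the unfitted data in $S_{j^\star}^{(k)}$, valid because $\lVert\widetilde w_{j^\star}\rVert^2\le\eta^2<n\lVert D_{j^\star}^{(k)}\rVert$.
\item It derives $f(t)\le\sqrt2\,\alpha^{\varepsilon/4}+a\log(1/\alpha)\int_{\tau_1}^t\lVert E^{(k)}\rVert\,ds$.
\item It inserts this into the $j^\star$-term of \eqref{derivative_of_residual_basic_bound}, obtaining $F''\le pF+q$ for $F(t)=\int_{\tau_1}^t\lVert E^{(k)}\rVert\,ds$, and applies \Cref{diff_ineq_lemma}.
\item The resulting amplification is $\alpha^{-\sqrt{2/n}\,(2\mathcal L(0))^{1/4}\eta\,(t-\tau_1)}$. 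With $t-\tau_1<3\varepsilon/\lVert D_{j^\star}^{(k)}\rVert$ and the first entry $\frac{\sqrt n}{60\sqrt2}\lVert D_{j^\star}^{(k)}\rVert/(2\mathcal L(0))^{1/4}$ in the definition of $\eta$, this is at most $\alpha^{-\varepsilon/20}$. Hence $\lVert E^{(k)}\rVert<\alpha^{\varepsilon/5}$.
\item Feeding this back with the damping factor $\alpha^{b(t-s)}$ gives $f<c_1\alpha^{\varepsilon/5}$, with no $\log(1/\alpha)$ loss.
\end{itemize}
That first entry of $\eta$ never appears in your sketch, which is a symptom of the gap.

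A different repair is also possible: avoid the uniform bound $\lVert\widetilde w_{j^\star}\rVert^2\le\eta^2$ in the $E$-estimate. Since $\log(1/\alpha)\int_{\tau_1}^{t}\lVert\widetilde w_{j^\star}\rVert^2\,\fD_{j^\star}^\top\widetilde{\overline w}_{j^\star}\,ds\le\eta^2/2$, one gets $\lVert E^{(k)}\rVert\le\lVert E^{(k)}(\tau_1)\rVert+\frac{\eta^2}{2\sqrt n}\sup f+o(1)$. Combined with the damped $f$-bound, this closes for $\eta$ small. Either way, some quantitative use of small $\eta$ is indispensable.
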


We observe at this point that although \Cref{lemma_phase_2a_alignment} \Cref{lemma_phase_2a} would seem to imply a statement on realignment with fitted data, we are required in the proof to obtain \Cref{lemma_phase_2a_realignment_fitted_data} \Cref{lemma_phase_2a} first in order to prove \Cref{lemma_phase_2a_alignment} \Cref{lemma_phase_2a}. This is an interesting result of the fact that a bootstrapping argument, in which one would assume some level of alignment of neuron $j^\star$ in direction $D_{j^\star}^{(k)}$ and then insert this into \eqref{derivative_of_residual_basic_bound}, leads to too weak a control on $\left\lVert E^{(k)} \right\rVert$ to be able to use this to get the assumed alignment back at the end (and vice versa). This circularity was not an issue in the arguments leading to \eqref{phase_1_alignment_args}, because in this setting we had a the growth conditions $\lVert \widetilde{w}_j \rVert \leq \alpha^{\varepsilon}$ for all $j \in N_U^{(k)}$ on the interval $[\kappa, \tau_1]$. The reader may also note that our statement includes no control on the alignment of neurons in the set $N_F^{(k)}$. This is because we can simply control the loss on fitted data and use this when needed to estimate the alignment of such neurons across several phases at the end (this will ultimately be done in \Cref{lemma_phase_4}). We cannot, on the other hand, avoid estimating the norms of such neurons in every phase we describe, since we must prove the flow is restricted to a bounded domain in order to apply Rademacher's theorem. 

\begin{proof}
We consider the items in the statement of \Cref{lemma_phase_2a}.
\begin{subproof}[Proof of \Cref{lemma_phase_2a_duration}] We bound the phase duration above by examining the growth of neuron $j^\star$. We have that on $[\tau_1, \tau_2]$, 


\begin{align}
    & \frac{1}{2 \log(1/\alpha)} \frac{\df}{\df t} \lVert \widetilde{w}_{j^\star} \rVert^2 \\
    &\qquad = \lVert \widetilde{w}_{j^\star} \rVert^2 \left( - \frac{1}{n} \sum_{i \in S^{(k)}_{F}} A_{i, j^\star}(h_{\widetilde{\theta}}(x_i) - y_i) \widetilde{\overline{w}}_{j^\star}^\top x_i - \frac{1}{n} \sum_{i \in S_{j^\star}^{(k)}} (h_{\widetilde{\theta}}(x_i) - y_i) \widetilde{\overline{w}}_{j^\star}^\top x_i \right) \\
    &\qquad> \lVert \widetilde{w}_{j^\star} \rVert^2 \left( \lVert D^{(k)}_{j^\star}  \rVert (1-\alpha^{\varepsilon/12}) - \alpha^{\varepsilon/6} - \frac{1}{n} \sum_{i \in S_{j^\star}^{(k)}} h_{\widetilde{\theta}}(x_i) \widetilde{\overline{w}}_{j^\star}^\top x_i \right) \label{phase_2a_growth_workings_1} \\
    &\qquad= \lVert \widetilde{w}_{j^\star} \rVert^2 \left( \vphantom{\sum_{ \substack{j \neq j^\star , \\ \widetilde{w}_j^\top x_i > 0} } \lVert \widetilde{w}_{j} \rVert^2 \widetilde{\overline{w}}_{j}^\top x_i}  \lVert D^{(k)}_{j^\star}  \rVert (1-\alpha^{\varepsilon/12}) - \alpha^{\varepsilon/6} \right. \\
    & \hspace{3cm}- \left. \frac{1}{n} \sum_{i \in S_{j^\star}^{(k)}} \left( \lVert \widetilde{w}_{j^\star} \rVert^2 \widetilde{\overline{w}}_{j^\star}^\top x_i + \sum_{j \neq j^\star} A_{i,j} \lVert \widetilde{w}_{j} \rVert^2 \widetilde{\overline{w}}_{j}^\top x_i \right) \widetilde{\overline{w}}_{j^\star}^\top x_i \right) \\
    &\qquad \geq \lVert \widetilde{w}_{j^\star} \rVert^2  \left( \lVert D^{(k)}_{j^\star}  \rVert (1-\alpha^{\varepsilon/12}) - \alpha^{\varepsilon/6} - \frac{\lVert \widetilde{w}_{j^\star} \rVert^2}{n} - m \alpha^{2{c_U}} \right) \label{phase_2a_growth_workings_2}\\
    &\qquad> \lVert \widetilde{w}_{j^\star} \rVert^2  \left(\lVert D^{(k)}_{j^\star} \rVert - \alpha^{\varepsilon  / 13} \right) - \frac{\lVert \widetilde{w}_{j^\star} \rVert^4}{n}. \label{phase_2a_norm_growth_lower_bound}
\end{align}

In the above, \eqref{phase_2a_growth_workings_1} is from the definition of $\tau_2$ (where we have not used the alignment in the estimate of the sum over the fitted data as it is not necessary here) and \eqref{phase_2a_growth_workings_2} is from the fact that for any $i \in S_{j^\star}^{(k)}$, $\{ j \in N_F^{(k)} \mid A_{i,j} = 1 \} = \emptyset$. A function satisfying the differential inequality $f'(t) > 2 \log(1/\alpha) a f(t) - 2 \log(1/\alpha) b f(t)^2 $ has a lower bound

\[
f(t) > \frac{a}{b} \frac{\exp \left( 2 \log (1/\alpha) a (t-C)  \right)}{1 + \exp \left( 2 \log (1/\alpha) a (t-C)  \right)} = \frac{a}{b} \frac{1}{\alpha^{2a(t-C)}+1},
\]

where $C$ is an integration constant. In our case, $f=\lVert \widetilde{w}_{j^\star} \rVert^2$, $a = \left\lVert D_{j^\star}^{(k)} \right\rVert - \alpha^{\varepsilon/13}$, and $b = \frac{1}{n}$. We will obtain an upper bound on this constant $C$, since this will in turn provide a lower bound on $f(t)$ and hence an upper bound on $\tau_2$. By \Cref{lemma_phase_1_j*_biggest} \Cref{lemma_phase_1}, $\lVert \widetilde{w}_{j^\star}(\tau_1) \rVert^2 = \alpha^{2 \varepsilon}$. Simple algebra then gives 

\[
C < \tau_1 - \frac{\log \left( \frac{a}{b} \alpha^{-2 \varepsilon} - 1 \right)}{2a \log(\alpha)}.
\]

This implies that for $t \in [\tau_1, \tau_2]$, 

\[
f(t) > \frac{a}{b} \frac{1}{1 +  \exp_\alpha \left( 2a(t-\tau_1) + \frac{\log \left( \frac{a}{b} \alpha^{-2 \varepsilon} - 1 \right)}{\log(\alpha)} \right) }.
\]

Rearranging 

\[
\frac{a}{b} \frac{1}{1 +  \exp_\alpha \left( 2a(t-\tau_1) + \frac{\log \left( \frac{a}{b} \alpha^{-2 \varepsilon} - 1 \right)}{\log(\alpha)} \right)} < \eta^2
\]

gives 

\begin{align}
    t-\tau_1 &< \frac{1}{2a \log(\alpha)} \left( \log \left(\frac{a}{b \eta^2}-1 \right) - \log \left( \frac{a}{b} \alpha^{-2\varepsilon} - 1 \right) \right) \\
    &< \frac{1}{2a \log(\alpha)} \left( \log \left(\frac{a}{b \eta^2}-1 \right) - \log \left( \frac{a}{b} \alpha^{-2\varepsilon} \right) \right) \\
    &< \frac{3 \varepsilon}{\left\lVert D_{j^\star}^{(k)} \right\rVert} , \label{phase_2a_duration_bound}
\end{align}

valid provided $\alpha$ is taken sufficiently small owing to the fact we take $\eta< \sqrt{ n \left\lVert D_{j^\star}^{(k)} \right\rVert } $ fixed.
\end{subproof}

\begin{subproof}[Proof of \Cref{lemma_phase_2a_realignment_fitted_data,lemma_phase_2a_residual}] We derive an upper bound on the alignment of neuron $j^\star$ with data in the set $S_F^{(k)}$ in terms of $\left\lVert E^{(k)} \right\rVert$. We have for any $i \in S_F^{(k)}$ 

\begin{align}
    \frac{1}{\log(1/\alpha)} \frac{\df}{\df t} \widetilde{\overline{w}}_{j^\star}^\top x_i &= -\frac{1}{n} A_{i, j^\star} (h_{\widetilde{\theta}}(x_i)-y_i)  \\
    &\qquad + \frac{1}{n} \sum_{v \in [n]} A_{v, j^\star} (h_{\widetilde{\theta}}(x_v)-y_v) \widetilde{\overline{w}}_{j^\star}^\top x_v \widetilde{\overline{w}}_{j^\star}^\top x_i. \label{phase_2a_derivate_max_realignment}
\end{align}

For the first term, if $i \in S_F^{(k)}$, then

\begin{equation}
    -\frac{1}{n}(h_{\widetilde{\theta}}(x_i)-y_i) \leq \left\lVert E^{(k)} \right\rVert . \label{phase_2a_single_realign_term}
\end{equation}

For the sum, we obtain separate estimates over unfitted and fitted data. For $v \in S_{j^\star}^{(k)},$ recall that by construction of the set $N_F^{(k)}$, $\{ j \in N_F^{(k)} \mid \widetilde{w}_j^\top D_{j^\star}^{(k)} >0 \} = \emptyset$. Therefore, the only contribution to the network output on such data is from neurons in the set $N_U^{(k)}$. This leads to the inequalities

\begin{align}
    h_{\widetilde{\theta}}(x_v) - y_v &= \lVert \widetilde{w}_{j^\star} \rVert^2 \widetilde{\overline{w}}_{j^\star}^\top x_i + \sum_{j \in N_U^{(k)} \setminus \{ j^\star \} } \lVert \widetilde{w}_j \rVert^2 \widetilde{\overline{w}}_j^\top x_i - y_v \\
    &< \lVert \widetilde{w}_{j^\star} \rVert^2 \widetilde{\overline{w}}_{j^\star}^\top x_i + m \alpha^{2{c_U}} - y_v  \\
    &< \eta^2 \left( \frac{y_v}{n \left\lVert D_{j^\star}^{(k)} \right\rVert} + \sqrt{2} \alpha^{\varepsilon/24} \right) + m \alpha^{2{c_U}} \\
    &< y_v \left( \frac{\eta^2}{n \left\lVert D_{j^\star}^{(k)} \right\rVert} - 1 \right) + \alpha^{\varepsilon / 25}, \label{phase_2_residual_upper_bound_network_output}
\end{align}

where (1) is from \Cref{pr:uniform_bound_unfitted}, (2) is an application of \Cref{alignment_observation} to \eqref{phase_2a_alignment_defn}, and (3) is valid for $\alpha$ sufficiently small. Using the same application of \Cref{alignment_observation} to \eqref{phase_2a_alignment_defn}, we obtain

\begin{align}
    &\frac{1}{n} \sum_{v \in S_{j^\star}^{(k)}} (h_{\widetilde{\theta}}(x_v)-y_v) \widetilde{\overline{w}}_{j^\star}^\top x_v \\
    &\qquad< \frac{1}{n} \sum_{v \in S_{j^\star}^{(k)}} \left( y_v \left( \frac{\eta^2}{n \left\lVert D_{j^\star}^{(k)} \right\rVert} - 1 \right) + \alpha^{\varepsilon / 25} \right) \left( \frac{y_v}{n \left\lVert D_{j^\star}^{(k)} \right\rVert} - \sqrt{2}\alpha^{\varepsilon/24} \right) \\
    &\qquad< \sum_{v \in S_{j^\star}^{(k)}} \frac{y_v^2}{n^2 \left\lVert D_{j^\star}^{(k)} \right\rVert}  \left( \frac{\eta^2}{n \left\lVert D_{j^\star}^{(k)} \right\rVert} - 1 \right) + \alpha^{\varepsilon / 26} \\
    &\qquad= \left\lVert D_{j^\star}^{(k)} \right\rVert  \left( \frac{\eta^2}{n \left\lVert D_{j^\star}^{(k)} \right\rVert} - 1 \right) + \alpha^{\varepsilon / 26}, \label{phase_2_realign_bound_unfitted}
\end{align}

where the second inequality is valid for $\alpha$ sufficiently small. For $v \in S_F^{(k)},$ we have the obvious inequality $\widetilde{\overline{w}}_{j^\star}^\top x_v \leq \max_{v \in S_F^{(k)}} \widetilde{\overline{w}}_{j^\star}^\top x_v$. Using the Cauchy-Schwarz inequality gives 

\begin{equation} \label{phase_2_realign_bound_fitted}
    \frac{1}{n} \sum_{ v \in S_F^{(k)} } A_{v, j^\star} (h_{\widetilde{\theta}}(x_v)-y_v) \widetilde{\overline{w}}_{j^\star}^\top x_v \leq \sqrt{n} \left\lVert E^{(k)} \right\rVert \max_{v \in S_F^{(k)}} \widetilde{\overline{w}}_{j^\star}^\top x_v.
\end{equation}

Inserting \eqref{phase_2a_single_realign_term}, \eqref{phase_2_realign_bound_unfitted}, and \eqref{phase_2_realign_bound_fitted} into \eqref{phase_2a_derivate_max_realignment}, we obtain that for $[\tau_1, \tau_2]$ and $i \in S_F^{(k)}$,

\begin{align}
     \frac{1}{\log(1/\alpha)} \frac{\df}{\df t} \widetilde{\overline{w}}_{j^\star}^\top x_i \leq \left\lVert E^{(k)} \right\rVert &+ \left( \left\lVert D_{j^\star}^{(k)} \right\rVert \left( \frac{\eta^2}{n \left\lVert D_{j^\star}^{(k)} \right\rVert} - 1 \right) + \alpha^{\varepsilon / 26} \right) \widetilde{\overline{w}}_{j^\star}^\top x_i \\
     &+ \sqrt{n} \left\lVert E^{(k)} \right\rVert \max_{v \in S_F^{(k)}} \widetilde{\overline{w}}_{j^\star}^\top x_v \widetilde{\overline{w}}_{j^\star}^\top x_i. 
\end{align}

Since the above inequality holds for any $i \in S_F^{(k)},$ we have that, when differentiable, the continuous function $f(t):= \max_{i \in S_F^{(k)}} \left( \widetilde{\overline{w}}_{j^\star}^\top x_i \right)_+$ obeys the differential inequality 

\begin{equation}
    \frac{1}{\log(1/\alpha)} f'(t) < \left\lVert E^{(k)} \right\rVert \left( 1 + \sqrt{n} f(t)^2 \right) - \left( \left\lVert D_{j^\star}^{(k)} \right\rVert \left( 1 - \frac{\eta^2}{n \left\lVert D_{j^\star}^{(k)} \right\rVert} \right) + \alpha^{\varepsilon / 26} \right)  f(t). \label{phase_2a_realign_explanation}
\end{equation}

Rademacher's theorem applies here also, however we must be more careful since the function $w \mapsto \overline{w}$ is not Lipschitz continuous (in fact, it is not even continuous at the origin). However, it is Lipschitz continuous on any set bounded away from the origin. Note that the arguments in the proof of \Cref{lemma_phase_2a_duration} \Cref{lemma_phase_2a} show that neuron $j^\star$ is increasing in norm monotonically on $[\tau_1, \tau_2]$. Combining this observation with \Cref{lemma_phase_1_j*_biggest} \Cref{lemma_phase_1}, we have that $\alpha^{\varepsilon} < \lVert \widetilde{w}_{j^\star} \rVert < \eta$, a domain on which $w \mapsto \overline{w}$ is indeed Lipschitz continuous for $\alpha$ fixed. As the boundedness of the subdifferential on compact domains (note \eqref{phase_2a_Lipschitz_defn} in the definition of $\tau_2$) implies $\widetilde{\theta}$ is Lipschitz in $t$ and the composition of Lipschitz continuous functions is Lipschitz continuous (we note $\max$ is 1-Lipschitz), we finally get the required differentiability. We use \eqref{phase_2a_alignment_defn} and \Cref{alignment_observation} to see that $f(t)^2 < 2 \alpha^{\varepsilon/12}$ on $[\tau_1, \tau_2]$. Plugging this back in we have for $\alpha$ sufficiently small

\begin{equation}
    \frac{1}{\log(1/\alpha)} f'(t) \leq \left\lVert E^{(k)} \right\rVert \left( 1 + 2 \sqrt{n} \alpha^{\varepsilon/12} \right) - \left( \left\lVert D_{j^\star}^{(k)} \right\rVert \left( 1 - \frac{\eta^2}{n \left\lVert D_{j^\star}^{(k)} \right\rVert} \right) + \alpha^{\varepsilon / 26} \right)  f(t).
\end{equation}

We have the differential inequality $f'(t) < a \left\lVert E^{(k)} \right\rVert \log (1/\alpha) - b \log(1/\alpha) f(t) $, where $a =  \left( 1 + 2 \sqrt{n} \alpha^{\varepsilon/12} \right)$ and $b = \left\lVert D_{j^\star}^{(k)} \right\rVert \left( 1 - \frac{\eta^2}{n \left\lVert D_{j^\star}^{(k)} \right\rVert} \right) + \alpha^{\varepsilon / 26}$. Multiplying both sides by $\exp(b \log(1/\alpha)t)$, we get $\left( \exp(b \log(1/\alpha) t) f(t) \right)' < a \log(1/\alpha) \left\lVert E^{(k)} \right\rVert \exp(b \log(1/\alpha) t)$. Integration leads to

\begin{align} 
    f(t) &< \alpha^{b(t-\tau_1)}f(\tau_1) + a \log(1/\alpha) \alpha^{bt} \int_{\tau_1}^t \alpha^{-bs} \left\lVert E^{(k)} \right\rVert ds \\
    &< \sqrt{2} \alpha^{\varepsilon/4} + a \log(1/\alpha) \alpha^{bt} \int_{\tau_1}^t \alpha^{-bs} \left\lVert E^{(k)} \right\rVert ds \label{phase_2_realign_fitted_data_bound_stronger} \\
    &< \sqrt{2} \alpha^{\varepsilon/4} + a \log(1/\alpha) \int_{\tau_1}^t \left\lVert E^{(k)} \right\rVert ds \label{phase_2_realign_fitted_data_bound},
\end{align}

where the second inequality comes from an application of \Cref{alignment_observation} to the initial condition from \Cref{lemma_phase_1_alignment} \Cref{lemma_phase_1} and the fact $b > 0$. We use this bound to derive a differential equation governing the growth of the residual. We must again estimate \eqref{derivative_of_residual_basic_bound}, which we recall states

\begin{equation}
    - \frac{1}{\log(1/\alpha)} \frac{\df}{\df t} \left\lVert E^{(k)} \right\rVert^2 \leq - \frac{1}{n} \sum_{i \in S_F^{(k)}} \sum_{j \in N_U^{(k)}} A_{i,j} \lVert \widetilde{w}_j \rVert^2 \fD_j^\top \widetilde{\overline{w}}_j \widetilde{\overline{w}}_j^\top x_i x_i^\top E^{(k)}.
\end{equation}

We consider separately the cases $j \in N_U^{(k)} \setminus j^\star$ and $j=j^\star$. Beginning with the former, \Cref{pr:uniform_bound_unfitted} implies $\lVert \widetilde{w}_j \rVert^2 < \alpha^{2{c_U}}$. We simply reuse \eqref{fd_bound} combined with the Cauchy-Schwarz inequality to obtain 

\begin{equation} \label{phase_2_ode_unfitted_bound}
    - \frac{1}{n} \sum_{i \in S_F^{(k)}} \sum_{j \in N_U^{(k)} \setminus \{ j^\star \} } A_{i,j} \lVert \widetilde{w}_j \rVert^2  \fD_j^\top \widetilde{\overline{w}}_j \widetilde{\overline{w}}_j^\top x_i x_i^\top E^{(k)} \leq \frac{m}{n} \sqrt{2 \mathcal{L}(0)} \alpha^{2{c_U}} \left\lVert E^{(k)} \right\rVert . 
\end{equation} 

Note that we have simply estimated $| \widetilde{\overline{w}}_j^\top x_i | \leq 1$, since we have imposed no assumption on the alignment of neurons $j \in N_U^{(k)}$ during this phase. For the contribution of the term corresponding to $j=j^\star$, we have

\begin{equation} \label{phase_2_ode_j*_bound}
    -\frac{1}{n} \sum_{i \in S_{F}^{(k)}} A_{i, j^\star} \lVert \widetilde{w}_{j^\star} \rVert^2  \fD_{j^\star}^\top \widetilde{\overline{w}}_{j^\star} \widetilde{\overline{w}}_{j^\star}^\top x_i x_i^\top E^{(k)} < \frac{1}{n} \sqrt{2 \mathcal{L}(0)} \eta^2 f(t) \left\lVert E^{(k)} \right\rVert,
\end{equation}

where we have applied the bound \eqref{phase_2a_norm_defn} on the norm of neuron $j^\star$ in the definition of $\tau_2$ and \eqref{fd_bound} combined with the Cauchy-Schwarz inequality. Substituting in \eqref{phase_2_realign_fitted_data_bound}, we obtain 

\begin{align}
    -\frac{1}{n} \sum_{i \in S_{F}^{(k)}} A_{i,j^\star} & \lVert \widetilde{w}_{j^\star} \rVert^2  \fD_{j^\star}^\top \widetilde{\overline{w}}_{j^\star} \widetilde{\overline{w}}_{j^\star}^\top x_i x_i^\top E^{(k)} \\
    &\leq \frac{1}{n} \sqrt{2 \mathcal{L}(0)} \eta^2 \left( \sqrt{2}\alpha^{\varepsilon/4} + a \log(1/\alpha) \int_{\tau_1}^t \left\lVert E^{(k)} \right\rVert ds \right) \left\lVert E^{(k)} \right\rVert \label{phase_2a_ode_j_star_bound}
\end{align}

Inserting \eqref{phase_2_ode_unfitted_bound} and \eqref{phase_2a_ode_j_star_bound} into \eqref{derivative_of_residual_basic_bound}, we have

\begin{align}
    \frac{\df}{\df t} \left\lVert E^{(k)} \right\rVert &\leq \frac{1}{n} \sqrt{2 \mathcal{L}(0)} \eta^2 \log(1/\alpha) \left( \sqrt{2}\alpha^{\varepsilon/4} + a \log(1/\alpha) \int_{\tau_1}^t \left\lVert E^{(k)} \right\rVert ds \right) \\
    & \hspace{6cm} + \frac{m}{n} \sqrt{2 \mathcal{L}(0)}  \log(1/\alpha) \alpha^{2{c_U}} \\
    &<  \frac{2}{n} \sqrt{2 \mathcal{L}(0)} \eta^2 \log(1/\alpha) \alpha^{\varepsilon/4} + \frac{2}{n} \sqrt{2 \mathcal{L}(0)} \eta^2 \log(1/\alpha)^2 \int_{\tau_1}^t \left\lVert E^{(k)} \right\rVert ds,
\end{align}

where the second inequality is valid for $\alpha$ sufficiently small by increasing the coefficient of the $\alpha^{\varepsilon/4}$ term (recall here that ${c_U}>\varepsilon$) and by using the estimate $a = 1+ 2 \sqrt{n} \alpha^{\varepsilon/12} < 2$ for $\alpha$ sufficiently small. Differentiability almost everywhere is again due to Rademacher's theorem. This is a differential inequality of the form $F''<pF+q$, where $F = \int_{\tau_1}^t \left\lVert E^{(k)} \right\rVert ds$. Applying \cref{diff_ineq_lemma}, 
\begin{align}
    F'(t) &< \left( \frac{q}{\sqrt{p}} + F'(\tau_1) \right) \exp \left( \sqrt{p} (t-\tau_1) \right) \\
    &= \left( \sqrt{\frac{2}{n}} \sqrt[4]{2 \mathcal{L}(0)}  \eta \alpha^{\varepsilon/4} + \alpha^{\varepsilon} \right) \exp_\alpha \left( - \sqrt{\frac{2}{n}} \sqrt[4]{2 \mathcal{L}(0)} \eta (t-\tau_1) \right) \\
    &\overset{(1)}{<} 2\sqrt{\frac{2}{n}}  \sqrt[4]{2 \mathcal{L}(0)}  \eta \alpha^{\varepsilon/4} \exp_\alpha \left( - \sqrt{\frac{2}{n}} \sqrt[4]{2 \mathcal{L}(0)} \eta (t-\tau_1) \right) \\
    &\overset{(2)}{<} \frac{4}{\sqrt{n}} \sqrt[4]{2 \mathcal{L}(0)}  \eta \alpha^{\varepsilon/4} \exp_\alpha \left( - \sqrt{\frac{2}{n}} \sqrt[4]{2 \mathcal{L}(0)} \eta \frac{3 \varepsilon}{\left\lVert D_{j^\star}^{(k)} \right\rVert} \right) \\
    &\overset{(3)}{<} \alpha^{\varepsilon/5} \label{phase_2a_residual_final_bound}.
\end{align}

In these workings, (1) is valid for $\alpha$ sufficiently small, (2) follows from inserting the upper bound on $\tau_2-\tau_1$ derived in \Cref{lemma_phase_2a_duration} \Cref{lemma_phase_2a}, and (3) follows from the fact $\eta <  \frac{\sqrt{n}}{60 \sqrt{2}} \frac{\left\lVert D_{j^\star}^{(k)} \right\rVert}{\sqrt[4]{2 \mathcal{L}(0)}} $. 

We are now able to plug \eqref{phase_2a_residual_final_bound} back into \eqref{phase_2_realign_fitted_data_bound_stronger} to estimate

\begin{align}
    \max_{i \in S_F^{(k)}} A_{i,j^\star} \widetilde{\overline{w}}_{j^\star}^\top x_i &< \sqrt{2}\alpha^{\varepsilon/4} + a \log(1/\alpha) \alpha^{\varepsilon/5} \int_{\tau_1}^t \alpha^{b(t-s)} ds \\
    &= \sqrt{2}\alpha^{\varepsilon/4} + \frac{a}{b} \alpha^{\varepsilon/5} \left( 1 - \alpha^{b (t-\tau_1)} \right) \\
    &< c_1 \alpha^{\varepsilon/5}, \label{phase_2_realign_fitted_data_final_bound}
\end{align}

for some $c_1 > 0$. \Cref{lemma_phase_2a_realignment_fitted_data} \Cref{lemma_phase_2a} then follows for $\alpha$ sufficiently small.
\end{subproof}

\begin{subproof}[Proof of \Cref{lemma_phase_2a_alignment}] Next, we verify the stability of alignment of neuron $j^\star$. We have 


\begin{align} \label{phase_2a_j*_stability_equation}
    \frac{1}{\log(1/\alpha)}\frac{\df}{\df t}  \widetilde{\overline{w}}^\top_{j^\star} D_{j^\star}^{(k)} &= \left\lVert D_j^{(k)} \right\rVert^2 - \left( \widetilde{\overline{w}}^\top_j D_j^{(k)} \right)^2 \\
    &\qquad - \frac{1}{n} \sum_{i \in S_j^{(k)}} h_{\widetilde{\theta}}(x_i) \left( x_i^\top D_j^{(k)} - x^\top_i \widetilde{\overline{w}}_j \widetilde{\overline{w}}^\top_{j} D_j^{(k)} \right) \\ 
    &\qquad - \frac{1}{n} \sum_{i \in S_F^{(k)}} ( h_{\widetilde{\theta}}(x_i) - y_i) \left( x_i^\top D_j^{(k)} - x^\top_i \widetilde{\overline{w}}_j \widetilde{\overline{w}}^\top_{j} D_j^{(k)} \right)\\
    &> \left\lVert D_{j^\star}^{(k)} \right\rVert^2 - \left( \widetilde{\overline{w}}^\top_{j^\star} D_{j^\star}^{(k)} \right)^2 \\
    &\qquad - \frac{1}{n} \sum_{i \in S_{j^\star}^{(k)}} h_{\widetilde{\theta}}(x_i) \left( x_i^\top D_{j^\star}^{(k)} - x^\top_i \widetilde{\overline{w}}_{j^\star} \widetilde{\overline{w}}^\top_{j^\star} D_{j^\star}^{(k)} \right)\\
    &\qquad - 2 \left\lVert D_{j^\star}^{(k)} \right\rVert \alpha^{5 \varepsilon  / 24},
\end{align}


This bound comes from \eqref{phase_2a_residual_defn} combined with an application of \cref{alignment_corollary} to \eqref{phase_2a_alignment_defn}. 
We recall that for $i \in S_{j^\star}^{(k)}$, $h_{\widetilde{\theta}}(x_i) \leq \lVert \widetilde{w}_{j^\star} \rVert^2 \widetilde{\overline{w}}_{j^\star}^\top x_i + m\alpha^{2{c_U}}$. The growth to constant order of the norm of neuron $j^\star$ means that the previously used bootstrapping argument, where \cref{alignment_corollary} was combined with a uniform growth bound, is no longer possible. Instead, from \eqref{phase_2_realign_fitted_data_final_bound}, we have that if $i \notin S_{j}^{(k)}$, then on the interval $[\tau_1, \tau_2]$, $\widetilde{\overline{w}}_{j^\star}^\top x_i \leq c_1 \alpha^{\varepsilon/5}$.  We apply this to see


\begin{align}
    &\frac{1}{n} \sum_{i \in S_{j^\star}^{(k)}} h_{\widetilde{\theta}}(x_i) \left( x_i^\top D_{j^\star}^{(k)} - x^\top_i \widetilde{\overline{w}}_{j^\star} \widetilde{\overline{w}}^\top_{j^\star} D_{j^\star}^{(k)} \right) \\
    &= \frac{1}{n} \sum_{i \in S_{j^\star}^{(k)}} \left( \lVert \widetilde{w}_{j^\star} \rVert^2 \widetilde{\overline{w}}_{j^\star}^\top x_i + \sum_{j \neq j^\star} A_{i,j} \lVert \widetilde{w}_j \rVert^2 \widetilde{\overline{w}}_j^\top x_i \right) \left( x_i^\top D_{j^\star}^{(k)} - x^\top_i \widetilde{\overline{w}}_{j^\star} \widetilde{\overline{w}}^\top_{j^\star} D_{j^\star}^{(k)} \right) \\
    &\leq \frac{1}{n} \lVert \widetilde{w}_{j^\star} \rVert^2 \widetilde{\overline{w}}_{j^\star}^\top D_{j^\star}^{(k)} \left( 1 - \sum_{i \in S_{j^\star}^{(k)}} \left( \widetilde{\overline{w}}_{j^\star}^\top x_i \right)^2 \right) + 2 m \left\lVert D_{j^\star}^{(k)} \right\rVert \alpha^{2{c_U}} \\
    &= \frac{1}{n} \lVert \widetilde{w}_{j^\star} \rVert^2 \widetilde{\overline{w}}_{j^\star}^\top D_{j^\star}^{(k)} \left( \sum_{i \notin S_{j^\star}^{(k)}} \left( \widetilde{\overline{w}}_{j^\star}^\top x_i \right)^2 \right) + 2 m \left\lVert D_{j^\star}^{(k)} \right\rVert \alpha^{2{c_U}} \\
    &\leq c_1^2 \eta^2 \left\lVert D_{j^\star}^{(k)} \right\rVert \alpha^{2 \varepsilon/5} + 2 m \left\lVert D_{j^\star}^{(k)} \right\rVert \alpha^{2{c_U}}. \label{phase_2a_j*_stability_network_bound}
\end{align}


Putting \eqref{phase_2a_j*_stability_network_bound} into \eqref{phase_2a_j*_stability_equation}, we have for $t \in [\tau_1, \tau_2]$ and for $\alpha$ sufficiently small that

\begin{align}
    \frac{1}{\log(1/\alpha)}\frac{\df}{\df t}  \widetilde{\overline{w}}^\top_{j^\star} D_{j^\star}^{(k)} &> \left\lVert D_{j^\star}^{(k)} \right\rVert^2 - \left( \widetilde{\overline{w}}^\top_{j^\star} D_{j^\star}^{(k)} \right)^2 \\
    &\qquad - c_1^2 \eta^2 \left\lVert D_{j^\star}^{(k)} \right\rVert \alpha^{2\varepsilon/5} - 2 m \left\lVert D_{j^\star}^{(k)} \right\rVert \alpha^{2{c_U}} -  2 \left\lVert D_{j^\star}^{(k)} \right\rVert \alpha^{5 \varepsilon  / 24} \\
    &> \left\lVert D_{j^\star}^{(k)} \right\rVert^2 - \left( \widetilde{\overline{w}}^\top_{j^\star} D_{j^\star}^{(k)} \right)^2 - \alpha^{9 \varepsilon / 48}.
\end{align}

We can now use arguments similar to those employed in the proof of \Cref{lemma_phase_1_alignment} \Cref{lemma_phase_1}. From the initial conditions given by \cref{lemma_phase_1_alignment} \Cref{lemma_phase_1} and the inequality $\sqrt{1-x^2}>1-x$, we obtain that on $[\tau_1, \tau_2]$ the alignment is lower bounded a function converging monotonically to $\min \left( \alpha^{\varepsilon/2}, \alpha^{9 \varepsilon/96} \right)$. It then follows that for $\alpha$ sufficiently small,

\[
\widetilde{\overline{w}}^\top_{j^\star} D_{j^\star}^{(k)} > \left\lVert D_{j^\star}^{(k)} \right\rVert ( 1 - \alpha^{\varepsilon  / 12}). 
\]





\end{subproof}

\begin{subproof}{Proof of \Cref{lemma_phase_2a_fitted_norms}} We verify that the norms of neurons in the set $N_F^{(k)}$ did not change significantly on the interval $[\tau_1, \tau_2]$, which will lead us to conclude in combination with \Cref{pr:uniform_bound_unfitted} that \eqref{phase_2a_Lipschitz_defn} did not fail on this interval. \Cref{lemma_phase_1_fitted_norms} \Cref{lemma_phase_1} provides upper and lower bounds on the norms of neurons in the set $N_F^{(k)}$ at time $\tau_1$. In light of these initial conditions and the bound on the loss on data in the set $S_F^{(k)}$ given in \eqref{phase_2a_residual_defn}, identical workings to those yielding \eqref{phase_1_initial_lower_bound_fitted_norms} give

\begin{equation}
    \sqrt{n \left\lVert D_j^{(k)} \right\rVert } - \alpha^{\varepsilon/8} < \lVert \widetilde{w}_j \rVert < \sqrt{n \left\lVert D_j^{(k)} \right\rVert } + \alpha^{\varepsilon/8}
\end{equation}

for all $t \in [\tau_1, \tau_2]$ for $\alpha$ sufficiently small.

\end{subproof}

\begin{subproof}{Proof of \Cref{lemma_phase_2a_jstar_norm}} We have concluded that no condition except for that on the norm of neuron $j^\star$ could have broken at time $\tau_2$, therefore $\lVert \widetilde{w}_{j^\star}(\tau_2) \rVert = \eta$, as claimed.
    
\end{subproof}

This completes the proof of \Cref{lemma_phase_2a}.
\end{proof}

\subsection{Phase 2b: Fitting of data}

Define for $\varepsilon' > 0$ the time 
    
    \begin{align}
        \tau_3 := \inf \Big\{ t \geq & \tau_2 \; \big\vert \; \widetilde{\overline{w}}_{j^\star}^\top \overline{D}_{j^\star}^{(k)} \leq 1 - \alpha^{\varepsilon'/2} \label{phase_2b_alignment_defn} \\
        & \;\; \mathrm{or} \; \exists \; i \in S_F^{(k)} \; \mathrm{s.t.} \; | h_{\widetilde{\theta}}(x_i) - y_i | \, \geq \alpha^{\varepsilon'} \label{phase_2b_residual_defn} \\  
        & \;\; \mathrm{or} \; \exists \, j \in N_F^{(k)} \; \mathrm{s.t.} \;\lVert \widetilde{w}_{j} \rVert > 2 \max_{j \in [m]} \left\lVert D_j^{(k)} \right\rVert \label{phase_2b_lipschitz_defn} \\
        & \;\; \mathrm{or} \; \lVert \widetilde{w}_{j^\star} \rVert^2 \geq n \left\lVert D_{j^\star}^{(k)} \right\rVert - \alpha^{\varepsilon' / 4} \Big\} \bigwedge  \left( \tau_1 + 2 \sqrt{\frac{n}{2\mathcal{L}(0)}} c_U  \right) \label{phase_2b_norm_defn}.
    \end{align}

The lemma below provides an account of the phase where neuron $j^\star$ fits the data on which is it active. We prove that the alignment of neuron $ j^\star$ remains stable, the loss on fitted data does not increase significantly, and by the end of the phase neuron $j^\star$ has grown very close to its final norm as predicted by \Cref{alg:s.t.s}. The proof for this phase is almost identical to that for Phase 2a.
    
\begin{lem} \label{lemma_phase_2b}
    
    Suppose that \Cref{inductive_hypothesis} holds. Then for all $\kappa \in (t_k, t_{k+1})$, there exists $\varepsilon^*>0$ such that for all $\varepsilon \in (0, \varepsilon^*)$, there exists $\varepsilon'^*>0$ such that for all $\varepsilon' \in (0, \varepsilon'^*)$ there exists $\alpha^*$ such that for $\alpha \in (0,\alpha^*)$ the following are true:

    \begin{enumerate}
        \item $ \tau_3 - \tau_2 < \frac{\varepsilon'}{\left\lVert D_{j^\star}^{(k)} \right\rVert} .$ \label{lemma_phase_2b_duration}
        \item $ \forall j \in N_F^{(k)}, \forall t \in [\tau_2, \tau_3], \; \sqrt{n\left\lVert D_j^{(k)} \right\rVert} - \alpha^{\varepsilon'/2} < \lVert \widetilde{w}_j \rVert < \sqrt{n\left\lVert D_j^{(k)} \right\rVert} + \alpha^{\varepsilon'/2} .$ \label{lemma_phase_2b_fitted_norms}
        \item $ \forall i \in S_F^{(k)}, \forall t \in [\tau_2, \tau_3], \; | h_{\widetilde{\theta}}(x_i) - y_i | \, < \alpha^{\varepsilon'} .$ \label{lemma_phase_2b_residual} 
        \item $\forall i \in S_F^{(k)}, \forall t \in [\tau_2, \tau_3], \; \max_{i \in S_F^{(k)}} \left( \widetilde{\overline{w}}_{j^\star}^\top x_i \right)_+ < \alpha^{5 \varepsilon' / 4}$. \label{lemma_phase_2b_realign_fitted_data}
        \item $ \forall t \in [\tau_2, \tau_3], \; \widetilde{\overline{w}}_{j^\star}^\top \overline{D}_{j^\star}^{(k)} > 1 - \alpha^{\varepsilon'} .$ \label{lemma_phase_2b_alignment}
        \item $\lVert \widetilde{w}_{j^\star}(\tau_3) \rVert^2 = n \left\lVert D_{j^\star}^{(k)} \right\rVert - \alpha^{\varepsilon'/4}$. \label{lemma_phase_2b_j_star_norm}
    \end{enumerate}
\end{lem}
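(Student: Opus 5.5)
The plan mirrors the proof of \Cref{lemma_phase_2a}, with \Cref{lemma_phase_2a} providing the initial conditions at $\tau_2$ (norm $\eta$, alignment $1-\alpha^{\varepsilon/12}$, residual $\alpha^{\varepsilon/6}$, realignment $\alpha^{\varepsilon/6}$). \Cref{pr:uniform_bound_unfitted} continues to give $\lVert \widetilde{w}_j\rVert<\alpha^{c_U}$ for $j\in N_U^{(k)}\setminus\{j^\star\}$, because $\tau_3$ is capped at the same time as $\tau_2$. The key is the order of quantifiers. We choose $\varepsilon'$ after $\varepsilon$ and much smaller, say $\varepsilon'<\varepsilon/100$, so that every initial quantity of order $\alpha^{\varepsilon/12}$ is negligible compared with $\alpha^{\varepsilon'}$. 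The items are proved in the order duration, realignment with fitted data, residual, alignment, fitted norms, and finally the identification of the breaking condition at $\tau_3$.

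\textbf{Duration.} As in \eqref{phase_2a_norm_growth_lower_bound}, the alignment condition \eqref{phase_2b_alignment_defn} and the residual condition \eqref{phase_2b_residual_defn} give the logistic inequality
\[
\tfrac{1}{2\log(1/\alpha)}\tfrac{\mathrm d}{\mathrm dt}f>f\bigl(\lVert D_{j^\star}^{(k)}\rVert-\alpha^{\varepsilon'/3}\bigr)-f^2/n, \qquad f=\lVert \widetilde{w}_{j^\star}\rVert^2 .
\]
Its solution starts from $\eta^2$ at $\tau_2$. It reaches $n\lVert D^{(k)}_{j^\star}\rVert-\alpha^{\varepsilon'/4}$ within time roughly
\[
\frac{1}{2\lVert D\rVert\log(1/\alpha)}\log\bigl(\alpha^{-\varepsilon'/4}\bigr)+O\bigl(1/\log(1/\alpha)\bigr),
\]
which is about $\varepsilon'/(8\lVert D\rVert)$ and hence below $\varepsilon'/\lVert D_{j^\star}^{(k)}\rVert$. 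The perturbed carrying capacity $n(\lVert D\rVert-\alpha^{\varepsilon'/3})$ still exceeds the target $n\lVert D\rVert-\alpha^{\varepsilon'/4}$ for small $\alpha$.

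\textbf{Realignment and residual.} I repeat the coupled argument for $f(t)=\max_{i\in S_F^{(k)}}(\widetilde{\overline{w}}_{j^\star}^\top x_i)_+$ and $\lVert E^{(k)}\rVert$. The bound \eqref{phase_2_residual_upper_bound_network_output} now has a sign issue: $\lVert \widetilde{w}_{j^\star}\rVert^2<n\lVert D\rVert$, so $h-y_v<\alpha^{\varepsilon'/5}$ only, and the coefficient $b$ in the linear decay of $f$ degenerates to a quantity of order $\alpha^{\varepsilon'/4}$ rather than a constant. I would simply drop the decay term, using only $b\ge -\alpha^{\varepsilon'/5}$. The bound $f(t)\le f(\tau_2)+2\log(1/\alpha)\int\lVert E^{(k)}\rVert$ still holds, up to a factor $\alpha^{-\varepsilon'^2}$ that can be absorbed. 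Then $\lVert \widetilde{w}_{j^\star}\rVert^2\le n\lVert D\rVert$ replaces $\eta^2$ in \eqref{phase_2_ode_j*_bound}, and \cref{diff_ineq_lemma} gives
\[
\lVert E^{(k)}\rVert\lesssim \alpha^{\varepsilon/6}\,\alpha^{-\sqrt{p'}\,(\tau_3-\tau_2)},
\]
with $\sqrt{p'}$ a constant depending only on the data. Because $\tau_3-\tau_2<\varepsilon'/\lVert D\rVert$, the exponent loss is $O(\varepsilon')$. So $\lVert E^{(k)}\rVert<\alpha^{\varepsilon/7}\ll\alpha^{2\varepsilon'}$, which yields both the residual item and the realignment item $f<\alpha^{5\varepsilon'/4}$.

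\textbf{Main obstacle.} This step is the crux, and it is exactly why $\eta$ was introduced. In Phase 2a the growth of $E$ had to be beaten by a bound on the phase length that was proportional to $\varepsilon$. Here the phase length is proportional to the freely small $\varepsilon'$, which is why the constraint on $\eta$ is no longer needed.

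\textbf{Alignment, fitted norms, and conclusion.} The alignment item follows as in \eqref{phase_2a_j*_stability_network_bound}. The network-output term is controlled by $n\lVert D\rVert\, f^2\lesssim\alpha^{5\varepsilon'/2}$, together with the residual bound and \cref{alignment_corollary}. A Riccati comparison from the initial alignment $1-\alpha^{\varepsilon/12}$ then keeps the alignment above $1-\alpha^{\varepsilon'}$. The fitted-norm item follows by the same Grönwall argument as \eqref{phase_1_initial_lower_bound_fitted_norms}: the integrated residual is tiny, so the norms deviate by less than $\alpha^{\varepsilon'/2}$, and \eqref{phase_2b_lipschitz_defn} therefore cannot break. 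By the duration bound, the time cap is not reached. Hence the only condition that can fail at $\tau_3$ is the norm condition, which gives the last item.
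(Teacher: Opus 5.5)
Your proposal is correct and follows the paper's proof essentially step for step. The shared steps are:
\begin{itemize}
\item the logistic lower bound for the duration;
\item the coupled estimate for $\max_{i\in S_F^{(k)}}(\widetilde{\overline{w}}_{j^\star}^\top x_i)_+$ and $\lVert E^{(k)}\rVert$, closed with \cref{diff_ineq_lemma}, where the exponent loss is $O(\varepsilon')$ and is absorbed by taking $\varepsilon'\ll\varepsilon$;
\item the Riccati comparison for alignment;
\item Gr{\"o}nwall for the fitted norms;
\item identifying the norm condition as the one that breaks at $\tau_3$.
\end{itemize}

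The one local difference is the sum over $v\in S_{j^\star}^{(k)}$ in the realignment derivative. The paper discards this sum as nonpositive via \eqref{phase_2b_network_upper_bound}. You instead bound it by $\alpha^{\varepsilon'/5}$ times the realignment quantity and absorb it by Gr{\"o}nwall over the short phase. Your choice is the safer one. At the stopping-time alignment level $1-\alpha^{\varepsilon'/2}$, the sign of $h_{\widetilde{\theta}}(x_v)-y_v$ is not actually guaranteed.
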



\begin{proof}

We work out a useful preliminary bound to begin this section. We upper bound the network output on $i \in S_{j^\star}^{(k)}$:  

\begin{align}
    h_{\widetilde{\theta}}(x_i) &\overset{(1)}{<} \widetilde{\overline{w}}_{j^\star}^\top x_i \lVert \widetilde{w}_{j^\star} \rVert^2 + m \alpha^{2{c_U}} \\
    &\overset{(2)}{<} \left( \frac{y_i}{n \left\lVert D_{j^\star}^{(k)} \right\rVert} + \sqrt{2}\alpha^{\varepsilon'/2} \right) \left( n \left\lVert D_{j^\star}^{(k)} \right\rVert - \alpha^{\varepsilon'/4} \right) +m \alpha^{2{c_U}} \\
    &\overset{(3)}{<} y_i - \alpha^{\varepsilon'/3}. \label{phase_2b_network_upper_bound}
\end{align}

Here, (1) is from \cref{pr:uniform_bound_unfitted}, (2) is from combining \eqref{phase_2b_norm_defn} with an application of \Cref{alignment_observation} to \eqref{phase_2b_alignment_defn}, and (3) is valid for $\alpha$ sufficiently small. 

\begin{subproof}[Proof of \Cref{lemma_phase_2b_duration}] We estimate the growth of neuron $j^\star$. Similar workings to those which allowed us to obtain \eqref{phase_2a_norm_growth_lower_bound} in the proof of \Cref{lemma_phase_2a_duration} \Cref{lemma_phase_2a} allow us to conclude that for $\alpha$ sufficiently small, on $[\tau_2, \tau_3]$ we have 
\begin{align}
    \frac{1}{2 \log(1/\alpha)} \frac{\df}{\df t} \lVert \widetilde{w}_{j^\star} \rVert^2 &> \lVert \widetilde{w}_{j^\star} \rVert^2  \left( \left\lVert D_{j^\star}^{(k)} \right\rVert (1-\alpha^{\varepsilon'}) - \alpha^{\varepsilon'} - \frac{\lVert \widetilde{w}_{j^\star} \rVert^2}{n} - m \alpha^{2{c_U}} \right) \\
    &> \lVert \widetilde{w}_{j^\star} \rVert^2  \left( \left\lVert D_{j^\star}^{(k)} \right\rVert - \alpha^{\varepsilon' / 2} \right) - \frac{\lVert \widetilde{w}_{j^\star} \rVert^4}{n}.
\end{align}

As before, solutions of differential inequality $f'(t) = 2 \log(1/\alpha) a f(t) - 2 \log(1/\alpha) b f(t)^2 $ for $t \in [\tau_2, \tau_3]$ have the lower bound

\begin{equation}
    f(t)>\frac{a}{b} \frac{1}{\alpha^{2a(t-C)}+1}.
\end{equation}

From \Cref{lemma_phase_2a}, we have $\lVert \widetilde{w}_{j^\star}(\tau_2) \rVert^2 = \eta^2$, and determining the integration constant yields

\begin{align}
    f(t) &>\frac{a}{b} \frac{1}{1 +  \exp_\alpha \left( 2a(t-\tau_2) + \frac{\log \left( \frac{a}{b} \eta^{-2} - 1 \right)}{\log(\alpha)} \right) } .
\end{align}

Therefore, 

\begin{align}
    f \left( \tau_2 + \frac{\varepsilon'}{\left\lVert D_{j^\star}^{(k)} \right\rVert} \right) &\overset{(1)}{>} \frac{a}{b} \frac{1}{1 +  \exp_\alpha \left( \frac{2 \varepsilon' }{\left\lVert D_{j^\star}^{(k)} \right\rVert} \left( \left\lVert D_{j^\star}^{(k)} \right\rVert - \alpha^{\varepsilon'/2} \right) + \frac{\log \left( \frac{a}{b} \eta^{-2} - 1 \right)}{\log(\alpha)} \right) }  \\
    &\overset{(1)}{>}\frac{a}{b} \frac{1}{1 +  \alpha^{\varepsilon'} } \\
    &> \frac{a}{b} (1-2 \alpha^{\varepsilon'}) \\
    &\overset{(2)}{>} n \left\lVert D_{j^\star}^{(k)} \right\rVert - \alpha^{\varepsilon' / 4}, \label{phase_2b_j*_norm_final_bound}
\end{align}

which contradicts the definition of $\tau_3$. Inequality (1) comes from noting $\frac{a}{b} \eta^{-2} - 1$ is bounded away from 0 as $\alpha$ decreases owing to the fact $\eta$ is fixed and inequality (2) comes from estimating $\frac{a}{b}$. Note that each of the inequalities above is only valid for $\alpha$ sufficiently small, depending on $\varepsilon$ and the data. Necessarily, we have $\tau_3 < \tau_2 + \frac{\varepsilon'}{\left\lVert D_{j^\star}^{(k)} \right\rVert}$.
\end{subproof}





\begin{subproof}[Proof of \Cref{lemma_phase_2b_residual,lemma_phase_2b_realign_fitted_data}] In this section it will again be important to derive a bound on the realignment of neuron $j^\star$ with fitted data in terms of the residual on fitted data. We recall that for $i \in S_F^{(k)}$,

\begin{align}
    \frac{1}{\log(1/\alpha)} \frac{\df}{\df t} \widetilde{\overline{w}}_{j^\star}^\top x_i &= -\frac{1}{n}(h_{\widetilde{\theta}}(x_i)-y_i) \iind{ \widetilde{w}_{j^\star}^\top x_i>0} \\
    &\qquad + \frac{1}{n} \sum_{v \in [n]} A_{v, j} (h_{\widetilde{\theta}}(x_v)-y_v) \widetilde{\overline{w}}_{j^\star}^\top x_v \widetilde{\overline{w}}_{j^\star}^\top x_i \\
    &\leq \left\lVert E^{(k)} \right\rVert \left( 1 + \sqrt{n} \max_{v \in S_F^{(k)}} \widetilde{\overline{w}}_{j^\star}^\top x_v \widetilde{\overline{w}}_{j^\star}^\top x_i) \right),
\end{align}

where we have discarded the sum over $v \in S_{j^\star}^{(k)}$ since \eqref{phase_2b_network_upper_bound} gives that this adds a negative contribution. Since this holds for all $i \in S_F^{(k)}$ and for all $t \in [\tau_2, \tau_3],$ we have that  $f(t):= \max_{i \in S_F^{(k)}} \left( \widetilde{\overline{w}}_{j^\star}^\top x_i \right)_+$ obeys the inequality 

\begin{equation}
    \frac{1}{\log(1/\alpha)} f'(t) \leq \left\lVert E^{(k)} \right\rVert \left( 1 + 2\sqrt{n} \alpha^{\varepsilon'} \right), \label{phase_2b_realign_explanation}
\end{equation}

where again we have estimated $f(t)^2$ by plugging in the assumed bound on alignment. Differentiability almost everywhere follows from Rademacher's theorem. Integration gives 

\begin{align} \label{phase_2b_realign_fitted_data_bound}
     f(t) &< \sqrt{2} \alpha^{\varepsilon/24} + \left( 1 + 2 \sqrt{n} \alpha^{\varepsilon'} \right) \log(1/\alpha) \int_{\tau_2}^{t} \left\lVert E^{(k)} \right\rVert ds . 
\end{align}

We proceed to estimate \eqref{derivative_of_residual_basic_bound}, which we recall for convenience is 

\begin{equation}
    \frac{1}{2 \log(1/\alpha) } \frac{\df}{\df t} \left\lVert E^{(k)} \right\rVert^2 \leq - \frac{1}{n} \sum_{i \in S_F^{(k)}} \sum_{j \in N_U^{(k)}} A_{i,j} \lVert \widetilde{w}_j \rVert^2 \fD_j^\top \widetilde{\overline{w}}_j \widetilde{\overline{w}}_j^\top x_i x_i^\top E^{(k)}.
\end{equation}
We can again discard the sum over $j \in N_F^{(k)}$, leaving only the cases $j=j^\star$ and $j \in N_U^{(k)} \setminus \{ j^\star \}$. For the latter, we apply \Cref{pr:uniform_bound_unfitted} in a similar manner to the previous phase to get

\begin{equation} \label{phase_2b_ode_unfitted_bound}
    - \frac{1}{n} \sum_{i \in S_F^{(k)}} \sum_{ j \in N_U^{(k)} \setminus \{ j^\star \} } A_{i,j} \lVert \widetilde{w}_j \rVert^2  \fD_j^\top \widetilde{\overline{w}}_j \widetilde{\overline{w}}_j^\top x_i x_i^\top E^{(k)} \leq \frac{m}{n} \sqrt{2 \mathcal{L}(0)}  \alpha^{2{c_U}} \left\lVert E^{(k)} \right\rVert .
\end{equation} 

For the case $j=j^\star$, we obtain the estimate


\begin{align}
    - \frac{1}{n} \sum_{i \in S_F^{(k)}} A_{i,j^\star} & \lVert \widetilde{w}_{j^\star} \rVert^2 \fD_{j^\star}^\top \widetilde{\overline{w}}_{j^\star} \widetilde{\overline{w}}_{j^\star}^\top x_i x_i^\top E^{(k)} \\
    &\leq \left( n \left\lVert D_{j^\star}^{(k)} \right\rVert - \alpha^{\varepsilon'/4} \right) \sqrt{\frac{2 \mathcal{L}(0)}{n}} \frac{1}{n} \sum_{i \in S_F^{(k)}} A_{i,j^\star} \widetilde{\overline{w}}_{j^\star}^\top x_i x_i^\top E^{(k)}  \\
    &\leq \left\lVert D_{j^\star}^{(k)} \right\rVert \sqrt{2\mathcal{L}(0)} f(t) \left\lVert E^{(k)} \right\rVert . \label{phase_2b_ode_j*_bound}
\end{align}

where in the final line we apply the Cauchy-Schwarz inequality. Inserting \eqref{phase_2b_ode_j*_bound} and \eqref{phase_2b_ode_unfitted_bound} into \eqref{derivative_of_residual_basic_bound}, we get  


\begin{align}
    \frac{\df}{\df t} \left\lVert E^{(k)} \right\rVert &\leq \left\lVert D_{j^\star}^{(k)} \right\rVert \sqrt{2\mathcal{L}(0)} \log(1/\alpha) \left( \sqrt{2} \alpha^{\varepsilon/24} + \left( 1 + 2 \sqrt{n} \alpha^{\varepsilon'} \right) \log(1/\alpha) \int_{\tau_2}^{t} \left\lVert E^{(k)} \right\rVert ds \right) \\
    & \hspace{6cm} + \frac{m}{n} \sqrt{2\mathcal{L}(0)} \log(1/\alpha) \alpha^{2{c_U}} \\
    &< 2 \left\lVert D_{j^\star}^{(k)} \right\rVert \sqrt{2\mathcal{L}(0)} \log(1/\alpha) \alpha^{\varepsilon/24} + 2 \left\lVert D_{j^\star}^{(k)} \right\rVert \sqrt{2\mathcal{L}(0)} \log(1/\alpha)^2 \int_{\tau_2}^{t} \left\lVert E^{(k)} \right\rVert ds ,
\end{align}


for $\alpha$ sufficiently small. Again, we have a differential inequality of the form $F'' < pF + q$, for which we employ \cref{diff_ineq_lemma}:

\begin{align}
    F'(t) &<  \left( \frac{q}{\sqrt{p}} + F'(\tau_2) \right) \exp \left( \sqrt{p} (t-\tau_2) \right) \\
    &< \left( \left( 2 \left\lVert D_{j^\star}^{(k)} \right\rVert \sqrt{2\mathcal{L}(0)} \right)^{1/2} \alpha^{\varepsilon/24} + \alpha^{\varepsilon/6} \right) \exp_\alpha \left( - \left( 2 \left\lVert D_{j^\star}^{(k)} \right\rVert \sqrt{2\mathcal{L}(0)} \right)^{1/2} (t-\tau_2) \right) \\
    &< \alpha^{\varepsilon/25} \exp_\alpha \left( - \left( 2 \left\lVert D_{j^\star}^{(k)} \right\rVert \sqrt{2\mathcal{L}(0)} \right)^{1/2} \frac{\varepsilon'}{\left\lVert D_{j^\star}^{(k)} \right\rVert} \right) \\
    &\overset{(2)}{<} \alpha^{3 \varepsilon' / 2}, \label{phase_2b_residual_final_bound}
\end{align}

provided $\varepsilon'$ is selected sufficiently small depending on $\varepsilon$. We may now plug \eqref{phase_2b_residual_final_bound} back into \eqref{phase_2b_realign_fitted_data_bound} to get that for $t \in [\tau_2, \tau_3]$, 

\begin{align}
    \max_{i \in S_F^{(k)}} A_{i,j^\star} \widetilde{\overline{w}}_{j^\star}^\top x_i &< \sqrt{2}\alpha^{\varepsilon/24} + \left( 1 + 2 \sqrt{n} \alpha^{\varepsilon'} \right) \log(1/\alpha) \int_{\tau_2}^{t} \left\lVert E^{(k)} \right\rVert ds \\
    &< \sqrt{2}\alpha^{\varepsilon/24} + \left( 1 + 2 \sqrt{n} \alpha^{\varepsilon'} \right) \log(1/\alpha) \alpha^{3 \varepsilon' / 2} \frac{\varepsilon'}{\left\lVert D_{j^\star}^{(k)} \right\rVert} \\
    &< \alpha^{5 \varepsilon' / 4}, \label{phase_2b_realign_fitted_data_final_bound}
\end{align}

where the final inequality is valid for $\varepsilon'$ sufficiently small depending on $\varepsilon$ and for $\alpha$ sufficiently small.
\end{subproof}


\begin{subproof}[Proof of \Cref{lemma_phase_2b_alignment}] To estimate the alignment of neuron $j^\star$, we have from \eqref{phase_2b_realign_fitted_data_final_bound} that if $i \in S_F^{(k)}$ then $\widetilde{\overline{w}}_{j^\star}^\top x_i <  \alpha^{5 \varepsilon' / 4}$. Combined with \eqref{phase_2b_residual_final_bound}, we have


\begin{align}
    \frac{1}{\log(1/\alpha)}\frac{\df}{\df t}  \widetilde{\overline{w}}^\top_{j^\star} D_{j^\star}^{(k)} &= \left\lVert D_{j^\star}^{(k)} \right\rVert^2 - \left( \widetilde{\overline{w}}^\top_{j^\star} D_{j^\star}^{(k)} \right)^2 \\
    &\qquad - \frac{1}{n} \sum_{i \in S_{j^\star}^{(k)}} h_{\widetilde{\theta}}(x_i) \left( x_i^\top D_{j^\star}^{(k)} - x^\top_i \widetilde{\overline{w}}_{j^\star} \widetilde{\overline{w}}^\top_{j^\star} D_{j^\star}^{(k)} \right) \\ 
    &\qquad - \frac{1}{n} \sum_{i \in S_F^{(k)}} ( h_{\widetilde{\theta}}(x_i) - y_i) \left( x_i^\top D_{j^\star}^{(k)} - x^\top_i \widetilde{\overline{w}}_{j^\star} \widetilde{\overline{w}}^\top_{j^\star} D_{j^\star}^{(k)} \right)  \\
    &> \left\lVert D_{j^\star}^{(k)} \right\rVert^2 - \left( \widetilde{\overline{w}}^\top_{j^\star} D_{j^\star}^{(k)} \right)^2 \\
    &\qquad - \frac{1}{n} \sum_{i \in S_{j^\star}^{(k)}} h_{\widetilde{\theta}}(x_i) \left( x_i^\top D_{j^\star}^{(k)} - x^\top_i \widetilde{\overline{w}}_{j^\star} \widetilde{\overline{w}}^\top_{j^\star} D_{j^\star}^{(k)} \right) \\
    &\qquad - 2 \left\lVert D_{j^\star}^{(k)} \right\rVert \alpha^{11 \varepsilon' / 4}. \label{phase_2b_j*_stability_equation}
\end{align}

As in the previous section, we apply the bound on the norms of all neurons $j \in N_U^{(k)} \setminus \{j^\star\}$ to get


\begin{align}
    &\frac{1}{n} \sum_{i \in S_{j^\star}^{(k)}} h_{\widetilde{\theta}}(x_i) \left( x_i^\top D_{j^\star}^{(k)} - x^\top_i \widetilde{\overline{w}}_{j^\star} \widetilde{\overline{w}}^\top_{j^\star} D_{j^\star}^{(k)} \right) \\
    &= \frac{1}{n} \sum_{i \in S_{j^\star}^{(k)}} \left( \lVert \widetilde{w}_{j^\star} \rVert^2 \widetilde{\overline{w}}_{j^\star}^\top x_i + \sum_{j \neq j^\star} A_{i, j} \lVert \widetilde{w}_j \rVert^2 \widetilde{\overline{w}}_j^\top x_i \right) \left( x_i^\top D_{j^\star}^{(k)} - x^\top_i \widetilde{\overline{w}}_{j^\star} \widetilde{\overline{w}}^\top_{j^\star} D_{j^\star}^{(k)} \right) \\
    &< \frac{1}{n} \lVert \widetilde{w}_{j^\star} \rVert^2 \widetilde{\overline{w}}_{j^\star}^\top D_{j^\star}^{(k)} \left( 1 - \sum_{i \in S_{j^\star}^{(k)}} \left( \widetilde{\overline{w}}_{j^\star}^\top x_i \right)^2 \right) + 2 m \left\lVert D_{j^\star}^{(k)} \right\rVert \alpha^{2{c_U}} \\
    &= \frac{1}{n} \lVert \widetilde{w}_{j^\star} \rVert^2 \widetilde{\overline{w}}_{j^\star}^\top D_{j^\star}^{(k)} \left( \sum_{i \notin S_{j^\star}^{(k)}} \left( \widetilde{\overline{w}}_{j^\star}^\top x_i \right)^2 \right) + 2 m \left\lVert D_{j^\star}^{(k)} \right\rVert \alpha^{2{c_U}} \\
    &<  n \left\lVert D_{j^\star}^{(k)} \right\rVert^2 \alpha^{5 \varepsilon' / 2} + 2 m \left\lVert D_{j^\star}^{(k)} \right\rVert \alpha^{2{c_U}}, \label{phase_2b_j*_stability_network_bound}
\end{align}


where in the final inequality we used \eqref{phase_2b_realign_fitted_data_final_bound} and \eqref{phase_2b_norm_defn}. Putting \eqref{phase_2b_j*_stability_network_bound} into \eqref{phase_2b_j*_stability_equation}, we have for $t \in [\tau_2, \tau_3]$ and for $\alpha$ sufficiently small that

\[
\frac{1}{\log(1/\alpha)}\frac{\df}{\df t}  \widetilde{\overline{w}}^\top_{j^\star} D_{j^\star}^{(k)} > \left\lVert D_{j^\star}^{(k)} \right\rVert^2 - \left( \widetilde{\overline{w}}^\top_{j^\star} D_{j^\star}^{(k)} \right)^2 - \alpha^{2 \varepsilon'  }.
\]

From the initial conditions provided by \Cref{lemma_phase_2a_alignment} combined with the inequality $\sqrt{1-x^2}>1-x$, we obtain for $t \in [\tau_2, \tau_3]$

\begin{equation} \label{phase_2b_j*_alignment_bound_final}
    \widetilde{\overline{w}}^\top_{j^\star} D_{j^\star}^{(k)} > \left\lVert D_{j^\star}^{(k)} \right\rVert \left( 1 - \alpha^{\varepsilon'} \right), 
\end{equation}

which implies the required alignment for $\alpha$ sufficiently small.

\end{subproof}

\begin{subproof}{Proof of \Cref{lemma_phase_2a_fitted_norms}} We verify that the norms of neurons in the set $N_F^{(k)}$ did not change significantly on the interval $[\tau_1, \tau_2]$, which will lead us to conclude in combination with \Cref{pr:uniform_bound_unfitted} that \eqref{phase_2b_lipschitz_defn} did not fail on this interval. \Cref{lemma_phase_2a_fitted_norms} \Cref{lemma_phase_2a} provides upper and lower bounds on the norms of neurons in the set $N_F^{(k)}$ at time $\tau_2$. In light of these initial conditions and the bound on the loss on data in the set $S_F^{(k)}$ given in \eqref{phase_2b_residual_defn}, identical workings to those yielding \eqref{phase_1_initial_lower_bound_fitted_norms} give

\begin{equation}
    \sqrt{n \left\lVert D_j^{(k)} \right\rVert } - \alpha^{\varepsilon'/2} < \lVert \widetilde{w}_j \rVert < \sqrt{n \left\lVert D_j^{(k)} \right\rVert } + \alpha^{\varepsilon'/2}
\end{equation}

for all $t \in [\tau_2, \tau_3]$ for $\alpha$ sufficiently small.
    
\end{subproof}

\begin{subproof}[Proof of \Cref{lemma_phase_2b_j_star_norm}]

For this point, we simply note that \eqref{phase_2b_j*_norm_final_bound} in the proof of \Cref{lemma_phase_2b_duration} \Cref{lemma_phase_2b} gives that if no other condition in the definition of $\tau_3$ breaks then it must be that $\lVert \widetilde{w}_{j^\star} (\tau_3) \rVert = n \left\lVert D_{j^\star}^{(k)} \right\rVert - \alpha^{\varepsilon'/4}$.
\end{subproof}

This completes the proof of \Cref{lemma_phase_2b}.
\end{proof}

\subsection{Phase 3: Realignment and slow growth}

Define the time 
    
\begin{align}
    \tau_4 := \inf \Big\{ t \geq & \tau_3 \, \big\vert \; \exists \; i \in S_F^{(k+1)} \; \mathrm{s.t.} \; | h_{\widetilde{\theta}}(x_i) - y_i | \, \geq \alpha^{\varepsilon'/6} \label{phase_3_residual_defn} \\
    & \;\; \mathrm{or} \; \exists \, j \in N_F^{(k)} \; \mathrm{s.t.} \;\lVert \widetilde{w}_{j} \rVert > 2 \max_{j \in [m]} \left\lVert D_j^{(k)} \right\rVert \Big\}  \bigwedge  \left( \tau_3 + \frac{3 \varepsilon}{2 \left\lVert D_{j^\star}^{(k)} \right\rVert} \right) \label{phase_3_lipschitz_defn} \\ & \hspace{6.435cm} \bigwedge \left( \tau_1 +2 \sqrt{\frac{n}{2\mathcal{L}(0)}} c_U \right).
\end{align}

This phase is that during which neurons in the set $N_U^{(k)} \setminus \{ j^\star \} = N_U^{(k+1)}$ realign in directions $D_j^{(k+1)}$ to respond to the fact that neuron $j^\star$ has grown to reduce the loss on the data on which it is active. We will find that $\tau_4 = \tau_3+\frac{3 \varepsilon}{2 \left\lVert D_{j^\star}^{(k)} \right\rVert}$, and combining this with the bounds on the duration of phases 2a and 2b, we will see that the norms of the neurons in the set $N_U^{(k+1)}$ did not change significantly on the interval $[\tau_1, \tau_4]$. This choice is motivated by the lower bound on $\tau_1$ afforded to us by \eqref{phase_1_max_dist_tk+1}, as we can then guarantee that $\tau_4>t_{k+1}$. The reader may note that in the lemma below, we do not track the alignment of neuron $j^\star$, the reason being that it is now a member of the set $S_F^{(k+1)}$. As noted before \Cref{lemma_phase_2a}, we can avoid laboriously tracking the alignment of all of these neurons throughout each phase, because this is not necessary to ensure the loss on fitted data remains small. Once this has been done, we will be able to estimate the alignment across several phases at once, using the alignment at time $\tau_3$ as the initial condition for neuron $j^\star$ and the alignment at time $\tau_1$ as the initial condition for neurons in the set $N_F^{(k)}$. 

\begin{lem} \label{lemma_phase_3}    
    Suppose that \Cref{inductive_hypothesis} holds. Then for all $\kappa \in (t_{k}, t_{k+1})$, there exists $\varepsilon^*>0$ such that for all $\varepsilon \in (0, \varepsilon^*)$ there exists $\varepsilon'^*$ such that for all $\varepsilon' \in (0, \varepsilon'^*)$ then there exists $\alpha^*>0$ such that for all $\alpha \in (0, \alpha^*)$ the following are true:


    \begin{enumerate}
        \item $t_{k+1} < \tau_4  < t_{k+1} +  \frac{4 \varepsilon}{ \left\lVert D_{j^\star}^{(k)} \right\rVert}  + \frac{\varepsilon'}{\left\lVert D_{j^\star}^{(k)} \right\rVert}$. \label{lemma_phase_3_duration}
        \item $\forall i \in S_F^{(k+1)}, \; \forall t \in [\tau_3, \tau_4], \; | h_{\widetilde{\theta}}(x_i) - y_i | \, < \alpha^{\varepsilon'/6} .$ \label{lemma_phase_3_residual}
        \item $\forall j \in N_F^{(k+1)}, \; \forall t \in [\tau_3, \tau_4], \; \sqrt{n \left\lVert D_j^{(k)} \right\rVert} - \alpha^{\varepsilon'/8} < \lVert \widetilde{w}_j \rVert < \sqrt{n \left\lVert D_j^{(k)} \right\rVert} + \alpha^{\varepsilon'/8}$ \label{lemma_phase_3_fitted_norms}
        \item $\forall j \in N_U^{(k+1)}, \; \widetilde{\overline{w}}_j^\top(\tau_4) \overline{D}_j^{(k+1)} > 1 - \alpha^{\varepsilon'/15} .$ \label{lemma_phase_3_unfitted_alignment}
        \item $\forall i \in N_U^{(k+1)}, \; \forall t \in [\tau_1, \tau_4]$,  \; $\left\vert \ell_j^\circ(t) +  \log_{\alpha} ( \lVert \widetilde{w}_j \rVert )\right\vert < \zeta_j(\varepsilon, \varepsilon')$,
        where \;\;\;\;\;\;\;\;\;\;\;\;\;\;\;\;\;\;\;\;\;\;\;\; $ \zeta_j(\varepsilon, \varepsilon') := \frac{\varepsilon}{2} + \left( \left\lVert D_j^{(k)} \right\rVert + \sqrt{\frac{2\mathcal{L}(0)}{n}} \right) \left( \frac{9 \varepsilon}{2 \left\lVert D_{j^\star}^{(k)} \right\rVert} + \frac{\varepsilon'}{\left\lVert D_{j^\star}^{(k)} \right\rVert }  \right)$. \label{lemma_phase_3_unfitted_norms}
    \end{enumerate}
\end{lem}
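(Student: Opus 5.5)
Throughout I keep the conventions of the previous phases. I work with the stopping time $\tau_4$, take $\varepsilon<c_U$, and take $\varepsilon'$ small relative to $\varepsilon$. The overall plan is to show that neither of the two non-temporal conditions in the definition of $\tau_4$ can break before the deterministic cap $\tau_3+\frac{3\varepsilon}{2\lVert D_{j^\star}^{(k)}\rVert}$. The first step is to check that this cap is smaller than $\tau_1+2\sqrt{n/(2\mathcal{L}(0))}\,c_U$. This holds by \Cref{lemma_phase_2a_duration} \Cref{lemma_phase_2a} and \Cref{lemma_phase_2b_duration} \Cref{lemma_phase_2b}: $\tau_3-\tau_1<3\varepsilon/\lVert D_{j^\star}^{(k)}\rVert+\varepsilon'/\lVert D_{j^\star}^{(k)}\rVert$, so the whole interval $[\tau_1,\tau_4]$ has length $O(\varepsilon+\varepsilon')$. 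In particular \Cref{pr:uniform_bound_unfitted} remains available, and every $j\in N_U^{(k+1)}$ satisfies $\lVert\widetilde w_j\rVert<\alpha^{c_U}$ on $[\tau_1,\tau_4]$.

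\Cref{lemma_phase_3_duration} then follows at once. Once the other conditions are shown not to break, $\tau_4=\tau_3+\frac{3\varepsilon}{2\lVert D_{j^\star}^{(k)}\rVert}$. Combining $\tau_3>\tau_1$ with $\tau_1>t_{k+1}-\frac{3\varepsilon}{2\lVert D_{j^\star}^{(k)}\rVert}$ from \Cref{lemma_phase_1_duration} \Cref{lemma_phase_1} gives the lower bound $\tau_4>t_{k+1}$. The upper bound comes from $\tau_1<t_{k+1}$ together with the durations of the previous phases.

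\Cref{lemma_phase_3_unfitted_norms} follows from Gr\"onwall with the crude rate bound $|\frac{\df}{\df t}\log\lVert\widetilde w_j\rVert|\le\log(1/\alpha)\sqrt{2\mathcal{L}(0)/n}$. Starting from the value $\varepsilon/2$ at $\tau_1$ given by \Cref{lemma_phase_1_unfitted_norms} \Cref{lemma_phase_1}, and using that $\ell_j^\circ$ has slope at most $\lVert D_j^{(k)}\rVert$, the deviation grows by at most $(\lVert D_j^{(k)}\rVert+\sqrt{2\mathcal{L}(0)/n})(\tau_4-\tau_1)$. This is exactly the form of $\zeta_j$.

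For the residual on $S_F^{(k+1)}=S_F^{(k)}\cup S_{j^\star}^{(k)}$ I redo the $E$-type energy argument with the enlarged vector $E^{(k+1)}$. When differentiating $\lVert E^{(k+1)}\rVert^2$ as in \eqref{derivative_of_residual_equation}, the contribution of every neuron in $N_F^{(k+1)}$ (now including $j^\star$) is a nonpositive square, by the argument of \eqref{residual_sum_over_fitted_negative}. This uses that these neurons are active only on $S_F^{(k+1)}$, which needs $j^\star$ to have no active unfitted data outside $S_{j^\star}^{(k)}$, and that holds by construction of $S_j^{(k)}$ after the autonomous reduction. The remaining contribution comes from $N_U^{(k+1)}$ and is $O(m\alpha^{2c_U})$ by the uniform bound. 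Hence $\lVert E^{(k+1)}\rVert$ grows by at most $O(\log(1/\alpha)\alpha^{2c_U})$ per unit time.

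The initial value of $\lVert E^{(k+1)}\rVert$ at $\tau_3$ is controlled by two facts. On $S_F^{(k)}$ the residual is at most $\alpha^{\varepsilon'}$ by \Cref{lemma_phase_2b_residual} \Cref{lemma_phase_2b}. On $S_{j^\star}^{(k)}$ the norm bound $\lVert\widetilde w_{j^\star}\rVert^2=n\lVert D_{j^\star}^{(k)}\rVert-\alpha^{\varepsilon'/4}$ and the alignment $1-\alpha^{\varepsilon'}$ give $|h-y_i|=O(\alpha^{\varepsilon'/4})$. This does not quite reach $\alpha^{\varepsilon'/6}$ uniformly, so I would first run a short extra argument on the radial ODE of $j^\star$, which pushes the residual on $S_{j^\star}^{(k)}$ to decay. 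Since $\varepsilon'\ll c_U$, the result is \Cref{lemma_phase_3_residual}. \Cref{lemma_phase_3_fitted_norms} then follows by the Gr\"onwall argument of \eqref{phase_1_initial_lower_bound_fitted_norms} with this residual bound, and that argument also prevents the Lipschitz condition from breaking.

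The main obstacle is \Cref{lemma_phase_3_unfitted_alignment}, the realignment of the neurons $j\in N_U^{(k+1)}$ towards $D_j^{(k+1)}$. For these $j$ the residual on $S_j^{(k)}\cap S_{j^\star}^{(k)}$ is now small, and the network output on the remaining data $S_j^{(k+1)}$ is $O(\alpha^{2c_U})$. The tangential equation therefore gives
\[
\tfrac{1}{\log(1/\alpha)}\tfrac{\df}{\df t}\widetilde{\overline w}_j^\top D_j^{(k+1)}\ge\lVert D_j^{(k+1)}\rVert^2-(\widetilde{\overline w}_j^\top D_j^{(k+1)})^2-O(\alpha^{\varepsilon'/6}).
\]
The difficulty is that at $\tau_3$ the alignment with $D_j^{(k+1)}$ is only about $\lVert D_j^{(k+1)}\rVert^2/\lVert D_j^{(k)}\rVert>0$, which is positive but bounded away from $1$. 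A second difficulty is that coordinates of $\widetilde{\overline w}_j$ along $S_{j^\star}^{(k)}$ must not stay positive in a way that corrupts the argument. Here the tanh comparison used in \Cref{lem1.1} \Cref{induction_basis_case_lemma} applies: the solution reaches $1-\alpha^{c\varepsilon}$ within time $O(\varepsilon)$ once the elapsed time $\frac{3\varepsilon}{2\lVert D_{j^\star}^{(k)}\rVert}$ is multiplied by $\log(1/\alpha)$. The perturbation term caps the final accuracy at roughly $\alpha^{\varepsilon'/12}$, which yields the stated $1-\alpha^{\varepsilon'/15}$. The data in $S_{j^\star}^{(k)}$ contribute to $\dot{\widetilde{\overline w}}_j$ only through residuals of size $\alpha^{\varepsilon'/6}$, which is what keeps these coordinates from interfering.
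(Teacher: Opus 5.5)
Your overall architecture matches the paper's. You check that the cap $\tau_3+\frac{3\varepsilon}{2\lVert D_{j^\star}^{(k)}\rVert}$ lies below $\tau_1+2\sqrt{n/(2\mathcal{L}(0))}\,c_U$ so that \Cref{pr:uniform_bound_unfitted} applies. You run the $E^{(k+1)}$ energy argument, with $j^\star$ now contributing a nonpositive square. You get the two norm items by Gr\"onwall, and the realignment item by the $\tanh$ comparison of \Cref{induction_basis_case_lemma}.

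\textbf{The gap: the initial condition for the $\tanh$ comparison at $\tau_3$.} Phases 2a/2b control only the \emph{norms} of the neurons $j\in N_U^{(k+1)}$ on $[\tau_1,\tau_3]$, never their directions. During that window the residual on $S_j^{(k)}\cap S_{j^\star}^{(k)}$ changes by $\Theta(1)$, and directions move at speed $\log(1/\alpha)$.
\begin{itemize}
\item Your claim that $\widetilde{\overline{w}}_j(\tau_3)^\top D_j^{(k+1)}\approx \lVert D_j^{(k+1)}\rVert^2/\lVert D_j^{(k)}\rVert$ is unsupported; that is the value at $\tau_1$.
\item Your differential inequality has leading term $\lVert D_j^{(k+1)}\rVert^2-(\widetilde{\overline{w}}_j^\top D_j^{(k+1)})^2$. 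This requires $j$ to be active on all of $S_j^{(k+1)}$, so it silently assumes no deactivation.
\item Without a lower bound on the initial alignment (cleanly, that it is nonnegative), the $\tanh$ comparison yields nothing within time $O(\varepsilon)$.
\end{itemize}

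\textbf{The missing idea, which the paper supplies: monotonicity of the coordinates on still-unfitted data.}
\begin{itemize}
\item For $i\in S_U^{(k+1)}$, only neurons of $N_U^{(k+1)}$ can be active, since neurons in $N_F^{(k)}\cup\{j^\star\}$ are active only on $S_F^{(k+1)}$. Hence $h_{\widetilde\theta}(x_i)<m\alpha^{2c_U}<y_i$ on $[\tau_1,\tau_4]$.
\item Therefore $\frac{\df}{\df t}\widetilde w_j^\top x_i=\frac{\log(1/\alpha)}{n}\iind{\widetilde w_j^\top x_i>0}\,(y_i-h_{\widetilde\theta}(x_i))\,\lVert\widetilde w_j\rVert\ge 0$.
\item These coordinates are positive at $\tau_1$, by \Cref{lemma_phase_1_alignment} \Cref{lemma_phase_1} and \Cref{alignment_observation}, so they stay positive. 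In particular $j$ never deactivates on $S_j^{(k+1)}$.
\item Consequently $\widetilde{\overline{w}}_j^\top D_j^{(k+1)}=\frac1n\sum_{i\in S_j^{(k+1)}}y_i\,\widetilde{\overline{w}}_j^\top x_i>0$ at $\tau_3$. This is exactly the initial condition the $\tanh$ comparison needs; run from $0$, it gives your conclusion.
\end{itemize}

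\textbf{Two smaller slips.}
\begin{itemize}
\item The residual on $S_{j^\star}^{(k)}$ at $\tau_3$ is $O(\alpha^{\varepsilon'/4})$. That is already below $\alpha^{\varepsilon'/6}$ for small $\alpha$, since a larger exponent gives a smaller power of $\alpha<1$. So your extra radial-ODE step is unnecessary; the paper simply gets $\lVert E^{(k+1)}(\tau_3)\rVert<\alpha^{\varepsilon'/5}$ and integrates.
\item Using only $\tau_1<t_{k+1}$ gives $\tau_4<t_{k+1}+\frac{9\varepsilon}{2\lVert D_{j^\star}^{(k)}\rVert}+\frac{\varepsilon'}{\lVert D_{j^\star}^{(k)}\rVert}$. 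To get the stated $4\varepsilon$ you need the sharper bound $\tau_1<t_{k+1}-\frac{\varepsilon}{2\lVert D_{j^\star}^{(k)}\rVert}$ from \Cref{lemma_phase_1_duration} \Cref{lemma_phase_1}.
\end{itemize}
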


\begin{proof}

The reader should note it will be convenient to derive the bounds in \Cref{lemma_phase_3_duration} \Cref{lemma_phase_3} as consequences of analyses for the other points, as it is not possible to derive the upper and lower bounds at the same time. 

\begin{subproof}[Proof of \Cref{lemma_phase_3_residual}] For an upper bound on $\tau_4,$ combining the upper bound on $\tau_1$ provided by \Cref{lemma_phase_1_duration} \Cref{lemma_phase_1}, the upper bound on $\tau_2$ provided by \Cref{lemma_phase_2a_duration} \Cref{lemma_phase_2a}, and the upper bound on $\tau_3$ provided by \Cref{lemma_phase_2b_duration} \Cref{lemma_phase_2b}, we conclude that 

\begin{align}
    \tau_4 &< \tau_1+ \frac{3 \varepsilon}{\left\lVert D_{j^\star}^{(k)} \right\rVert}  + \frac{\varepsilon'}{\left\lVert D_{j^\star}^{(k)} \right\rVert } + \frac{3 \varepsilon}{2 \left\lVert D_{j^\star}^{(k)} \right\rVert} \label{phase_3_tau_4_tau_1_max_diff} \\
    &< t_{k+1} + \frac{4 \varepsilon}{\left\lVert D_{j^\star}^{(k)} \right\rVert} + \frac{\varepsilon'}{\left\lVert D_{j^\star}^{(k)} \right\rVert} .
\end{align}
    
As $\varepsilon$ and $\varepsilon'$ are arbitrary, we may select these quantities sufficiently small that $\tau_3+\frac{3 \varepsilon}{2 \left\lVert D_{j^\star}^{(k)} \right\rVert} < \tau_1 +2 \sqrt{\frac{n}{2\mathcal{L}(0)}} c_U $. In this case, we may apply \Cref{pr:uniform_bound_unfitted} to bound the norms of the neurons in the set $N_U^{(k)} \setminus \{ j^\star \}$. Therefore, we have on $[\tau_3, \tau_4]$ the bound $\lVert \widetilde{w}_j \rVert < \alpha^{c_U}$ for all $j \in N_U^{(k)} \setminus \{ j^\star \}$. To determine the loss on data fitted by neuron $j^\star$, this observation combined with \Cref{lemma_phase_2b_alignment} and \Cref{lemma_phase_2b_j_star_norm} \Cref{lemma_phase_2b} implies that for $i \in S_{j^\star}^{(k)}, \;|h_{\widetilde{\theta}}(x_i) - y_i| \lesssim \alpha^{\varepsilon'/4}$. Furthermore, by definition of $\tau_3$, $\left\lVert E^{(k)}(\tau_3) \right\rVert < \alpha^{\varepsilon'}$. From these facts we conclude that for $\alpha$ sufficiently small,

\begin{equation}
    \left\lVert E^{(k+1)} (\tau_3) \right\rVert < \alpha^{\varepsilon' / 5}.
\end{equation}

We now use arguments similar to those employed in the proof of \Cref{lemma_phase_1_residual} \Cref{lemma_phase_1} to control the residual on fitted data. The same reasoning used to obtain \eqref{derivative_of_residual_basic_bound} leads us to conclude 

\begin{equation}
    \frac{1}{2 \log(1/\alpha) } \frac{\df}{\df t} \left\lVert E^{(k+1)} \right\rVert^2 \leq - \frac{1}{n} \sum_{i \in S_F^{(k+1)}} \sum_{j \in N_U^{(k+1)}} A_{i, j} \lVert \widetilde{w}_j \rVert^2 \fD_j^\top \widetilde{\overline{w}}_j \widetilde{\overline{w}}_j^\top x_i x_i^\top E^{(k+1)}.
\end{equation}

We use \cref{pr:uniform_bound_unfitted} to obtain the estimate

\begin{align} 
    \frac{\df}{\df t} \left\lVert E^{(k+1)} \right\rVert &\leq \frac{m}{n} \sqrt{2 \mathcal{L}(0)} \log(1/\alpha) \alpha^{2{c_U}}  \label{phase_3_ode_unfitted_bound} \\
    &< \alpha^{\varepsilon'}, 
\end{align}

where the final inequality is valid for $\alpha$ sufficiently small and differentiability almost everywhere follows from Rademacher's theorem. Integration yields 

\begin{align}
    \left\lVert E^{(k+1)} \right\rVert < \alpha^{\varepsilon'/5} + (\tau_4-\tau_3)\alpha^{\varepsilon'}.
\end{align}

Hence, 

\begin{align}
    \max_{i \in S_F^{(k+1)}} | h_{\widetilde{\theta}^{t}}(x_i) - y_i | &< n \left( \alpha^{\varepsilon'/5} + \varepsilon' \alpha^{\varepsilon'} \right) \\
    &< \alpha^{\varepsilon' / 6} \label{phase_3_norm_residual_bound} ,
\end{align} 

as claimed. 
\end{subproof}






\begin{subproof}{Proof of \Cref{lemma_phase_3_fitted_norms}} We verify that the norms of neurons in the set $N_F^{(k)}$ did not change significantly on the interval $[\tau_3, \tau_4]$, which will lead us to conclude in combination with \Cref{pr:uniform_bound_unfitted} that the condition on neurons' norms did not fail on this interval. \Cref{lemma_phase_2b_fitted_norms} \Cref{lemma_phase_2b} and \Cref{lemma_phase_2b_j_star_norm} provide upper and lower bounds on the norms of neurons in the set $N_F^{(k+1)}$ at time $\tau_3$. In light of these initial conditions and the bound on the loss on data in the set $S_F^{(k+1)}$ given in \eqref{phase_3_residual_defn}, identical workings to those yielding \eqref{phase_1_initial_lower_bound_fitted_norms} give

\begin{equation}
    \sqrt{n \left\lVert D_j^{(k)} \right\rVert } - \alpha^{\varepsilon'/8} < \lVert \widetilde{w}_j \rVert < \sqrt{n \left\lVert D_j^{(k)} \right\rVert } + \alpha^{\varepsilon'/8}
\end{equation}

for all $t \in [\tau_3, \tau_4]$ for $\alpha$ sufficiently small. Since none of the other conditions in the definition of $\tau_4$ break, as a result of our analysis so far, we conclude that $\tau_4 = \tau_3  + \frac{3 \varepsilon}{2 \left\lVert D_{j^\star}^{(k)} \right\rVert}$. This leads to the easy lower bound

\begin{align}
    \tau_4 &> \tau_1+\frac{3 \varepsilon}{2 \left\lVert D_{j^\star}^{(k)} \right\rVert}  \\
    &> t_{k+1},
\end{align}

where the first inequality is by definition of $\tau_2, \tau_3,$ and $\tau_4$ and the second is from the lower bound on $\tau_1$ provided by \Cref{lemma_phase_1_duration} \Cref{lemma_phase_1}
    
\end{subproof}

\begin{subproof}[Proof of \Cref{lemma_phase_3_unfitted_alignment}] We verify that by time $\tau_3+\frac{3 \varepsilon}{2 \left\lVert D_{j^\star}^{(k)} \right\rVert}$, the neurons in the set $N_U^{(k+1)}$ have realigned. We first check that in the interval $[\tau_1, \tau_3]$, no such neurons deactivated on the data in the set $S_j^{(k+1)}$. We recall that $\frac{\df}{\df t} \widetilde{w}_j^\top x_i = - \frac{1}{n} \iind{ w_j^\top x_i > 0} \log(1/\alpha) ( h_{\theta^t}(x_i) - y_i) \, \lVert \widetilde{w}_j \rVert.$ By \eqref{phase_2_unfitted_uniformly_small}, on $[\tau_1, \tau_3]$ we had for every $i \in S_U^{(k+1)}$ that $ h_{\theta^t}(x_i) - y_i \leq 0$ for $\alpha$ sufficiently small. From the alignment at time $\tau_1$ given by \Cref{lemma_phase_1_alignment} \Cref{lemma_phase_1} we conclude that no deactivation occurred. Similarly to the proof of \Cref{induction_basis_case_lemma}, we estimate 

\begin{align}
    \frac{1}{\log(1/\alpha)} \frac{\df}{\df t} \widetilde{\overline{w}}_j^\top D_j^{(k+1)} &=  \left\lVert D_j^{(k+1)} \right\rVert^2 - \left( \widetilde{\overline{w}}^\top_j D_j^{(k+1)} \right)^2 \\
    &\qquad - \frac{1}{n} \sum_{i \in S_j^{(k+1)}} h_{\widetilde{\theta}}(x_i) \left( x_i^\top D_j^{(k+1)} - x^\top_i \widetilde{\overline{w}}_j \widetilde{\overline{w}}^\top_j D_j^{(k+1)} \right) \\ 
    &\qquad - \frac{1}{n} \sum_{i \in S_F^{(k+1)}} ( h_{\widetilde{\theta}}(x_i) - y_i) \left( x_i^\top D_j^{(k+1)} - x^\top_i \widetilde{\overline{w}}_j \widetilde{\overline{w}}^\top_j D_j^{(k+1)} \right)  \\
    &> \left\lVert D_j^{(k+1)} \right\rVert^2 - \left( \widetilde{\overline{w}}^\top_j D_j^{(k+1)} \right)^2 \\
    &\qquad- 2 m \left\lVert D_j^{(k+1)} \right\rVert \alpha^{2 c_U} - 2 \left\lVert D_j^{(k+1)} \right\rVert \alpha^{\varepsilon' / 6} \\
    &> \left\lVert D_j^{(k+1)} \right\rVert^2 - \left( \widetilde{\overline{w}}^\top_j D_j^{(k+1)} \right)^2 - \alpha^{\varepsilon' / 7},
\end{align}

where the final inequality is valid for $\alpha$ sufficiently small. Similar workings to those which led to \eqref{eq:basis_case_alignment_bound_general} give 

\begin{align}
    &\widetilde{\overline{w}}_j^\top \left( \tau_3 + \frac{3 \varepsilon}{2 \left\lVert D_{j^\star}^{(k)} \right\rVert} \right) D_j^{(k+1)} \\
    &\qquad> \left( \left\lVert D_j^{(k+1)} \right\rVert - \alpha^{\varepsilon' / 14} \right) \left( 1 - \exp_\alpha \left( \left( \left\lVert D_j^{(k+1)} \right\rVert - \varepsilon'/14 \right) \frac{3 \varepsilon}{2 \left\lVert D_{j^\star}^{(k)} \right\rVert} \right) \right) \\
    &\qquad> \left\lVert D_j^{(k+1)} \right\rVert - \alpha^{ \varepsilon'/15},
\end{align}

where the final inequality is valid for $\varepsilon'$ selected sufficiently small dependent on $\varepsilon$ (and $\alpha$ sufficiently small in turn).
\end{subproof}

\begin{subproof}[Proof of \Cref{lemma_phase_3_unfitted_norms}]

At this point, we are in a position to estimate the deviation of the norms of the neurons $j \in N_U^{(k+1)}$ from that predicted by \Cref{alg:s.t.s}. A simple Gr{\"o}nwall argument gives that

\begin{align}
    \lVert \widetilde{w}_j(\tau_4) \rVert &\leq \lVert \widetilde{w}_j(\tau_1) \rVert  \alpha^{- (\tau_4-\tau_1) \sqrt{\frac{2 \mathcal{L}(0)}{n}}} \\
    &< \exp_\alpha \left( -\ell_j^\circ(\tau_1) - \frac{\varepsilon}{2} - \sqrt{\frac{2\mathcal{L}(0)}{n}} \left(  \frac{9 \varepsilon}{2 \left\lVert D_{j^\star}^{(k)} \right\rVert} + \frac{\varepsilon'}{\left\lVert D_{j^\star}^{(k)} \right\rVert } \right) \right), \label{phase_3_unfitted_lower_bound} 
\end{align}

where above we have combined \eqref{phase_1_unfitted_norm_upper_bound_tau1} with the upper bound on $\tau_4-\tau_1$ from \eqref{phase_3_tau_4_tau_1_max_diff}. Note that $|\ell_j^\circ(t)-\ell_j^\circ(t')| \leq \left\lVert D_j^{(k)} \right\rVert(t-t')$ for any $t,t' \in (t_k, \infty)$. Using \eqref{phase_3_tau_4_tau_1_max_diff} again, we get

\begin{align}
    |-\ell_j^\circ(\tau_1) + \ell_j^\circ(\tau_4) | &<  \left\lVert D_j^{(k)} \right\rVert | \tau_1 - \tau_4 | \\
    &< \left\lVert D_j^{(k)} \right\rVert \left( \frac{9 \varepsilon}{2 \left\lVert D_{j^\star}^{(k)} \right\rVert} + \frac{\varepsilon'}{\left\lVert D_{j^\star}^{(k)} \right\rVert } \right) . \label{phase_3_abs_diff_ell_js}
\end{align}

Putting \eqref{phase_3_abs_diff_ell_js} back into \eqref{phase_3_unfitted_lower_bound}, we conclude that for any $j \in N_U^{(k+1)}$ and for any $t \in [\tau_1, \tau_4]$,

\begin{align}
    \lVert \widetilde{w}_j(\tau_4) \rVert &< \exp_\alpha \left( -\ell_j^\circ(\tau_4) - \frac{\varepsilon}{2} - \left( \left\lVert D_j^{(k)} \right\rVert + \sqrt{\frac{2\mathcal{L}(0)}{n}} \right) \left( \frac{9 \varepsilon}{2 \left\lVert D_{j^\star}^{(k)} \right\rVert} + \frac{\varepsilon'}{\left\lVert D_{j^\star}^{(k)} \right\rVert }  \right) \right)  . \label{phase_3_unfitted_final_upper_bound} 
\end{align}

For the converse inequality, we argue in an identical fashion to obtain

\begin{align}
    \lVert \widetilde{w}_j(\tau_4) \rVert &> \exp_\alpha \left( -\ell_j^\circ(\tau_4) + \frac{\varepsilon}{2} + \left( \left\lVert D_j^{(k)} \right\rVert + \sqrt{\frac{2\mathcal{L}(0)}{n}} \right) \left( \frac{9 \varepsilon}{2 \left\lVert D_{j^\star}^{(k)} \right\rVert} + \frac{\varepsilon'}{\left\lVert D_{j^\star}^{(k)} \right\rVert}  \right) \right).  \label{phase_3_unfitted_final_lower_bound}
\end{align}

\end{subproof}
This completes the proof of \Cref{lemma_phase_3}.
\end{proof}

We can now provide a result from which \Cref{lem:network_post_saddle} follows as a corollary. We will verify that the alignment, norm, and residual conditions propagate forwards after the jump having applied \Cref{lemma_phase_3}. We will also verify that the norms of neurons in the set $N_U^{(k+1)}$ consequently grow at a rate close to that predicted by \Cref{alg:s.t.s} after the jump. Define for any $\kappa'' \in (t_{k+1}, t_{k+2})$ the time 

\begin{align}
    \tau_5 := \inf \Big\{ t \geq & \tau_4 \; \big\vert \; \exists \, j \in [m] \; \text{s.t.} \; \widetilde{\overline{w}}_j^\top \overline{D}_j^{(k+1)} \leq 1 - \alpha^{\varepsilon'/32}  \\
        & \;\; \mathrm{or} \; \exists \, i \in S_F^{(k+1)} \; \mathrm{s.t.} \; | h_{\widetilde{\theta}}(x_i) - y_i | \, \geq \alpha^{\varepsilon'/8} \label{phase_4_residual_defn}\\
        & \;\; \mathrm{or} \; \exists \, j \in N_F^{(k+1)} \; \mathrm{s.t.} \;\lVert \widetilde{w}_{j} \rVert > 2 \max_{j \in [m]} \left\lVert D_j^{(k+1)} \right\rVert \label{phase_4_lipschitz_defn} \\
        & \;\; \mathrm{or} \; \exists \, j \in N_U^{(k+1)} \; \mathrm{s.t.} \; \lVert \widetilde{w}_j \rVert \geq \alpha^{\varepsilon'} \label{phase_4_norm_defn} \Big\} \bigwedge  \kappa''
\end{align}

This phase is very similar to that described in \Cref{phase_1}, in that it is also a period of slow growth during which the network parameters do not change significantly. Indeed, the structure of the proof is almost identical to that of \Cref{lemma_phase_1}.

\begin{lem} \label{lemma_phase_4}
    For all $\kappa$ and $\kappa''$ such that $\kappa \in (t_k, t_{k+1})$ and $\kappa'' \in (t_{k+1}, t_{k+2})$, there exists $\varepsilon^*$ such that for all $\varepsilon \in (0, \varepsilon^*)$ there exists $\varepsilon'^*$ such that for all $\varepsilon' \in (0, \varepsilon'^*)$ there exists $\alpha^*$ such that for all $\alpha \in (0, \alpha^*)$ the following are true: 

    \begin{enumerate}
        \item $\tau_5 = \kappa''$. \label{lemma_phase_4_duration}
        \item $\forall i \in S_F^{(k+1)}, \, \forall t \in [\tau_4, \tau_5], \; |h_{\widetilde{\theta}}(x_i)-y_i|<\alpha^{\varepsilon'/8}$. \label{lemma_phase_4_residual}
        \item $\forall j \in [m], \, \forall t \in [\tau_4, \tau_5], \; \widetilde{\overline{w}}_j^\top \overline{D}_j^{(k+1)} > 1 - \alpha^{\varepsilon'/32}$ \label{lemma_phase_4_alignment}
        \item $\forall j \in N_F^{(k+1)}, \forall t \in [\tau_4, \tau_5], \; \sqrt{n \left\lVert D_j^{(k)} \right\rVert} - \alpha^{\varepsilon'/10} < \lVert \widetilde{w}_j \rVert < \sqrt{n \left\lVert D_j^{(k)} \right\rVert} + \alpha^{\varepsilon'/10}$. \label{lemma_phase_4_fitted_norms}
        \item $\forall j \in N_U^{(k+1)}, \, \forall t \in [\tau_4, \tau_5], \; \left\vert \ell_j^\circ(t) +  \log_{\alpha} ( \lVert \widetilde{w}_j \rVert )\right\vert < \zeta_j(\varepsilon, \varepsilon') + \varepsilon'$, where $\zeta_j(\varepsilon, \varepsilon')$ is as defined in \Cref{lemma_phase_3}. \label{lemma_phase_4_unfitted_norms}
    \end{enumerate}
\end{lem}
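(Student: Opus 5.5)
The proof follows the template of \Cref{lemma_phase_1}, as a bootstrap on the stopping time $\tau_5$. The initial conditions at $\tau_4$ come from \Cref{lemma_phase_3}. Residuals on $S_F^{(k+1)}$ are below $\alpha^{\varepsilon'/6}$. Fitted norms are within $\alpha^{\varepsilon'/8}$ of their targets. Unfitted neurons $j\in N_U^{(k+1)}$ are aligned to within $\alpha^{\varepsilon'/15}$ of $\overline D_j^{(k+1)}$, and their norm exponents are within $\zeta_j$. Two gaps remain. First, the alignment at $\tau_4$ of the fitted neurons $j\in N_F^{(k+1)}$ has not yet been established. Second, the bound $\|\widetilde w_j\|<\alpha^{\varepsilon'}$ for $j\in N_U^{(k+1)}$ is needed at time $\tau_4$ itself.

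The second gap is immediate. Since $\ell_j^\circ(t_{k+1})<0$ for $j\neq j^\star$, we have $-\ell_j^\circ(\tau_4)\geq 4c_U-O(\varepsilon+\varepsilon')$. With $\zeta_j$ small, this gives $\|\widetilde w_j(\tau_4)\|<\alpha^{2c_U}<\alpha^{\varepsilon'}$.

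For the first gap, I would track the fitted neurons across $[\tau_1,\tau_4]$ as announced before \Cref{lemma_phase_3}. For $j\in N_F^{(k)}$, note that $D_j^{(k+1)}=D_j^{(k)}$ and that $\fD_j$ involves only data in $S_F^{(k)}\subseteq S_F^{(k+1)}$. The derivative of $\widetilde{\overline w}_j^\top D_j^{(k)}$ is bounded below by $-2\|D_j^{(k)}\|\,\|E\|$. The residual bounds from \Cref{lemma_phase_2a,lemma_phase_2b,lemma_phase_3} are uniformly $\alpha^{c\varepsilon'}$, and the interval has length $O(\varepsilon)$. Integrating from the $\tau_1$ alignment gives a loss of at most $\log(1/\alpha)\alpha^{c\varepsilon'}$. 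For $j^\star$ I would do the same starting from the $\tau_3$ alignment of \Cref{lemma_phase_2b}.

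Next I treat $[\tau_4,\tau_5]$ item by item.

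\textbf{Residual.} Repeat the $E^{(k+1)}$ computation of \Cref{lemma_phase_1_residual}. The sum over $N_F^{(k+1)}$ is nonpositive, by the square argument \eqref{residual_sum_over_fitted_negative}. The contribution of the unfitted neurons is $O(\log(1/\alpha)\alpha^{2\varepsilon'}\alpha^{\varepsilon'/64})$, using the growth bound and the alignment-implied bound $\widetilde{\overline w}_j^\top x_i\lesssim\alpha^{\varepsilon'/64}$ for $i\in S_F^{(k+1)}$ from \Cref{alignment_observation}. Integrating over the bounded interval $[\tau_4,\kappa'']$ keeps $\|E^{(k+1)}\|$ below $\alpha^{\varepsilon'/7}$.

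\textbf{Alignment.} For fitted neurons, $\|\fD_j\|\lesssim\alpha^{\varepsilon'/8}$, so the drift over time $O(1)$ is at most $\log(1/\alpha)\alpha^{\varepsilon'/8}$. For unfitted neurons I use the Riccati lower bound $f'\geq \log(1/\alpha)(\|D_j^{(k+1)}\|^2-f^2-\alpha^{c\varepsilon'})$, exactly as in \eqref{phase_1_alignment_args}. This gives monotone convergence and keeps alignment above $1-\alpha^{\varepsilon'/32}$.

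\textbf{Norms.} Fitted norms follow by Gr\"onwall with $\|E^{(k+1)}\|$ small, as in \eqref{phase_1_initial_lower_bound_fitted_norms}. For unfitted norms, the growth rate is $\|D_j^{(k+1)}\|\pm O(\alpha^{c\varepsilon'})$. Gr\"onwall from $\tau_4$ then gives deviation at most $\zeta_j+O(\log(1/\alpha)\alpha^{c\varepsilon'})<\zeta_j+\varepsilon'$, since $\ell_j^\circ$ has slope $\|D_j^{(k+1)}\|$ on $[t_{k+1},t_{k+2}]$.

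\textbf{Duration.} Finally, $\tau_5=\kappa''$ because none of the conditions break. The Lipschitz condition holds since fitted norms stay near $\sqrt{n\|D_j\|}$. The growth condition $\|\widetilde w_j\|<\alpha^{\varepsilon'}$ holds because $-\ell_j^\circ(\kappa'')>0$ is bounded away from zero, uniformly in the proof parameters. So, choosing $\varepsilon,\varepsilon'$ small relative to $-\max_j\ell_j^\circ(\kappa'')$, we get $-\ell_j^\circ(t)-\zeta_j-\varepsilon'>\varepsilon'$.

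I expect the main obstacle to be exactly this coupling: $\varepsilon'$ must be small enough relative to $\kappa''$ while remaining admissible for \Cref{lemma_phase_3}. Closely related is the cross-phase alignment estimate for $N_F^{(k+1)}$ and $j^\star$, which needs the residual bounds of all earlier phases to carry a uniform power $\alpha^{c\varepsilon'}$ so that it beats the $\log(1/\alpha)$ factor.
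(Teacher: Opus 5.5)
Your proposal is correct and follows essentially the same route as the paper's proof: a bootstrap on $\tau_5$ seeded by \Cref{lemma_phase_3}, the square argument on $E^{(k+1)}$ for the residual, a Riccati-type lower bound for the unfitted neurons' alignment, Gr\"onwall for all norms, and integration of the fitted neurons' alignment across phases starting from $\tau_1$ (from $\tau_3$ for $j^\star$), with $\tau_5=\kappa''$ following because $-\ell_j^\circ(\kappa'')>0$ dominates $\zeta_j(\varepsilon,\varepsilon')+2\varepsilon'$. The differences are only organizational: you make explicit the initial conditions at $\tau_4$ that the paper leaves implicit, namely the bound $\lVert\widetilde{w}_j(\tau_4)\rVert<\alpha^{\varepsilon'}$ for $j\in N_U^{(k+1)}$ and the alignment of $N_F^{(k+1)}$; your extra alignment factor in the residual estimate is harmless but unnecessary.
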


\begin{proof}

Given any $\kappa'' \in (t_{k+1}, t_{k+2})$, \Cref{lemma_phase_3_duration} \Cref{lemma_phase_3} guarantees that we may select $\varepsilon$ and $\varepsilon'$ sufficiently small that $\tau_4 < \kappa''$. Assume that this has been done and that $\alpha$ has been selected sufficiently small that all of the conditions of \Cref{lemma_phase_3} hold. It remains to check that all of these conditions remain stable for $t \in [\tau_4, \kappa'']$. 

\begin{subproof}[Proof of \Cref{lemma_phase_4_residual}]

In a similar manner to workings for pervious phases, we have only to estimate 

\begin{equation}
    \frac{1}{2 \log(1/\alpha) } \frac{\df}{\df t} \left\lVert E^{(k+1)} \right\rVert^2 \leq - \frac{1}{n} \sum_{i \in S_F^{(k+1)}} \sum_{j \in N_U^{(k+1)}} A_{i, j} \lVert \widetilde{w}_j \rVert^2 \fD_j^\top \widetilde{\overline{w}}_j \widetilde{\overline{w}}_j^\top x_i x_i^\top E^{(k+1)}.
\end{equation}

From the uniform bound on $j \in N_U^{(k+1)}$ given by \eqref{phase_4_norm_defn}, we have for $i \in S_U^{(k+1)}$ that on $[\tau_4, \kappa'']$, $ |h_{\widetilde{\theta}}(x_i)| < m \alpha^{2 \varepsilon'}$. Hence, 

\begin{equation}
    \frac{\df}{\df t} \left\lVert E^{(k+1)} \right\rVert \leq \frac{m}{n} \sqrt{2 \mathcal{L}(0)} \log(1/\alpha) \alpha^{2 \varepsilon'}. 
\end{equation}

Integration implies that 

\begin{align}
    \left\lVert E^{(k+1)} \right\rVert &< \alpha^{\varepsilon'/6} + \frac{m}{n} \sqrt{2 \mathcal{L}(0)} \log(1/\alpha) \alpha^{2 \varepsilon'} (\kappa'' - \tau_4) .
\end{align}

Hence, 

\begin{align}
    \max_{i \in S_U^{(k+1)}} | h_{\widetilde{\theta}}(x_i) - y_i| &<  n \left( \alpha^{\varepsilon'/6} + \frac{m}{n} \sqrt{2 \mathcal{L}(0)} \log(1/\alpha) \alpha^{2 \varepsilon'} (\kappa'' - \tau_4) \right) \\
    &< \alpha^{\varepsilon' / 8}
\end{align}

for $\alpha$ sufficiently small, as required.
\end{subproof}

\begin{subproof}[Proof of \Cref{lemma_phase_4_alignment}] We again split this argument into the cases $j = j^\star$, $j \in N_F^{(k+1)} \setminus \{ j^\star \}$, and $j \in N_U^{(k+1)}$. For the case $j \in N_F^{(k+1)} \setminus \{ j^\star \}$, note that \Cref{lemma_phase_1_alignment} \Cref{lemma_phase_1} provides an initial condition for the alignment of these neurons in the directions $D_j^{(k+1)}$ (since for such neurons, $D_j^{(k+1)} = D_j^{(k)}$ by construction). We can reuse arguments similar to those which led to \eqref{phase_1_alignment_derivative_bound_fitted} to obtain

\begin{align}
    \frac{1}{\log(1/\alpha)} \frac{\df}{\df t} \widetilde{\overline{w}}_j^\top D_j^{(k)} &> - \frac{2 \left\lVert D_j^{(k+1)} \right\rVert}{\sqrt{n}} \alpha^{\varepsilon'/8},
\end{align} 

given that $\lVert E^{(k)} \rVert < \alpha^{\varepsilon'/8}$ for $t \in [\tau_1, \tau_5]$. Integrating over this time interval, we get

\begin{align}
    \widetilde{\overline{w}}_j^\top D_j^{(k)} &> \widetilde{\overline{w}}_j^\top(\tau_1) D_j^{(k)} - \frac{2 \left\lVert D_j^{(k+1)} \right\rVert}{\sqrt{n}} \log(1/\alpha) \alpha^{\varepsilon'/8} (\tau_5-\tau_1) \\
    &> \left\lVert D_j^{(k+1)} \right\rVert \left(1-\alpha^{\varepsilon/2} \right) - \frac{2 \left\lVert D_j^{(k+1)} \right\rVert}{\sqrt{n}} \log(1/\alpha) \alpha^{\varepsilon'/8} (\tau_5-\tau_1) \\
    &>  \left\lVert D_j^{(k)} \right\rVert \left(1-\alpha^{\varepsilon' / 9} \right),
\end{align} 

where the final inequality is valid for $\alpha$ sufficiently small. For the case $j = j^\star$, note that \Cref{lemma_phase_2b_alignment} \Cref{lemma_phase_2b} provides an initial condition for the alignment of neuron $j^\star$ in the direction $D_{j^\star}^{(k)}$ (where again we have $D_{j^\star}^{(k+1)} = D_{j^\star}^{(k)}$ by construction). Identical reasoning to the case $j \in N_F^{(k+1)} \setminus \{ j^\star \}$, except integrating from $\tau_3$ to $\tau_5$ with initial conditions $\widetilde{\overline{w}}^\top_j(\tau_3) D_j^{(k+1)} > \left\lVert  D_j^{(k+1)} \right\rVert \left( 1 - \alpha^{\varepsilon'} \right)$, gives the desired lower bound for $\alpha$ sufficiently small. For the case $j \in N_U^{(k+1)}$, we have

\begin{align}
    \frac{1}{\log(1/\alpha)}\frac{\df}{\df t}  \widetilde{\overline{w}}^\top_j D_j^{(k+1)} &= \left\lVert D_j^{(k+1)} \right\rVert^2 - \left( \widetilde{\overline{w}}^\top_j D_j^{(k+1)}\right)^2 \\
    &\qquad - \frac{1}{n} \sum_{ i \in S_j^{(k+1)}} h_{\widetilde{\theta}}(x_i) \left( x_i^\top D_j^{(k+1)} - x_i^\top \widetilde{\overline{w}}_j \widetilde{\overline{w}}^\top_{j} D_j^{(k+1)} \right) \\ 
    &\qquad - \frac{1}{n} \sum_{i \in S_F^{(k+1)}} A_{i,j}( h_{\widetilde{\theta}}(x_i) - y_i) \left( x_i^\top D_j^{(k+1)} - x^\top_i \widetilde{\overline{w}}_j \widetilde{\overline{w}}^\top_{j} D_j^{(k+1)} \right)  \\
    &> \left\lVert D_j^{(k+1)} \right\rVert^2 - \left( \widetilde{\overline{w}}^\top_j D_j^{(k+1)} \right)^2 \\
    &\qquad - 2 \left\lVert D_j^{(k+1)} \right\rVert m \alpha^{129 \varepsilon' / 64} - 2 \left\lVert D_j^{(k+1)} \right\rVert \alpha^{9 \varepsilon' / 64} \\
    &> \left\lVert D_j^{(k+1)} \right\rVert^2 - \left( \widetilde{\overline{w}}^\top_j D_j^{(k+1)} \right)^2 - \alpha^{\varepsilon' / 8}.
\end{align}

By similar arguments to previous phases, we have convergence of alignment monotonically for $t \in [\tau_4, \tau_5]$ to a value lower bounded by $\left\lVert D_j^{(k+1)} \right\rVert \left( 1 - \alpha^{\varepsilon' / 16} \right)$. Combining this with the initial conditions from \Cref{lemma_phase_3_unfitted_alignment} \Cref{lemma_phase_3}, we conclude that 

\begin{equation}
    \widetilde{\overline{w}}^\top_j D_j^{(k+1)} > \left\lVert D_j^{(k+1)} \right\rVert \left( 1 - \alpha^{\varepsilon' / 32} \right)
\end{equation}

for all $t \in [\tau_4, \tau_5]$ provided $\alpha$ is selected sufficiently small.
\end{subproof}

\begin{subproof}{Proof of \Cref{lemma_phase_4_fitted_norms}} We verify that the norms of neurons in the set $N_F^{(k)}$ did not change significantly on the interval $[\tau_4, \tau_5]$, which will lead us to conclude in combination with \eqref{phase_4_norm_defn} that \eqref{phase_4_lipschitz_defn} did not fail on this interval. \Cref{lemma_phase_3_fitted_norms} \Cref{lemma_phase_3} provides upper and lower bounds on the norms of neurons in the set $N_F^{(k+1)}$ at time $\tau_4$. In light of these initial conditions and the bound on the loss on data in the set $S_F^{(k+1)}$ given in \eqref{phase_4_residual_defn}, identical workings to those yielding \eqref{phase_1_initial_lower_bound_fitted_norms} give

\begin{equation}
    \sqrt{n \left\lVert D_j^{(k)} \right\rVert } - \alpha^{\varepsilon'/10} < \lVert \widetilde{w}_j \rVert < \sqrt{n \left\lVert D_j^{(k)} \right\rVert } + \alpha^{\varepsilon'/10}
\end{equation}

for all $t \in [\tau_4, \tau_5]$ for $\alpha$ sufficiently small.
  
\end{subproof}

\begin{subproof}[Proof of \Cref{lemma_phase_4_unfitted_norms,lemma_phase_4_duration}] To estimate the growth of any neuron $j \in N_U^{(k+1)}$, we follow similar arguments to those which yielded \eqref{phase_1_initial_norm_lower_bound} in the proof of \Cref{lemma_phase_1} to obtain

\begin{align}
    \frac{1}{\log(1/\alpha)} \frac{\df}{\df t} \lVert \widetilde{w}_j \rVert &> \left( \left\lVert D_j^{(k+1)} \right\rVert - \alpha^{\varepsilon' / 32} - m \alpha^{2 \varepsilon'} \right) \lVert \widetilde{w}_j \rVert - \sqrt{2} \alpha^{9\varepsilon'/64} \lVert \widetilde{w}_j \rVert \\
    &> \left( \left\lVert D_j^{(k+1)} \right\rVert - \alpha^{\varepsilon' / 33}  \right) \lVert \widetilde{w}_j \rVert.
\end{align}

Gr{\"o}nwall's inequality yields for $\alpha$ sufficiently small the inequalities 

\begin{align}
    \lVert \widetilde{w}_j(t) \rVert &> \lVert \widetilde{w}_j(\tau_4) \rVert \alpha^{- \left\lVert D_j^{(k+1)} \right\rVert (t - \tau_4) + \varepsilon'} \\
    &> \alpha^{- \ell_j^\circ(t) + \zeta_j(\varepsilon, \varepsilon') + \varepsilon' }, 
\end{align}

where in the second inequality we inserted \eqref{phase_3_unfitted_final_lower_bound}. Analogous arguments yield the claimed upper bound 

\begin{equation}
    \lVert \widetilde{w}_j(t) \rVert <  \alpha^{- \ell_j^\circ(t) - \zeta_j(\varepsilon, \varepsilon') - \varepsilon' }, 
\end{equation}

where we have inserted \eqref{phase_3_unfitted_final_upper_bound}. By construction, for every $j \in N_U^{(k+1)}$, $-\ell_j^\circ(\kappa'') > 0$. Therefore, we may select $\varepsilon$ and $\varepsilon'$ sufficiently small that for all such $j$, $\lVert \widetilde{w}_j(\kappa'') \rVert < \alpha^{\varepsilon'}$. For this choice of $\varepsilon$ and $\varepsilon'$, we conclude that necessarily $\tau_5 = \kappa''$, since none of the other conditions in the definition of $\tau_5$ break.
\end{subproof}
This completes the proof of \Cref{lemma_phase_4}.
\end{proof}

\subsection{Final Saddle} \label{final_saddle}

We provide a lemma which is a version of \Cref{lem:network_pre_saddle} for $t>t_p$. Specifically, we prove uniform convergence under the assumption the conditions of \Cref{inductive_hypothesis} hold at some time $\kappa>t_p$.

\begin{lem} \label{lemma_final_saddle}
    Suppose that for some $\kappa \in (t_p, \infty)$, there exists $\delta>0$ such that for all $\xi>0$, there exists $\alpha^*>0$ such that for all $\alpha \in (0,\alpha^*)$:
    
    \begin{enumerate}
        \item  $\forall j \in N_U^{(p)}, \; \left\vert \ell_j^\circ(\kappa) +  \log_{\alpha} ( \lVert \widetilde{w}_j(\kappa) \rVert )\right\vert < \xi .$ \label{final_saddle_unfitted_norms}
        \item $\forall j \in N_F^{(p)}, \; \Big\vert n \left\lVert D_j^{(p)} \right\rVert - \lVert \widetilde{w}_j(\kappa) \rVert^2 \Big\vert < \alpha^{\delta} . $ \label{final_saddle_fitted_norms}
        \item $\forall i \in [n], \; | h_{\widetilde{\theta}(\kappa)}(x_i) - y_i | < \alpha^{\delta} .$  \label{final_saddle_residual}
        \item $\forall j \in [m], \; \widetilde{\overline{w}}^\top_j(\kappa) \overline{D}^{(p)}_j > 1 - \alpha^{\delta} .$ \label{final_saddle_alignment}
    \end{enumerate} 

    Then given any $\kappa' \in [\kappa, \infty)$, $\widetilde{\theta}^\alpha$ converges uniformly to $\theta^\circ$ on $[\kappa, \kappa']$. 
\end{lem}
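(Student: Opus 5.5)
This is essentially \Cref{lemma_phase_4} with no upcoming jump, so I would run a slow-growth stopping-time argument on $[\kappa,\kappa']$. Every datum is fitted here: by the stopping rule of \Cref{alg:s.t.s} together with \Cref{ass:A.j.star}, $S_U^{(p)}=\emptyset$, hence $S_j^{(p)}=\emptyset$ and $D_j^{(p)}=0$ for every $j\in N_U^{(p)}$. Consequently $\ell_j^\circ$ stays constant, equal to $\ell_j^\circ(t_p)<0$, on $(t_p,\infty)$ for unfitted neurons. I would fix $\varepsilon\in(0,\delta)$, smaller than $\tfrac12\min_{j\in N_U^{(p)}}|\ell_j^\circ(\kappa)|$, and define
\[
\tau:=\inf\bigl\{t\ge\kappa : \exists i,\ |h_{\widetilde\theta}(x_i)-y_i|\ge\alpha^{\varepsilon};\ \text{or } \exists j\in N_F^{(p)},\ \widetilde{\overline w}_j^\top\overline D_j^{(p)}\le 1-\alpha^{\varepsilon/2};\ \text{or } \exists j\in[m],\ \|\widetilde w_j\|>2\max_{j'}\sqrt{n\|D_{j'}^{(p)}\|}+1;\ \text{or } \exists j\in N_U^{(p)},\ \|\widetilde w_j\|\ge\alpha^{\varepsilon}\bigr\}\wedge\kappa'.
\]
The third condition keeps $\widetilde\theta$ in a compact set, so Rademacher's theorem applies as in earlier phases. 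The goal is to show $\tau=\kappa'$ for $\alpha$ small.

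The first step is the residual. Here $E^{(p)}$ is the full residual vector $-\frac1n\sum_i(h-y_i)x_i$. Repeating the computation behind \eqref{derivative_of_residual_basic_bound}, the contributions of fitted neurons are non-positive squares (the argument of \eqref{residual_sum_over_fitted_negative}). Unfitted neurons contribute at most $\frac mn\sqrt{2\mathcal L(0)}\,\alpha^{2\varepsilon}\|E^{(p)}\|$. Therefore $\frac{\df}{\df t}\|E^{(p)}\|\le C\log(1/\alpha)\alpha^{2\varepsilon}$, and integrating over the bounded interval $[\kappa,\kappa']$ from the initial value $\alpha^\delta/\sqrt n$ keeps every residual below $\alpha^{3\varepsilon/2}$, since $\varepsilon<\delta$.

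The second step controls the neurons. For fitted neurons, $\|\fD_j\|\le\|E^{(p)}\|$, so the radial bound $|\frac{\df}{\df t}\|\widetilde w_j\||\le\log(1/\alpha)\|E^{(p)}\|\|\widetilde w_j\|$ and the tangential bound $\frac{\df}{\df t}\widetilde{\overline w}_j^\top D_j^{(p)}\ge-2\|D_j^{(p)}\|\log(1/\alpha)\|E^{(p)}\|$ both integrate to perturbations of size $O(\log(1/\alpha)\alpha^{3\varepsilon/2})$. Those perturbations are too small to trigger the norm or alignment conditions. For unfitted neurons, each $i$ with $\widetilde w_j^\top x_i>0$ is fitted, so $\|\fD_j\|\le\|E^{(p)}\|$ again. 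Grönwall then gives $\bigl|\log_\alpha\|\widetilde w_j(t)\|-\log_\alpha\|\widetilde w_j(\kappa)\|\bigr|\le(\kappa'-\kappa)\alpha^{\varepsilon}$, so $|\ell_j^\circ(t)+\log_\alpha\|\widetilde w_j(t)\||<\xi+o(1)$. This keeps $\|\widetilde w_j\|\to0$, well below $\alpha^{\varepsilon}$.

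The conclusion follows once no stopping condition breaks, giving $\tau=\kappa'$. For fitted neurons, $\widetilde w_j=\|\widetilde w_j\|\widetilde{\overline w}_j$ is then uniformly within $o(1)$ of $\sqrt{n\|D_j^{(p)}\|}\,\overline D_j^{(p)}=w_j^\circ$; the signs are trivial after the reduction $s_j=1$ from \Cref{app:autonomous}. The same holds for $a_j$ by \Cref{pr:bal}. Unfitted neurons converge uniformly to $\mathbf 0=w_j^\circ$. I expect the main obstacle to be that the alignment $\widetilde{\overline w}_j^\top\overline D_j^{(p)}$ of unfitted neurons has no restoring force, since $D_j^{(p)}=0$ makes it ill-defined. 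I would therefore simply drop that quantity for unfitted neurons, as the limit only needs their norms to vanish. Everything else reduces to the uniform smallness of $\|E^{(p)}\|$ obtained in the first step.
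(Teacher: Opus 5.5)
Your proposal is correct, and its second half (Grönwall on the norms, integrating the tangential inequality for fitted neurons, then combining norm and alignment) is what the paper does. The difference is in how you control the residual.

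You set up a stopping time $\tau$ and re-run the $\lVert E^{(p)}\rVert$ derivative computation from Phase 1. The paper instead observes that the loss is non-increasing along the flow. Item 3 at time $\kappa$ then gives $\lVert \fD_j(t)\rVert \le \sqrt{2\mathcal{L}(\widetilde\theta(t))/n} < \alpha^{\delta}/\sqrt{n}$ for every $j$ and every $t\ge\kappa$ at once, with no bootstrap and no auxiliary $\varepsilon$. In fact, since $S_F^{(p)}=[n]$, your $\lVert E^{(p)}\rVert^2$ equals $2\mathcal{L}/n$. Your non-positive-square argument also covers unfitted neurons, since all their active data are fitted. So your first step is a weaker re-derivation of loss monotonicity, and the stopping time is superfluous.

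Your route keeps the template of Phases 1 and 4. It also makes explicit that $\overline{D}_j^{(p)}$ is undefined for $j\in N_U^{(p)}$: the paper's item 4 quantifies over all $j\in[m]$, although its proof only uses it for fitted neurons. The paper's route is shorter and gives rates directly in terms of $\delta$ rather than an auxiliary $\varepsilon<\delta$.

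One small slip: keeping $\lVert E^{(p)}\rVert$ below $\alpha^{3\varepsilon/2}$, starting from $\alpha^{\delta}/\sqrt{n}$, needs $\varepsilon<2\delta/3$, not just $\varepsilon<\delta$. This is harmless, because the stopping rule only needs $n\lVert E^{(p)}\rVert<\alpha^{\varepsilon}$, and $\varepsilon<\delta$ does give that.
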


\begin{proof}

We note that \Cref{final_saddle_residual} \Cref{lemma_final_saddle} and the monotonicity of the gradient flow imply that for all $j \in [m]$ and for all $t \in [\kappa, \infty)$, $\left\lVert \fD_j \right\rVert < \frac{\alpha^{\delta}}{\sqrt{n}}$. To estimate the change in the network parameters from $\kappa$ to $\kappa'$, we have that for each $j \in [m]$,

\begin{align}
    \frac{1}{\log(1/\alpha)} \frac{\df}{\df t} \lVert \widetilde{w}_j \rVert &< \frac{\alpha^{\delta}}{\sqrt{n}} \lVert \widetilde{w}_j \rVert.
\end{align}

Gr{\"o}nwall's inequality gives 

\begin{align}
    \lVert \widetilde{w}_j \rVert &< \lVert \widetilde{w}_j (\kappa) \rVert \exp \left( \log(1/\alpha) \frac{\alpha^\delta}{\sqrt{n}} (t-\kappa) \right) \\
    &< \lVert \widetilde{w}_j (\kappa) \rVert \exp \left( \log(1/\alpha) \frac{\alpha^\delta}{\sqrt{n}} (\kappa'-\kappa) \right) \\
    &< \lVert \widetilde{w}_j (\kappa) \rVert \exp \left( \alpha^{\delta/2} \right) \\
    &< \lVert \widetilde{w}_j (\kappa) \rVert \left( 1 +  \alpha^{\delta/2} \right),
\end{align}

where the last two inequalities are valid for $\alpha$ sufficiently small. For $j \in N_U^{(p)}$, we obtain that for $\alpha$ sufficiently small, $\lVert \widetilde{w}_j \rVert < 2 \exp_\alpha \left(\frac{1}{2} \min_{j \in N_U^{(p)}} -\ell_j^\circ(\kappa) \right)$, by selecting $\xi < \frac{1}{2} \min_{j \in N_U^{(p)}} -\ell_j^\circ(\kappa)$ in \Cref{final_saddle_unfitted_norms} \Cref{lemma_final_saddle}. Noting that for $ \in N_U^{(p)}$, $-\ell_j^\circ(\kappa) > 0$, we conclude that such neurons converge uniformly to $(\mathbf{0})_{j \in N_U^{(p)}}$. Analogous arguments yield the lower bound

\begin{equation}
    \lVert \widetilde{w}_j \rVert > \lVert \widetilde{w}_j (\kappa) \rVert \left( 1 - \alpha^{\delta/2} \right).
\end{equation}

For $j \in N_F^{(p)}$ and $\alpha$ sufficiently small, we use \Cref{final_saddle_fitted_norms} \Cref{lemma_final_saddle} to obtain 

\begin{equation}
    \sqrt{n \left\lVert D_j^{(k)} \right\rVert} - \alpha^{\delta/3} < \lVert \widetilde{w}_j \rVert < \sqrt{n \left\lVert D_j^{(k)} \right\rVert} + \alpha^{\delta/3}. \label{final_saddle_fitted_norms_bound}
\end{equation}

For alignment of neurons $j \in N_F^{(p)}$, we obtain 

\begin{align}
    \frac{1}{\log(1/\alpha)} \frac{\df}{\df t} \widetilde{\overline{w}}_j^\top D_j^{(k)} &> - \frac{2 \left\lVert D_j^{(k)} \right\rVert }{ \sqrt{n} } \alpha^{\delta}.
\end{align}

Simple integration yields 

\begin{align}
    \widetilde{\overline{w}}_j^\top D_j^{(p)} &> \widetilde{\overline{w}}_j^\top(\kappa) D_j^{(p)} - \frac{2 \left\lVert D_j^{(p)} \right\rVert }{ \sqrt{n} } \alpha^{\delta} (t - \kappa) \\
    &> \left\lVert D_j^{(p)} \right\rVert \left( 1 - \alpha^{\delta} \right) - \frac{2 \left\lVert D_j^{(p)} \right\rVert }{ \sqrt{n} } \alpha^{\delta} (\kappa' - \kappa) \\
    &> \left\lVert D_j^{(p)} \right\rVert \left( 1 - \alpha^{\delta/2} \right), \label{final_saddle_alignment_fitted}
\end{align}

where the final inequality is valid for $\alpha$ sufficiently small. Combining \eqref{final_saddle_fitted_norms_bound} and \eqref{final_saddle_alignment_fitted}, we get for $j \in N_F^{(k)}$

\begin{align}
    \Big\lVert \widetilde{a}_j \widetilde{w}_j - n D_{j}^{(p)} \Big\rVert^2 &= \lVert \widetilde{w}_j \rVert^4 + n^2 \left\lVert D_{j}^{(p)} \right\rVert^2 - 2 n \widetilde{a}_j \widetilde{w}_j^\top D_j^{(p)} \\
    &< \left( \sqrt{n \left\lVert D_j^{(p)} \right\rVert} + \alpha^{\delta/3} \right)^4 + n^2 \left\lVert D_j^{(p)} \right\rVert^2 \\
    &\qquad - 2n \left\lVert D_j^{(p)} \right\rVert \left( \sqrt{n \left\lVert D_j^{(p)} \right\rVert} - \alpha^{\delta/3} \right)^2 \left( 1 - \alpha^{\delta/2} \right) \\
    &< \alpha^{\delta/4}, \label{final_saddle_unform_convergence_fitted}
\end{align}

for $\alpha$ sufficiently small. This yields the required uniform convergence on $[\kappa, \kappa']$ for neurons in the set $N_F^{(p)}$.
\end{proof}

\subsection{Proof of \texorpdfstring{\Cref{th:s.t.s}}{Theorem \ref*{th:s.t.s}}} \label{sect:final_proof_thm_2}

In this section, we prove \Cref{lem:network_pre_saddle} and \Cref{lem:network_post_saddle} using the multiphase analysis of the previous subsections. Once these two lemmas are proven, it will be possible to provide a short proof of \Cref{th:s.t.s}. The proof of \Cref{lem:network_pre_saddle} is a fairly direct application of \Cref{lemma_phase_1} (and in the case of the final saddle, the proof is provided in \Cref{lemma_final_saddle}).

\begin{proof}[Proof of \Cref{lem:network_pre_saddle}] 

From \cref{lemma_phase_1_duration} \Cref{lemma_phase_1}, we conclude there exists $\varepsilon$ sufficiently small such that there exists $\alpha$ sufficiently small that $\tau_1 > \kappa'$. For such $\varepsilon$ and $\alpha$, we have by definition of $\tau_1$ that for all $j \in N_U^{(k)}$ and for all $t \in [\kappa, \kappa']$, $\lVert \widetilde{w}_j \rVert<\alpha^{\varepsilon}$. As a consequence, neurons in the set $N_U^{(k)}$ converge uniformly to $(\mathbf{0})_{j \in N_U^{(k)}}$. For the neurons in the set $N_F^{(k)}$, we follow similar workings to those leading to \eqref{final_saddle_unform_convergence_fitted}. The combination of \Cref{lemma_phase_1_alignment} \Cref{lemma_phase_1} and \Cref{lemma_phase_1_fitted_norms} \Cref{lemma_phase_1} imply 

\begin{align}
    \Big\lVert \widetilde{a}_j \widetilde{w}_j - n D_{j}^{(k)} \Big\rVert^2 &= \big\lVert \widetilde{w}_j \big\rVert^4 + n^2 \left\lVert D_{j}^{(k)} \right\rVert^2 - 2 n \widetilde{a}_j \widetilde{w}_j^\top D_j^{(k)} \\
    &< \left( \sqrt{n \left\lVert D_j^{(k)} \right\rVert} + \alpha^{\varepsilon/4} \right)^4 + n^2 \left\lVert D_j^{(k)} \right\rVert^2 \\
    &\qquad - 2n \left\lVert D_j^{(k)} \right\rVert \left( \sqrt{n \left\lVert D_j^{(k)} \right\rVert} - \alpha^{\varepsilon/4} \right)^2 \left( 1 - \alpha^{\varepsilon/2} \right) \\
    &< \alpha^{\varepsilon/5},
\end{align}

for $\alpha$ sufficiently small. Therefore, $(\widetilde{a}_j \widetilde{w}_j)_{j \in N_F^{(k)}}$ converges uniformly to $\left( n D_j^{(k)} \right)_{j \in N_F^{(k)}}$ on $[\kappa, \kappa']$.
\end{proof}

Next, we prove \Cref{lem:network_post_saddle}, which recovers \Cref{inductive_hypothesis} at saddle $k+1$ and allows us to construct the argument for uniform convergence of the whole process. To prove this lemma, we require that given any any $\kappa'' \in (t_{k+1}, t_{k+2})$ (where we will take the convention that $t_{p+1} = \infty$), there exists $\delta''>0$ such that for all $\xi''>0$, there exists $\alpha_*>0$ such that for all $\alpha \in (0,\alpha^*)$:

    \begin{enumerate}
        \item $\forall j \in N_U^{(k+1)}, \; \left\vert \ell_j^\circ(\kappa'') +  \log_{\alpha} ( \lVert \widetilde{w}_j(\kappa'') \rVert )\right\vert < \xi''. $ \label{unfitted_norms_network_post_saddle}
        \item $\forall j \in N_F^{(k+1)}, \; \Big\vert n \left\lVert D_j^{(k+1)} \right\rVert - \lVert \widetilde{w}_j(\kappa'') \rVert^2 \Big\vert < \alpha^{\delta''} $ \label{fitted_norms_network_post_saddle}
        \item $\forall i \in S_F^{(k+1)}, \; | h_{\widetilde{\theta}(\kappa'')}(x_i) - y_i | < \alpha^{\delta''} .$ \label{residual_fitted_network_post_saddle}
        \item $\forall j \in [m], \; \widetilde{\overline{w}}^\top_j(\kappa'') \overline{D}^{(k+1)}_j > 1 - \alpha^{\delta''} .$ \label{alignment_network_post_saddle}
    \end{enumerate} 
    
The proof consists of two applications of \Cref{lemma_phase_4}: one choice of $\varepsilon, \varepsilon'$ in order to get the existence of $\delta''$ such that the alignment, norm, and loss conditions all hold at time $\kappa''$; the other choice of $\varepsilon, \varepsilon'$ in order to get the arbitrary precision of the norms of the unfitted neurons. That the same choice does not work to get all of the points at once is a result of the fact that our analysis ends up linking the alignment, norm, and loss conditions with the precision of the unfitted neurons' norms, so it is impossible to prove \Cref{lem:network_post_saddle} with a single choice of $\varepsilon, \varepsilon'$ (otherwise one would have to have $\delta''$ depending on $\xi''$).

\begin{proof}[Proof of \Cref{lem:network_post_saddle}]

Note that \Cref{lemma_phase_4} immediately guarantees that for $\varepsilon_0$ sufficiently small, for $\varepsilon'_0$ sufficiently small, and for $\alpha$ sufficiently small \Cref{alignment_network_post_saddle}, \Cref{fitted_norms_network_post_saddle}, and  \Cref{residual_fitted_network_post_saddle} immediately above all hold with $\delta'' = \varepsilon'_0/32$. For \Cref{alignment_network_post_saddle} immediately above, we note that \Cref{lemma_phase_4_unfitted_norms} \Cref{lemma_phase_4} states that for $\varepsilon$ sufficiently small one can choose $\varepsilon'$ sufficiently small that for $\alpha$ sufficiently small, $\alpha^{- \ell_j^\circ(\kappa'') + \zeta_j(\varepsilon, \varepsilon') + \varepsilon'} < \lVert \widetilde{w}_j(\kappa'') \rVert <  \alpha^{- \ell_j^\circ(\kappa'') - \zeta_j(\varepsilon, \varepsilon') - \varepsilon'}$. Simply choosing $\varepsilon_1$ and $\varepsilon'_1$ such that $| \zeta_j(\varepsilon_1, \varepsilon'_1) + \varepsilon' |  < \xi'' $ for all $j \in N_U^{(k+1)}$ gives the claim. 
\end{proof}

Finally, we are able to provide a proof of \Cref{th:s.t.s}. This proof consists of iteratively concluding uniform convergence on arbitrary closed intervals contained in $(t_{k}, t_{k+1})$ using \Cref{inductive_hypothesis}, and then using \Cref{lem:network_post_saddle} to apply the same reasoning to the next interval.

\begin{proof}[Proof of \Cref{th:s.t.s}] Let $\mathcal{S} \subset \mathbb{R} \setminus \{ t_0, t_1, \dots, t_p\}$ be a compact set. By compactness, there exists a cover of $\mathcal{S}$ by disjoint closed intervals $\left( [\kappa_k, \kappa'_k] \right)_{k=0}^{p}$ whose endpoints satisfy $\kappa_k, \kappa_k' \in (t_{k}, t_{k+1})$. We note from \Cref{induction_basis_case_lemma} that \Cref{inductive_hypothesis} holds for any $\kappa_0 \in \left( 0, \frac{1}{4} \sqrt{\frac{n}{2\mathcal{L}(0)}}\right)$. From \Cref{lem:network_pre_saddle}, we obtain that for such $\kappa_0$ and for any $\kappa_0' \in (\kappa_0, t_1) $, $\widetilde{\theta}$ converges uniformly to $\theta^\circ =  \mathbf{0}$. Furthermore, we have from \Cref{lem:network_post_saddle} that for any $\kappa_1 \in (t_1, t_2)$, \Cref{inductive_hypothesis} also holds. Iterating this argument, we can use \Cref{lem:network_pre_saddle} to conclude that there exists $\alpha^*_k>0$ such that $\widetilde{\theta}$ converges uniformly to $\theta^\circ$ on $[\kappa_k, \kappa_k']$ with rate $\alpha^{\delta_k}$ for some $\delta_k>0$, followed by \Cref{lem:network_post_saddle} to conclude that \Cref{inductive_hypothesis} holds at each $\kappa_{k+1} \in (t_{k+1}, t_{k+2})$. Since the number of saddles is finite, we may select $\alpha \in (0,\min_{k \in [p]} \alpha_k^*)$ so that each application of \Cref{lem:network_pre_saddle} and \Cref{lem:network_post_saddle} holds. This gives the claimed uniform convergence, the rate of which is $\alpha^{\delta^*}$ where $\delta^* = \min_{k \in [p]} \delta_k^*$.
\end{proof}

\section{Proof of \texorpdfstring{\Cref{thm:implicit_bias}}{Theorem \ref*{thm:implicit_bias}}}

We consider in this section the setting where we have only positive labels and condition all the probabilities on the fact that $a_i>0$ for all $i\in[m]$. The argument generalizes directly to the case of mixed signs, by treating positive and negative labels separately.

Recall that \Cref{thm:implicit_bias} provides a bound on the squared $\ell_2$-norm of $\theta_{\pred}$, which satisfies
\begin{align}
   \frac{1}{2} \lVert \theta_{\pred} \rVert^2 &= \sum_{j \in [m]} n \lVert D_j^{(p)} \rVert_2  \\
    &= \sum_{j \in [m]} \sqrt{\sum_{i \in S_j^{(p)}} y_i^2} \\
    &= \sum_{k \in [p]}\sqrt{\sum_{i \in S_U^{(k)} \cap S_{j_{*}^{(k)}}} y_i^2} .
\end{align}
\Cref{alg:s.t.s} relies on a specific ordering of the neurons, given by $(j_{\star}^{(1)}, \ldots, j_{\star}^{(p)})$, that depends on both the data and initialization scheme. Instead, our proof here considers a fixed ordering $\pi=(j_1, \ldots, j_m)$ that is independent of both data and initialization, and aims at bounding
\begin{equation}\label{eq:fixedpisum}
    \sum_{k \in [m]}\sqrt{\sum_{i \in S_U^{k} \cap S_{j_k}} y_i^2},
\end{equation}
where we define similarly the sets $S_U^{k}$ in a recursive way, as
\begin{gather}
    S_U^0 = [n], \quad S_U^{k+1} = S_U^k \setminus S_{j_k}\\
    S_{j} = \lbrace i \in [n] \mid A_{i, j} = 1 \rbrace,
\end{gather}
i.e., $S_U^k$ represent the data points that remain to be fitted after increment $k$, and $S_{j}$ refers to the data points with which neuron $j$ is positively correlated.

Note that when the ordering $\pi$ coincides with the one induced by $j_\star$ on its first $p$ elements, \cref{eq:fixedpisum} is exactly $\frac{1}{2} \lVert \theta_{\pred} \rVert^2$. We thus aim at bounding this quantity with high probability over all possible orderings by a union bound argument, which then directly translates to a bound of $\frac{1}{2} \lVert \theta_{\pred} \rVert^2$.

Note that when initializing the neurons according to \Cref{ass:init}, and conditioning on the fact that\footnote{In the remaining of the section, we omit the conditioning on $a_i>0$ for ease of exposition.} $a_{i}>0$, $(A_{i,j})_{j \in [m],\, i \in [n]}$ are independent $\mathrm{Bernoulli}\left(\frac{1}{2}\right)$ variables. 
For each $i \in [n]$, we define
\begin{equation}
    Y_i^{k}  = \iind{ i \in S_U^{k} },
\end{equation}
which is the random variable that is equal to~$1$ if the data point~$i$ remains to be fitted after increment~$k$. Obviously, we have
\begin{equation}
    \big| S_U^{k} \big| = \sum_{i=1}^{n} Y_i^{k}.
\end{equation}

For a fixed $\delta \in (0,\frac{1}{2})$, we define for each $k$ the events
\begin{align}
    G_k = \left\{ \bigg\vert \; \sum_{i=1}^{n} Y_i^{k} - \frac{1}{2} \sum_{i=1}^{n} Y_i^{k-1} \; \bigg\vert \leq \delta \sum_{i=1}^{n} Y_i^{k-1} \right\}.
\end{align}

The interpretation of the event~$G_k$---which has high probability, as shown below---is that at iteration~$k$ of our incremental algorithm (for the fixed ordering~$\pi$), the cardinality of the set of unfitted data has almost halved from the previous iteration.

Finally, denote for some $\rho\in(0,\frac{1}{2})$, to be fixed later,
\begin{equation}
    k^\star = \frac{(1-\rho) \log(n)}{\log(2)} .
\end{equation}
For the remainder of the proof, we assume $m \geq k^\star$ to simplify notation and exposition. The upper bounds extend straightforwardly to the case $m < k^\star$ by truncating the estimator considered here.
\begin{prop} \label{prop_half_bound_failure}
    Given any $0<\rho<\frac{1}{2}$ and a fixed deterministic ordering $\pi=(j_1,\ldots, j_m)$, for any $\delta\in\left(0,\frac{\rho \log(2)}{8(1-\rho)}\right)$ and $n$,
    \begin{equation}
        \mathbb{P} \left( \forall k\leq k^{\star}, G_k \right) \geq 1-3\log(n)\exp \left( - \frac{4}{3}n^{\rho/2} \delta^2 \right) .
    \end{equation}
\end{prop}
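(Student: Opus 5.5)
The approach is to condition step by step and apply a Chernoff bound at each increment, then take a union bound over the at most $k^\star \le \log(n)/\log 2 \le 3\log(n)/2$ increments. The key structural fact is that, for the fixed deterministic ordering $\pi$, the column $(A_{i,j_k})_{i\in[n]}$ is independent of $\mathcal{F}_{k-1} := \sigma\big((A_{i,j_l})_{i\in[n],\, l<k}\big)$. Moreover $S_U^{k-1}$ is $\mathcal{F}_{k-1}$-measurable. By construction $Y_i^k = Y_i^{k-1}(1-A_{i,j_k})$. So, conditionally on $\mathcal{F}_{k-1}$, the quantity $\sum_i Y_i^k$ is a sum of $N_{k-1} := \sum_i Y_i^{k-1}$ independent $\mathrm{Bernoulli}(1/2)$ variables, i.e.\ $\mathrm{Bin}(N_{k-1}, 1/2)$.

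\textbf{Step 1 (per-step concentration).} Conditionally on $\mathcal{F}_{k-1}$, a multiplicative Chernoff bound (or Bernstein with variance $N_{k-1}/4$) gives
\[
\mathbb{P}\big(G_k^c \mid \mathcal{F}_{k-1}\big) \le 2\exp\!\big(-c\,\delta^2 N_{k-1}\big)
\]
for an explicit constant $c$. Hoeffding already gives $c=2$. I would tune the form to reach the stated $\tfrac{4}{3}$ after the lower bound on $N_{k-1}$ in Step 2 is inserted; the factor $2$ from the two-sided tail is absorbed into the final constant $3$.

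\textbf{Step 2 (keeping $N_{k-1}$ large on the good event).} On $G_1\cap\dots\cap G_{k-1}$ we have $N_{k-1} \ge n(1/2-\delta)^{k-1}$. For $k\le k^\star$, write
\[
(1/2-\delta)^{k^\star} = 2^{-k^\star}(1-2\delta)^{k^\star} = n^{-(1-\rho)}(1-2\delta)^{k^\star},
\]
and bound $(1-2\delta)^{k^\star} \ge \exp(-4\delta k^\star)$ for $\delta$ small. This gives $(1-2\delta)^{k^\star} \ge n^{-4\delta(1-\rho)/\log 2}$. The hypothesis $\delta<\rho\log(2)/(8(1-\rho))$ makes this exponent at most $\rho/2$, hence $N_{k-1} \ge n^{1-(1-\rho)-\rho/2} = n^{\rho/2}$. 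This explains the precise range of $\delta$ in the statement.

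\textbf{Step 3 (chaining).} Write
\[
\mathbb{P}\Big(\bigcup_{k\le k^\star} G_k^c\Big) \le \sum_{k\le k^\star} \mathbb{P}\big(G_k^c \cap G_1\cap\dots\cap G_{k-1}\big).
\]
Each term equals $\mathbb{E}\big[\mathbf 1_{G_1\cap\dots\cap G_{k-1}}\,\mathbb{P}(G_k^c\mid\mathcal F_{k-1})\big]$. Since the indicator is $\mathcal{F}_{k-1}$-measurable, Steps 1--2 bound each term by $2\exp(-c\delta^2 n^{\rho/2})$. Summing over $k^\star \le 1.5\log n$ terms gives at most $3\log(n)\exp(-c\,\delta^2 n^{\rho/2})$, which is the claim.

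The main obstacle is Step 2: verifying that the geometric shrinkage with the slack factor $(1-2\delta)$ still leaves at least $n^{\rho/2}$ points through all $k^\star$ rounds. Related to this is matching the constant $\tfrac43$, which requires choosing the right Chernoff form (e.g.\ Bernstein with $p=1/2$) and the elementary inequality for $\log(1-2\delta)$ on the allowed range of $\delta$. The conditional independence in Step 1 is routine but relies on $\pi$ being fixed in advance.
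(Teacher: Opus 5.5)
Your proposal is correct and follows essentially the same route as the paper: the conditional $\mathrm{Bin}(N_{k-1},1/2)$ law for the fixed ordering, a two-sided concentration bound at each step, the lower bound $N_{k-1}\ge n(1/2-\delta)^{k^\star}>n^{\rho/2}$ on the good event, and a sum over $\lfloor k^\star\rfloor\le 1.5\log n$ steps (the paper writes this last step as a product of conditional probabilities, which is equivalent). Your concern about matching the constant $\frac{4}{3}$ is unfounded, because Hoeffding's constant $2$ already exceeds it. The paper instead uses the multiplicative Chernoff bound, whose exponent is $\frac{2}{3}N_{k-1}\delta^2$, and recovers the missing factor $2$ from $N_{k-1}\ge n(1/2-\delta)^{k-1}\ge 2n(1/2-\delta)^{k^\star}$.
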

Note that as $\delta$~approaches zero here, the probability bound degrades to~$0$, but the events~$G_k$ get more restrictive, as $\delta$~controls the relative distance by which the increments $\sum_{i=1}^{n} Y_i^{k}$ can deviate from their conditional expectation.

\begin{proof}
By conditioning
\begin{align}
    \mathbb{P} \left( \forall k\leq k^{\star}, G_k \right)&=  \prod_{k=1}^{\lfloor k^{\star}\rfloor} \mathbb{P} \left( G_k \mid \forall l\leq k-1, G_{l} \right) ,
\end{align}
Our goal is thus for any $k\leq k^{\star}$, to lower bound $\mathbb{P} \left( G_k \mid \forall l\leq k-1, G_{l} \right)$.

Note that $Y_i^k = Y_i^{k-1} \cdot A_{i, j_k}$ by definition. Since the variables $(A_{i, j})_{i,j}$ are independent, $A_{i, j_k}$ is independent from $(A_{i,j_l})_{l\leq k-1}$ for a fixed ordering\footnote{The choice of a fixed ordering is crucial here. If we were to consider the ordering $j_\star$ induced by \Cref{alg:s.t.s}, this independence argument would not hold anymore.} $\pi$. Consequently, 
\begin{equation}
    \sum_{i=1}^{n} Y_i^{k} \; \Bigg\vert \; (A_{i,j_l})_{\substack{i\in[n]\\ l\leq k-1}}\sim \mathrm{Bin}\left( \; \sum_{i=1}^{n} Y_i^{k-1} \; , \frac{1}{2} \right) .
\end{equation}

Using the multiplicative Chernoff bound \citep[see, e.g.,][Corollary 4.6]{mitzenmacher2017probability}, we have for any $\delta\in(0,\frac{1}{2})$:
\begin{align}
    \mathbb{P}\left( \bigg\vert \; \sum_{i=1}^{n} Y_i^{k} - \frac{1}{2} \sum_{i=1}^{n} Y_i^{k-1} \; \bigg\vert \leq \delta \sum_{i=1}^{n} Y_i^{k-1} \;\Bigg\vert \; (A_{i,j_l})_{\substack{i\in[n]\\ l\leq k-1}} \right) &\geq 1 - 2 \exp \left( - \frac{4}{3} \cdot \frac{1}{2} \sum_{i=1}^{n} Y_i^{k-1} \delta^2 \right)
\end{align}
Conditioned on the event $\lbrace \forall l\leq k-1, G_l \rbrace$, we have that
\begin{equation}
    \sum_{i=1}^{n} Y_i^{k-1} \geq n \left( \frac{1}{2} - \delta \right)^{k-1},
\end{equation}
so that the previous inequality becomes, when conditioning on this event,

    \begin{align}
         \mathbb{P}\left( \; G_k \; \mid \forall l\leq k-1, G_l \right)
    &\geq 1 - 2 \exp \left( - \frac{4}{3} \cdot \frac{n}{2} \left( \frac{1}{2} - \delta \right)^{k-1} \delta^2 \right) \\
    &\geq 1 - 2 \exp \left( - \frac{4}{3} n \left( \frac{1}{2} - \delta \right)^{k^\star} \delta^2 \right).
\end{align}

Using that for any $0<x<\frac{1}{2},$ $(1-2x)^{k^\star} \geq 1-2k^{\star}x,$ we get  
\begin{align}
\mathbb{P} \left( \forall k\leq k^{\star}, G_k \right) & = 
    \prod_{k=1}^{\lfloor k^{\star}\rfloor} \mathbb{P} \left( G_k \mid \forall l\leq k-1, G_l\right),\\
    &\geq  1 - 2 k^{\star} \exp \left( - \frac{4}{3} n \left( \frac{1}{2} - \delta \right)^{k^{\star}} \delta^2 \right) .\label{failure_bound}
\end{align}

Now inserting $k^{\star}=\frac{(1-\rho) \log(n)}{\log(2)}$, we obtain  that
\begin{align}
n\left(\frac{1}{2}-\delta\right)^{k^{\star}}&= n 2^{-k^{\star}} \left(1 - 2 \delta \right)^{k^{\star}} \\
    &= n^{1 + \rho - 1 + \frac{(1-\rho) }{\log(2)} \log(1 - 2 \delta)} \\
    &> n^{\rho/2}, \label{exponent_bound}
\end{align}
where we here used the upper bound on~$\delta$ that gives $\frac{(1-\rho) }{\log(2)} \log(1 - 2 \delta)\geq -\frac{\rho}{2}$. It will be useful for later to note that  similar logic allows us to obtain the following inequality, given $\lbrace \forall k\leq k^\star, G_k\rbrace$:
\begin{align}
&n\left(\frac{1}{2}-\delta\right)^{k}\leq\sum_{i=1}^{n} Y_i^{k}  \leq n\left(\frac{1}{2}+\delta\right)^{k} \qquad \text{for any }k\leq k^\star\label{exponent_upper_bound1}\\
\text{and}\qquad &   n^{\rho/2}< \sum_{i=1}^{n} Y_i^{\lfloor k^{\star}\rfloor}  < 2\cdot n^{2\rho} \label{exponent_upper_bound2}.
\end{align}

Putting \cref{exponent_bound} back into \cref{failure_bound}, we obtain

\begin{align}
    \mathbb{P} \left( \forall k\leq k^{\star}, G_k \right) &\geq 1- 2\frac{ (1-\rho) \log(n)}{\log(2)} \exp \left( - \frac{4}{3} n^{\rho/2} \delta^2 \right) \\
    &>1-3\log(n)\exp \left( - \frac{4}{3}n^{\rho/2} \delta^2 \right).
    \qedhere
\end{align}

\end{proof}

\cref{prop_half_bound_failure} demonstrates that with high probability, the cardinality of the set $S_U^{k}$ decreases by roughly one half for each of the first $k^{\star}$ iterations. Note that we here worked with a fixed, arbitrary ordering $\pi$. However, \Cref{alg:s.t.s} selects a specific ordering $\pi$ that depends on the data. \Cref{cor_all_failure} provides a union bound over all the possible orderings, guaranteeing that the event $\lbrace \forall k\leq k^{\star}, G_k\rbrace$ holds with high probability for the ordering $\pi$ returned by \Cref{alg:s.t.s}.

\begin{cor} \label{cor_all_failure}
    Given any $0<\rho<\frac{1}{2}$, $\delta\in\left(0,\frac{\rho \log(2)}{8(1-\rho)}\right)$ and $n\in\mathbb{N}$,

    \begin{equation}
        \mathbb{P} \left( \exists \, \pi , k\leq k^{\star}, \neg G_k\right) \leq 3\log(n)  \exp \left( \frac{ \log(n)}{\log(2)} \log \left( m \right) - \frac{4}{3} n^{\rho/2} \delta^2 \right).
    \end{equation}
\end{cor}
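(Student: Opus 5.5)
The plan is a union bound on top of \Cref{prop_half_bound_failure}. The first point to settle is which orderings $\pi$ need to be controlled. The event $\{\forall k\leq k^\star, G_k\}$ depends on $\pi=(j_1,\ldots,j_m)$ only through its first $\lfloor k^\star\rfloor$ entries $(j_1,\ldots,j_{\lfloor k^\star\rfloor})$. Indeed, $G_k$ involves only $Y^{k}$ and $Y^{k-1}$, and these are determined by $A_{\cdot,j_1},\ldots,A_{\cdot,j_k}$. So the event ``$\exists\,\pi, k\leq k^\star, \neg G_k$'' is the union, over all sequences of length $\lfloor k^\star\rfloor$ drawn from $[m]$, of the failure events for any ordering extending that sequence.

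Next I count these sequences. There are at most $m^{\lfloor k^\star\rfloor}$ of them (injective sequences suffice, but the cruder count is enough). Since $k^\star=\frac{(1-\rho)\log(n)}{\log(2)}\leq \frac{\log(n)}{\log(2)}$, this count is at most
\begin{equation*}
\exp\!\left(\tfrac{\log(n)}{\log(2)}\log(m)\right).
\end{equation*}
If $m<k^\star$, I would use the truncation convention mentioned earlier, and the same count applies.

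Then I apply \Cref{prop_half_bound_failure}. For each fixed deterministic prefix, completed arbitrarily to a full ordering, the proposition bounds the failure probability by $3\log(n)\exp\!\left(-\tfrac{4}{3}n^{\rho/2}\delta^2\right)$. This holds uniformly over the prefix, for $\delta$ in the stated range. Summing over all prefixes gives
\begin{equation*}
3\log(n)\exp\!\left(\tfrac{\log(n)}{\log(2)}\log(m)-\tfrac{4}{3}n^{\rho/2}\delta^2\right),
\end{equation*}
which is the claimed bound.

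The only delicate point is justifying the reduction from all orderings to finitely many deterministic prefixes. The proposition is stated for a fixed ordering independent of $A$, and it must not be applied to the data-dependent ordering $j_\star$. Writing the bad event explicitly as a union indexed by deterministic prefixes removes this issue, because each term of the union is then an event about a fixed ordering.
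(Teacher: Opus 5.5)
Your proposal is correct and follows the paper's proof: a union bound of the fixed-ordering proposition over the at most $m^{k^\star}$ injective prefixes $[k^\star]\to[m]$, combined with $k^\star\le \log(n)/\log(2)$. Your observation that the bad event depends on $\pi$ only through its first $\lfloor k^\star\rfloor$ entries makes explicit the reduction from all orderings to finitely many deterministic prefixes, which the paper leaves implicit.
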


\begin{proof}
\Cref{prop_half_bound_failure} is established for an arbitrary fixed permutation $\pi$. We then apply a union bound over all injective mappings from $[k^{\star}]$ to $[m]$, whose total number is at most $m^{k^\star}$. Thus we directly obtain from \Cref{prop_half_bound_failure}  the bound


\begin{align}
     \mathbb{P} \left( \exists \, \pi , k\leq k^{\star}, \neg G_k\right)&\leq m^{k^\star} \cdot 3\log(n) \exp \left( - \frac{4}{3} n^{\rho/2} \delta^2 \right) \\
    &= 3\log(n) \exp \left( k^{\star} \log(m) - \frac{4}{3}  n^{\rho/2} \delta^2 \right) \\
    &= 3\log(n)  \exp \left( \frac{ (1-\rho) \log(n)}{\log(2)} \log \left( m \right) - \frac{4}{3} n^{\rho/2} \delta^2 \right). 
    \qedhere
\end{align}


\end{proof}

In the following, we work under the event $\lbrace \forall k\leq k^{\star}, G_k\rbrace$ for the considered permutation $\pi$ and bound the squared $\ell_2$-norm of the estimator returned by \Cref{alg:s.t.s} by splitting it into two terms: that induced by the first $k^{\star}$ iterations of the algorithm, and the remaining ones. 

\begin{prop} \label{prop_chi_bound_failure}
    If for all $i\in[n]$, $|y_i|\leq y_{\max}$, then 
\begin{equation}
\forall k\leq k^{\star}, G_k\implies  \sum_{k\leq k^{\star}} \sqrt{\sum_{i \in S_U^{k} \cap S_{j_k}} y_i^2} \leq \frac{\sqrt{\frac{1}{2} + \delta}}{1-\sqrt{\frac{1}{2} +\delta}} y_{\max} \sqrt{n} .
    \end{equation}
\end{prop}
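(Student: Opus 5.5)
The plan is to bound each summand termwise and sum a geometric series. For each $k$, $S_U^{k}\cap S_{j_k}\subseteq S_U^{k-1}\cap S_{j_k}$. This depends on the indexing convention; with the recursion $S_U^{k+1}=S_U^k\setminus S_{j_k}$, the set of points fitted at increment $k$ is exactly $S_U^{k}\cap S_{j_k}$. In either convention, the points fitted at increment $k$ are the points unfitted before that increment on which neuron $j_k$ is active. Since every label satisfies $|y_i|\le y_{\max}$, each such term is at most $y_{\max}$ times the square root of the number of points unfitted just before the increment:
\begin{equation}
\sqrt{\sum_{i \in S_U^{k} \cap S_{j_k}} y_i^2} \le y_{\max}\sqrt{\big|S_U^{k}\big|} = y_{\max}\sqrt{\textstyle\sum_{i=1}^n Y_i^{k}} .
\end{equation}

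Next, I would use the event $\{\forall k\le k^\star, G_k\}$ through the already-derived inequality \eqref{exponent_upper_bound1}, which gives $\sum_{i} Y_i^{k}\le n(\frac12+\delta)^{k}$ for every $k\le k^\star$. Iterating the upper half of $G_k$, namely $\sum_i Y_i^k\le(\frac12+\delta)\sum_i Y_i^{k-1}$, from $\sum_i Y_i^0=n$ yields the same bound. Hence the $k$-th summand is at most $y_{\max}\sqrt{n}\,\big(\sqrt{\tfrac12+\delta}\big)^{k}$.

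Summing over $k\ge 1$ and extending the finite sum to an infinite geometric series with ratio $r=\sqrt{\frac12+\delta}<1$ (valid since $\delta<\frac12$) gives
\begin{equation}
\sum_{k\le k^\star} y_{\max}\sqrt n\, r^{k}\le y_{\max}\sqrt n\,\frac{r}{1-r},
\end{equation}
which is exactly the claimed bound.

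The only delicate point is the indexing offset. If the summand at increment $k$ involves the unfitted set before the $k$-th halving, the sum starts at $r^0=1$ rather than $r^1$, and the bound would be $1/(1-r)$ rather than $r/(1-r)$. I would therefore check carefully that the claimed constant $r/(1-r)$ corresponds to the stated convention $S_U^{k}\cap S_{j_k}$, where $S_U^{k}$ has already undergone $k$ applications of $G$. I would bound $|S_U^k|$ directly by $n r^{2k}$ in that convention. Beyond this bookkeeping, the argument is deterministic given the events $G_k$ and needs no further probabilistic input.
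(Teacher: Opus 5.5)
Your proposal is correct and matches the paper's proof: bound each summand by $y_{\max}\sqrt{|S_U^{k}|}$, use \eqref{exponent_upper_bound1} to get $|S_U^{k}|\le n(\frac{1}{2}+\delta)^{k}$ on $\{\forall k\le k^\star, G_k\}$, and sum the geometric series from $k=1$. Your indexing worry also resolves in your favour. Even if the $k$-th summand is read as the set fitted at increment $k$, i.e.\ $S_U^{k-1}\cap S_{j_k}$ with $S_U^{k}=S_U^{k-1}\setminus S_{j_k}$, the lower half of $G_k$ gives $|S_U^{k-1}\cap S_{j_k}|=|S_U^{k-1}|-|S_U^{k}|\le(\frac{1}{2}+\delta)|S_U^{k-1}|\le n(\frac{1}{2}+\delta)^{k}$, so the constant $r/(1-r)$ still holds (the weaker $1/(1-r)$ would break the final constant $5$ in \Cref{thm:implicit_bias}).
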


\begin{proof}
    The bound $|y_i|\leq M$ directly implies that 
    \begin{align}
        \sum_{k \leq k^{\star}} \sqrt{\sum_{i \in S_U^{k} \cap S_{j_k}} y_i^2} \leq  y_{\max} \sum_{k \leq k^{\star}} \sqrt{\big|S_U^{k} \cap S_{j_k}\big|}.
    \end{align}
    Note that $\lbrace \forall k\leq k^{\star}, G_k\rbrace$ implies that for any $k\leq k^{\star}$, $|S_U^{k} \cap S_{j_k}| \leq n \left( \frac{1}{2} + \delta \right)^k$ thanks to \cref{exponent_upper_bound1}. So that if it holds,
\begin{align}
    \sum_{k \leq k^{\star}} \sqrt{\sum_{i \in S_U^{k} \cap S_{j_k}} y_i^2} & \leq  y_{\max} \sqrt{n} \sum_{k =1}^{\lfloor k^{\star}\rfloor}\left( \frac{1}{2} + \delta \right)^{k/2}\\
    & \leq y_{\max} \sqrt{n} \cdot \frac{\sqrt{\frac{1}{2} + \delta}}{1-\sqrt{\frac{1}{2} +\delta}}.
    \qedhere
\end{align}
\end{proof}

Finally, we bound the squared $\ell_2$-norm of the remaining increments. 
\begin{prop} \label{prop_tail_temrs}
    If for all $i\in[n]$, $|y_i|\leq y_{\max}$, then 
    \begin{equation}
     \forall k\leq k^{\star}, G_k \implies \sum_{k > k^{\star}} \sqrt{\sum_{k \in S_U^{k} \cap S_{j_k}} y_i^2} < 2 y_{\max} n^{2\rho} 
    \end{equation}
\end{prop}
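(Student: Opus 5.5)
The plan is to bound each tail increment crudely by the size of the unfitted set it acts on, and then use the fact that the sets $S_U^k \cap S_{j_k}$ for distinct $k$ are disjoint. By construction, $S_U^{k+1} = S_U^k \setminus S_{j_k}$. Hence for $k > k^\star$ the sets $S_U^k \cap S_{j_k}$ are pairwise disjoint, and all of them are contained in $S_U^{\lfloor k^\star\rfloor+1} \subseteq S_U^{\lfloor k^\star\rfloor}$. Call this common ambient set $R$. Under the event $\{\forall k\le k^\star, G_k\}$, \cref{exponent_upper_bound2} gives $|R| \le \sum_i Y_i^{\lfloor k^\star\rfloor} < 2 n^{2\rho}$.

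Next, I use the elementary inequality $\sqrt{x} \le x$ for integers $x \ge 0$, applied to each cardinality $|S_U^k \cap S_{j_k}|$. I would not apply Cauchy–Schwarz over the number of terms, since that would bring in the width $m$ and is unnecessary here. Concretely, each term satisfies
\[
\sqrt{\sum_{i\in S_U^k\cap S_{j_k}} y_i^2} \le y_{\max}\sqrt{|S_U^k\cap S_{j_k}|} \le y_{\max}\,|S_U^k\cap S_{j_k}| .
\]
Summing over $k > k^\star$ and using disjointness, the total is at most $y_{\max} \sum_{k>k^\star} |S_U^k \cap S_{j_k}| \le y_{\max} |R| < 2 y_{\max} n^{2\rho}$, which is exactly the claimed bound.

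The only point requiring care is the upper bound $\sum_i Y_i^{\lfloor k^\star\rfloor} < 2n^{2\rho}$, which was asserted as part of \cref{exponent_upper_bound2}. To verify it, start from \cref{exponent_upper_bound1}:
\[
n\left(\tfrac12+\delta\right)^{\lfloor k^\star\rfloor} \le 2n\,2^{-k^\star}(1+2\delta)^{k^\star} = 2\,n^{\rho}\, n^{(1-\rho)\log(1+2\delta)/\log 2}.
\]
Using $\log(1+2\delta) \le 2\delta$ and the constraint $\delta < \rho\log 2/(8(1-\rho))$, the last exponent is at most $\rho/4 < \rho$. This gives a bound of at most $2n^{2\rho}$; the factor $2$ accounts for the floor in $\lfloor k^\star\rfloor$.

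Finally, terms with empty intersection contribute zero, so it does not matter whether the ordering runs to $m$ or stops at $p$. The sum therefore needs no dependence on $m$.
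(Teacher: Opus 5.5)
Your proposal is correct and is essentially the paper's proof. The paper uses subadditivity of the square root to bound the tail by $\sum_{i\in S_U^{\lfloor k^\star\rfloor}}|y_i|\le y_{\max}|S_U^{\lfloor k^\star\rfloor}|$ (implicitly the same disjointness you spell out) and then invokes the bound $|S_U^{\lfloor k^\star\rfloor}|<2n^{2\rho}$. Your use of $\sqrt{x}\le x$ on cardinalities instead is an immaterial variation, and your explicit check of that cardinality bound, which the paper only asserts, is correct and even gives the sharper $2n^{5\rho/4}$.
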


\begin{proof}
By subadditivity of the square root, we simply have
\begin{align}
    \sum_{k > k^{\star}} \sqrt{\sum_{i \in S_U^{k} \cap S_{j_k}} y_i^2}  &\leq \sum_{i \in S_U^{\lfloor k^{\star} \rfloor}} | y_i | \\
    &\leq  y_{\max} |S_U^{\lfloor k^{\star}\rfloor}|.
\end{align}
Now, thanks to \cref{exponent_upper_bound2}, we recall that $ \lbrace \forall k\leq k^{\star}, G_k \rbrace$ implies that $|S_U^{\lfloor k^{\star}\rfloor}|<2\cdot n^{2\rho}$, which concludes the proof.
\end{proof}

Combining \cref{prop_chi_bound_failure,prop_tail_temrs,cor_all_failure}, we conclude taking $\rho=1/4$ and $\delta=\frac{1}{40}$ that for any $n$, 
\begin{equation}
    \sum_{j \in [m]} n \lVert D_j^{(p)} \rVert  \leq 5 y_{\max}\sqrt{n} 
\end{equation}
 with probability at least $1 - 3\log(n)  \exp \left( \frac{ \log(n)}{\log(2)} \log \left( m \right) - \frac{4}{3\cdot 40^2} n^{1/8}  \right)$. 


This concludes the proof when considering only positive labels and neurons. \Cref{thm:implicit_bias} directly extends to the general case, by separating positive and negative cases.

\section{Auxiliary results}

We provide in this section auxiliary results that are useful for our analysis, as well as \texorpdfstring{\Cref{pr:alg_assumptions}}{Theorem \ref*{pr:alg_assumptions}}. 

\subsection{Proof of \texorpdfstring{\Cref{pr:alg_assumptions}}{Theorem \ref*{pr:alg_assumptions}}}

\begin{subproof}[Proof of \Cref{pr:alg_assumptions_1}]
 The proof is a simple union bound. Indeed, observe that 1) $(A_{i,j})_{j\in[m]}$ are independent i.i.d. Bernoulli$\left(\frac{1}{4}\right)$ variables. Then, the probability for a row $i$ to be zero is exactly $\left(\frac{3}{4}\right)^m$, so that the probability that any row is zero is smaller than $n\left(\frac{3}{4}\right)^m$.

 2) Conditionally on $a_j$, $(A_{i,j})_{i\in[n]}$ are independent variables, each of them being either the deterministic variable $0$ if $i\not\in I_{s_j}$, or a Bernoulli$\left(\frac{1}{2}\right)$ $i\in I_{s_j}$. Since $|I_{s_j}|\geq \min(n_-,n_+)$ almost surely by definition, we then have that a column $j$ is zero with probability at most $\left(\frac{1}{2}\right)^{\min(n_-,n_+)}$. So that the probability for any column to be zero is bounded by $m\left(\frac{1}{2}\right)^{\min(n_-,n_+)}$.

 3) Note that two non-zero columns $j$ and $j'$ are equal if and only if both $\sgn(a_j)=\sgn(a_{j'})$ and $\mathds{1}(w_j^\top x_i) = \mathds{1}(w_{j'}^\top x_i)$ for all $i\in I_{s_j}$. All these variables are independent Bernoulli$\left(\frac{1}{2}\right)$, so that a union bound argument again yields that the probability for two non-zero columns to be equal is at most $\frac{m(m-1)}{2}\left(\frac{1}{2}\right)^{\min(n_-,n_+)+1}$.

 A union bound on these three events and summing the different terms then yields \Cref{pr:alg_assumptions_1} of \Cref{pr:alg_assumptions}.
\end{subproof}

\begin{subproof}[Proof of \Cref{pr:alg_assumptions_2}] 
Similarly to the proof of \Cref{thm:implicit_bias}, we here fix an arbitrary ordering $\pi = (j_1, \ldots, j_m)$, and define an alternative \Cref{alg:s.t.s} with the fixed ordering $\pi$, i.e.,
\begin{gather}
    S_U^0 = [n], \quad S_U^{k+1} = S_U^k \setminus S_{j_k}\\
    S_{j} = \lbrace i \in [n] \mid A_{i, j} = 1 \rbrace.
\end{gather}
Similarly, we also define the increment times as $t'_0=0$, and for any $k\geq 0$:
\begin{gather}
    t'_{k+1} = \max(t'_{k} - \frac{\ell'_{j_{k+1}}(t'_k)}{\|D_{j_{k+1}}^k\|}, t'_k)\\
    \text{where }\quad D_j^k = \frac{1}{n}\sum_{i\in S_U^k \cap S_{j}} y_i x_i, \\
    \ell'_j(0)=-1 \text{ and }\quad \ell'_{j}(t) = \ell'_j(t'_k) + (t-t'_k)\|D_j^k\| \text{ for any } t\in[t'_k, t'_{k+1}],
\end{gather}
where $\frac{\ell'_{j_{k+1}}(t'_k)}{\|D_{j_{k+1}}^k\|}=-\infty$ when  $\|D_{j_{k+1}}^k\|=0$ by convention. Note that when $\pi$ coincides with $j_\star$ on its first $p$ elements, we have the equality $(t_0, \ldots, t_p)=(t'_0, t'_p)$, as well as $\ell'_j(t) = \ell_j^\circ(t)$ on $[0,t_p]$ and $D_j^k=D_j^{(k)}$ for $k\leq p$. 

From there, we can see all $t'_k$, $\ell'_j(t)$ and $\|D_j^k\|$ as functions of $(y_1,\ldots,y_n)$. Consider two neurons $j,j^{\dagger}$. Since we assume \cref{ass:A.j.star}~\ref{ass:A.j.star:A} holds, the columns of the masking matrix $A_{\cdot,j}$ and $A_{\cdot,j^{\dagger}}$ are both non-zero and distinct; and also consider $k$ such that $j_l\not\in\lbrace j, j^{\dagger}\rbrace$ for all $l\leq k$, i.e., both neurons $j$ and $j^{\dagger}$ have not been selected by the alternative algorithm in the first $k$ increments. From there, we want to show that the following system of inequalities
\begin{gather}\label{eq:equalityprop2}
     -\frac{\ell'_{j}(t'_k)}{\|D_{j}^k\|} = -\frac{\ell'_{j^{\dagger}}(t'_k)}{\|D_{j^{\dagger}}^k\|},\\
    \|D_{j}^k\| > 0 \quad \text{and} \quad \max(\ell'_{j^{\dagger}}(t'_k),\ell'_{j}(t'_k)) < 0 \label{eq:ineqconstraintprop2}
\end{gather}
 only holds on a measure zero set for the values of $(y_1, \ldots, y_n)$. Indeed, since the two masking columns are different, there exists some $i\in[n]$ such that, w.l.o.g., $A_{i,j}=1$ and $A_{i,j^{\dagger}}=0$. We then have two cases.

Either $i\in S_U^k$, i.e., $i$ remains to be fitted at iteration $k$. In that case, note that all $(t'_0, \ldots, t'_k)$, $\ell'_{j^{\dagger}}$ and $(D_{j^{\dagger}}^0, \ldots, D_{j^{\dagger}}^k)$ do not depend on the value $y_i$. On the other hand, $-\frac{\ell'_{j}(t'_k)}{\|D_{j}^k\|}$ decreases (strictly) with~$|y_i|$, when \cref{eq:ineqconstraintprop2} holds. Indeed, note that
\begin{gather}
    -\ell'_{j}(t'_k) = 1 - \sum_{l<k}(t'_{l+1}-t'_l)\|D_j^l\|\quad
    \text{and} \quad  \|D_j^l\| = \sqrt{\frac{1}{n}\sum_{i'\in S_U^l\cap S_j} y_{i'}^2}.
\end{gather}
so that $-\ell'_j(t'_k)$ decreases with $|y_i|$ and is positive, while $\|D_j^k\|$ increases with it. In consequence, $-\frac{\ell'_{j}(t'_k)}{\|D_{j}^k\|}$ is a decreasing function of~$|y_i|$, so that for fixed values $(y_{i'})_{i'\neq i}$, there is at most one value of~$|y_i|$ such that the system of \cref{eq:equalityprop2,eq:ineqconstraintprop2} holds. This implies over all labels that the system of \cref{eq:equalityprop2,eq:ineqconstraintprop2} can only be satisfied on a zero measure set.

The other possible case is when for all $i$ such that $A_{i,j}\neq A_{i,j^{\dagger}}$, $i\not\in S_U^k$. In other words, this means that $S_U^k\cap S_j = S_U^k\cap S_{j^{\dagger}}$. By definition of the quantities of interest, if the system of \cref{eq:equalityprop2,eq:ineqconstraintprop2} holds for $k$, the following system also holds 
\begin{gather}\label{eq:equalitybisprop2}
     \ell'_{j}(t'_{\overline{k}}) = \ell'_{j^{\dagger}}(t'_{\overline{k}}),\\
    \|D_{j}^{\overline{k}}\| > 0 \quad \text{and} \quad \max(\ell'_{j^{\dagger}}(t'_{\overline{k}}),\ell'_{j}(t'_{\overline{k}}))< 0 ,\label{eq:constraintbisprop2}
\end{gather}
where $\overline{k}\leq k$ is the largest integer such that $S_U^{\overline{k}} \cap S_j \neq S_U^{\overline{k}}\cap S_{j^{\dagger}}$---indeed, for any $\overline{k}<l\leq k$, we would have $D_j^l = D_{j^{\dagger}}^l$. Now note that we could have chosen $i$ such that $A_{i,j}\neq A_{i,j^{\dagger}}$ and it is the last such $i$ to be fitted by the algorithm. In that case, we necessarily have $i\in S_U^{\overline{k}}$ and we are in a case very similar to the one treated above. We can then show with a similar study---which is even simpler here, as we only have to consider the numerator of the previous fraction---that the system of \cref{eq:equalitybisprop2,eq:constraintbisprop2} only holds for a zero measure set.

We thus have proven that the system of \cref{eq:equalityprop2,eq:ineqconstraintprop2} only holds on a zero measure set for the values of $(y_1, \ldots, y_n)$, for a fixed ordering~$\pi$, iteration~$k$, and neurons $j,j^{\dagger}$ not yet fitted at iteration~$k$. By doing a finite union, it remains true that this system is only satisfied on a zero measure set for all choices of ordering, iterations and neuron couples. In particular, it holds for some ordering that coincides with the one returned by \Cref{alg:s.t.s} on the first~$p$ iterations, i.e., $(j_\star^{(1)}, \ldots, j_\star^{(p)})$. For this specific choice of ordering, we observe that the case of a non-unique minimizer of the loop in \Cref{alg:s.t.s} corresponds to a couple satisfying the system of \cref{eq:equalityprop2,eq:ineqconstraintprop2}, such that both neurons have not fitted yet, which concludes the proof of  \Cref{pr:alg_assumptions_2} of \Cref{pr:alg_assumptions}.
\end{subproof}

\subsection{Other auxiliary results}

The following result estimates the normalized component of $\widetilde{w}_j$ in the direction $x_i$ under the assumption that this neuron is aligned in direction $D_j^{(k)}$.  

\begin{prop} \label{alignment_observation}
    For any $j \in [m]$ and $k \in [p]$, if $\widetilde{\overline{w}}_j^\top \overline{D}_j^{(k)} > 1 - \alpha$, then for all $i \in [n]$,
    
    \begin{equation}
        \left\vert \widetilde{\overline{w}}_j^\top x_i - \frac{\iind{y_i \in S_j^{(k)}} y_i}{n \left\lVert D_j^{(k)} \right\rVert} \right\vert < \sqrt{2}\alpha^{1/2}. 
    \end{equation}
\end{prop}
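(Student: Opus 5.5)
The plan is to reduce the claim to an elementary estimate on unit vectors, using only the orthonormality of the data. First I pin down what $\overline{D}_j^{(k)}$ looks like in the orthonormal coordinates $(x_i)$. By definition $D_j^{(k)} = \frac1n \sum_{i\in S_j^{(k)}} y_i x_i$, and \Cref{ass:data} gives $\|D_j^{(k)}\|^2 = \frac1{n^2}\sum_{i\in S_j^{(k)}} y_i^2$. Hence
\[
\overline{D}_j^{(k)\,\top} x_i = \frac{\iind{i\in S_j^{(k)}}\, y_i}{n\|D_j^{(k)}\|}.
\]
This is exactly the centering term in the statement, reading the indicator $\iind{y_i\in S_j^{(k)}}$ as $\iind{i\in S_j^{(k)}}$. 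So the quantity to bound is $|(\widetilde{\overline{w}}_j - \overline{D}_j^{(k)})^\top x_i|$.

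Second, since $\|x_i\| = 1$, Cauchy--Schwarz bounds this by $\|\widetilde{\overline{w}}_j - \overline{D}_j^{(k)}\|$. Both vectors are unit vectors, so
\[
\|\widetilde{\overline{w}}_j - \overline{D}_j^{(k)}\|^2 = 2 - 2\,\widetilde{\overline{w}}_j^\top \overline{D}_j^{(k)} < 2\alpha
\]
under the hypothesis $\widetilde{\overline{w}}_j^\top \overline{D}_j^{(k)} > 1-\alpha$. Taking square roots gives the bound $\sqrt{2}\,\alpha^{1/2}$, uniformly in $i$. Here the $\alpha$ of the statement is simply a generic parameter, as in its uses in the paper with $\alpha^{\varepsilon/2}$ and similar quantities.

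There is no real obstacle. The only points needing care are notational. One is the indicator convention. The other is the degenerate case $D_j^{(k)} = 0$, where $\overline{D}_j^{(k)}$ is undefined. I would note that wherever the proposition is invoked, $\|D_j^{(k)}\| > 0$ holds, by \Cref{ass:A.j.star} and the definition of $N_U^{(k)}$ in \Cref{alg:s.t.s}, so that case is excluded by hypothesis. If a slightly sharper statement is wanted, the same argument also gives the $\ell_2$ bound $\sum_i \bigl(\widetilde{\overline{w}}_j^\top x_i - \overline{D}_j^{(k)\,\top}x_i\bigr)^2 < 2\alpha$ over the data span, which is the form that \Cref{alignment_corollary}-type uses would need.
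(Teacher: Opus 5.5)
Your argument is correct and is essentially the paper's. The paper writes both unit vectors in the orthonormal coordinates $(x_i)$ and bounds one squared coordinate difference by $\sum_i (u_i - v_i)^2 = 2 - 2u^\top v < 2\alpha$, which is your Cauchy--Schwarz step $|(\widetilde{\overline{w}}_j - \overline{D}_j^{(k)})^\top x_i| \le \lVert \widetilde{\overline{w}}_j - \overline{D}_j^{(k)}\rVert$ written in coordinates; your phrasing has the small advantage of not needing $(x_i)$ to be a full basis of $\mathbb{R}^d$.

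One minor caveat on your side remark: Assumption~\ref{ass:A.j.star} does not guarantee $\lVert D_j^{(k)}\rVert > 0$ for every unfitted neuron when $k \ge 1$, since all of its active data may already have been fitted by other neurons. This does not affect the proposition, whose hypothesis already presupposes that $\overline{D}_j^{(k)}$ is defined.
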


\begin{proof}
    Let $u=\sum_i u_ix_i$ and $v=\sum_i v_ix_i$ be the representations of two unit vectors in the orthonormal basis given by $(x_i)_{i=1}^d$ such that $u^\top v > 1 - \alpha$. We have for any $i_0$
    \begin{align}
        ( u_{i_0} - v_{i_0} )^2 &\leq \sum_{i} ( u_i - v_i )^2 \\
        &= \sum_{i} \left( u_i^2 + v_i^2 - 2 u_i v_i \right) \\
        &< 2 \alpha.
    \end{align}
    Taking square roots yields $ | u_{i_0} - v_{i_0} | < \sqrt{2} \alpha^{1/2}$. Recalling that 
    
\begin{equation}
    \overline{D}_j^{(k)} = \frac{1}{n \left\lVert D_j^{(k)} \right\rVert} \sum_{i \in [n]} \iind{y_i \in S_j^{(k)}} y_i x_i
\end{equation} 
gives the claim. 
\end{proof}

The following result applies the preceding proposition to deduce an inequality which will prove useful in our analysis.

\begin{cor} \label{alignment_corollary}
    If $\widetilde{\overline{w}}_j^\top \overline{D}_j^{(k)} > 1 - \alpha$, then 
    \begin{equation}
        \left\vert x_i^\top D_j^{(k)} - (x^\top_i \widetilde{\overline{w}}_j)(\widetilde{\overline{w}}^\top_j D_j^{(k)}) \right\vert < 2 \left\lVert D_j^{(k)} \right\rVert \alpha^{1/2}
    \end{equation}
    for $\alpha$ sufficiently small.
\end{cor}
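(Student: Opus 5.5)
The statement is an elementary consequence of \Cref{alignment_observation}. I will expand the quantity inside the absolute value in the orthonormal basis and bound it with the triangle inequality.

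Write $u := \widetilde{\overline{w}}_j$ and $c_i := \iind{i \in S_j^{(k)}}\, y_i / (n\lVert D_j^{(k)}\rVert)$. Then $\overline{D}_j^{(k)} = \sum_i c_i x_i$, and hence $x_i^\top D_j^{(k)} = \lVert D_j^{(k)}\rVert c_i$. The target expression becomes
\begin{equation}
x_i^\top D_j^{(k)} - (x_i^\top u)(u^\top D_j^{(k)}) = \lVert D_j^{(k)}\rVert \left( c_i - (x_i^\top u)\, (u^\top \overline{D}_j^{(k)}) \right).
\end{equation}
I then split the bracket as
\begin{equation}
c_i - (x_i^\top u)(u^\top \overline{D}_j^{(k)}) = (c_i - x_i^\top u) + (x_i^\top u)\left(1 - u^\top \overline{D}_j^{(k)}\right).
\end{equation}

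The two terms are bounded separately. For the first term, \Cref{alignment_observation} gives $|c_i - x_i^\top u| < \sqrt{2}\,\alpha^{1/2}$. For the second term, $u$ is a unit vector and $x_i$ is a unit vector, so $|x_i^\top u| \le 1$. The hypothesis together with $u^\top \overline{D}_j^{(k)} \le 1$ gives $0 \le 1 - u^\top \overline{D}_j^{(k)} < \alpha$, so the second term is at most $\alpha$ in absolute value.

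Combining the two bounds yields
\begin{equation}
\left| x_i^\top D_j^{(k)} - (x_i^\top u)(u^\top D_j^{(k)}) \right| < \lVert D_j^{(k)}\rVert \left(\sqrt{2}\,\alpha^{1/2} + \alpha\right).
\end{equation}
It remains to check that $\sqrt{2}\,\alpha^{1/2} + \alpha \le 2\alpha^{1/2}$. This is equivalent to $\alpha^{1/2} \le 2 - \sqrt{2}$, i.e.\ $\alpha \le (2-\sqrt{2})^2 \approx 0.34$, which is exactly where "for $\alpha$ sufficiently small" is used.

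There is no real obstacle here. The only point needing care is the indicator in \Cref{alignment_observation}: it should be read as $\iind{i \in S_j^{(k)}}$, so that $c_i$ really are the coordinates of $\overline{D}_j^{(k)}$.
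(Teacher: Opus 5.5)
Your proof is correct and follows essentially the same route as the paper. Both combine \Cref{alignment_observation} with $1-\alpha < \widetilde{\overline{w}}_j^\top \overline{D}_j^{(k)} \le 1$ and then absorb the extra $\alpha$ term into $2\lVert D_j^{(k)}\rVert\alpha^{1/2}$ for small $\alpha$. Your additive split $(c_i - x_i^\top u) + (x_i^\top u)(1-u^\top\overline{D}_j^{(k)})$ is slightly tidier than the paper's two-sided plug-in bounds, because it avoids the implicit sign considerations on $x_i^\top u$ and $c_i$; you are also right that the indicator should read $\iind{i\in S_j^{(k)}}$.
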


\begin{proof}
    We simply plug in the bounds from \cref{alignment_observation} to obtain 
    \begin{align}
        -\sqrt{2} \left\lVert D_j^{(k)} \right\rVert \alpha^{1/2} &< x_i^\top D_j^{(k)} - (x^\top_i \widetilde{\overline{w}}_j)(\widetilde{\overline{w}}^\top_j D_j^{(k)}) \\
        &< \frac{y_i}{n}\alpha + \sqrt{2} \left\lVert D_j^{(k)} \right\rVert \left( \alpha^{1/2} - \alpha^{3/2} \right),
    \end{align}
    from which the claim follows.
\end{proof}

\begin{lem} \label{diff_ineq_lemma}
    Suppose that the following hold:
    \begin{enumerate}
        \item $F \in \mathcal{C}^1([T_1, T_2])$ and $F'$ is Lipschitz continuous. 
        \item $F \geq 0$ and $F(T_1)=0$.
        \item $F''(t) \leq pF(t) + q$ for almost every $t \in [T_1, T_2]$, where $p, q > 0$.
        
    \end{enumerate}
    
    Then for every $t \in [T_1, T_2]$, 
    
    \begin{equation}
        F'(t) \leq \left( \frac{q}{\sqrt{p}} + F'(T_1) \right) \exp(\sqrt{p}(t-T_1)) .    
    \end{equation}
    
\end{lem}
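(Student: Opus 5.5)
The aim is to compare $F'$ with the solution of the linear comparison ODE $G''=pG+q$, with matching initial data $G(T_1)=0$ and $G'(T_1)=F'(T_1)$. That solution is explicit:
\begin{equation}
G(t)=\frac{q}{p}\bigl(\cosh(\sqrt{p}(t-T_1))-1\bigr)+\frac{F'(T_1)}{\sqrt{p}}\sinh(\sqrt{p}(t-T_1)),
\end{equation}
so that
\begin{equation}
G'(t)=\frac{q}{\sqrt{p}}\sinh(\sqrt{p}(t-T_1))+F'(T_1)\cosh(\sqrt{p}(t-T_1)).
\end{equation}
Using $\sinh(x)\le e^{x}$, $\cosh(x)\le e^{x}$ for $x\ge 0$, the claimed bound follows once we show $F'\le G'$ on $[T_1,T_2]$. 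We may assume $F'(T_1)\ge 0$; this holds anyway since $F\ge 0$ and $F(T_1)=0$ force $F'(T_1)\ge 0$.

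To get $F'\le G'$, set $H=G-F$. Since $F'$ is Lipschitz, it is absolutely continuous, so $F''$ exists almost everywhere and $F'(t)-F'(T_1)=\int_{T_1}^t F''$. Then $H(T_1)=0$, $H'(T_1)=0$, and for almost every $t$
\begin{equation}
H''(t)\ge pH(t).
\end{equation}
We first show $H\ge 0$. Let $u=H$; then $u''-pu\ge 0$ almost everywhere in the absolutely continuous sense for $u'$. Multiply by the integrating factor and consider $v=e^{\sqrt{p}(t-T_1)}\bigl(u'-\sqrt{p}\,u\bigr)$. Its derivative is $e^{\sqrt{p}(t-T_1)}\bigl(u''-pu\bigr)\ge 0$ almost everywhere, and $v$ is absolutely continuous, so $v(t)\ge v(T_1)=0$. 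This gives $u'\ge \sqrt{p}\,u$, and with $u(T_1)=0$ Grönwall's inequality yields $u\ge 0$. Hence $H'=v\,e^{-\sqrt{p}(t-T_1)}+\sqrt{p}\,H\ge 0$, that is, $F'\le G'$.

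The only delicate point is regularity. $F''$ exists only almost everywhere, so every integration step must go through absolute continuity of $F'$ (guaranteed by its Lipschitz property) rather than pointwise differentiation. Once this is handled, the comparison is routine, and the final step is the elementary bound
\begin{equation}
G'(t)\le\Bigl(\frac{q}{\sqrt{p}}+F'(T_1)\Bigr)e^{\sqrt{p}(t-T_1)}.
\end{equation}
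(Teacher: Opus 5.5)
Your argument is correct, but it takes a different route from the paper. You compare $F$ with the explicit solution of $G''=pG+q$ that has the same initial data. You then show that $H=G-F$ satisfies $H\ge 0$ and $H'\ge 0$ through the integrating factor $e^{\sqrt{p}(t-T_1)}(H'-\sqrt{p}H)$, which amounts to writing $D^2-p=(D+\sqrt{p})(D-\sqrt{p})$ with $D=d/dt$.

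This costs an explicit solution and two integration steps, but it buys something. It gives the sharper bound $F'(t)\le \frac{q}{\sqrt{p}}\sinh(\sqrt{p}(t-T_1))+F'(T_1)\cosh(\sqrt{p}(t-T_1))$. Moreover, the comparison itself only uses $F(T_1)=0$. The hypothesis $F\ge 0$ enters only through $F'(T_1)\ge 0$, which you derive correctly and which is genuinely needed to replace $\cosh$ by the exponential.

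The paper instead sets $G=F+q/p$, which turns the inequality into the homogeneous $G''\le pG$. It then applies Gr\"onwall once to $G'+\sqrt{p}\,G$, i.e., the factors in the opposite order, $(D-\sqrt{p})(D+\sqrt{p})$. This gives $G'+\sqrt{p}\,G\le (q/\sqrt{p}+F'(T_1))e^{\sqrt{p}(t-T_1)}$, and the paper then drops the term $\sqrt{p}\,G\ge 0$ using $F\ge 0$ on the whole interval. That route is shorter and needs neither an explicit solution nor sign information on $F'(T_1)$, at the price of a cruder intermediate estimate.
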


\begin{proof} Let $G = F + q/p$. Then $G$ obeys for almost every $t \in [T_1, T_2]$ the differential inequality $G'' < p G$. We compute using this inequality that 

\begin{equation}
    \frac{\df}{\df t} \left( G' + \sqrt{p}G \right) \leq pG + \sqrt{p} G' = \sqrt{p} \left( \sqrt{p} G + G'\right).
\end{equation}

Since $F'$ is Lipschitz continuous, both $F'$ and $F$ are in fact absolutely continuous. Therefore, $\sqrt{p}G + G'$ is absolutely continuous. We may apply Gr{\"o}nwall's inequality to see that

\begin{equation}
    \sqrt{p} G + G' \leq \exp \left( \sqrt{p}(t-T_1) \right) \left( \sqrt{p} G(T_1) + G'(T_1) \right) .
\end{equation}

Noting that $G \geq F$, the imposition of the condition that $F \geq 0$ gives

\begin{equation}
    G' \leq \exp \left( \sqrt{p}(t-T_1) \right) \left( \sqrt{p} G(T_1) + G'(T_1) \right) .
\end{equation}

Substituting $F$ into the above and using the initial condition $F(T_1) = 0$, we get 

\begin{equation}
    F'(t) \leq \left( \frac{q}{\sqrt{p}} + F'(T_1) \right) \exp(\sqrt{p}(t-T_1)). 
\end{equation}

as claimed.
\end{proof}

\end{document}